\documentclass{article} 

\usepackage{hyperref}
\usepackage{url}

\usepackage{float}     
\RequirePackage{algorithm}
\RequirePackage{algorithmic}
\usepackage[english]{babel}
\usepackage[authoryear]{natbib}
\usepackage{microtype}
\usepackage{graphicx}
\usepackage{subcaption}
\usepackage{booktabs}
\usepackage{wrapfig}
\usepackage{amsmath}
\usepackage{amssymb}
\usepackage{mathtools}
\usepackage{amsthm}
\usepackage{thmtools}
\usepackage{thm-restate}
\usepackage{xcolor}
\usepackage{listings}
\usepackage{caption}
\usepackage{tcolorbox}
\usepackage{titletoc}
\usepackage{placeins}
\theoremstyle{plain}
\newtheorem{theorem}{Theorem}[section]
\newtheorem{proposition}[theorem]{Proposition}
\newtheorem{lemma}[theorem]{Lemma}
\newtheorem{corollary}[theorem]{Corollary}
\theoremstyle{definition}
\newtheorem{definition}[theorem]{Definition}
\newtheorem{assumption}[theorem]{Assumption}
\theoremstyle{remark}
\newtheorem{remark}[theorem]{Remark}
\usepackage[capitalize,noabbrev,nameinlink]{cleveref}
\crefname{assumption}{Assumption}{Assumptions}
\Crefname{assumption}{Assumption}{Assumptions}
\newcommand{\R}{\mathbb R}
\newcommand{\E}{\mathbb E}
\newcommand{\Sph}{\mathbb S^{d-1}}
\newcommand{\inner}[2]{\langle #1,#2\rangle}
\newcommand{\norm}[1]{\left\lVert #1\right\rVert}

\newcommand{\Unif}{\mathrm{Unif}}

\newcommand{\Sym}{\mathrm{Sym}}

\definecolor{codeblue}{rgb}{0.25,0.5,0.5}
\definecolor{codekw}{rgb}{0.85, 0.18, 0.50}

\definecolor{codegreen}{rgb}{0,0.6,0}
\definecolor{codegray}{rgb}{0.5,0.5,0.5}
\definecolor{codepurple}{rgb}{0.58,0,0.82}
\definecolor{backcolour}{rgb}{0.95,0.95,0.92}
\definecolor{codepurple}{rgb}{0.58,0,0.82}
\definecolor{papercolor}{HTML}{0668E1}
\definecolor{darkred}{rgb}{0.68,0.05,0.0}
\definecolor{tab_blue}{RGB}{230,245,255} 
\definecolor{tab_purple}{RGB}{245,230,255} 
\definecolor{mygreen}{RGB}{66,150,83}

\usepackage{ssArxiv}

\hypersetup{
    colorlinks,
    linkcolor={mygreen},
    citecolor={darkblue},
    urlcolor={codepurple}
}
\newcommand{\samecolorfootnote}[1]{\textsuperscript{\textcolor{darkred}{#1}}}

\title{Canonicalizing
Multimodal Contrastive \\Representation Learning}

\author{
\begin{tabular}{c@{\hspace{1.2cm}}c@{\hspace{1.2cm}}c}
{Sharut Gupta\samecolorfootnote{*}} &
{Sanyam Kansal\samecolorfootnote{*}} &
{Stefanie Jegelka} \\
MIT CSAIL &
IIT Kanpur &
MIT CSAIL, TU Munich  \\
\texttt{sharut@csail.mit.edu} &
\texttt{sanyamka23@iitk.ac.in} &
\texttt{stefje@csail.mit.edu} \\
\\[3.0ex]
\multicolumn{3}{c}{%
\begin{tabular}{c@{\hspace{2.2cm}}c}
\textbf{Phillip Isola} &
\textbf{Vikas Garg} \\
MIT CSAIL &
Aalto University, YaiYai Ltd \\
\texttt{phillipi@csail.mit.edu} &
\texttt{vgarg@csail.mit.edu}
\end{tabular}
}
\end{tabular}
}

\begin{document}

\maketitle

\begin{abstract}
As models and data scale, independently trained networks often induce analogous notions of similarity. But, matching similarities is weaker than establishing an explicit correspondence between the representation spaces, especially for multimodal models, where consistency must hold not only within each modality, but also for the learned image–text coupling. We therefore ask: given two \emph{independently} trained multimodal contrastive models (with encoders $(f, g)$ and $(\tilde f,\tilde g)$)---trained on different distributions and with different architectures---does a systematic geometric relationship exist between their embedding spaces? If so, what form does it take, and does it hold uniformly across modalities? In this work, we show that across model families such as CLIP, SigLIP, and FLAVA, this geometric relationship is well approximated by an orthogonal map (up to a global mean shift), i.e., there exists an orthogonal map $Q$ where $Q^\top Q = I$ such that $\tilde f(x)\approx Q f(x)$ for paired images $x$. Strikingly, the \emph{same} $Q$ simultaneously aligns the text encoders i.e., $\tilde g(y)\approx Q g(y)$ for texts $y$. Theoretically, we prove that if the multimodal kernel agrees across models on a small anchor set i.e. $\langle f(x), g(y)\rangle \approx \langle \tilde f(x), \tilde g(y)\rangle$, then the two models must be related by a \emph{single orthogonal map} $Q$ and the same $Q$ maps images and text across models. More broadly, this finding enables backward-compatible model upgrades, avoiding costly re-embedding, and has implications for the privacy of learned representations.\looseness=-1 

Our project page: \url{https://canonical-multimodal.github.io/} 
\end{abstract}
\footnotetext{Equal contribution.}

\section{Introduction}

\begin{figure*}[!htb]
    \centering
    \includegraphics[width=1.\linewidth]{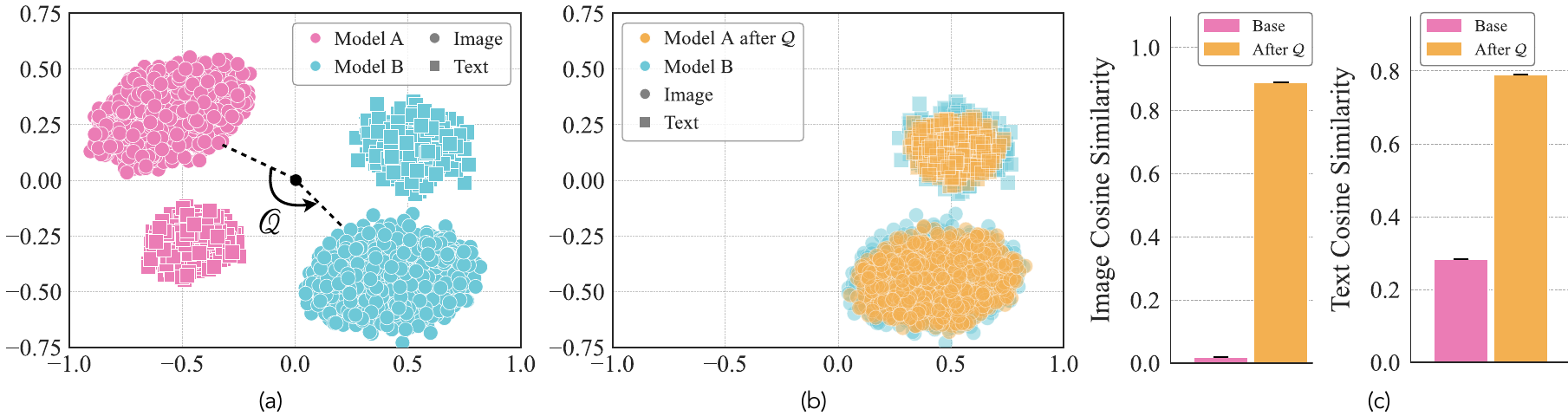}
    \caption{\textit{A single orthogonal map aligns two independent contrastive models across modalities.} (a) Illustrative PCA schematic on synthetic embeddings from two models (A and B) shows that the spaces are a priori incomparable; (b) A single orthogonal map $\mathcal Q$, fit only using image embeddings of CLIP (OpenAI) and CLIP (LAION), almost perfectly aligns image embeddings of the two models (c, left) and simultaneously aligns their text embeddings, as evidenced by a large gain in text-text pointwise cosine similarity (c, right).\looseness=-1}
    \label{fig:teaser}
\end{figure*}

A recurring question in modern representation learning is \emph{convergence}: as models and data scale, do independently trained networks---across datasets, architectures, and training runs---recover similar internal representations? A growing body of evidence suggests they often do, in the sense that different systems can induce similar notions of similarity over inputs or contain universal neurons and ``circuits" \citep{huh2024prh,merullo2022linearly,gupta2025better,chughtai2023toy,gurnee2024universal,zimmermann2021invert}. This idea is central to the \emph{Platonic Representation Hypothesis} (PRH), which posits that, at a sufficiently large scale, learned embeddings converge towards a shared representation that reflects the underlying structure of the world \citep{huh2024prh}. Empirically, this convergence is commonly studied through representational similarity analyses such as SVCCA and CKA \citep{raghu2017svcca,kornblith2019cka,huh2024prh} over unimodal co-occurrence kernels.\looseness=-1

\par However, much of this literature abstracts away parameterization. SVCCA is invariant to affine transformations \citep{raghu2017svcca}, and CKA compares induced similarity structure rather than the precise geometric correspondence between them \citep{kornblith2019cka}. Two models may agree on many tasks while their internal embedding spaces remain related only by complex, sample-dependent distortions. From a geometric standpoint, the stronger and far more consequential question is whether independently trained models recover representations that are equivalent up to simple transformations. This question becomes especially important for \emph{multimodal} models, which couple image and text through a contrastive objective while keeping the two modalities at arm's length in the learned space, a phenomenon often referred to as the \emph{modality gap}. To make this precise, let $\mathcal M\coloneq(f,g)$ and $\tilde{\mathcal M}\coloneq(\tilde f,\tilde g)$ denote two distinct multimodal models, with image encoders $f,\tilde f$ and text encoders $g,\tilde g$ mapping inputs to embedding spaces. Here, it no longer suffices to ask whether image representations $f$ and $\tilde f$ of $\mathcal M$ and $\tilde{\mathcal M}$ converge in isolation (or text encoders $g$ and $\tilde g$ of the two models converge in isolation). Instead, the central question becomes:\looseness=-1

\begin{center}
\begin{tcolorbox}[colback=gray!5!white,
                  colframe=black!50,
                  boxrule=0.9pt,
                  arc=2pt,
                  width=1.\linewidth]
\emph{Given two independently trained multimodal models, does a systematic geometric relationship exist between their embedding spaces? If so, what is its form, and how does it differ across modalities?\looseness=-1
}
\end{tcolorbox}
\end{center}

In this work, we study this question for multimodal \emph{contrastive} models and show that two independently trained instances---with different embedding dimensions, training distributions and modeling choices---exhibit a remarkably rigid, \emph{modality-invariant} geometric relationship. 
Concretely, across model families such as CLIP~\citep{radford2021clip}, SigLIP~\citep{zhai2023siglip}, and FLAVA~\citep{singh2022flava}, we find that the inter-model relationship is well-approximated by a \emph{single orthogonal} map i.e., there exists an orthogonal map $Q$ where $Q^\top Q = I$ such that $\tilde f(x)\approx Q f(x)$ for paired images $x$ and mean-centered\footnote{Even \emph{without} the mean-centering, this alignment holds up to semantic boundaries i.e. class-level retrieval and decision geometry; the mean shift primarily improves pointwise cosine agreement.} encoders $f$ and $\tilde f$. Moreover, the \emph{same} $Q$ simultaneously aligns mean-centered text encoders i.e. $\tilde g(y) \approx Q g(y)$ for texts $y$ (as shown in~\Cref{fig:teaser}). This induces a commuting correspondence between encoders; once $Q$ is learned, any embedding produced by one model---image or text---can be mapped into the other model’s coordinate system and back and compared meaningfully with any embedding there.

Empirically, despite never using text to estimate $Q$, applying this map to text substantially improves cross-model agreement in the target text space, as measured both by (i) the mean cosine similarity between matched text embeddings after mapping, and (ii) prompt retrieval, in which each mapped prompt is matched to its nearest neighbor among the target model’s class prompts and scored by whether it selects the correct class. In the same aligned space, nearest-neighbor image classification using mapped source embeddings matches the target model’s performance, indicating that $Q$ preserves semantic details and task-relevant geometry. Moreover, this transfer is data-efficient, requiring only about $\sim 30\%$ data to learn $\mathcal Q$ reliably. Finally, $Q$ learned on one dataset transfers to others without re-fitting, and is consistent under composition, consistent with a global geometric relationship rather than instance- or class-specific tuning.\looseness=-1

\noindent We complement these findings by theoretically characterizing when this coupling is guaranteed. At the population level, we derive the optimal multimodal contrastive critic and show that, on a fixed target domain, agreement of cross-modal similarity kernel i.e $\langle f, g \rangle = \langle \tilde f, \tilde g \rangle$ on a small set of anchor pairs, forces a shared orthogonal map across modalities: the map that aligns images simultaneously determines the induced alignment of text. We further move beyond the exact regime, proving stability bounds that quantify how approximate cross-modal kernel alignment $\langle f, g \rangle \approx \langle \tilde f, \tilde g \rangle$ translates into reliable alignment.

\noindent To summarize, the key contributions of our work are: 
\begin{itemize}
\item We show that independently trained multimodal contrastive models can be closely approximated by a \emph{single orthogonal} map. Additionally, this map is shared across modalities, i.e., estimating the map from images alone aligns text, and vice versa.\looseness=-1
\item Theoretically, we prove that matching multimodal kernels on a small anchor set across two distinct models forces a shared orthogonal alignment across modalities and derive stability bounds in the approximate regime. \looseness=-1
\item  We validate these claims across five benchmarks and multiple model pairs, with extensive ablations showing that this map transfers across datasets without re-fitting and remains consistent under composition, yielding the most reliable cross-model, cross-modal transfer.\looseness=-1
\end{itemize}

\section{Related Work}
\emph{For an extended discussion of related work, see~\Cref{sec:related_work_appendix}.\looseness=-1}

\noindent \textbf{Representational Convergence and Functional Interoperability.} A central question in deep learning is whether independently trained models converge to identical representations. The Platonic Representation Hypothesis~\citep{huh2024prh} argues that large models across different modalities are converging
towards the same representations, given the vast amounts of training data that are used
by these models. Standard embedding similarity tools such as CKA \citep{kornblith2019cka} or SVCCA \citep{raghu2017svcca} measure this convergence up to broad equivalence classes (e.g., invertible linear maps), but do not provide an explicit coordinate mapping. A stronger operational test is \emph{model stitching}, which connects representations via simple learnable transformations to enable zero-shot interchangeability \citep{lenc2015equivariance,bansal2021stitching,merullo2022linearly}. However, these approaches are limited to coarse transfer metrics (e.g., image captioning), whereas we measure a stronger notion of alignment---tight pointwise agreement via cosine similarity---while simultaneously verifying that semantic structure is preserved through retrieval performance. Prior alignment results largely operate on \emph{unimodal} marginals, including word embeddings~\citep{mikolov2013exploiting,dev2021closed} and vision features~\citep{maystre2025embeddingmodelsmeet,merullo2022linearly}. Aligning marginals, however, does not in general identify the \emph{joint} distribution: multiple distinct joint geometries may be consistent with the same unimodal alignments. We therefore study a strictly stronger question---whether the \emph{joint} image-text geometry of two multimodal models is identifiable up to a \emph{single, rigid} isometry \emph{shared across modalities}, rather than allowing separate, unconstrained maps.\looseness=-1

\noindent \textbf{The Modality Gap and Geometry of Contrastive Representations.} Our study is grounded in the geometry of contrastive vision-language models \citep{radford2021clip,jia2021align}. Empirically, these models exhibit a \emph{modality gap}, where image and text embeddings cluster in distinct cones \citep{liang2022modalitygap,udandarao2022understanding,shi2023towards}. This separation renders alignment non-trivial, as naive mappings can easily collapse the gap or distort the intra-modal structure. Theoretically, prior work has analyzed conditions under which contrastive objectives identify the underlying latent factors up to linear or affine transformations \citep{zimmermann2021invert, roeder21a}. While general linear maps include rotations, they also permit shear and anisotropic scaling, which are poorly constrained and undesirable for preserving semantic structure. In contrast, our setting restricts attention to \emph{isometries}, which preserve angles, norms, and neighborhood relations. Even though images and text remain separated within each model, we prove that multimodal kernel agreement on a \emph{small} anchor set suffices to recover a \emph{shared} isometry $Q$ that aligns the two models across \emph{both} modalities. As a result, instead of retraining a model or learning complex transfer operators, one can anchor a new model to a reference model using a modest number of examples and obtain transfer to another modality \emph{for free}. This yields a substantially more economical alternative to retraining, general linear transfer, or optimal-transport-based adaptations.\looseness=-1

\section{Problem Formulation} \label{sec:problem_formulation}
We consider the standard dual-encoder framework where data consists of co-occurring pairs $(x,y)$ (e.g., images and text). A contrastive model consists of two encoders, $f: \mathcal{X} \to \mathbb{S}^{d-1}$ and $g: \mathcal{Y} \to \mathbb{S}^{d-1}$, which map inputs to the unit hypersphere in $\mathbb{R}^d$. The training objective maximizes the cosine similarity $\langle f(x), g(y) \rangle$ for semantically matched pairs while minimizing it for mismatched ones.

\noindent  Consider two contrastive models, trained in complete isolation on different datasets, with different architectures, initializations, and modeling choices: a \emph{source} model $\mathcal{M} = (f, g)$ and a \emph{target} model $\tilde{\mathcal{M}} = (\tilde{f}, \tilde{g})$ mapping to dimensions $d$ and $\tilde d$ respectively (without loss of generality, assume $d \leq \tilde d$).
Due to optimization stochasticity and training differences, the embedding spaces of $\mathcal M$ and $\tilde{\mathcal M}$ are a priori incomparable. Even with identical data, the contrastive objective depends only on within-model dot products $\langle f(x), g(y)\rangle$, so jointly rotating both embeddings by any orthogonal matrix leaves the loss and all within-model similarities unchanged. Architectural mismatch, finite-sample noise, and optimization effects further amplify this ambiguity~\citep{saunshi22contrastive,robinson21short}; under distribution shift, the models may not even share the same population optimum. As a result, cross-model dot products and nearest-neighbor queries between $\mathcal M$ and $\tilde{\mathcal M}$ are ill-defined unless we first learn a map between their spaces.
We therefore seek a map $\psi:\R^d\to\R^{\tilde d}$ that transports embeddings from $\mathcal M$ into $\tilde{\mathcal M}$. The key question is whether a consistent geometric relationship exists between the two models, and how it depends on modality: must one learn separate maps for images and text, or does a single $\psi$ align both modalities? Concretely, if $\psi$ aligns images so that $\tilde f \approx \psi(f)$, does the same $\psi$ also align text, i.e., $\tilde g \approx \psi(g)$? Further, if such a map exists, what is its functional form: nonlinear, linear, or orthogonal?

\section{Modality Gap in Contrastive Representations}
\begin{wrapfigure}{r}{0.5\textwidth}
    \vspace{-6mm}
    \centering
    \includegraphics[width=1.\linewidth]{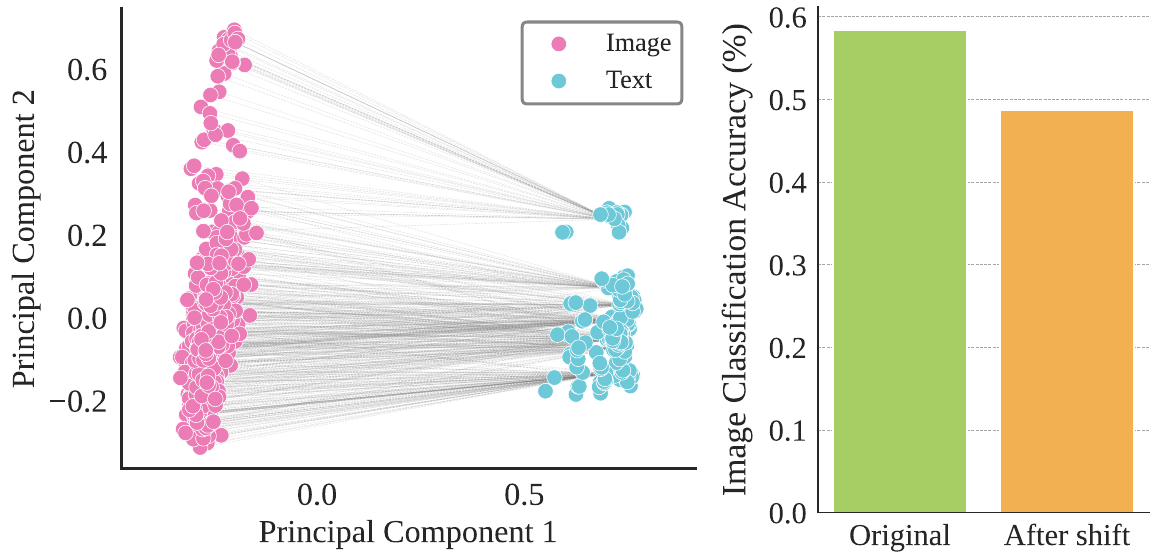}
    \caption{(left) PCA visualization of embeddings from \textsc{Clip} (OpenAI) on CIFAR 100, showing a large modality gap. (right) Applying a translation to close this gap distorts the embedding geometry and significantly degrades classification accuracy.\looseness=-1}
    \label{fig:mainpaper_modality_gap}
    \vspace{-8mm}
\end{wrapfigure}

This question is nontrivial due to the intrinsic geometry of contrastive representations. If matched image-text pairs collapsed to approximately the same point on the hypersphere (i.e., $f(x)\approx g(y)$), then aligning the image manifold would be equivalent to aligning the text manifold. In practice, however, contrastive models exhibit a pronounced \emph{modality gap} where image and text embeddings occupy largely disjoint regions of the sphere~\citep{liang2022modalitygap,shi2023towards,udandarao2022understanding}. Prior work also suggests that naïvely ``closing'' this gap can harm downstream performance and fairness~\citep{liang2022modalitygap}, as shown in~\Cref{fig:mainpaper_modality_gap}. 

\noindent Since the gap reflects systematic modality-specific structure rather than noise~\citep{schrodi2024two}, it is \emph{not} obvious that a map $\psi$ learned solely on images will extend correctly to text. In principle, infinitely many maps can agree on the image manifold yet behave arbitrarily on the text manifold. Our work specifically tests whether, despite this gap, the \emph{relative} geometry remains stable enough to permit transfer.

\begin{figure*}[!htb]
    \centering
    \includegraphics[width=1.\linewidth]{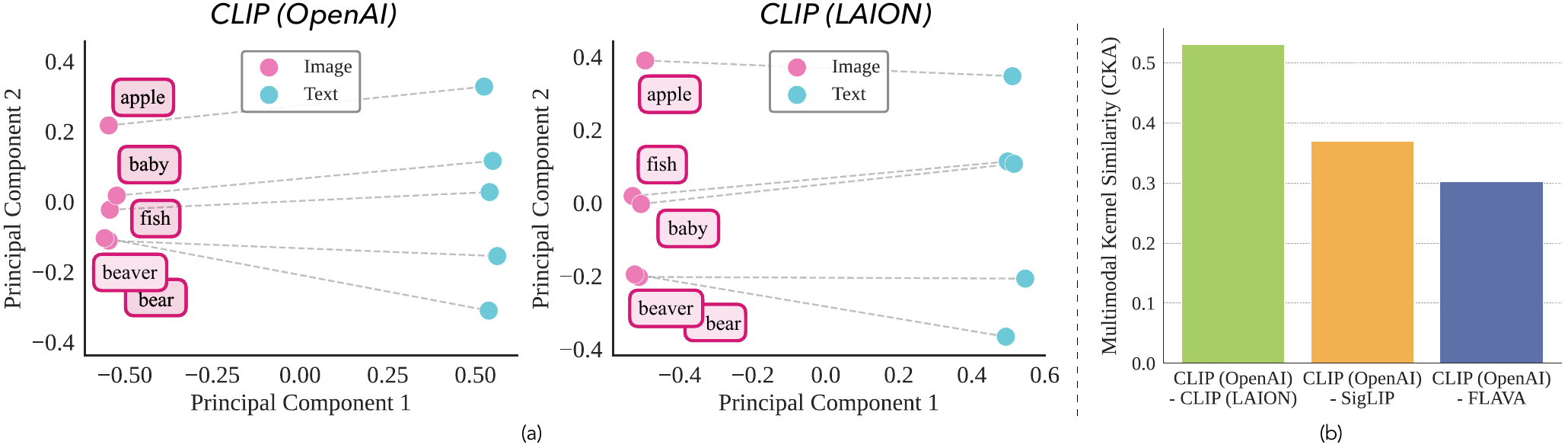}
    \caption{(a) Across CLIP variants, the multimodal kernel $\langle f(x), g(y)\rangle$ (relative angles between image and text embeddings) is strongly preserved (dashed lines), unlike the unimodal kernel $\langle f(x), \tilde f(x')\rangle$; (b) CKA on multimodal kernels shows high alignment across models.\looseness=-1}
    \label{fig:multimodal_kernels}
\end{figure*}

\noindent  Despite the modality gap and disjoint supports of the two models, we argue that the alignment problem is indeed solvable because \emph{relative} geometry is remarkably stable. As shown in \Cref{fig:multimodal_kernels}, while the absolute coordinates of the embedding cones shift arbitrarily between models, the angular arrangement of the texts with respect to the images remains consistent. Mathematically, this means that the \emph{multimodal kernels} are approximately preserved across models: $\langle f, g \rangle \approx \langle \tilde{f}, \tilde{g} \rangle$. This observation can be viewed as a multimodal analogue of the Platonic Representation Hypothesis~\citep{huh2024prh} that posits that models converge to similar \emph{unimodal} kernels ($\langle f, f \rangle \approx \langle \tilde{f}, \tilde{f} \rangle$). In the next section, we prove that this preservation of multimodal kernels is a sufficient condition to constrain the functional form of $\psi$, forcing it to be an isometry. \looseness=-1

\section{Theoretical Perspectives}\label{sec:theory}

In this section, we formalize the above intuition by characterizing population-level optima and showing that contrastive models, even when trained on different distributions, can recover the same multimodal kernels up to a constant~(\Cref{sec:infonce_critic}, as shown in~\Cref{fig:multimodal_kernels}). We then show that this agreement on a minimal anchor set constrains the alignment map to be linear (\Cref{sec:main_linear_alignment}), which, under the hypersphere constraint, collapses to an isometry (\Cref{sec:main_rotation_alignment}). Next, we relax these conditions, proving that the recovered isometry preserves zero-shot retrieval even when pointwise alignment is imperfect (\Cref{sec:main_theory_class_retrieval}). Finally, we extend these guarantees to the approximate setting, showing stability under approximate kernel matching (\Cref{sec:approx_regime}). For a schematic overview of our theoretical analysis, refer to~\Cref{fig:theory_schematic} in the appendix.\looseness=-1

\subsection{Contrastive Representation Learning}
\label{sec:contrastive}
Building on the dual-encoder framework defined in~\Cref{sec:problem_formulation}, we assume access to data characterized by co-occurring pairs $(x,y)\in\mathcal X\times\mathcal Y$ (e.g., images and text) drawn from a joint distribution $p_{XY}$ with marginals $p_X$ and $p_Y$. Here, the contrastive objective aims to learn functions, $f: \mathcal{X} \to \Sph$ and $g: \mathcal{Y} \to \Sph$, which map inputs to a shared unit hypersphere $\Sph \subset \R^d$ where similarity of a pair $(x, y)$ is measured by the score function
\begin{equation}
\label{eq:score}
s(x,y) \;:=\; \frac{1}{\tau}\,\langle f(x),g(y)\rangle + b,
\end{equation}
where $\tau > 0$ is a temperature parameter and $b \in \R$ is a scalar bias. This alignment is achieved by optimizing the symmetric InfoNCE objective~\citep{oord2018cpc}. Specifically, for a batch of $N$ pairs $\{(x_i, y_i)\}_{i=1}^N$ drawn i.i.d. from $P_{XY}$, the loss $\mathcal{L}_N^{\mathrm{}}$ minimizes the cross-entropy of identifying the correct positive pair relative to $N-1$ negatives in both directions:
\begin{equation}
\label{eq:clip_loss}
\begin{aligned}
&\mathcal L_N(s)
:= \mathcal L_N^{(X)}(s) + \mathcal L_N^{(Y)}(s)
= \E\!\left[
-\sum_{i=1}^N \log \frac{e^{s(x_i,y_i)}}{\sum_{j=1}^N e^{s(x_i,y_j)}}
-\sum_{i=1}^N \log \frac{e^{s(x_i,y_i)}}{\sum_{j=1}^N e^{s(x_j,y_i)}}
\right].
\end{aligned}
\end{equation}

\subsection{Optimal Critic for Family of Contrastive Learners}\label{sec:infonce_critic}
We first characterize the score function $s(x,y)$ that minimizes the population InfoNCE loss. Let $r(x,y) \coloneqq \frac{p_{XY}(x,y)}{p_X(x)p_Y(y)}$ denote the pointwise density ratio, and $\log r(x,y)$ be the Pointwise Mutual Information (PMI). For a fixed training distribution
$P_{XY}$, any global minimizer of the symmetric objective $\mathcal L_N$ induces an optimal score
$s^\ast(x,y)=\log r(x,y)+C$, i.e.\ the pointwise mutual information (PMI) up to a global constant. We note that this has been partially studied in prior work~\citep{huh2024prh,oord2018cpc}, with a detailed derivation provided in~\Cref{sec:unimodal_critic} and~\Cref{sec:multimodal_critic}. Consequently, any two globally optimal models trained on the \emph{same} distribution---or on distributions related to $P_{XY}$ by a bijective reparameterization---induce kernels that differ only by a constant (\Cref{cor:unique_kernels}). 
Here, we go beyond the fixed-distribution setting and prove that this invariance persists even for models trained on \emph{distinct} internet-scale corpora.\looseness=-1

\noindent \textbf{Platonic distribution and Dataset Curation.}
Let $P^\star_{XY}$ denote an underlying ``reality'' distribution with density $p^\star_{XY}$ and marginals $p^\star_X,p^\star_Y$ (satisfying positivity on the domain of interest).
We model each training corpus as a \emph{curation} of $P^\star_{XY}$ i.e. for dataset $a\in\{1,2\}$, there exist
measurable weights $u_a:\mathcal X\to(0,\infty)$ and $v_a:\mathcal Y\to(0,\infty)$ such that
\begin{equation}
\label{eq:curation-main}
  p^{(a)}_{XY}(x,y)\;\propto\;u_a(x)\,v_a(y)\,p^\star_{XY}(x,y).
\end{equation}
Here $u_a$ and $v_a$ capture modality-specific curation (e.g., \ image quality and text language/safety etc.) that can substantially change the marginals while preserving the underlying semantic information. We further assume that curation acts \emph{independently across modalities} in the sense that the expected text acceptance
does not depend on the image it is paired with, and vice versa i.e.
\begin{equation}
\label{eq:no-cross-modal-bias-main}
\begin{aligned}
  \E^\star[v_a(Y)\mid X=x] &= \E^\star[v_a(Y)] \quad \text{a.e.\ in } x, \\
  \E^\star[u_a(X)\mid Y=y] &= \E^\star[u_a(X)] \quad \text{a.e.\ in } y.
\end{aligned}
\end{equation}
where $\E^\star$ corresponds to expectation under $p^\star$ and a.e. stands for almost everywhere, causing these expectation terms to appear in the PMI only as additive constants.\looseness=-1
\begin{theorem}
\label{thm:main-pmi-constant-across-datasets}
Under~\Cref{eq:no-cross-modal-bias-main}, if $s^\ast_1$ and $s^\ast_2$ are Bayes-optimal scores for the contrastive models trained on two distinct distributions defined in~\Cref{eq:curation-main}, then there exists a constant $\Delta'$ such that \looseness=-1
\begin{equation}
\label{eq:scores-constant-main}
  s^\ast_1(x,y)=s^\ast_2(x,y)+\Delta'
  \;\; \text{for $P^\star_X\otimes P^\star_Y$-a.e.\ $(x,y)$}.
\end{equation}
\end{theorem}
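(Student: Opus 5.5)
The plan is to reduce the claim to a computation of the PMI of each curated distribution. We use the earlier characterization of the Bayes-optimal critic: any global minimizer of $\mathcal L_N$ trained on $P^{(a)}_{XY}$ satisfies $s^\ast_a(x,y)=\log r_a(x,y)+C_a$, where $r_a=p^{(a)}_{XY}/(p^{(a)}_X p^{(a)}_Y)$. Once we show $\log r_a(x,y)=\log r^\star(x,y)+\kappa_a$ with $r^\star$ the density ratio of $P^\star_{XY}$ and $\kappa_a$ a constant, the theorem follows with $\Delta'=C_1-C_2+\kappa_1-\kappa_2$.

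\textbf{Step 1: normalize the curated joint.} Write $p^{(a)}_{XY}(x,y)=u_a(x)v_a(y)p^\star_{XY}(x,y)/Z_a$, where $Z_a=\E^\star[u_a(X)v_a(Y)]$. We assume $Z_a\in(0,\infty)$, which is implicit in the curation model being a valid distribution.

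\textbf{Step 2: compute the marginals.} Integrating out $y$ gives
\[
p^{(a)}_X(x)=\frac{u_a(x)\,p^\star_X(x)\,\E^\star[v_a(Y)\mid X=x]}{Z_a}.
\]
By the independence assumption~\Cref{eq:no-cross-modal-bias-main}, this equals $u_a(x)p^\star_X(x)\beta_a/Z_a$, where $\beta_a=\E^\star[v_a(Y)]$. Symmetrically, $p^{(a)}_Y(y)=v_a(y)p^\star_Y(y)\alpha_a/Z_a$, where $\alpha_a=\E^\star[u_a(X)]$. These identities hold for $p^\star_X$-a.e. $x$ and $p^\star_Y$-a.e. $y$ respectively.

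\textbf{Step 3: take the ratio.} The curation weights $u_a(x)v_a(y)$ cancel in the numerator and denominator, leaving
\[
r_a(x,y)=r^\star(x,y)\,\frac{Z_a}{\alpha_a\beta_a}.
\]
So $\kappa_a=\log Z_a-\log\alpha_a-\log\beta_a$, which is finite because $u_a,v_a>0$.

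\textbf{Step 4: handle supports and null sets.} Because $u_a,v_a$ are strictly positive, each curated measure $P^{(a)}_X\otimes P^{(a)}_Y$ is mutually absolutely continuous with $P^\star_X\otimes P^\star_Y$. Hence an identity holding a.e. with respect to one product measure holds a.e. with respect to the other, and the Bayes-optimal characterization, which pins down $s^\ast_a$ only a.e. under the training product measure, transfers to $P^\star_X\otimes P^\star_Y$-a.e. $(x,y)$.

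\textbf{Main obstacle.} The algebra is routine. The delicate step is justifying the exceptional sets: the optimal-critic result only determines $s^\ast_a$ up to null sets of the training distribution's product of marginals, and the conditional-expectation assumption holds only a.e. This is why the positivity of the weights and the absolute-continuity transfer in Step 4 are essential to the argument.
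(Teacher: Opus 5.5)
Your proposal is correct and follows the paper's own argument. You compute the curated marginals as in the paper's ratio-curation lemma, use the no-cross-modal-bias condition to make the PMI shift a constant, and combine this with the symmetric-InfoNCE characterization $s^\ast_a=\log r_a+C_a$; your Step 4 also makes explicit the transfer of exceptional sets from $P^{(a)}_X\otimes P^{(a)}_Y$ to $P^\star_X\otimes P^\star_Y$, which the paper leaves implicit. As a small aside, in your notation the tower property plus the independence condition gives $Z_a=\E^\star[u_a(X)\,\E^\star[v_a(Y)\mid X]]=\alpha_a\beta_a$, so in fact $\kappa_a=0$, and finiteness of $\alpha_a,\beta_a$ follows from $Z_a<\infty$ (positivity of $u_a,v_a$ alone only rules out $\alpha_a,\beta_a=0$).
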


\noindent \Cref{thm:main-pmi-constant-across-datasets} establishes that even when trained on different distributions, independently trained contrastive learners can converge to optimal similarity scores that agree up to an additive constant. We note that our result holds for several widely used contrastive objectives such as softmax InfoNCE~\citep{oord2018cpc} and pairwise sigmoid objectives in SigLIP~\citep{zhai2023siglip}. Since contrastive models approximate this target using dot products, any two models converging to the same score must implicitly align their kernels: $\inner {f(x)}{g(y)} \approx \inner{\tilde f(x)}{\tilde g(y)}.$\footnote{Exact equality implies the target PMI respects the contrastive parameterization constraints. Since CLIP scores are bounded and low-rank, they act as a low-rank approximation of the generally unbounded, high-rank population target.} For detailed proofs and discussion, refer to~\Cref{sec:theory_pmi_different_data}.

\subsection{Linear Alignment of Contrastive Models}\label{sec:main_linear_alignment}

\par In the previous section, we showed that independently trained contrastive models can induce multimodal kernels that agree up to an additive constant, despite differences in data and modeling choices. We now assume this kernel agreement holds on domains of interest $\Omega_X\subseteq\mathcal X$ and $\Omega_Y\subseteq\mathcal Y$ (e.g., a downstream image-text dataset) and analyze its geometric consequences. Our first main result shows that matching kernels on a small set of \emph{anchor} points suffices to determine a linear map relating the two embedding spaces.\looseness=-1

\noindent  Let the contrastive model pairs $(f,g)$ and $(\tilde f, \tilde g)$ map inputs to $\mathbb S^{d-1} \subset \R^d$ and $ \mathbb S^{\tilde d-1} \subset \R^{\tilde d}$ respectively, where $d \le \tilde d$, without loss of generality. We fix a set of \emph{image anchors} $\{\bar x_j\}_{j=1}^d \subset \Omega_X$ and \emph{text anchors} $\{\bar y_i\}_{i=1}^{\tilde d} \subset \Omega_Y$ and collect their embeddings into the following matrices: $G \coloneqq [g(\bar y_1)\ \cdots\ g(\bar y_{\tilde d})]\in\R^{d\times \tilde d}, \;
  \tilde G \coloneqq [\tilde g(\bar y_1)\ \cdots\ \tilde g(\bar y_{\tilde d})]\in\R^{\tilde d\times \tilde d}, \;
  F \coloneqq [f(\bar x_1)\ \cdots\ f(\bar x_d)]\in\R^{d\times d}.$

\begin{restatable}{assumption}{AnchorKernelEquality}
\label{ass:kernel_anchors}
The multimodal kernels coincide on the set of anchors:
\begin{align*}\inner{f(x)}{g(\bar y_i)} &= \inner{\tilde f(x)}{\tilde g(\bar y_i)}\;\; \forall x\in\Omega_X,\ \forall i\in\{1,\dots,\tilde d\},\\ 
\inner{f(\bar x_j)}{g(y)} &= \inner{\tilde f(\bar x_j)}{\tilde g(y)} \;\;  \forall y\in\Omega_Y,\ \forall j\in\{1,\dots,d\}.
\end{align*}
\end{restatable}

\begin{restatable}{theorem}{linearthm}
\label{thm:lin}
\textit{(Linear Identifiability, proof in~\Cref{sec:exact_linear}).} Under~\Cref{ass:kernel_anchors}, suppose $\tilde G$ and $F$ are invertible. Then there exists a linear map $A$ such that $ \tilde f(x) = A f(x)\; \forall x\in\Omega_X.$ Further, if $A$ has full column rank, then for every $y\in\Omega_Y$, $\mathrm{Proj}_{\operatorname{Im}(A)}\,\tilde g(y)
=
A(A^\top A)^{-1} g(y).$ If $\tilde d=d$, then $\tilde g(y)=A^{-\top}g(y)$.\looseness=-1
\end{restatable}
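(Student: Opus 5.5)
The plan is to rewrite \Cref{ass:kernel_anchors} in matrix form and invert the anchor matrices. The first family of equalities, over the text anchors, says that for every $x\in\Omega_X$,
\[
G^\top f(x) = \tilde G^\top \tilde f(x),
\]
with $G^\top\in\R^{\tilde d\times d}$ and $\tilde G^\top\in\R^{\tilde d\times\tilde d}$. Since $\tilde G$ is invertible, this gives $\tilde f(x) = \tilde G^{-\top}G^\top f(x)$. We therefore set
\[
A \coloneqq \tilde G^{-\top} G^\top \in \R^{\tilde d\times d},
\]
which proves the first claim. The only ingredient here is invertibility of $\tilde G$, which makes the text anchors span $\R^{\tilde d}$.

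For the text side, we apply the second family of equalities, over the image anchors. For every $y\in\Omega_Y$ it reads $F^\top g(y) = \tilde F^\top \tilde g(y)$, where $\tilde F \coloneqq [\tilde f(\bar x_1)\ \cdots\ \tilde f(\bar x_d)]$. By the first step, $\tilde F = AF$, since the anchors $\bar x_j$ lie in $\Omega_X$. Hence $F^\top g(y) = F^\top A^\top \tilde g(y)$, and invertibility of $F$ lets us cancel $F^\top$:
\[
A^\top \tilde g(y) = g(y)\qquad\forall y\in\Omega_Y.
\]
This is the key identity: the adjoint of the image map transports target text embeddings back to source text embeddings.

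To finish, suppose $A$ has full column rank. Then $A^\top A$ is invertible, and the orthogonal projector onto $\operatorname{Im}(A)$ is $A(A^\top A)^{-1}A^\top$. Applying it to $\tilde g(y)$ and substituting the key identity gives
\[
\mathrm{Proj}_{\operatorname{Im}(A)}\tilde g(y) = A(A^\top A)^{-1} g(y).
\]
When $\tilde d=d$, $A$ is square with full rank, so it is invertible. Then $A^\top\tilde g(y)=g(y)$ directly yields $\tilde g(y)=A^{-\top}g(y)$, which is consistent with $A(A^\top A)^{-1}=A^{-\top}$.

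The argument is essentially bookkeeping, and I expect no step to be a genuine obstacle. The point that needs the most care is the second step. There we must use the first conclusion evaluated at the image anchors, which requires $\bar x_j\in\Omega_X$, and we must recognize that only the component of $\tilde g(y)$ in $\operatorname{Im}(A)$ is determined. This is why the statement gives only a projection when $\tilde d>d$: the component of $\tilde g(y)$ in $\ker A^\top=\operatorname{Im}(A)^\perp$ is unconstrained by the kernel equalities.
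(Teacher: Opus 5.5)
Your proposal is correct and follows the paper's proof essentially step for step. You use the same definition $A=\tilde G^{-\top}G^\top$, the same identity $A^\top\tilde g(y)=g(y)$ obtained from the image-anchor equalities and invertibility of $F$, and the same projector $A(A^\top A)^{-1}A^\top$. The only cosmetic difference is that you cancel $F^\top$ in $F^\top g(y)=F^\top A^\top\tilde g(y)$, while the paper argues that $g(y)-A^\top\tilde g(y)$ is orthogonal to the spanning set $\{f(\bar x_j)\}_{j=1}^d$. Like the paper, you invoke the full-column-rank hypothesis to get invertibility of $A$ in the case $\tilde d=d$.
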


\subsection{Isometric Alignment of Contrastive Models}\label{sec:main_rotation_alignment}

\Cref{thm:lin} shows that kernel matching identifies the representation up to a linear map $A$. But, contrastive encoders normalize embeddings to the unit hypersphere $\mathbb{S}^{d-1}$, forcing $\|\tilde f(x)\|_2 = \|A f(x)\|_2 = 1$ everywhere. This forces $A$ to be an isometry ($A^\top A = I_d$) only if the data is sufficiently diverse to probe the matrix in all directions. We formalize this diversity via the following condition.

\begin{restatable}{definition}{SymSpanning} 
\label{def:sym-spanning}
\textit{($\Sym(d)$-spanning)}
A set of vectors $S \subset \R^d$ is \emph{$\Sym(d)$-spanning} if the rank-one matrices $\{xx^\top : x \in S\}$ span the space of symmetric matrices $\Sym(d)$.
Equivalently: if $M \in \Sym(d)$ and $x^\top M x = 0$ for all $x \in S$, then $M=0$. This equivalence follows from the identity $x^\top M x = \inner{xx^\top}{M}$.
\end{restatable}

\begin{restatable}{theorem}{orthogonalthm}
\label{thm:orth}
\textit{(Orthogonal Identifiability, proof in~\Cref{sec:rotation_alignment}).}
Assume the conditions of~\Cref{thm:lin} hold. If the set of image embeddings $\{f(x) : x \in \Omega_X\}$ contains a $\Sym(d)$-spanning subset, then the linear map $A$ has orthonormal columns ($A^\top A = I_d$). Consequently, $\tilde f(x) = Q f(x) \quad \forall x \in \Omega_X,$ where $Q \coloneqq A$ satisfies $Q^\top Q = I_d$.
Furthermore, for the other modality:$$  \mathrm{Proj}_{\operatorname{Im}(Q)}\,\tilde g(y) = Q g(y) \quad \forall y \in \Omega_Y.$$
If $\tilde d=d$, $Q$ is orthogonal i.e. $Q \in O(d)$ and $\tilde g(y) = Q g(y)$.
\end{restatable}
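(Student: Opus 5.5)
We take the linear map $A$ supplied by \Cref{thm:lin}, together with its description of the text side, and show that the unit-norm constraint forces $A^\top A=I_d$. Once that is established, the text statements follow by substituting $A^\top A=I_d$ into the formulas of \Cref{thm:lin}.

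\textbf{Step 1 (norm constraint).} By \Cref{thm:lin}, $\tilde f(x)=Af(x)$ for all $x\in\Omega_X$. Both $f(x)\in\mathbb S^{d-1}$ and $\tilde f(x)\in\mathbb S^{\tilde d-1}$, so
\[
1=\|\tilde f(x)\|_2^2=f(x)^\top A^\top A f(x)
\quad\text{and}\quad
1=f(x)^\top f(x).
\]
Set $M\coloneqq A^\top A-I_d\in\Sym(d)$. Subtracting the two identities gives $f(x)^\top M f(x)=0$ for every $x\in\Omega_X$.

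\textbf{Step 2 (spanning).} The image embeddings contain a $\Sym(d)$-spanning subset $S$, and $v^\top M v=0$ for every $v\in S$. By the equivalent characterization in \Cref{def:sym-spanning}, this gives $M=0$, that is, $A^\top A=I_d$.

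Several consequences follow immediately. $A$ has orthonormal columns, so it has full column rank and the hypothesis of the second part of \Cref{thm:lin} holds automatically. Setting $Q\coloneqq A$ yields $\tilde f(x)=Qf(x)$ on $\Omega_X$. One more remark is needed: a $\Sym(d)$-spanning set must contain at least $d$ linearly independent vectors. Otherwise all $vv^\top$ would be supported on a proper subspace, and $M=ww^\top$ with $w$ orthogonal to that subspace would be a nonzero matrix annihilated by every $v^\top(\cdot)v$. This is consistent with, though not needed beyond, the invertibility of $F$.

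\textbf{Step 3 (text modality).} Substituting $A^\top A=I_d$ into the projection identity of \Cref{thm:lin} gives
\[
\mathrm{Proj}_{\operatorname{Im}(Q)}\tilde g(y)=Q(Q^\top Q)^{-1}g(y)=Qg(y).
\]
When $\tilde d=d$, $Q$ is square with $Q^\top Q=I$, hence $Q\in O(d)$ and $Q^{-\top}=Q$. The last clause of \Cref{thm:lin}, $\tilde g(y)=A^{-\top}g(y)$, then becomes $\tilde g(y)=Qg(y)$.

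There is no real obstacle, provided we rely on \Cref{thm:lin} for the existence of $A$ and for the projection formula. If that formula needed re-deriving, the only delicate point would be the following. From the anchor identity $\langle f(\bar x_j),g(y)\rangle=\langle Af(\bar x_j),\tilde g(y)\rangle$ we get $F^\top g(y)=F^\top A^\top\tilde g(y)$, and invertibility of $F$ then gives $g(y)=A^\top\tilde g(y)$. Applying $Q$ and using that $QQ^\top$ is the orthogonal projector onto $\operatorname{Im}(Q)$ recovers the projection statement directly.
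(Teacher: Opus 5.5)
Your proof is correct and follows the paper's argument. The norm constraint $\|Af(x)\|_2=1$ gives $f(x)^\top(A^\top A-I_d)f(x)=0$ on the image embeddings, $\Sym(d)$-spanning then forces $A^\top A=I_d$ (the paper packages this step as a separate lemma), and the text claim follows because $QQ^\top$ is the projector onto $\operatorname{Im}(Q)$. The only cosmetic difference is that you substitute $A^\top A=I_d$ into the stated projection formula of the linear theorem (and use $A^{-\top}=Q$ in the square case), whereas the paper applies $Q$ to the intermediate identity $g(y)=A^\top\tilde g(y)$ from that theorem's proof; you also note this route yourself.
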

\noindent In~\Cref{sec:exact_low_dim}, we extend~\Cref{thm:orth} to the case where $\mathrm{span}\{f(x): x\in\Omega_X\}$ lies in low-dimensions.\looseness=-1

\subsection{Isometric Alignment Up To Classification Boundaries of Independent Contrastive Models}\label{sec:main_theory_class_retrieval}

While the preceding theory establishes conditions for exact geometric alignment, in practice, one often cares about alignment up to concepts or classification. We now analyze this regime, where we seek to distinguish a finite family of prompts $\mathcal Y_{\mathrm{cls}}=\{y_c\}_{c=1}^K\subseteq \Omega_Y$. For any class prompt $y \in \mathcal Y_{\mathrm{cls}}$, decompose the embedding into an identifiable signal $u(y)$ and an unidentifiable residual $w(y)$:
$$  g(y) = u(y) + w(y), \;\; u(y) \coloneqq \mathrm{Proj}_{U_X} g(y),\ \ w(y) \in U_X^\perp.$$

\noindent Assuming the images are isometrically aligned ($\tilde f(x) = Q f(x)$), we define analogously for the target model: $
 \tilde g(y) = Q u(y) + \tilde w(y),\; \tilde w(y) \in (Q U_X)^\perp.$

\begin{definition}\label{def:margin-noise}Define the signal margin $\gamma$ as the class separability within the shared image subspace:$$  \gamma
\coloneqq
\min_{c}
\Big(
\|u(y_c)\|_2^2
-
\max_{k\neq c}\inner{u(y_c)}{u(y_{k})}
\Big).$$
Define the cross-model noise $\eta$ as the worst-case interaction of the unidentifiable residuals:$$  \eta
\coloneqq
\max_{c,k}
\big|\inner{Q w(y_c)}{\tilde w(y_{k})}\big|.$$\end{definition}

\begin{proposition}[Orthogonal Identifiability Up To Class Retrieval, proof in~\Cref{sec:exact_classification}]\label{prop:class-retrieval}
If the signal dominates the noise ($\gamma > 2\eta$), then the aligned prompt $Qg(y_c)$ correctly retrieves its counterpart $\tilde g(y_c)$ in the target model i.e. $\arg\max_{k} \inner{Q g(y_c)}{\tilde g(y_k)} = c
\; \forall c \in \{1,\dots,K\}.$
\end{proposition}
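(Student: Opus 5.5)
The plan is to expand the cross-model score $\inner{Qg(y_c)}{\tilde g(y_k)}$ using the two decompositions and then show that the correct index beats every competitor by at least $\gamma - 2\eta > 0$. First I would fix the setup: $U_X$ denotes $\mathrm{span}\{f(x):x\in\Omega_X\}$, and $Q$ has orthonormal columns ($Q^\top Q=I_d$), as in \Cref{thm:orth}. By definition $u(y)\in U_X$ and $w(y)\in U_X^\perp$. For the target model, $\tilde g(y)=Qu(y)+\tilde w(y)$ with $\tilde w(y)\in(QU_X)^\perp$.

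\textbf{Step 1: expand the score.} For any $c,k$,
\[
\inner{Qg(y_c)}{\tilde g(y_k)}=\inner{Qu(y_c)}{Qu(y_k)}+\inner{Qu(y_c)}{\tilde w(y_k)}+\inner{Qw(y_c)}{Qu(y_k)}+\inner{Qw(y_c)}{\tilde w(y_k)}.
\]
The first term equals $\inner{u(y_c)}{u(y_k)}$ because $Q^\top Q=I_d$. The second term vanishes because $Qu(y_c)\in QU_X$ and $\tilde w(y_k)\perp QU_X$. The third term equals $\inner{w(y_c)}{u(y_k)}=0$, again using $Q^\top Q=I$ together with $w\perp U_X$. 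Therefore
\[
\inner{Qg(y_c)}{\tilde g(y_k)}=\inner{u(y_c)}{u(y_k)}+\inner{Qw(y_c)}{\tilde w(y_k)}.
\]

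\textbf{Step 2: compare indices.} For $k\neq c$, take the difference between the correct score and a competitor:
\[
\inner{Qg(y_c)}{\tilde g(y_c)}-\inner{Qg(y_c)}{\tilde g(y_k)}\ \ge\ \|u(y_c)\|^2-\inner{u(y_c)}{u(y_k)}-2\eta.
\]
The two residual terms are each bounded in absolute value by $\eta$ from \Cref{def:margin-noise}, which accounts for the $-2\eta$. The signal part $\|u(y_c)\|^2-\inner{u(y_c)}{u(y_k)}$ is at least $\gamma$ by the definition of the margin, taking the minimum over $c$ and the maximum over $k\neq c$. The difference is therefore at least $\gamma-2\eta>0$, so the maximum is uniquely attained at $k=c$.

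\textbf{Main obstacle.} The only delicate point is the orthogonality bookkeeping in Step 1, specifically the cross term $\inner{Qw(y_c)}{Qu(y_k)}$. Here $w(y_c)$ lives in $\R^d$, $Q$ is only an isometry into $\R^{\tilde d}$, and $\tilde w$ is orthogonal only to $QU_X$, not to all of $\operatorname{Im}(Q)$. I would use $Q^\top Q=I_d$ to reduce that term to the inner product in $\R^d$, where $w\perp u$ by construction. The remaining residual term $\inner{Qw}{\tilde w}$ is left uncontrolled and absorbed into $\eta$.
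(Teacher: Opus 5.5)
Your proposal is correct and follows the paper's proof essentially line for line. It uses the same expansion, the same use of $Q^\top Q=I_d$ and the two orthogonality relations to reduce the score to $\inner{u(y_c)}{u(y_k)}+\inner{Qw(y_c)}{\tilde w(y_k)}$, and the same $\gamma-2\eta$ margin comparison; the only addition in the paper is an explicit appeal to \Cref{thm:orth_subspace} (under the class-prompt anchor equalities) to justify that $\tilde g(y)-Qu(y)$ lies in $(QU_X)^\perp$, which you take as part of the setup.
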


\noindent Proposition~\ref{prop:class-retrieval} explains our empirical results in~\Cref{sec:all_exp_results}, showing that if the semantic signal ($\gamma$) is robust enough to withstand interference from unidentifiable components ($\eta$), even with imperfect pointwise alignment, we can have perfect retrieval. Throughout the preceding analysis, we assumed exact kernel equality. In \Cref{sec:approx_regime}, we further relax~\Cref{ass:kernel_anchors} to an $\epsilon$-approximate bound: $|\inner{f(x)}{g(y)} - \inner{\tilde f(x)}{\tilde g(y)}| \le \epsilon$ and prove that the map $Q$ becomes an approximate isometry.\looseness=-1

\section{The Procrustes Algorithm}\label{sec:procrustes}
Guided by the theoretical guarantees in~\Cref{sec:theory}, we translate the alignment problem into an optimization procedure. We align the source and target manifolds of one modality (say images) using a set of $N$ unlabelled anchor images $\{x_i\}_{i=1}^N$. Let $X \in \mathbb{R}^{d \times N}$ and $\tilde{X} \in \mathbb{R}^{\tilde{d} \times N}$ be the data matrices containing the centered, normalized embeddings $f(x_i)$ and $\tilde{f}(x_i)$ as columns. We solve for the optimal isometry $\hat{Q}$ by minimizing the transport cost subject to an orthogonality constraint: $\hat{Q} = \text{argmin}_{Q^\top Q = I} \| \tilde{X} - Q X \|_F^2$. This is the classic \emph{Orthogonal Procrustes Problem}, which has a closed-form solution via the Singular Value Decomposition (SVD) of the cross-covariance matrix $M = \tilde{X} X^\top$. Let $U \Sigma V^\top = \text{SVD}(M)$, then $\hat{Q} = U I_{\tilde{d} \times d} V^\top.$ Here, $I_{\tilde{d} \times d}$ is a rectangular identity matrix. This formulation naturally handles $d < \tilde{d}$, where $\hat{Q}$ becomes a semi-orthogonal i.e. $Q^\top Q = I_d$. For additional details and pseudocode, refer to~\Cref{sec:app_learning}\looseness=-1

\section{Experimental Results}\label{sec:all_exp_results}
In this section, we empirically evaluate our central claim across various benchmarks and configurations, leading
to three main takeaways: (i) a single orthogonal map $\mathcal Q$ accurately captures the relationship between independently trained contrastive models and, crucially, the same $\mathcal Q$ applies to both image and text representations (\Cref{sec:mainpaper_main_results_section}); (ii) $\mathcal Q$ is data-efficient and only a few examples suffice to estimate it; it generalizes to unseen classes and even to new downstream datasets without re-fitting (\Cref{sec:mainpaper_seen_unseen,sec:mainpaper_cross_dataset}); and (iii) although more expressive linear or non-linear maps can increase pointwise similarity on the fitted domain, they fail to transfer to the second modality and distort task-relevant geometry, degrading downstream retrieval; enforcing orthogonality in contrast yields the most reliable transfer across models and modalities  (\Cref{sec:mainpaperalternative_maps}). Finally, we show that the learned orthogonal map approximately commutes with cross-modal retrieval across models i.e. direct image alignment and text-mediated alignment recover the same semantic neighborhoods across models (\Cref{sec:mainpaper_knn}).\looseness=-1

\subsection{Training Protocol}\label{sec:mainpaper_train_protocol}
We report each metric both before alignment and after applying the learned orthogonal map $\mathcal Q$, and average results over three random seeds.
We describe the evaluated model pairs, datasets, and metrics below and in detail in~\Cref{sec:app_training}.\looseness=-1

\noindent \textbf{Models.}
We evaluate three independently trained vision-language pairs:
(i) \textsc{CLIP} ViT-B/32 (\texttt{OpenAI}) and \textsc{CLIP} ViT-B/32 trained on \texttt{LAION-400M};
(ii) \textsc{CLIP} ViT-L/14 (\texttt{OpenAI}) and \textsc{SigLIP};
and (iii) \textsc{CLIP} ViT-L/14 (\texttt{OpenAI}) and \textsc{FLAVA} \citep{radford2021clip,schuhmann2021laion400m,zhai2023siglip,singh2022flava}.
We $\ell_2$-normalize all embeddings such that dot products equal cosine similarity.

\noindent \textbf{Datasets.}
We report results on Oxford-IIIT Pets~\citep{parkhi2012cats}, CIFAR-100~\citep{krizhevsky2009learning}, Caltech-101~\citep{fei2004learning}, STL10~\citep{coates2011analysis} and DTD~\citep{cimpoi2014describing}.
For more information about creating text prompts, refer to~\Cref{sec:appendix_benchmarks}. We report results only for Oxford Pets in the main paper and defer results on the remaining datasets to~\Cref{sec:app_additional_experiments}.

\noindent \textbf{Training.} We learn the alignment across models using the standard orthogonal Procrustes solution described in~\Cref{sec:procrustes}. In practice, the two models can differ by a constant offset in embedding space due to finite-sample effects and dataset mismatch. We therefore fit and apply $\mathcal Q$ on centered embeddings, and then re-add the target mean i.e. $z \;\mapsto\; Q\big(z - \mu^{(\cdot)}\big) + \tilde{\mu}^{(\cdot)},$ where $\mu^{(\cdot)}$ and $\tilde{\mu}^{(\cdot)}$ are modality-specific training means of the source and target embeddings i.e. $(\mu^{\text{img}}, \tilde{\mu}^{\text{img}})$ when aligning image embeddings and $(\mu^{\text{text}}, \tilde{\mu}^{\text{text}})$ when aligning text embeddings. Centering isolates the rotational relationship by removing this offset while preserving the orthogonal correspondence in the centered space\looseness=-1\footnote{We find that mean-centering has a negligible effect on class-level retrieval and decision geometry; it primarily changes pointwise cosine agreement (see~\Cref{sec:app_procrustes_variants}). Thus, a pure orthogonal map on raw embeddings suffices for semantic alignment and preserves decision geometry.}. Theoretically, this mean offset vanishes when the two models are exactly related by an orthogonal map, as discussed in~\Cref{remark:meancentering}.

\noindent \textbf{Performance Metrics.}
We report five evaluation metrics: (1) Paired-instance cosine similarity, measured between aligned and target embeddings for either images or texts; (2) top-1 retrieval across models, evaluated for both image–image and text–text retrieval by nearest-neighbor matching at the \emph{class} level. Zero-shot classification for (3) aligned images against target text (aligned image–text), (4) target images against aligned text (image–aligned text), and (5) both images and text aligned (aligned image–aligned text). All metrics are computed using cosine similarity; full metric definitions are deferred to~\Cref{sec:app_metrics}.\looseness=-1

\subsection{Independently Trained Contrastive Models Differ by an Orthogonal Map Common To Both Modalities\looseness=-1}\label{sec:mainpaper_main_results_section}

\begin{figure}[!htb]
    \centering
    \includegraphics[width=1.\linewidth]{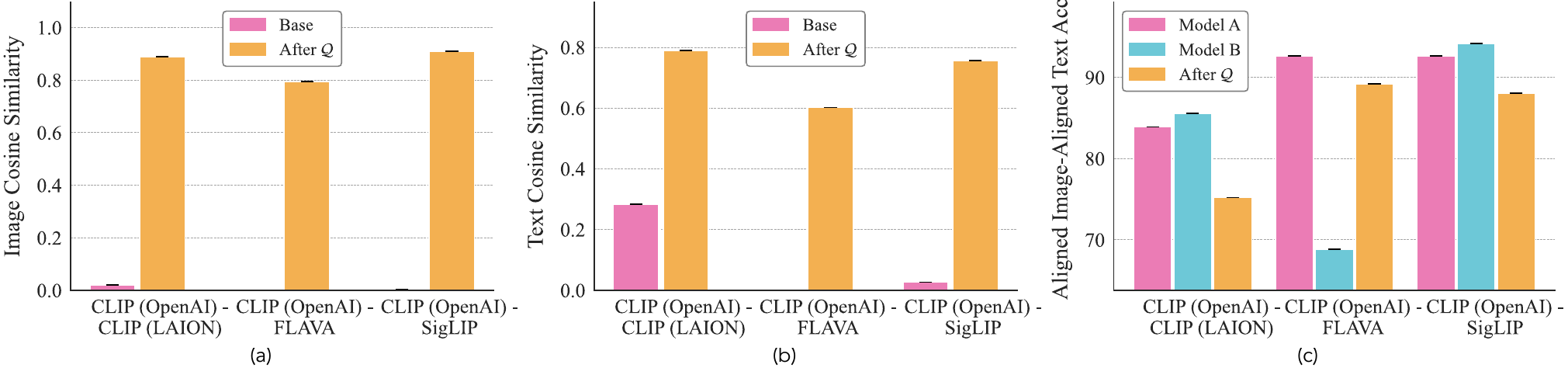}
    \caption{\textit{Inter-model alignment on Oxford-Pets before and after fitting a single orthogonal map $\mathcal Q$ from paired images.} (a) Image-image cosine similarity; (b) Text-text cosine similarity; (c) Aligned Image and Aligned text class retrieval accuracy. Here, Model A and Model B denote each model’s within-model image-to-text baseline. $\mathcal Q$ aligns images across models with a single orthogonal map, and the same $\mathcal Q$ learned only from image embeddings transfers to text, boosting text-text cosine from near-chance to near-oracle, all while preserving strong image classification accuracy.\looseness=-1}
    \label{fig:mainpaper_mainresults_oxford}
\end{figure}

\noindent \Cref{fig:mainpaper_mainresults_oxford} summarizes our findings on Oxford-Pets across three independently trained pairs. 

\noindent \textbf{An Orthogonal Map Aligns Different Models.} First, from~\Cref{fig:mainpaper_mainresults_oxford}(a), we find that a \emph{single orthogonal map} almost perfectly aligns \emph{image} embeddings across distinct multimodal contrastive models, improving the image-image cosine similarity from near zero to $\sim 0.8 - 0.9$. We observe analogous findings for \emph{text} embeddings (see \Cref{fig:app_text_trained_oxford})(c), indicating that independently trained contrastive models are related by an approximately orthogonal map. 

\noindent\textbf{This Map Transfers Across Modalities.} Second, and more importantly, this map is \emph{modality-invariant}: \Cref{fig:mainpaper_mainresults_oxford}(b) shows that the same orthogonal map $\mathcal Q$ fit using paired images sharply improves \emph{text} alignment, significantly improving text-text cosine similarity across model pairs.
Finally, as shown in \Cref{fig:mainpaper_mainresults_oxford}(c), aligned-image-to-aligned-text retrieval remains high, showing that $\mathcal Q$ preserves task-relevant geometry while eliminating any need to compute the second model’s text embeddings. Additionally, in some cases, $\mathcal Q$ effectively transfers model A’s stronger decision geometry into model B’s space, matching or even exceeding model B’s native performance.
Results across additional datasets and metrics appear in~\Cref{sec:app_all_metrics}.\looseness=-1

\noindent  We extend these findings to mismatched embedding dimensions in~\Cref{sec:app_different_dims} and also show the reverse direction i.e. fitting $\mathcal Q$ on text to align images in~\Cref{sec:app_text_trained}. Finally, we ablate mean-centering and find that it has negligible effect on class-level retrieval and decision geometry, and mainly affects pointwise cosine agreement. Thus, a pure orthogonal map on raw embeddings suffices for semantic alignment and preserves decision geometry (see \Cref{sec:app_procrustes_variants}).\looseness=-1

\subsection{Only a Few Data Points Are Needed to Learn the Orthogonal Map}\label{sec:mainpaper_seen_unseen}

\begin{figure*}[!htb]
\centering
\begin{minipage}{0.42\textwidth}
\centering
\includegraphics[width=1.\textwidth]{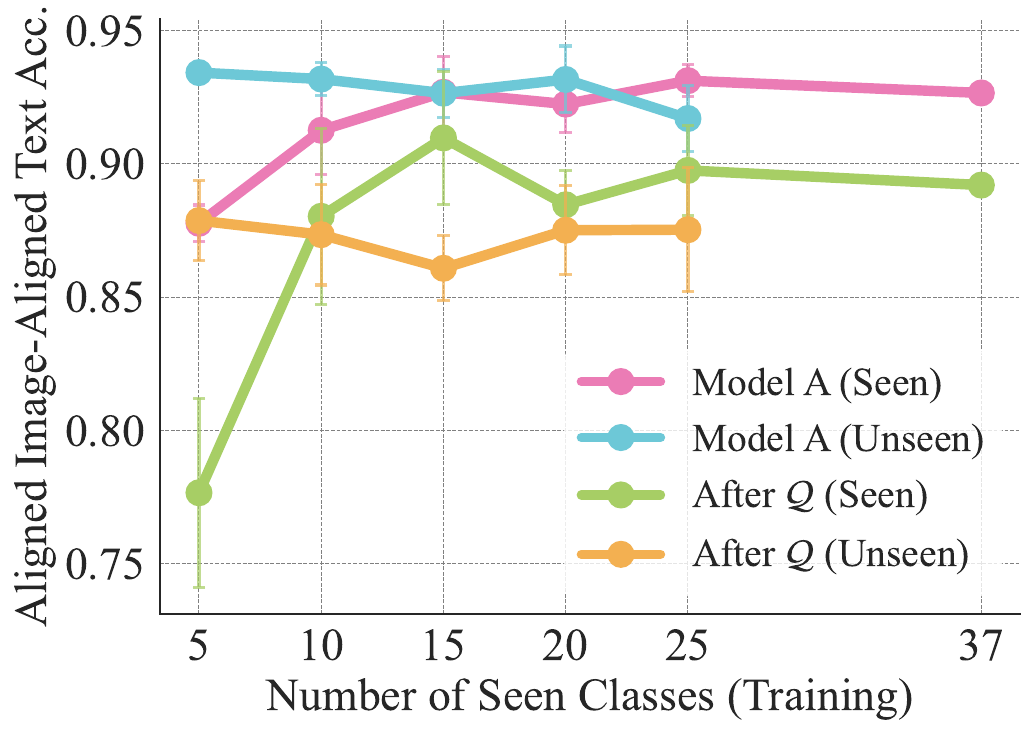}
\caption{Generalization of orthogonal alignment under limited supervision (Oxford Pets; \textsc{CLIP} ({OpenAI}) and FLAVA) as measured in aligned-image-to-aligned text accuracy. $\mathcal Q$ learned from a few classes transfers and generalizes to unseen classes.\looseness=-1}
\label{fig:mainpaper_split_oxford}
\end{minipage}
\hfill
\begin{minipage}{0.54\textwidth}
\centering
\begin{subfigure}{0.49\linewidth}
    \centering
    \includegraphics[width=\textwidth]{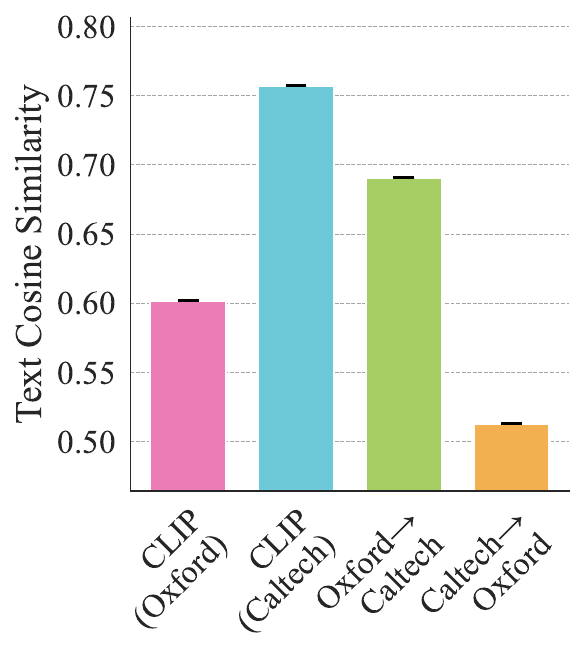}
    \end{subfigure}
    \begin{subfigure}{0.49\linewidth}
    \centering
    \includegraphics[width=\textwidth]{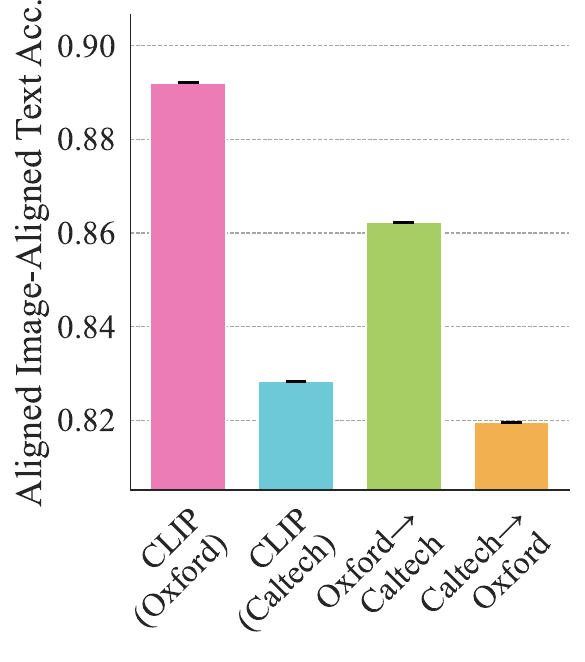}
    \end{subfigure}
    \caption{Generalization of the orthogonal map across data distributions across \textsc{CLIP} (OpenAI) and FLAVA. (left) text-text cosine similarity; (right) image classification accuracy post-transformation using aligned images and aligned texts. A map $\mathcal Q$ fit on Oxford-Pets transfers to Caltech-101 (and vice versa).\looseness=-1}
    \label{fig:mainpaper_cross_model_oxford_caltech}
\end{minipage}
\end{figure*}

\noindent \Cref{thm:orth} states that 
if the multimodal kernels induced by two contrastive models agree on a sufficiently rich but \emph{small finite} set of anchors, a single global orthogonal map aligns their representations across both modalities. As shown in~\Cref{fig:mainpaper_split_oxford}, we empirically validate this on Oxford-Pets, fitting $\mathcal Q$ using paired images from only $N$ classes and evaluating transfer on the remaining unseen classes. Here, Model A and Model B denote each model’s within-model image-to-text baseline. Performance on both seen and unseen classes improves quickly with just a few anchor classes and essentially saturates once $N$ reaches a modest value (around 10-15 classes), after which additional anchors provide little benefit. Thus, practitioners can recover near-full cross-model transfer by fitting $\mathcal Q$ on a lightweight image-only calibration set, rather than curating large-scale cross-model supervision. For additional metrics and results across model pairs and datasets, refer to~\Cref{sec:app_seen_unseen}.\looseness=-1

\subsection{The Orthogonal Map Generalizes Broadly}\label{sec:mainpaper_cross_dataset}
The previous experiment shows that $\mathcal Q$ is identifiable from a few anchors and generalizes to unseen classes within the same dataset. We next ask a stronger version of this question: does the \emph{same} $\mathcal Q$ transfer to a completely new downstream distribution \emph{without} re-fitting? From~\Cref{fig:mainpaper_cross_model_oxford_caltech} (left), a map learned on Oxford-Pets substantially increases text-text cosine similarity on Caltech-101 (and vice versa). Correspondingly, on the right, aligned-image-to-aligned-text classification remains strong under transfer---often closely matching or even exceeding an in-domain fit---indicating that $\mathcal Q$  generalizes beyond the calibration dataset.\looseness=-1

\subsection{Evaluating Alternative Alignment Maps Than The Orthogonal Mapping}\label{sec:mainpaperalternative_maps}

\begin{figure}[!htb]
    \centering
    \includegraphics[width=1.\linewidth]{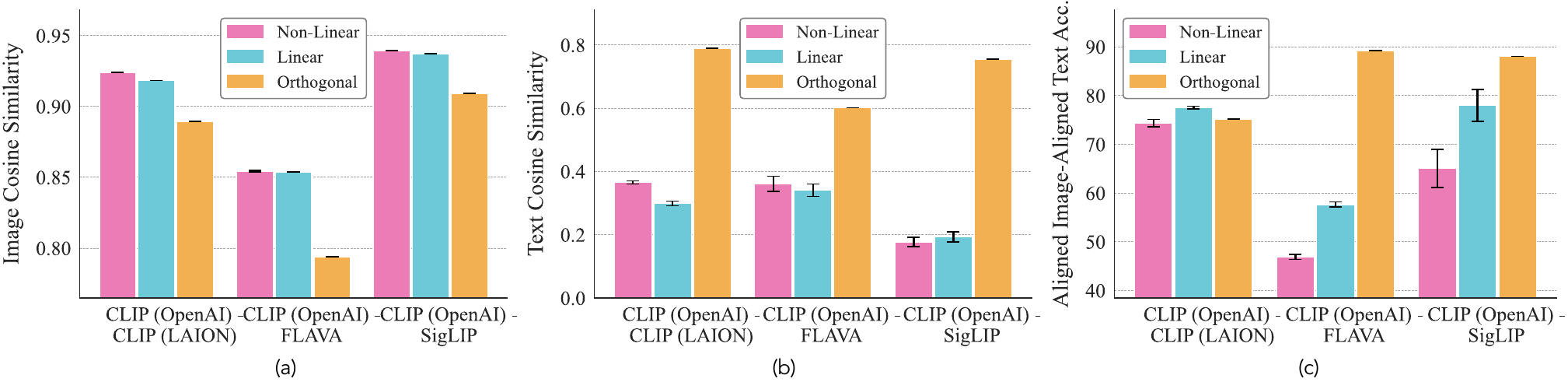}
    \caption{Comparison of alignment strategies on Oxford Pets across model pairs in terms of (a) image cosine similarity; (b) text cosine similarity, and (c) aligned-image-to-aligned-text accuracy. Unlike orthogonal maps, linear and non-linear maps distort task-relevant geometry and do not transfer to text, leading to poor downstream performance.\looseness=-1}
    \label{fig:mainpaper_compare_aligners}
\end{figure}

\noindent Here, we ablate the alignment design by comparing three maps of increasing expressiveness: (i) an orthogonal map $Q$, (ii) a linear map, and (iii) a non-linear MLP. As shown in~\Cref{fig:mainpaper_compare_aligners}, more expressive maps improve \emph{pointwise} image-image cosine similarity. However, these maps transfer poorly to the text modality and distort the image-text geometry. In contrast, the orthogonal map consistently performs best on both pointwise text cosine similarity and geometry-sensitive downstream metrics. Extended results for additional datasets are provided in \Cref{sec:app_compare_strategies}.\looseness=-1

\noindent \textbf{Additional Ablations.} In~\Cref{sec:app_compositionality}, we show that $\mathcal Q$ remains consistent under \emph{cycle} and \emph{composition}. When we hold the training data fixed and vary only the design choices, transfer is even stronger than under dataset shift (as shown in~\Cref{sec:app_openi_openai}). Finally, in~\Cref{sec:app_semantic} we show that $\mathcal Q$ preserves fine-grained attributes (pose, etc.), beyond coarse class-level semantics.\looseness=-1

\subsection{Qualitative Evidence of Commutativity of Image-Text Alignment Paths}\label{sec:mainpaper_knn}
\begin{figure}[!htb]
    \centering
    \includegraphics[width=.87\textwidth]{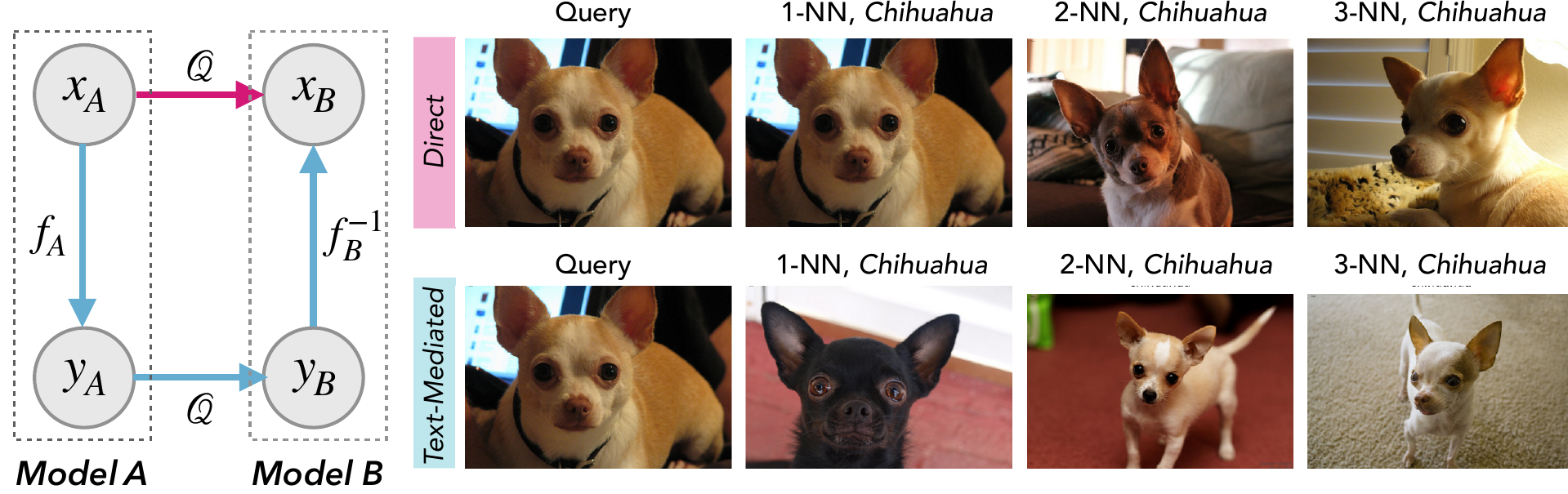} 
    \caption{\textit{Qualitative k-NN retrieval under two alignment paths for Oxford Pets.} We compare two routes from a source image $x$ into the target model’s image space after alignment by $\mathcal Q$: a direct image-alignment path (top row) and a text-mediated path (bottom row). Both routes yield highly consistent neighborhoods, showing that the two alignment paths approximately close as a commuting diagram.\looseness=-1}  
    \label{fig:mainpaper_knn_alignment_results_1}
\end{figure}
\noindent In this section, we test whether $\mathcal Q$ induces a consistent, modality-invariant geometry by comparing two retrieval routes from a source image $x$. \emph{Direct} (\Cref{fig:mainpaper_knn_alignment_results_1} (pink)): map the image embedding with $\mathcal Q$ and retrieve its top-$5$ nearest target images. \emph{Text-mediated} (\Cref{fig:mainpaper_knn_alignment_results_1} (blue)): retrieve the nearest source text for $x$, map it with $\mathcal Q$, retrieve the top-$1$ nearest target text, then retrieve the top-$5$ target images associated with that text.

\noindent \Cref{fig:mainpaper_knn_alignment_results_1} (and Appendix~\Cref{fig:knn_alignment_results_2}) shows that both routes recover essentially the same semantic neighborhood. In terms of the commuting diagram, transporting $x$ by $\mathcal Q$ and then applying target-space retrieval agrees with first retrieving through the source image-to-text operator, transporting via $\mathcal Q$, and then retrieving back to images. This indicates that $\mathcal Q$ approximately commutes with the cross-modal nearest-neighbor operators on this domain.

\section{Discussion and Conclusion}\label{sec:mainpaper_discussion}
\textbf{Conclusion.} In this work, we show a rigid form of geometric convergence in multimodal contrastive models: across independently trained systems (with different data and modeling choices), a single orthogonal map learned in one modality can approximately align both image and text representations, inducing a shared coordinate system. Moreover, estimating this map requires only a small anchor set from a single modality (image or text). Theoretically, we characterize conditions under which agreement of the multimodal similarity kernel forces such a shared isometry and establish guarantees even under approximate agreement.

\noindent \textbf{Discussion and Implications.} Our results have several practical and scientific implications. In large embedding systems, switching models typically triggers full re-embedding, often infeasible at modern scale (billions of vectors)~\citep{jayaram2019diskann, johnson2019billion} and costly in both time and compute~\citep{openai_pricing}. We show that a small anchor set can recover the orthogonal map that restores compatibility across models. Since it preserves inner products, it supports model upgrades without re-encoding while keeping the embedding geometry intact. Finally, models often specialize differently; one might have a stronger vision tower, while another has a stronger or multilingual text tower. Our approach lets practitioners swap and combine towers while preserving image-text geometry. For an extended discussion, refer to~\Cref{sec: app_discussion}. \looseness=-1 

\noindent \textbf{Limitations and Future Work.} Our evaluation focuses on classification-style semantics; we do not establish gains for fine-grained retrieval or dense ranking. Although an orthogonal map preserves angles, we do not test whether fine-grained attributes remain easily decodable after alignment; a natural next step is to train lightweight decoders on the aligned space. Finally, we study image–text contrastive encoders; extending to other modalities (e.g., audio, video) is an important direction.

\section{Acknowledgements}
S.G. acknowledges the support of the MathWorks Engineering Fellowship. P.I. acknowledges support from a Packard Fellowship, the MIT-IBM Watson AI Lab, and the ONR MURI grant N00014-22-1-2740. S.J. acknowledges the support of the NSF AI Institute TILOS (NSF CCF2112665) and the Alexander von Humboldt Foundation. V.G. acknowledges the support from Saab-WASP (grant 411025), Academy of Finland (grant 342077), and the Jane and Aatos Erkko Foundation (grant 7001703). We thank Kiril Bangachev for thorough and insightful discussions.

\newpage

\bibliographystyle{abbrvnat}
\bibliography{bib}

@string{icml = "International Conference on Machine Learning (ICML)"}

@string{neurips = "Advances in Neural Information Processing Systems (NeurIPS)"}

@string{iccv = "International Conference on Computer Vision (ICCV)"}

@string{cvpr = "IEEE Conference on Computer Vision and Pattern Recognition (CVPR)"}

@string{iclr = "International Conference on Learning Representations (ICLR)"}

@article{huh2024prh,
  title   = {The Platonic Representation Hypothesis},
  author  = {Huh, Minyoung and Cheung, Brian and Wang, Tongzhou and Isola, Phillip},
  journal = {arXiv preprint arXiv:2405.07987},
  year    = {2024},
  url     = {https://arxiv.org/abs/2405.07987}
}

@inproceedings{radford2021clip,
  title     = {Learning Transferable Visual Models From Natural Language Supervision},
  author    = {Radford, Alec and Kim, Jong Wook and Hallacy, Chris and Ramesh, Aditya and Goh, Gabriel and Agarwal, Sandhini and Sastry, Girish and Askell, Amanda and Mishkin, Pamela and Clark, Jack and Krueger, Gretchen and Sutskever, Ilya},
  booktitle = {Proceedings of the 38th International Conference on Machine Learning},
  series    = {Proceedings of Machine Learning Research},
  volume    = {139},
  pages     = {8748--8763},
  year      = {2021},
  publisher = {PMLR},
  url       = {https://proceedings.mlr.press/v139/radford21a.html}
}

@inproceedings{kornblith2019cka,
  title     = {Similarity of Neural Network Representations Revisited},
  author    = {Kornblith, Simon and Norouzi, Mohammad and Lee, Honglak and Hinton, Geoffrey},
  booktitle = {Proceedings of the 36th International Conference on Machine Learning},
  series    = {Proceedings of Machine Learning Research},
  volume    = {97},
  pages     = {3519--3529},
  year      = {2019},
  publisher = {PMLR},
  url       = {https://proceedings.mlr.press/v97/kornblith19a.html}
}

@article{merullo2022linearly,
  title={Linearly mapping from image to text space},
  author={Merullo, Jack and Castricato, Louis and Eickhoff, Carsten and Pavlick, Ellie},
  journal={arXiv preprint arXiv:2209.15162},
  year={2022}
}

@article{gupta2025better,
  title={Better together: Leveraging unpaired multimodal data for stronger unimodal models},
  author={Gupta, Sharut and Sundaram, Shobhita and Wang, Chenyu and Jegelka, Stefanie and Isola, Phillip},
  journal=iclr,
  year={2026}
}

@article{bangachev2025global,
  title={Global Minimizers of Sigmoid Contrastive Loss},
  author={Bangachev, Kiril and Bresler, Guy and Noman, Iliyas and Polyanskiy, Yury},
  journal={arXiv preprint arXiv:2509.18552},
  year={2025}
}

@inproceedings{shi2023towards,
  title={Towards understanding the modality gap in clip},
  author={Shi, Peiyang and Welle, Michael C and Bj{\"o}rkman, M{\aa}rten and Kragic, Danica},
  booktitle={ICLR 2023 workshop on multimodal representation learning: perks and pitfalls},
  year={2023}
}

@article{udandarao2022understanding,
  title={Understanding and fixing the modality gap in vision-language models},
  author={Udandarao, Vishaal},
  journal={Master's thesis, University of Cambridge},
  volume={32},
  year={2022}
}

@article{schrodi2024two,
  title={Two effects, one trigger: On the modality gap, object bias, and information imbalance in contrastive vision-language representation learning},
  author={Schrodi, Simon and Hoffmann, David T and Argus, Max and Fischer, Volker and Brox, Thomas},
  journal={arXiv preprint arXiv:2404.07983},
  year={2024}
}

@InProceedings{roeder21a,
  title = 	 {On Linear Identifiability of Learned Representations},
  author =       {Roeder, Geoffrey and Metz, Luke and Kingma, Durk},
  booktitle = 	 {Proceedings of the 38th International Conference on Machine Learning},
  pages = 	 {9030--9039},
  year = 	 {2021},
  editor = 	 {Meila, Marina and Zhang, Tong},
  volume = 	 {139},
  series = 	 {Proceedings of Machine Learning Research},
  month = 	 {18--24 Jul},
  publisher =    {PMLR},
  url = 	 {https://proceedings.mlr.press/v139/roeder21a.html}
}

@article{dev2021closed,
  title={Closed form word embedding alignment},
  author={Dev, Sunipa and Hassan, Safia and Phillips, Jeff M},
  journal={Knowledge and Information Systems},
  volume={63},
  number={3},
  pages={565--588},
  year={2021},
  publisher={Springer}
}

@article{gurnee2024universal,
  title={Universal neurons in gpt2 language models},
  author={Gurnee, Wes and Horsley, Theo and Guo, Zifan Carl and Kheirkhah, Tara Rezaei and Sun, Qinyi and Hathaway, Will and Nanda, Neel and Bertsimas, Dimitris},
  journal={arXiv preprint arXiv:2401.12181},
  year={2024}
}

@article{ziyin2025proof,
  title={Proof of a perfect platonic representation hypothesis},
  author={Ziyin, Liu and Chuang, Isaac},
  journal={arXiv preprint arXiv:2507.01098},
  year={2025}
}

@inproceedings{chughtai2023toy,
  title={A toy model of universality: Reverse engineering how networks learn group operations},
  author={Chughtai, Bilal and Chan, Lawrence and Nanda, Neel},
  booktitle={International Conference on Machine Learning},
  pages={6243--6267},
  year={2023},
  organization={PMLR}
}

@inproceedings{raghu2017svcca,
  title     = {SVCCA: Singular Vector Canonical Correlation Analysis for Deep Learning Dynamics and Interpretability},
  author    = {Raghu, Maithra and Gilmer, Justin and Yosinski, Jason and Sohl-Dickstein, Jascha},
  booktitle = {Advances in Neural Information Processing Systems},
  year      = {2017},
  url       = {https://arxiv.org/abs/1706.05806}
}

@inproceedings{bansal2021stitching,
  title     = {Revisiting Model Stitching to Compare Neural Representations},
  author    = {Bansal, Yamini and Nakkiran, Preetum and Barak, Boaz},
  booktitle = {Advances in Neural Information Processing Systems},
  year      = {2021},
  url       = {https://arxiv.org/abs/2106.07682}
}

@article{schuhmann2021laion400m,
  title   = {LAION-400M: Open Dataset of CLIP-Filtered 400 Million Image-Text Pairs},
  author  = {Schuhmann, Christoph and Vencu, Richard and Beaumont, Romain and Kaczmarczyk, Robert and Mullis, Clayton and Katta, Aarush and Coombes, Theo and Jitsev, Jenia and Komatsuzaki, Aran},
  journal = {arXiv preprint arXiv:2111.02114},
  year    = {2021},
  url     = {https://arxiv.org/abs/2111.02114}
}

@article{kriegeskorte2008rsa,
  title   = {Representational similarity analysis---connecting the branches of systems neuroscience},
  author  = {Kriegeskorte, Nikolaus and Mur, Marieke and Bandettini, Peter A.},
  journal = {Frontiers in Systems Neuroscience},
  volume  = {2},
  pages   = {4},
  year    = {2008},
  doi     = {10.3389/neuro.06.004.2008},
  url     = {https://www.frontiersin.org/articles/10.3389/neuro.06.004.2008/full}
}

@inproceedings{morcos2018pwcca,
  title     = {Insights on representational similarity in neural networks with canonical correlation},
  author    = {Morcos, Ari S. and Raghu, Maithra and Bengio, Samy},
  booktitle = {Advances in Neural Information Processing Systems},
  year      = {2018},
  url       = {https://arxiv.org/abs/1806.05759}
}

@inproceedings{lenc2015equivariance,
  title     = {Understanding Image Representations by Measuring Their Equivariance and Equivalence},
  author    = {Lenc, Karel and Vedaldi, Andrea},
  booktitle = {Proceedings of the IEEE Conference on Computer Vision and Pattern Recognition (CVPR)},
  year      = {2015},
  url       = {https://openaccess.thecvf.com/content_cvpr_2015/html/Lenc_Understanding_Image_Representations_2015_CVPR_paper.html}
}

@article{mikolov2013exploiting,
  title         = {Exploiting Similarities among Languages for Machine Translation},
  author        = {Mikolov, Tomas and Le, Quoc V. and Sutskever, Ilya},
  journal       = {arXiv preprint arXiv:1309.4168},
  year          = {2013},
  url           = {https://arxiv.org/abs/1309.4168}
}

@InProceedings{moayeri2023text2concept,
  title     = {Text-To-Concept (and Back) via Cross-Model Alignment},
  author    = {Moayeri, Mazda and Rezaei, Keivan and Sanjabi, Maziar and Feizi, Soheil},
  booktitle = {Proceedings of the 40th International Conference on Machine Learning},
  pages     = {25037--25060},
  year      = {2023},
  editor    = {Krause, Andreas and Brunskill, Emma and Cho, Kyunghyun and Engelhardt, Barbara and Sabato, Sivan and Scarlett, Jonathan},
  volume    = {202},
  series    = {Proceedings of Machine Learning Research},
  month     = {23--29 Jul},
  publisher = {PMLR},
  url       = {https://proceedings.mlr.press/v202/moayeri23a.html}
}

@article{maystre2025embeddingmodelsmeet,
  title         = {When Embedding Models Meet: Procrustes Bounds and First-Order Equivalences for Embedding Interoperability},
  author        = {Maystre, Lucas and others},
  journal       = {arXiv preprint arXiv:2510.13406},
  year          = {2025},
  url           = {https://arxiv.org/abs/2510.13406}
}

@InProceedings{jia2021align,
  title     = {Scaling Up Visual and Vision-Language Representation Learning With Noisy Text Supervision},
  author    = {Jia, Chao and Yang, Yinfei and Xia, Ye and Chen, Yi-Ting and Parekh, Zarana and Pham, Hieu and Le, Quoc and Sung, Yun-Hsuan and Li, Zhen and Duerig, Tom},
  booktitle = {Proceedings of the 38th International Conference on Machine Learning},
  pages     = {4904--4916},
  year      = {2021},
  editor    = {Meila, Marina and Zhang, Tong},
  volume    = {139},
  series    = {Proceedings of Machine Learning Research},
  month     = {18--24 Jul},
  publisher = {PMLR},
  url       = {https://proceedings.mlr.press/v139/jia21b.html}
}

@article{haochen2021provable,
  title={Provable guarantees for self-supervised deep learning with spectral contrastive loss},
  author={HaoChen, Jeff Z and Wei, Colin and Gaidon, Adrien and Ma, Tengyu},
  journal={Advances in neural information processing systems},
  volume={34},
  pages={5000--5011},
  year={2021}
}

@article{zhai2023siglip,
  title         = {Sigmoid Loss for Language Image Pre-training},
  author        = {Zhai, Xiaohua and others},
  journal       = {arXiv preprint arXiv:2303.15343},
  year          = {2023},
  url           = {https://arxiv.org/abs/2303.15343}
}

@article{singh2022flava,
  title         = {{FLAVA}: A Foundational Language and Vision Alignment Model},
  author        = {Singh, Amanpreet and others},
  journal       = {arXiv preprint arXiv:2112.04482},
  year          = {2022},
  url           = {https://arxiv.org/abs/2112.04482}
}

@inproceedings{liang2022modalitygap,
  title     = {Mind the Gap: Understanding the Modality Gap in Multi-modal Contrastive Representation Learning},
  author    = {Liang, Weixin and Zhang, Yuhui and Kwon, Yongchan and Yeung, Serena and Zou, James},
  booktitle = {Advances in Neural Information Processing Systems},
  year      = {2022},
}

@inproceedings{shen2020towards,
  title={Towards backward-compatible representation learning},
  author={Shen, Yantao and Xiong, Yuanjun and Xia, Wei and Soatto, Stefano},
  booktitle={Proceedings of the IEEE/CVF Conference on Computer Vision and Pattern Recognition},
  pages={6368--6377},
  year={2020}
}

@inproceedings{hu2022bcemb,
  title     = {Learning Backward Compatible Embeddings},
  author    = {Hu, (First name unknown) and others},
  booktitle = {Proceedings of the ACM SIGKDD International Conference on Knowledge Discovery and Data Mining},
  year      = {2022}
}

@article{gupta2023structuring,
  title={Structuring Representation Geometry with Rotationally Equivariant Contrastive Learning},
  author={Gupta, Sharut and Robinson, Joshua and Lim, Derek and Villar, Soledad and Jegelka, Stefanie},
  journal=iclr,
  year={2023}
}

@inproceedings{jang2025xbt,
  title     = {Towards Cross-modal Backward-compatible Representation Learning for Vision-Language Models},
  author    = {Jang, Young Kyun and Lim, Ser-nam},
  booktitle = {Proceedings of the IEEE/CVF International Conference on Computer Vision (ICCV)},
  year      = {2025},
  url       = {https://openaccess.thecvf.com/content/ICCV2025/papers/Jang_Towards_Cross-modal_Backward-compatible_Representation_Learning_for_Vision-Language_Models_ICCV_2025_paper.pdf}
}

@misc{openai_pricing,
  author       = {{OpenAI}},
  title        = {API Pricing},
  year         = {2024},
  howpublished = {\url{https://openai.com/api/pricing/}},
  note         = {Accessed: January 2026}
}

@article{johnson2019billion,
  title={Billion-scale similarity search with GPUs},
  author={Johnson, Jeff and Douze, Matthijs and J{\'e}gou, Herv{\'e}},
  journal={IEEE Transactions on Big Data},
  volume={7},
  number={3},
  pages={535--547},
  year={2019},
  publisher={IEEE}
}

@article{jayaram2019diskann,
  title={Diskann: Fast accurate billion-point nearest neighbor search on a single node},
  author={Jayaram Subramanya, Suhas and Devvrit, Fnu and Simhadri, Harsha Vardhan and Krishnawamy, Ravishankar and Kadekodi, Rohan},
  journal={Advances in neural information processing Systems},
  volume={32},
  year={2019}
}

@inproceedings{meng2021learning,
  title={Learning compatible embeddings},
  author={Meng, Qiang and Zhang, Chixiang and Xu, Xiaoqiang and Zhou, Feng},
  booktitle={Proceedings of the IEEE/CVF International Conference on Computer Vision},
  pages={9939--9948},
  year={2021}
}

@article{oord2018cpc,
  title         = {Representation Learning with Contrastive Predictive Coding},
  author        = {Oord, Aaron van den and Li, Yazhe and Vinyals, Oriol},
  journal       = {arXiv preprint arXiv:1807.03748},
  year          = {2018},
  url           = {https://arxiv.org/abs/1807.03748}
}

@article{arora2019contrastive,
  title         = {A Theoretical Analysis of Contrastive Unsupervised Representation Learning},
  author        = {Arora, Sanjeev and Khandeparkar, Hrishikesh and Khodak, Mikhail and Plevrakis, Orestis and Saunshi, Nikunj},
  journal       = {arXiv preprint arXiv:1902.09229},
  year          = {2019},
  url           = {https://arxiv.org/abs/1902.09229}
}

@InProceedings{wang2020alignmentuniformity,
  title     = {Understanding Contrastive Representation Learning through Alignment and Uniformity on the Hypersphere},
  author    = {Wang, Tongzhou and Isola, Phillip},
  booktitle = {Proceedings of the 37th International Conference on Machine Learning},
  pages     = {9929--9939},
  year      = {2020},
  editor    = {III, Hal Daum{\'e} and Singh, Aarti},
  volume    = {119},
  series    = {Proceedings of Machine Learning Research},
  month     = {13--18 Jul},
  publisher = {PMLR},
  url       = {https://proceedings.mlr.press/v119/wang20k.html}
}

@InProceedings{zimmermann2021invert,
  title     = {Contrastive Learning Inverts the Data Generating Process},
  author    = {Zimmermann, Roland S. and Sharma, Yash and Schneider, Steffen and Bethge, Matthias and Brendel, Wieland},
  booktitle = {Proceedings of the 38th International Conference on Machine Learning},
  pages     = {12979--12990},
  year      = {2021},
  editor    = {Meila, Marina and Zhang, Tong},
  volume    = {139},
  series    = {Proceedings of Machine Learning Research},
  month     = {18--24 Jul},
  publisher = {PMLR},
  url       = {https://proceedings.mlr.press/v139/zimmermann21a.html}
}

@inproceedings{cristianini2002kernelalignment,
  title     = {On Kernel-Target Alignment},
  author    = {Cristianini, Nello and Elisseeff, Andre and Shawe-Taylor, John and Kandola, Jaz},
  booktitle = {Advances in Neural Information Processing Systems},
  year      = {2002},
  url       = {https://papers.neurips.cc/paper/1946-on-kernel-target-alignment.pdf}
}

@article{cortes2012centeredalignment,
  title   = {Algorithms for Learning Kernels Based on Centered Alignment},
  author  = {Cortes, Corinna and Mohri, Mehryar and Rostamizadeh, Afshin},
  journal = {Journal of Machine Learning Research},
  volume  = {13},
  number  = {28},
  pages   = {795--828},
  year    = {2012},
  url     = {https://jmlr.org/papers/v13/cortes12a.html}
}

@article{garipov2018modeconnectivity,
  title={Loss surfaces, mode connectivity, and fast ensembling of dnns},
  author={Garipov, Timur and Izmailov, Pavel and Podoprikhin, Dmitrii and Vetrov, Dmitry P and Wilson, Andrew G},
  journal={Advances in neural information processing systems},
  volume={31},
  year={2018}
}

@inproceedings{draxler2018essentially,
  title={Essentially no barriers in neural network energy landscape},
  author={Draxler, Felix and Veschgini, Kambis and Salmhofer, Manfred and Hamprecht, Fred},
  booktitle={International conference on machine learning},
  pages={1309--1318},
  year={2018},
  organization={PMLR}
}

@article{entezari2021role,
  title={The role of permutation invariance in linear mode connectivity of neural networks},
  author={Entezari, Rahim and Sedghi, Hanie and Saukh, Olga and Neyshabur, Behnam},
  journal={arXiv preprint arXiv:2110.06296},
  year={2021}
}

@article{ainsworth2022git,
  title={Git re-basin: Merging models modulo permutation symmetries},
  author={Ainsworth, Samuel K and Hayase, Jonathan and Srinivasa, Siddhartha},
  journal={arXiv preprint arXiv:2209.04836},
  year={2022}
}

@inproceedings{parkhi2012cats,
  title={Cats and dogs},
  author={Parkhi, Omkar M and Vedaldi, Andrea and Zisserman, Andrew and Jawahar, CV},
  booktitle={2012 IEEE conference on computer vision and pattern recognition},
  pages={3498--3505},
  year={2012},
  organization={IEEE}
}

@inproceedings{fei2004learning,
  title={Learning generative visual models from few training examples: An incremental bayesian approach tested on 101 object categories},
  author={Fei-Fei, Li and Fergus, Rob and Perona, Pietro},
  booktitle={2004 conference on computer vision and pattern recognition workshop},
  pages={178--178},
  year={2004},
  organization={IEEE}
}

@misc{krizhevsky2009learning,
  title={Learning multiple layers of features from tiny images.(2009)},
  author={Krizhevsky, Alex and Hinton, Geoffrey and others},
  year={2009}
}

@inproceedings{cimpoi2014describing,
  title={Describing textures in the wild},
  author={Cimpoi, Mircea and Maji, Subhransu and Kokkinos, Iasonas and Mohamed, Sammy and Vedaldi, Andrea},
  booktitle={Proceedings of the IEEE conference on computer vision and pattern recognition},
  pages={3606--3613},
  year={2014}
}

@inproceedings{coates2011analysis,
  title={An analysis of single-layer networks in unsupervised feature learning},
  author={Coates, Adam and Ng, Andrew and Lee, Honglak},
  booktitle={Proceedings of the fourteenth international conference on artificial intelligence and statistics},
  pages={215--223},
  year={2011},
  organization={JMLR Workshop and Conference Proceedings}
}

@InProceedings{saunshi22contrastive,
  author = 	 {N. Saunshi and J. Ash and S. Goel and D. Misra and C. Zhang and S. Arora and S. Kakade and A. Krishnamurthy},
  title = 	 {Understanding Contrastive Learning Requires Incorporating Inductive Biases},
  booktitle = icml,
  year = 	 2022}

@InProceedings{robinson21short,
  author = 	 {J. Robinson and L. Sun and K. Yu and K. Batmanghelich and S. Jegelka and S. Sra},
  title = 	 {Can contrastive learning avoid shortcut solutions?},
  booktitle = {Neural Information Processing Systems (NeurIPS)},
  year = 	 2021}

\newpage
\appendix
\crefname{appendix}{Appendix}{Appendices}
\Crefname{appendix}{Appendix}{Appendices}
\crefalias{section}{appendix}
\crefalias{subsection}{appendix}
\crefalias{subsubsection}{appendix}
\startcontents[appendices]
\printcontents[appendices]{}{1}{\section*{Appendix}}
\newpage

\section{Additional Related Works}\label{sec:related_work_appendix}

\textbf{Representational Convergence.} A long-standing theme in representation learning is whether independently trained networks converge to comparable internal representations. This question is frequently studied through representational similarity analyses that compare activations up to broad equivalence classes, including RSA \citep{kriegeskorte2008rsa}, CCA-based methods such as SVCCA \citep{raghu2017svcca} and its refinements \citep{morcos2018pwcca}, and kernel-based comparisons such as CKA \citep{kornblith2019cka}. Earlier work also probed convergence by explicitly matching units or subspaces across independently trained networks
(e.g., via neuron matching and sparse prediction), highlighting that models may learn similar \emph{subspaces} even when individual coordinates do not align.
These tools have been influential for documenting empirical convergence trends and motivating hypotheses such as the Platonic Representation Hypothesis (PRH) \citep{huh2024prh}.
PRH formalizes convergence at the level of induced similarity structure: two representations are considered aligned when their (co-occurrence) kernels agree on corresponding inputs.
However, by construction, many similarity measures are invariant to broad classes of transformations (e.g., invertible linear maps) or compare induced kernels rather than producing an explicit coordinate-level correspondence.
Our work targets this stronger notion of convergence: whether independently trained \emph{multimodal embedding spaces} agree in (or can be brought into) a shared coordinate system via a simple global map.

\noindent \textbf{Model Stitching and Functional Interoperability.} 
Beyond similarity indices, several works test whether independently trained representations become \emph{interchangeable} under simple transformations such as linear or orthogonal maps. Early studies introduced \emph{stitching} layers to test equivalence between networks by swapping intermediate features \citep{lenc2015equivariance}, and subsequent work systematized stitching as a methodology for comparing learned representations and their compositionality across training regimes \citep{bansal2021stitching}.In parallel, \citep{mikolov2013exploiting} showed that independently trained word embedding spaces can often be aligned by a single linear map learned from limited supervision, suggesting that semantic structure is preserved up to a global change of basis. More recently, linear aligners have been used to translate representations between modern pretrained models, such as mapping vision features into CLIP space \citep{moayeri2023text2concept}, and to study when embeddings from different models are mutually transferable via low-complexity maps \citep{maystre2025embeddingmodelsmeet}. These approaches provide evidence that two networks encode similar information, even when coordinates do not match directly. 

\noindent \textbf{Symmetries, Global Minima, and Landscape Connectivity.}
The existence of many apparently distinct solutions is also consistent with known symmetries of neural networks and the geometry of the loss landscape.
Empirically, SGD solutions are often connected by low-loss paths (mode connectivity) \citep{garipov2018modeconnectivity,draxler2018essentially},
and accounting for permutation symmetries of hidden units can further reduce apparent barriers between independently trained models \citep{entezari2021role}.
Recent work makes this operational by explicitly \emph{rebasing} one model to another via permutation alignment to enable weight-space merging \citep{ainsworth2022git}.
In this context, our results can be viewed as a representation-space analogue: while weights admit large symmetry groups, we find that multimodal embedding spaces often differ primarily by an (approximately) orthogonal transform that is shared across modalities.

\noindent \textbf{Vision-Language Contrastive Pretraining and the Modality Gap.}
Large-scale vision-language representation learning is dominated by dual-encoder contrastive objectives, as exemplified by CLIP \citep{radford2021clip} and ALIGN \citep{jia2021align}, with many variants exploring alternative losses, scaling strategies, and training recipes (e.g., \citealp{zhai2023siglip,singh2022flava}). A recurring geometric observation in such models is the \emph{modality gap}: image and text embeddings can occupy systematically shifted regions even after normalization, with consequences for optimization and downstream behavior \citep{liang2022modalitygap,udandarao2022understanding,shi2023towards}. Our findings complement this line of work by showing that, despite the within-model modality gap, cross-model relationships exhibit a surprising rigidity: different models’ image and text spaces are often related by the \emph{same} global orthogonal map, so that aligning one modality effectively determines the other.

\noindent \textbf{Theory of Contrastive Learning and Kernel Alignment.} Theoretical analyses of contrastive objectives have clarified what is identifiable from paired data and how representation geometry emerges. Key work includes contrastive predictive coding and InfoNCE-style objectives \citep{oord2018cpc}, general theoretical frameworks and guarantees for contrastive representation learning \citep{arora2019contrastive,haochen2021provable,bangachev2025global}, and geometric decompositions into alignment and uniformity \citep{wang2020alignmentuniformity}. ~\citet{gupta2023structuring} show that minimizing a loss which preserves unimodal kernels across augmentations induces an orthogonally equivariant structure in the contrastive embedding space.  Other analyses connect contrastive learning to inversion of latent generative structure under suitable assumptions \citep{zimmermann2021invert}.
\par 
\noindent Separately, \emph{kernel alignment} formalizes agreement between similarity functions and targets \citep{cristianini2002kernelalignment,cortes2012centeredalignment},
and underlies modern representational comparisons such as CKA \citep{kornblith2019cka}.
PRH also emphasizes kernel-level convergence, and recent theoretical work proves ``perfect" PRH in simplified (deep linear) settings where representations align up to an orthogonal transformation~\citep{ziyin2025proof}. Our theoretical results operate at this interface:
we provide minimal conditions under which agreement of cross-modal similarity structure on a small anchor set forces the existence of a shared orthogonal transform coupling modalities across models,
and we establish stability guarantees translating approximate kernel agreement into reliable retrieval transfer.

\section{Additional Discussion and Implications}\label{sec: app_discussion}

Our results have several practical and scientific implications. First, they point to a simple mechanism for backward-compatible upgrades in large embedding systems. Upgrading embedding models typically forces full re-embedding and ANN index rebuilds, which are prohibitively costly at modern scale (hundreds of millions to billions of vectors)~\citep{jayaram2019diskann, johnson2019billion} and often require multi-hour rebuilds and six-figure re-embedding budgets~\citep{openai_pricing}. This has motivated prior work on backward-compatible and interoperable embeddings, where new models are explicitly trained to preserve compatibility with deployed representations \citep{jang2025xbt, meng2021learning, hu2022bcemb,shen2020towards}. Our findings show that a small anchor set can often restore cross-model comparability for multimodal contrastive systems. Because orthogonal maps preserve inner products, this can enable upgrades without re-encoding stored corpora and often without rebuilding indexes. 

\noindent  Second, a shared coordinate system enables mix-and-match multimodal pipelines. Different models often excel in different components, such as stronger vision encoders or stronger and more multilingual text encoders. When representations are aligned by a single orthogonal map, image and text towers from different models can be combined into a common space, enabling retrieval across heterogeneous encoders. This perspective complements recent work on representation compatibility and model stitching, which studies when independently trained networks can be connected via lightweight alignment layers \citep{bansal2021stitching}.

\noindent  Finally, the ability to align text representations without accessing text has implications for data governance and security. In many deployments, raw text may be unavailable due to privacy, licensing, or retention constraints, even though embeddings are stored. Our results show that, in multimodal systems, text-space alignment can be recovered without using text, given a small anchor set from another modality. At the same time, easy cross-model transferability raises security considerations: if embeddings across models and modalities are easily transformable, then stored embeddings may encode more transferable semantic information than anticipated, reinforcing the need to treat embeddings as sensitive artifacts rather than model-specific byproducts.

\section{Proof of Theoretical Results}\label{sec:app_proof}

\textit{For a schematic overview of our theoretical analysis, refer to~\Cref{fig:theory_schematic}.\\
}
\begin{figure}[!htb]
    \centering
    \includegraphics[width=0.7\linewidth]{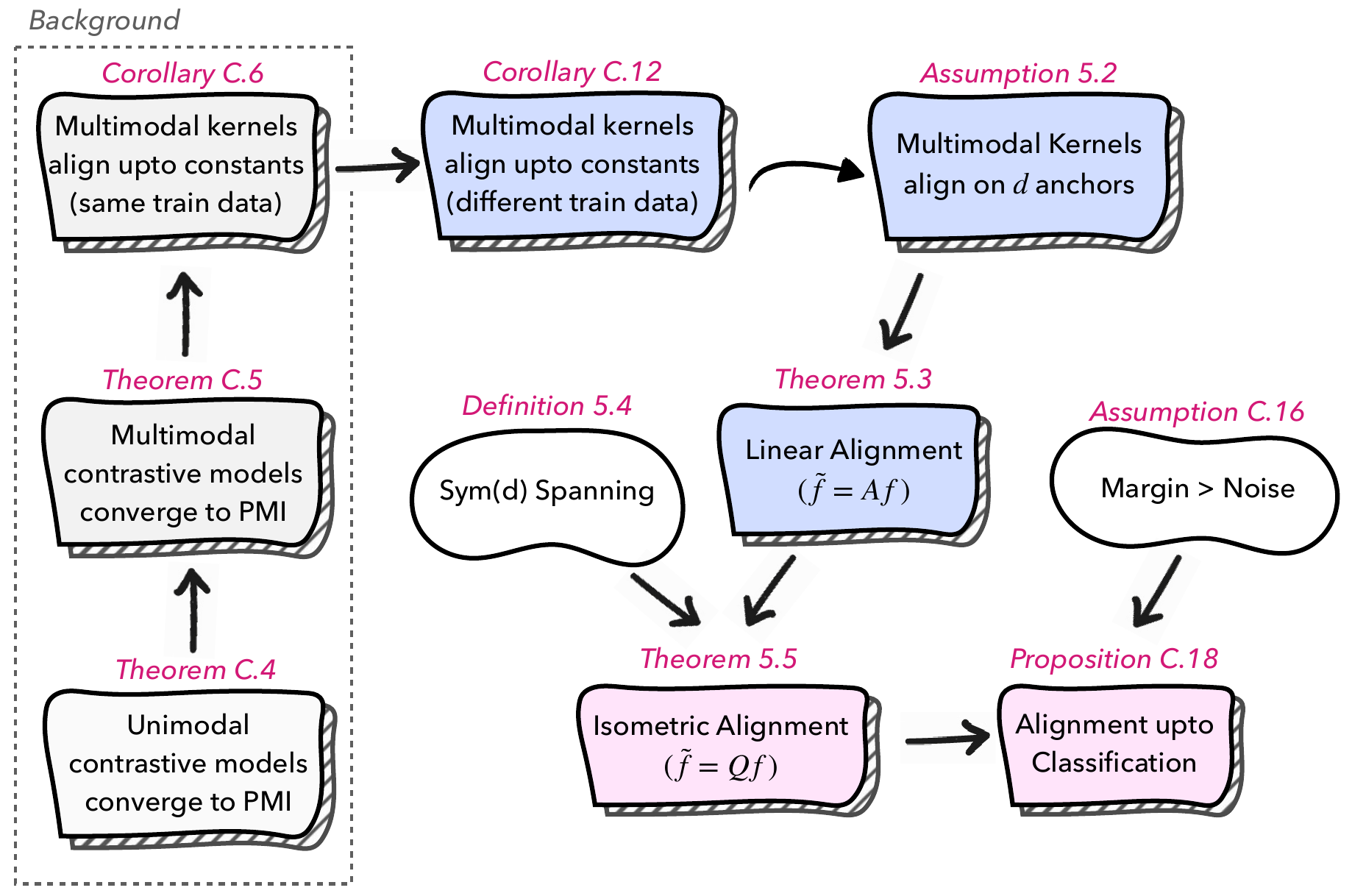}
    \caption{High-level overview of our theoretical results. Background results establish multimodal kernel alignment; additional assumptions progressively strengthen this to linear, then isometric, and finally task-level (classification) alignment.}
    \label{fig:theory_schematic}
\end{figure}

\subsection{Setup and Notations}\label{sec:theory_notations}
Let $(X,Y)$ be random variables on measurable spaces $\mathcal X,\mathcal Y$ with joint density $p_{XY}$ and marginals $p_X,p_Y$.

\begin{assumption}[Positivity on the domain]
\label{ass:pos}
On the domain of interest, assume $p_{XY}(x,y)>0$ and $p_X(x)p_Y(y)>0$.
\end{assumption}

\begin{remark}
The above assumptions can be expressed without densities by replacing ratios of densities with Radon-Nikodym derivatives. Specifically, on the domain of interest, assume $P_{XY}\ll P_X\otimes P_Y$ (equivalently, $P_{Y\mid X=x}\ll P_Y$ for $P_X$-a.e.\ $x$),
and let $r(x,y):=\frac{dP_{XY}}{d(P_X\otimes P_Y)}(x,y)$ denote the density ratio.
Optionally, assume $r(x,y)>0$ a.e.\ if $\log r$ is required to be finite.
For clarity, we'll use densities.
\end{remark}

\begin{definition}[Density ratio and Pointwise Mutual Information]
Define the mutual density ratio $r:\mathcal X\times\mathcal Y\to(0,\infty)$ by
\[
  r(x,y):=\frac{p_{XY}(x,y)}{p_X(x)p_Y(y)},\qquad
  K^\ast(x,y):=\log r(x,y).
\]
\end{definition}

\noindent A contrastive model is a pair of measurable maps $f:\mathcal X\to\Sph, g:\mathcal Y\to\Sph,$ where $\Sph := \{u\in\R^d : \norm{u}_2 = 1\}$, and a temperature $\tau>0$, defining a score $$ s(x,y):=\frac{1}{\tau}\,\inner{f(x)}{g(y)}+b,\; b\in\R. $$

\noindent In practice, contrastive learning constructs a finite candidate set by pairing each query with one positive and several independent negatives; the InfoNCE loss is exactly the cross-entropy for identifying the positive within this list. Below, we formalize the standard finite-sample sampling procedure underlying InfoNCE~\citep{oord2018cpc} principle.

\noindent \textbf{InfoNCE Sampling Model.} Fix $N\ge 2$. Sample $X\sim P_X$, draw one \emph{positive} $Y^+\sim P_{Y\mid X}(\cdot\mid X)$, and draw $N-1$ \emph{negatives}
$Y^{(1)},\dots,Y^{(N-1)}\stackrel{\mathrm{iid}}{\sim}P_Y$ independently of $(X,Y^+)$.
Sample $J\sim\Unif\{0,\dots,N-1\}$ and place $Y^+$ in slot $J$ to form the candidate list $(Y_0,\dots,Y_{N-1})$.
The learner observes $(X,Y_{0:N-1})$ but not $J$.

\noindent Given a measurable score $s:\mathcal X\times\mathcal Y\to\R$, define
\[
  q_s(j\mid x,y_{0:N-1})
  :=
  \frac{\exp(s(x,y_j))}{\sum_{k=0}^{N-1}\exp(s(x,y_k))},
  \qquad
  \mathcal L_N^{(X)}(s):=\E\big[-\log q_s(J\mid X,Y_{0:N-1})\big].
\]
Define $\mathcal L_N^{(Y)}$ analogously by swapping the roles of $X$ and $Y$, and set
\[
  \mathcal L_N(s):=\mathcal L_N^{(X)}(s)+\mathcal L_N^{(Y)}(s).
\]
This gives us the objective of a multimodal contrastive learner. We now characterize the minimizers of $\mathcal L_N$. 

\subsection{Bayes Posterior and Optimal Critic for Unimodal Contrastive Learners}\label{sec:unimodal_critic}

\begin{theorem}[Bayes posterior and characterization of minimizers]
\label{thm:infonce}
Under~\Cref{ass:pos},
\begin{enumerate}
\item For almost every $(x,y_{0:N-1})$ under the InfoNCE sampling model,

\[
  q^\ast(j\mid x,y_{0:N-1})
  :=
  \Pr(J=j\mid X=x,Y_{0:N-1}=y_{0:N-1})
  =
  \frac{r(x,y_j)}{\sum_{k=0}^{N-1} r(x,y_k)}.
\]
\item Any measurable $s^\ast$ achieving the infimum of $\mathcal L_N^{(X)}$ must satisfy
\[
  s^\ast(x,y)=\log r(x,y)+c(x)
\]
for some measurable $c:\mathcal X\to\R$, holding for almost every $(x,y)$ with respect to $P_X\otimes P_Y$.
\end{enumerate}
\end{theorem}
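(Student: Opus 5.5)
For part 1, I would compute the posterior of the slot index $J$ directly from the generative model. Conditional on $X=x$ and $J=j$, the candidate list has joint density
\[
p(y_{0:N-1}\mid x, J=j) \;=\; p_{Y\mid X}(y_j\mid x)\prod_{k\neq j} p_Y(y_k) \;=\; r(x,y_j)\prod_{k=0}^{N-1} p_Y(y_k).
\]
The second equality uses $p_{Y\mid X}(y\mid x)=p_{XY}(x,y)/p_X(x)=r(x,y)\,p_Y(y)$. Positivity (\Cref{ass:pos}) makes this well defined and strictly positive. Since $J$ is uniform and independent of $X$, Bayes' rule gives
\[
q^\ast(j\mid x,y_{0:N-1}) \propto r(x,y_j)\prod_k p_Y(y_k).
\]
The product $\prod_k p_Y(y_k)$ does not depend on $j$, so it cancels on normalization. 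This yields $r(x,y_j)/\sum_k r(x,y_k)$ for almost every configuration.

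For part 2, I would write the loss as an expected cross-entropy and decompose it with the posterior from part 1:
\[
\mathcal L_N^{(X)}(s)=\E\bigl[H(q^\ast,q_s)\bigr] = \E\bigl[H(q^\ast)\bigr] + \E\bigl[\mathrm{KL}\bigl(q^\ast(\cdot\mid X,Y_{0:N-1})\,\big\|\,q_s(\cdot\mid X,Y_{0:N-1})\bigr)\bigr].
\]
The first term does not depend on $s$. The score $s=\log r$ reproduces $q^\ast$ exactly, so the infimum equals the conditional entropy term, assuming it is finite; I will note that finiteness may require a mild integrability condition. Consequently, any minimizer $s^\ast$ must make the KL term vanish, so $q_{s^\ast}=q^\ast$ almost surely over $(X,Y_{0:N-1})$. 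Taking ratios of coordinates $j$ and $k$ gives
\[
s^\ast(x,y_j)-s^\ast(x,y_k)=\log r(x,y_j)-\log r(x,y_k)
\]
for almost every $(x,y_{0:N-1})$. Setting $h(x,y):=s^\ast(x,y)-\log r(x,y)$, this says $h(x,y_j)=h(x,y_k)$ almost everywhere.

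The main obstacle is converting this almost-everywhere statement into $h(x,y)=c(x)$ with $c$ measurable, being careful about the reference measures. The distribution of $(X,Y_{0:N-1})$ is absolutely continuous with respect to $P_X\otimes P_Y^{\otimes N}$, with density $\frac1N\sum_j r(x,y_j)$. Positivity makes this density strictly positive, so the two measures are equivalent and the null sets coincide. Hence $h(x,y_0)=h(x,y_1)$ holds for $P_X\otimes P_Y\otimes P_Y$-almost every $(x,y_0,y_1)$; this step uses $N\ge2$.

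By Fubini, for $P_X$-almost every $x$, $h(x,\cdot)$ is $P_Y$-almost surely equal to a constant. I would define that constant explicitly as $c(x):=\int \phi(h(x,y))\,dP_Y(y)$ transported back, or more simply as the $P_Y$-median of $h(x,\cdot)$, which is a measurable choice. I would set $c=0$ on the exceptional null set. A second application of Fubini then gives $s^\ast(x,y)=\log r(x,y)+c(x)$ for $P_X\otimes P_Y$-almost every $(x,y)$.
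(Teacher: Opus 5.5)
Your proposal is correct and follows the paper's proof essentially step for step: Bayes' rule with uniform $J$ for part 1, then the pointwise cross-entropy argument forcing $q_{s^\ast}=q^\ast$ almost surely, the ratio identity, and Fubini to extract $c(x)$. You also make explicit two points the paper's ``integrating out'' step leaves implicit: the equivalence of the sampling law (density $\frac1N\sum_j r(x,y_j)$ with respect to $P_X\otimes P_Y^{\otimes N}$) with the product measure, and a measurable choice of $c$. Your finiteness caveat is unnecessary, since $J$ takes only $N$ values and so the infimum is at most $\log N$.
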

\begin{proof}
(i) Conditioning on $(J=j,X=x)$ gives
\[
  p(y_{0:N-1}\mid J=j,X=x)=p_{Y\mid X}(y_j\mid x)\prod_{i\neq j} p_Y(y_i).
\]
Since $J$ is uniform, Bayes' rule yields
\[
  \Pr(J=j\mid x,y_{0:N-1})
  =
  \frac{p_{Y\mid X}(y_j\mid x)\prod_{i\neq j} p_Y(y_i)}{\sum_{k=0}^{N-1} p_{Y\mid X}(y_k\mid x)\prod_{i\neq k} p_Y(y_i)}.
\]
Dividing numerator and denominator by $\prod_{i=0}^{N-1} p_Y(y_i)$ gives (i), since
$\frac{p_{Y\mid X}(y\mid x)}{p_Y(y)}=r(x,y)$.

\medskip
\noindent (ii) For each realization $(x,y_{0:N-1})$, the conditional risk is
\[
  \E\big[-\log q_s(J\mid x,y_{0:N-1})\mid x,y_{0:N-1}\big]
  = H\!\left(q^\ast(\cdot\mid x,y_{0:N-1}),\,q_s(\cdot\mid x,y_{0:N-1})\right),
\]
so $\mathcal L_N^{(X)}(s)$ is minimized iff $q_s(\cdot\mid x,y_{0:N-1})=q^\ast(\cdot\mid x,y_{0:N-1})$ a.e.
On this full-measure set, for any $j\neq k$,
\[
  \exp\!\big(s^\ast(x,y_j)-s^\ast(x,y_k)\big)
  =
  \frac{q_{s^\ast}(j\mid x,y_{0:N-1})}{q_{s^\ast}(k\mid x,y_{0:N-1})}
  =
  \frac{q^\ast(j\mid x,y_{0:N-1})}{q^\ast(k\mid x,y_{0:N-1})}
  =
  \frac{r(x,y_j)}{r(x,y_k)}.
\]
Integrating out $Y_2,\dots,Y_{N-1}$ (Fubini--Tonelli), gives
$\exp(s^\ast(x,y)-s^\ast(x,y'))=r(x,y)/r(x,y')$ for $(P_X\otimes P_Y\otimes P_Y)$-a.e. $(x,y,y')$.
Thus $s^\ast(x,y)-\log r(x,y)$ is (for $P_X$-a.e.\ $x$) independent of $y$ on a $P_Y$-full-measure set, i.e.
$s^\ast(x,y)=\log r(x,y)+c(x)$ for some measurable $c$, holding $(P_X\otimes P_Y)$-a.e.
\end{proof}

\subsection{Optimal Critic for Multimodal Contrastive Learners}\label{sec:multimodal_critic}

\begin{theorem}
\label{thm:sym}
Under the~\Cref{ass:pos}, any measurable $s^\ast$ achieving the infimum of $\mathcal L_N$
must satisfy
\[
  s^\ast(x,y)=\log r(x,y)+C
\]
for some constant $C\in\R$, holding for almost every $(x,y)$ with respect to $P_X\otimes P_Y$.
\end{theorem}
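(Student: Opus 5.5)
The plan is to reduce the symmetric objective to its two one-sided pieces and invoke \Cref{thm:infonce} twice. First I will show that the infimum of $\mathcal L_N=\mathcal L_N^{(X)}+\mathcal L_N^{(Y)}$ equals the sum of the two separate infima, and that any minimizer of the sum attains each one-sided infimum.

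For the lower bound, note that $\mathcal L_N(s)\ge \inf\mathcal L_N^{(X)}+\inf\mathcal L_N^{(Y)}$ holds for every $s$. For attainment, the score $s_0(x,y)=\log r(x,y)$ minimizes both pieces simultaneously. By part (i) of \Cref{thm:infonce}, $q_{s_0}=q^\ast$, so the conditional cross-entropy equals the conditional entropy, its minimum. The symmetric statement holds for the $Y$-side, because $r$ is symmetric in its roles: $p_{X\mid Y}(x\mid y)/p_X(x)=r(x,y)$ as well.

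Hence $\inf\mathcal L_N$ is the sum of the one-sided infima, and I need these infima to be finite. I will assume this, as is implicit in speaking of minimizers; it holds because each loss is at most $\log N$ for $s\equiv 0$. It then follows that a minimizer $s^\ast$ of $\mathcal L_N$ must attain each one-sided infimum. Otherwise one term would exceed its infimum while the other is at least its own, and the sum would exceed $\inf\mathcal L_N$.

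Next, I apply part (ii) of \Cref{thm:infonce} to $\mathcal L_N^{(X)}$. This gives $s^\ast(x,y)=\log r(x,y)+c(x)$ for $P_X\otimes P_Y$-a.e. $(x,y)$. Applying the same theorem with the roles of $X$ and $Y$ swapped gives $s^\ast(x,y)=\log r(x,y)+c'(y)$ almost everywhere. The swapped sampling model draws $Y$, a positive $X^+\sim P_{X\mid Y}$, and negatives from $P_X$, and its density ratio is the same $r$.

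Subtracting the two representations yields $c(x)=c'(y)$ for $P_X\otimes P_Y$-a.e. $(x,y)$. The final step, and the only mildly delicate one, is to show that a function of $x$ which equals a function of $y$ almost everywhere on a product measure is a.e. constant. By Fubini, there is some $y_0$ such that $c(x)=c'(y_0)$ for $P_X$-a.e. $x$; call this value $C$. Then $c=C$ $P_X$-a.e., and symmetrically $c'=C$ $P_Y$-a.e. Intersecting the null sets gives $s^\ast=\log r+C$ for $P_X\otimes P_Y$-a.e. $(x,y)$.

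I expect the main obstacle to be the bookkeeping of measure-zero sets and the finiteness of the infima, rather than any substantive estimate. Positivity (\Cref{ass:pos}) ensures that $\log r$ is finite, so the subtraction is legitimate.
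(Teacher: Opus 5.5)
Your proposal is correct and follows the paper's proof essentially step for step. You split $\mathcal L_N$ into its one-sided pieces, observe that $\log r$ attains both infima (so any minimizer must attain each), apply \Cref{thm:infonce}(ii) in both directions, and use Fubini to turn $c(x)=c'(y)$ a.e.\ into a single constant; your explicit finiteness check via $s\equiv 0$ and your use of part (i) to certify that $\log r$ minimizes both pieces tighten steps the paper passes over quickly.
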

\begin{proof}
Let $\ell_X:=\inf_s \mathcal L_N^{(X)}(s)$ and $\ell_Y:=\inf_s \mathcal L_N^{(Y)}(s)$.
By~\Cref{thm:infonce}(ii), the score $s_0(x,y):=\log r(x,y)$ simultaneously achieves $\ell_X$ and $\ell_Y$ (up to an additive constant), hence
$\inf_s \mathcal L_N(s)=\ell_X+\ell_Y$.

\noindent If $s^\ast$ achieves the infimum of $\mathcal L_N$, then
\[
  \mathcal L_N^{(X)}(s^\ast)+\mathcal L_N^{(Y)}(s^\ast)=\ell_X+\ell_Y.
\]
Since each term is bounded below by its own infimum, we must have
$\mathcal L_N^{(X)}(s^\ast)=\ell_X$ and $\mathcal L_N^{(Y)}(s^\ast)=\ell_Y$.
Applying~\Cref{thm:infonce}(ii) in each direction yields
\[
  s^\ast(x,y)=\log r(x,y)+c(x)=\log r(x,y)+d(y)
\]
$(P_X\otimes P_Y)$-a.e. for measurable $c$ and $d$.
Thus $c(x)=d(y)$ on a full $(P_X\otimes P_Y)$-measure set.
Let $E\subseteq\mathcal X\times\mathcal Y$ be a full $(P_X\otimes P_Y)$-measure set where $c(x)=d(y)$.
By Fubini-Tonelli's theorem, there exists $y_0$ such that the section $E_{y_0}:=\{x:(x,y_0)\in E\}$ has $P_X(E_{y_0})=1$.
Then for $x\in E_{y_0}$, $c(x)=d(y_0)$, so $c$ is $P_X$-a.e.\ constant. Call this constant $C$.
Plugging back gives $s^\ast(x,y)=\log r(x,y)+C$ for $(P_X\otimes P_Y)$-a.e.\ $(x,y)$.
\end{proof}

\begin{corollary}
\label{cor:unique_kernels}
Let $(f,g)$ and $(\tilde f,\tilde g)$ be two contrastive models trained on the same distribution that achieve the global infimum of the symmetric objective $\mathcal L_N$.
Then there exists a constant $\Delta \in \R$ such that
\[
  \inner{f(x)}{g(y)} = \inner{\tilde f(x)}{\tilde g(y)} + \Delta
\]
for almost every $(x,y)$ with respect to $P_X \otimes P_Y$. If the temperatures differ, the relation becomes affine in $\langle \tilde f(x),\tilde g(y)\rangle$.
\end{corollary}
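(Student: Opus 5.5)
The plan is to apply \Cref{thm:sym} to each model and then transfer the conclusion from scores to kernels. Both models are trained on the same distribution and attain the global infimum of $\mathcal L_N$, so their scores $s(x,y)=\frac{1}{\tau}\inner{f(x)}{g(y)}+b$ and $\tilde s(x,y)=\frac{1}{\tilde\tau}\inner{\tilde f(x)}{\tilde g(y)}+\tilde b$ are both Bayes-optimal. \Cref{thm:sym} then gives constants $C,\tilde C$ with
\[
s(x,y)=\log r(x,y)+C,\qquad \tilde s(x,y)=\log r(x,y)+\tilde C,
\]
each holding off a $(P_X\otimes P_Y)$-null set. Intersecting the two full-measure sets yields a full-measure set on which $\log r$ cancels, so $s(x,y)-\tilde s(x,y)=C-\tilde C$ there.

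Next I would unpack the score parameterization. For equal temperatures $\tau=\tilde\tau$, multiplying through by $\tau$ gives
\[
\inner{f(x)}{g(y)}=\inner{\tilde f(x)}{\tilde g(y)}+\Delta,\qquad \Delta:=\tau\big(C-\tilde C+\tilde b-b\big),
\]
and $\Delta$ is a constant independent of $(x,y)$. For differing temperatures, the same step gives
\[
\inner{f(x)}{g(y)}=\frac{\tau}{\tilde\tau}\inner{\tilde f(x)}{\tilde g(y)}+\tau\big(C-\tilde C+\tilde b-b\big),
\]
which is the affine relation asserted in the last sentence of the corollary.

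I do not expect a genuine obstacle, since the argument is routine bookkeeping. The only care needed is measure-theoretic: \Cref{thm:sym} holds only almost everywhere, with a separate null set for each model. Because a finite union of null sets is null, the conclusion holds $(P_X\otimes P_Y)$-a.e.

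A further point to flag is that the corollary implicitly assumes the Bayes-optimal score $\log r+C$ is realizable within the parameterized class $\frac{1}{\tau}\inner{f}{g}+b$. This is exactly what is meant by the models "achieving the global infimum," so the hypothesis supplies it and no extra argument is required.
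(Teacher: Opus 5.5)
Your proposal is correct and follows the paper's intended argument: the paper gives no separate proof and treats the corollary as immediate from \Cref{thm:sym}. Your steps are exactly that route, namely apply it to each model's score, cancel $\log r$ on the intersection of the two full-measure sets, and multiply through by $\tau$, with unequal temperatures giving the affine relation; this is the same pattern as the proof of \Cref{cor:critics-across-datasets}.
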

\begin{remark}
On a finite domain $\{x_1,\dots,x_n\}\times\{y_1,\dots,y_m\}$, the CLIP score matrix
$S\in\R^{n\times m}$ with entries $S_{ij}=\tau^{-1}\langle f(x_i),g(y_j)\rangle+b$ can be written as
$S=\tau^{-1}F^\top G + b\,\mathbf 1_n\mathbf 1_m^\top$, where $F=[f(x_1)\ \cdots\ f(x_n)]$ and $G=[g(y_1)\ \cdots\ g(y_m)]$.
Hence $\mathrm{rank}(S)\le d+1$. Therefore, exact equality $S_{ij}=K^\ast(x_i,y_j)+C$ would require a strong low-rank structure of the
PMI matrix on that domain. This is \emph{not} assumed in~\Cref{thm:infonce} and \Cref{thm:sym} as it becomes relevant only when one
tries to realize the Bayes-optimal critic within a dot-product parameterization.
\end{remark}

\subsection{Bayes-Optimal Scores Are Identifiable Up to a Constant}\label{sec:theory_pmi_different_data}
So far, \Cref{thm:sym} and \Cref{cor:unique_kernels} show that \emph{for a fixed training distribution}
$P_{XY}$, any two global minimizers of the symmetric InfoNCE objective induce the same kernel
(up to a global additive constant). We now give a simple, formal setting in which this conclusion
can \emph{also} hold for two \emph{different} training distributions.

\noindent \textbf{Dataset Curation.} Let $P^\star_{XY}$ be a ground truth distribution on $\mathcal X\times\mathcal Y$ with density $p^\star_{XY}$ and
marginals $p^\star_X,p^\star_Y$, satisfying positivity on the domain of interest (cf.\ \Cref{ass:pos}). For each dataset $a\in\{1,2\}$, let $P^{(a)}_{XY}$ be a training distribution with density $p^{(a)}_{XY}$
and marginals $p^{(a)}_X,p^{(a)}_Y$.
Let $r^\star,K^\star$ and $r^{(a)},K^{(a)}$ denote the density ratios and PMIs defined earlier,
applied to $p^\star_{XY}$ and $p^{(a)}_{XY}$, respectively.

\begin{assumption}
\label{ass:separable-curation}
For each dataset $a\in\{1,2\}$, there exist measurable weights
$u_a:\mathcal X\to(0,\infty)$ and $v_a:\mathcal Y\to(0,\infty)$ such that
\[
  Z_a \coloneqq \E^\star[u_a(X)v_a(Y)] < \infty,
  \qquad
  p^{(a)}_{XY}(x,y) = \frac{u_a(x)v_a(y)}{Z_a}\,p^\star_{XY}(x,y).
\]
\end{assumption}

\noindent In~\Cref{ass:separable-curation}, we view large-scale training corpora as distinct \emph{curations} of a common underlying distribution, where each modality (image/text) is filtered by its own criteria (e.g., quality, safety, language) independently of the other.

\begin{lemma}
\label{lem:ratio-curation}
Under \Cref{ass:separable-curation}, for each $a\in\{1,2\}$ and for
$P^\star_X\otimes P^\star_Y$-a.e.\ $(x,y)$,
\[
  r^{(a)}(x,y)
  =
  r^\star(x,y)\cdot
  \frac{Z_a}{\E^\star[v_a(Y)\mid X=x]\;\E^\star[u_a(X)\mid Y=y]}.
\]
Equivalently,
\[
  K^{(a)}(x,y)
  =
  K^\star(x,y)
  + \log Z_a
  - \log \E^\star[v_a(Y)\mid X=x]
  - \log \E^\star[u_a(X)\mid Y=y].
\]
\end{lemma}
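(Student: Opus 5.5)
The proof is a direct computation of the curated marginals, followed by substitution into the definition of $r^{(a)}$. First, from \Cref{ass:separable-curation}, integrate the curated joint density over $y$ to get the image marginal:
\[
p^{(a)}_X(x)=\frac{u_a(x)}{Z_a}\int v_a(y)\,p^\star_{XY}(x,y)\,dy=\frac{u_a(x)}{Z_a}\,p^\star_X(x)\,\E^\star[v_a(Y)\mid X=x],
\]
using $p^\star_{XY}(x,y)=p^\star_X(x)\,p^\star_{Y\mid X}(y\mid x)$. Symmetrically, integrating over $x$ gives
\[
p^{(a)}_Y(y)=\frac{v_a(y)}{Z_a}\,p^\star_Y(y)\,\E^\star[u_a(X)\mid Y=y].
\]
These conditional expectations are finite and positive for $P^\star_X$-a.e.\ $x$ (respectively $P^\star_Y$-a.e.\ $y$). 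Positivity holds because $u_a,v_a>0$. Finiteness holds because $\E^\star\big[\E^\star[v_a(Y)\mid X]\,u_a(X)\big]=Z_a<\infty$ with $u_a>0$, so the inner expectation is finite a.e.; the same argument applies to the other one.

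Next, form the ratio. In
\[
r^{(a)}(x,y)=\frac{p^{(a)}_{XY}(x,y)}{p^{(a)}_X(x)\,p^{(a)}_Y(y)},
\]
the numerator is $u_a v_a Z_a^{-1}p^\star_{XY}$ and the denominator is $u_a v_a Z_a^{-2}\,p^\star_X p^\star_Y$ times the two conditional expectations. The weights $u_a(x)v_a(y)$ cancel, and the normalizers leave a single factor of $Z_a$. What remains is $r^\star(x,y)\,Z_a$ divided by the product of the two conditional expectations, which is the claimed identity. Taking logarithms, which is legitimate since every factor is strictly positive and finite on a full-measure set, gives the PMI form.

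The only delicate point is the measure-theoretic bookkeeping: every identity must hold on a set of full $P^\star_X\otimes P^\star_Y$-measure. I would handle this by intersecting the full-measure sets where $\E^\star[v_a(Y)\mid X=x]\in(0,\infty)$ and $\E^\star[u_a(X)\mid Y=y]\in(0,\infty)$; their product set has full product measure. I would also note that positivity of $p^\star$ (\Cref{ass:pos}), together with $u_a,v_a>0$, gives positivity of the curated densities, so $r^{(a)}$ is well defined. Apart from this, the argument is routine algebra via Fubini–Tonelli.
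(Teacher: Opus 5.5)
Your proposal is correct and follows the paper's proof essentially step for step. You integrate out one variable to express the curated marginals $p^{(a)}_X$ and $p^{(a)}_Y$ through the conditional expectations, substitute into $r^{(a)}$, cancel $u_a(x)v_a(y)$ and one factor of $Z_a$, and take logarithms; your extra check that $\E^\star[v_a(Y)\mid X=x]$ and $\E^\star[u_a(X)\mid Y=y]$ are a.e.\ finite and strictly positive (via $Z_a<\infty$ and $u_a,v_a>0$) is a worthwhile detail the paper leaves implicit, and it is what justifies taking logarithms.
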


\begin{proof}
By \Cref{ass:separable-curation},
$p^{(a)}_{XY}(x,y)=\frac{u_a(x)v_a(y)}{Z_a}p^\star_{XY}(x,y)$.
Thus
\[
  p^{(a)}_X(x)
  =
  \int p^{(a)}_{XY}(x,y)\,dy
  =
  \frac{u_a(x)p^\star_X(x)}{Z_a}\,\E^\star[v_a(Y)\mid X=x],
\]
and similarly
\[
  p^{(a)}_Y(y)
  =
  \frac{v_a(y)p^\star_Y(y)}{Z_a}\,\E^\star[u_a(X)\mid Y=y].
\]
Plugging into $r^{(a)}(x,y)=\frac{p^{(a)}_{XY}(x,y)}{p^{(a)}_X(x)p^{(a)}_Y(y)}$
gives the stated identity, and taking logs yields the PMI form.
\end{proof}

\noindent \Cref{lem:ratio-curation} shows that curation perturbs the density ratio by a multiplicative factor governed by the conditional expectations $\E^\star[v(Y)\mid X=x]$ and $\E^\star[u(X)\mid Y=y]$. Below, we impose a mild condition on these terms, requiring that dataset curation acts independently across modalities i.e the expected acceptance rate of texts does not depend on the image they are paired with, and vice versa. 

\begin{assumption}
\label{ass:no-cross-modal-bias}
For each dataset $a\in\{1,2\}$,
\[
  \E^\star[v_a(Y)\mid X=x]=\E^\star[v_a(Y)]
  \quad \text{for $P^\star_X$-a.e.\ $x$},
  \qquad
  \E^\star[u_a(X)\mid Y=y]=\E^\star[u_a(X)]
  \quad \text{for $P^\star_Y$-a.e.\ $y$}.
\]
\end{assumption}

\begin{theorem}
\label{thm:pmi-constant-across-datasets}
Under~\Cref{ass:separable-curation} and~\Cref{ass:no-cross-modal-bias}, for each $a\in\{1,2\}$ there exists
a constant $C_a\in\R$ such that $K^{(a)}(x,y)=K^\star(x,y)+C_a$ for $P^\star_X\otimes P^\star_Y$-a.e.\ $(x,y)$.
Consequently, there exists a constant $\Delta\in\R$ such that
\[
  K^{(1)}(x,y)=K^{(2)}(x,y)+\Delta
\]
for $P^\star_X\otimes P^\star_Y$-a.e.\ $(x,y)$.
\end{theorem}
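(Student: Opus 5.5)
The plan is to substitute \Cref{ass:no-cross-modal-bias} directly into the PMI identity of \Cref{lem:ratio-curation}. For each $a\in\{1,2\}$, that lemma gives, for $P^\star_X\otimes P^\star_Y$-a.e.\ $(x,y)$,
\[
  K^{(a)}(x,y)=K^\star(x,y)+\log Z_a-\log \E^\star[v_a(Y)\mid X=x]-\log \E^\star[u_a(X)\mid Y=y].
\]
By \Cref{ass:no-cross-modal-bias}, the first conditional expectation equals the constant $\E^\star[v_a(Y)]$ off a $P^\star_X$-null set $N^X_a$. Likewise, the second equals $\E^\star[u_a(X)]$ off a $P^\star_Y$-null set $N^Y_a$.

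The only measure-theoretic point is to make sure these exceptional sets remain null in the product. The set $(N^X_a\times\mathcal Y)\cup(\mathcal X\times N^Y_a)$ has $P^\star_X\otimes P^\star_Y$-measure at most $P^\star_X(N^X_a)+P^\star_Y(N^Y_a)=0$. Its union with the null set from \Cref{lem:ratio-curation} is therefore still null. Off this union we get $K^{(a)}=K^\star+C_a$ with
\[
  C_a:=\log Z_a-\log\E^\star[v_a(Y)]-\log\E^\star[u_a(X)].
\]

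Before using $C_a$, I will check that it is a finite real number. The weights $u_a,v_a$ are strictly positive, so both expectations are strictly positive. They are also finite: $\E^\star[v_a(Y)\mid X=x]$ is finite a.e.\ because $p^{(a)}_X$ is a bona fide density with $Z_a<\infty$, and it equals the constant $\E^\star[v_a(Y)]$; the same argument applies to $u_a$. Finally $0<Z_a<\infty$, so all three logarithms are finite.

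For the second claim, I intersect the two full-measure sets for $a=1,2$ and subtract the two identities. This gives $K^{(1)}=K^{(2)}+\Delta$ with $\Delta:=C_1-C_2$, valid $P^\star_X\otimes P^\star_Y$-a.e.

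I expect no real obstacle. The only care needed is the null-set bookkeeping and the finiteness of the constants described above.
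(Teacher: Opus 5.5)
Your proposal is correct and follows the paper's proof: substituting the constant conditional expectations from \Cref{ass:no-cross-modal-bias} into \Cref{lem:ratio-curation} gives $C_a=\log\alpha_a$ with $\alpha_a = Z_a/(\E^\star[u_a(X)]\,\E^\star[v_a(Y)])$, and $\Delta=C_1-C_2$ then follows by subtraction. The paper leaves the null-set bookkeeping and the finiteness of $C_a$ implicit, and you handle both correctly.
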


\begin{proof}
Under \Cref{ass:no-cross-modal-bias}, the conditional expectations in \Cref{lem:ratio-curation} are constants:
$\E^\star[v_a(Y)\mid X=x]=\E^\star[v_a(Y)]$ and $\E^\star[u_a(X)\mid Y=y]=\E^\star[u_a(X)]$ a.e.
Hence $r^{(a)}(x,y)=\alpha_a\, r^\star(x,y)$ a.e., where
\[
  \alpha_a \coloneqq \frac{Z_a}{\E^\star[u_a(X)]\,\E^\star[v_a(Y)]},
\]
so $K^{(a)}(x,y)=K^\star(x,y)+\log\alpha_a$ a.e.
The second claim follows by subtraction with $\Delta=\log\alpha_1-\log\alpha_2$.
\end{proof}

\begin{corollary}
\label{cor:critics-across-datasets}
Let $s^\ast_1$ and $s^\ast_2$ be measurable global minimizers of the symmetric objective $\mathcal L_N$ when the
InfoNCE sampling model is defined under $P^{(1)}_{XY}$ and $P^{(2)}_{XY}$, respectively (cf.\ \Cref{sec:theory_notations}).
Under \Cref{thm:pmi-constant-across-datasets}, there exists $\Delta'\in\R$ such that
\[
  s^\ast_1(x,y)=s^\ast_2(x,y)+\Delta'
\]
for $P^\star_X\otimes P^\star_Y$-a.e.\ $(x,y)$.
\end{corollary}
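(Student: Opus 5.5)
The plan is to combine \Cref{thm:sym} with \Cref{thm:pmi-constant-across-datasets}. First, for each dataset $a\in\{1,2\}$, I would apply \Cref{thm:sym} to the InfoNCE sampling model built from $P^{(a)}_{XY}$. Positivity for $P^{(a)}$ follows from positivity of $P^\star$ together with $u_a,v_a>0$. This gives constants $C_1,C_2$ with
\[
s^\ast_a(x,y)=K^{(a)}(x,y)+C_a
\]
for $P^{(a)}_X\otimes P^{(a)}_Y$-a.e.\ $(x,y)$.

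Next, \Cref{thm:pmi-constant-across-datasets} gives
\[
K^{(1)}(x,y)=K^{(2)}(x,y)+\Delta
\]
for $P^\star_X\otimes P^\star_Y$-a.e.\ $(x,y)$. Subtracting, we would obtain
\[
s^\ast_1=s^\ast_2+\Delta',\qquad \Delta':=\Delta+C_1-C_2,
\]
on the intersection of the relevant full-measure sets.

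The main obstacle is matching the null sets. The identities from \Cref{thm:sym} hold almost everywhere with respect to the curated product measures $P^{(a)}_X\otimes P^{(a)}_Y$, while the conclusion is stated with respect to $P^\star_X\otimes P^\star_Y$. I would handle this using the marginal formulas in the proof of \Cref{lem:ratio-curation}. Under \Cref{ass:no-cross-modal-bias}, these give
\[
p^{(a)}_X(x)=\frac{\E^\star[v_a(Y)]}{Z_a}\,u_a(x)\,p^\star_X(x),
\]
and analogously for $p^{(a)}_Y$. Since $u_a,v_a$ are strictly positive, each $P^{(a)}_X$ is mutually absolutely continuous with $P^\star_X$, and similarly for $Y$. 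Hence $P^{(a)}_X\otimes P^{(a)}_Y$ and $P^\star_X\otimes P^\star_Y$ have the same null sets. So every "a.e." statement transfers to $P^\star_X\otimes P^\star_Y$, and a finite union of null sets is still null, which completes the argument.
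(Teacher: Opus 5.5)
Your proposal is correct and follows the paper's proof. The paper applies the symmetric-InfoNCE characterization to each curated distribution to get $s^\ast_a=K^{(a)}+C_a$, then subtracts using the constant-PMI-shift theorem. Its one-line argument leaves two points implicit: the positivity check for $P^{(a)}$, and the fact that $P^{(a)}_X\otimes P^{(a)}_Y$ and $P^\star_X\otimes P^\star_Y$ have the same null sets, which is needed to combine the various ``a.e.'' statements. Your mutual-absolute-continuity argument via the marginal formulas fills in the second point correctly.
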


\begin{proof}
By \Cref{thm:sym} applied to each dataset,
$s^\ast_a(x,y)=K^{(a)}(x,y)+C'_a$ for constants $C'_a$ (a.e.).
By \Cref{thm:pmi-constant-across-datasets}, $K^{(1)}(x,y)=K^{(2)}(x,y)+\Delta$ (a.e.),
hence $s^\ast_1-s^\ast_2$ is constant.
\end{proof}

\begin{remark}[Implication for contrastive dot-product kernels]
If each critic is realized through $(f,g)$ and $(\tilde f,\tilde g)$,
then \Cref{cor:critics-across-datasets} implies
\[
  \langle f(x),g(y)\rangle = \langle \tilde f(x),\tilde g(y)\rangle + \widetilde\Delta
\]
for $P^\star_X\otimes P^\star_Y$-a.e.\ $(x,y)$, for some constant $\widetilde\Delta\in\R$.
If the temperatures differ, the relation becomes affine in $\langle \tilde f(x),\tilde g(y)\rangle$.
\end{remark}

\subsection{Guarantees Under Exact Alignment}\label{sec:exact_regime}
\subsubsection{Linear Alignment of Independent Contrastive
Models}\label{sec:exact_linear}
In this section, we prove our first main result showing that matching multimodal kernels on a small set of “anchor” points is sufficient to lock the two embedding spaces together up to a linear transformation. 

\noindent  Let $\Omega_X\subseteq\mathcal X$ and $\Omega_Y\subseteq\mathcal Y$ be subsets (the domain of interest, e.g.\ a downstream dataset of images and prompts). Let the contrastive model pairs $(f,g)$ and $(\tilde f, \tilde g)$ map inputs to $\mathbb S^{d-1} \subset \R^d$ and $ \mathbb S^{\tilde d-1} \subset \R^{\tilde d}$ respectively, where $d \le \tilde d$, without loss of generality. We fix a set of \emph{image anchors} $\{\bar x_j\}_{j=1}^d \subset \Omega_X$ and \emph{text anchors} $\{\bar y_i\}_{i=1}^{\tilde d} \subset \Omega_Y$ and collect their embeddings into the following matrices:
\begin{gather*}
  G \coloneqq [g(\bar y_1)\ \cdots\ g(\bar y_{\tilde d})]\in\R^{d\times \tilde d}, \\
  \tilde G \coloneqq [\tilde g(\bar y_1)\ \cdots\ \tilde g(\bar y_{\tilde d})]\in\R^{\tilde d\times \tilde d}, \\
  F \coloneqq [f(\bar x_1)\ \cdots\ f(\bar x_d)]\in\R^{d\times d}.
\end{gather*}

\AnchorKernelEquality*

\linearthm*

\begin{proof}
    Fix $x\in\Omega_X$ and define
\[
  k(x):=\big(\inner{f(x)}{g(\bar y_i)}\big)_{i=1}^{\tilde d}=G^\top f(x),\qquad
  \tilde k(x):=\big(\inner{\tilde f(x)}{\tilde g(\bar y_i)}\big)_{i=1}^{\tilde d}=\tilde G^\top \tilde f(x).
\]
By~\Cref{ass:kernel_anchors}, $k(x)=\tilde k(x)$, hence $G^\top f(x)=\tilde G^\top \tilde f(x)$.
Since $\tilde G$ is invertible,
\[
  \tilde f(x)=\tilde G^{-\top}G^\top f(x)=A f(x),
\]
for all $x\in\Omega_X$. Now fix $y\in\Omega_Y$. For each anchor $\bar x_j$,
\[
  \inner{f(\bar x_j)}{g(y)}
  =
  \inner{\tilde f(\bar x_j)}{\tilde g(y)}
  =
  \inner{A f(\bar x_j)}{\tilde g(y)}
  =
  \inner{f(\bar x_j)}{A^\top\tilde g(y)}.
\]
Thus $\inner{f(\bar x_j)}{g(y)-A^\top\tilde g(y)}=0$ for all $j\in\{1,\dots,d\}$.
Since $F=[f(\bar x_1)\ \cdots\ f(\bar x_d)]$ is invertible, $\{f(\bar x_j)\}_{j=1}^d$ spans $\R^d$, hence
$g(y)=A^\top\tilde g(y)$ for all $y\in\Omega_Y$.

\noindent Finally, if $A$ has full column rank then $P:=A(A^\top A)^{-1}A^\top$ is the orthogonal projector onto $\operatorname{Im}(A)$, and
\[
  \mathrm{Proj}_{\operatorname{Im}(A)}\tilde g(y)
  =P\tilde g(y)
  =A(A^\top A)^{-1}A^\top\tilde g(y)
  =A(A^\top A)^{-1}g(y).
\]
If $\tilde d=d$ and $A$ is invertible, then $A(A^\top A)^{-1}=A^{-\top}$, giving $\tilde g(y)=A^{-\top}g(y)$.
\end{proof}

\subsubsection{Isometric Alignment of Independent Contrastive Models}\label{sec:rotation_alignment}

\Cref{thm:lin} establishes that the representation is fixed up to a linear transformation $A$. However, standard contrastive encoders normalize embeddings to the unit hypersphere $\mathbb{S}^{d-1}$, forcing $\|\tilde f(x)\|_2 = \|A f(x)\|_2 = 1$ everywhere. This forces $A$ to be an isometry ($A^\top A = I_d$) only if the data is sufficiently diverse to probe the matrix in all directions. We formalize this diversity via the following condition.

\SymSpanning*

\begin{lemma}
\label{lem:orth}
Let $A\in\R^{\tilde d\times d}$ and let $U\subseteq \Sph\subset\R^d$.
Assume $\|Au\|_2=1$ for all $u\in U$.
If $U$ contains a $\Sym(d)$-spanning subset, then $A^\top A=I_d$.
\end{lemma}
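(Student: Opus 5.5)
The plan is to reduce the claim to the definition of $\Sym(d)$-spanning applied to the single symmetric matrix $M \coloneqq A^\top A - I_d \in \Sym(d)$. Symmetry is immediate: $(A^\top A)^\top = A^\top A$ and $I_d$ is symmetric, so $M$ lies in the space on which the spanning condition acts. This holds regardless of the shape of $A\in\R^{\tilde d\times d}$.

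The key step is to translate the norm condition into a vanishing quadratic form on $U$. For every $u\in U$ we have $\|Au\|_2^2 = u^\top A^\top A u = 1$. Since $U\subseteq\Sph$, also $u^\top u = \|u\|_2^2 = 1$. Subtracting gives
\[
  u^\top M u = u^\top A^\top A u - u^\top u = 0 \qquad \text{for all } u\in U.
\]
Equivalently, using the identity in \Cref{def:sym-spanning}, $\inner{uu^\top}{M} = 0$ for all $u\in U$.

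Next, let $S\subseteq U$ be the assumed $\Sym(d)$-spanning subset. The quadratic form vanishes in particular on $S$. By the equivalent formulation in \Cref{def:sym-spanning}, $M$ is orthogonal (in the trace inner product) to every $uu^\top$ with $u\in S$. These matrices span $\Sym(d)$, so $M$ is orthogonal to all of $\Sym(d)$, including itself. Hence $\inner{M}{M}=\|M\|_F^2 = 0$, so $M=0$ and $A^\top A = I_d$.

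There is no real obstacle here. The only points needing care are that the unit-sphere hypothesis on $U$ supplies $u^\top u = 1$, which is what allows the identity to be subtracted, and that linearity of $M\mapsto\inner{uu^\top}{M}$ lets vanishing on spanning generators extend to vanishing on all of $\Sym(d)$.
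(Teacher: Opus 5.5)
Your proof is correct and matches the paper's argument: set $M = A^\top A - I_d$, use $\|u\|_2=1$ to get $u^\top M u = \|Au\|_2^2 - 1 = 0$ on $U$, and conclude $M=0$ from the $\Sym(d)$-spanning property. The only difference is that you spell out why vanishing on the spanning rank-one matrices forces $M=0$ (via $\|M\|_F^2=\inner{M}{M}=0$), whereas the paper simply invokes the equivalent formulation stated in the definition.
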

\begin{proof}
Let $M:=A^\top A-I_d\in\Sym(d)$. For any $u\in\Sph$,
\[
  \|Au\|_2^2=u^\top A^\top A u = u^\top(I_d+M)u = 1+u^\top M u.
\]
Thus $\|Au\|_2=1$ implies $u^\top M u=0$ for all $u\in U$.
Choose a $\Sym(d)$-spanning subset $S=\{u_i\}\subseteq U$.
Then $u_i^\top M u_i=0$ for all $i$ implies $M=0$ by definition of $\Sym(d)$-spanning, hence $A^\top A=I_d$.
\end{proof}

\orthogonalthm*
\begin{proof}
By~\Cref{thm:lin}, $\tilde f(x)=Af(x)$ for all $x\in\Omega_X$.
Since $\|f(x)\|_2=\|\tilde f(x)\|_2=1$, we have $\|A f(x)\|_2=1$ for all $x\in\Omega_X$. Apply Lemma~\ref{lem:orth} to conclude $A^\top A=I_d$. Set $Q:=A$.

\noindent By Theorem~\ref{thm:lin}, $g(y)=Q^\top\tilde g(y)$ for all $y\in\Omega_Y$. Multiplying by $Q$ gives
\[
  Qg(y)=QQ^\top \tilde g(y)=\mathrm{Proj}_{\operatorname{Im}(Q)}\,\tilde g(y),
\]
since $QQ^\top$ is the orthogonal projector onto $\operatorname{Im}(Q)$ when $Q^\top Q=I_d$.
If $\tilde d=d$, then $\operatorname{Im}(Q)=\R^d$ and $\mathrm{Proj}_{\operatorname{Im}(Q)}$ is the identity.
\end{proof}
\begin{remark}\label{remark:meancentering}
Suppose the exact regime of \Cref{thm:orth} holds: $\tilde f(x)=Qf(x)$ for all $x\in\Omega_X$ (and similarly for $g$),
and suppose the means are taken w.r.t.\ the same distribution on $\Omega_X$:
$\mu_f:=\E[f(X)]$, $\mu_{\tilde f}:=\E[\tilde f(X)]$.
Then $\mu_{\tilde f}=Q\mu_f$, hence
\[
  \tilde f(x)-\mu_{\tilde f} = Q\big(f(x)-\mu_f\big).
\]
Therefore the rigid map $z\mapsto Q(z-\mu_f)+\mu_{\tilde f}$ reduces exactly to the pure rotation $z\mapsto Qz$.
In this sense, mean-centering is a finite-sample / distribution-mismatch correction that leaves the exact-identifiability
statement unchanged.
\end{remark}

\subsubsection{Isometric Alignment of Independent Contrastive Models When One Modality Lies in Low Dimension}\label{sec:exact_low_dim}
The above analysis assumes that $\mathrm{span}\{f(x):x\in\Omega_X\} = \mathbb R^d$ since we assumed the existence of $d$ anchor vectors from $\Omega_X$ that make $F$ invertible. In other works, we assumed the existence of $d$ linearly independent vectors $f(\bar x_1),... f(\bar x_d)$, that span $\mathbb R^d$. We extend this analysis below in~\Cref{thm:orth_subspace} to cases where these embeddings instead lie in a lower-dimensional subspace. 

\noindent  Let
\[
  U_X := \mathrm{span}\{f(x):x\in\Omega_X\}\subseteq \R^d,
  \qquad
  \tilde U_X := \mathrm{span}\{\tilde f(x):x\in\Omega_X\}\subseteq \R^{\tilde d}.
\]
where $U_X$ is a subspace of $\R^d$ with rank $r:=\dim(U_X)$. Choose anchors $\bar x_1,\dots,\bar x_r\in\Omega_X$ such that
$\mathrm{span}\{f(\bar x_1),\dots,f(\bar x_r)\}=U_X$.

\begin{theorem}[\textit{Orthogonal Identifiability}]\label{thm:orth_subspace}
Assume multimodal kernel alignment as in~\Cref{ass:kernel_anchors} but on $r$ anchors $\bar x_1,\dots,\bar x_r\in\Omega_X$ instead of $d$. Assume there exists a matrix $Q\in\R^{\tilde d\times d}$ with $Q^\top Q=I_d$ such that $\tilde f(x)=Q f(x)$ for all $x\in\Omega_X$. Then $\tilde U_X = Q U_X$ and for every $y\in\Omega_Y$,
\[
  \mathrm{Proj}_{\tilde U_X}\,\tilde g(y)
  =
  Q\,\mathrm{Proj}_{U_X}\,g(y).
\]
In particular, if $U_X=\R^d$, then $\mathrm{Proj}_{\operatorname{Im}(Q)}\tilde g(y)=Qg(y)$ for all $y\in \Omega_Y$,
and if additionally $\tilde d=d$ then $\tilde g(y)=Qg(y)$.

\end{theorem}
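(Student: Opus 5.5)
The plan is to mirror the second half of the proof of \Cref{thm:lin}, working inside $U_X$ instead of all of $\R^d$. The first claim, $\tilde U_X = QU_X$, is immediate. Since $\tilde f(x)=Qf(x)$ for every $x\in\Omega_X$ and $Q$ is linear, taking spans gives
\[
\tilde U_X=\mathrm{span}\{Qf(x)\}=Q\,\mathrm{span}\{f(x)\}=QU_X .
\]
Because $Q^\top Q=I_d$, $Q$ is injective, so $\dim \tilde U_X = r$. Moreover $\{Qf(\bar x_j)\}_{j=1}^r=\{\tilde f(\bar x_j)\}_{j=1}^r$ is a basis of $\tilde U_X$.

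Next, fix $y\in\Omega_Y$ and use the second half of \Cref{ass:kernel_anchors}, now on the $r$ image anchors. For each $j$,
\[
\inner{f(\bar x_j)}{g(y)}=\inner{\tilde f(\bar x_j)}{\tilde g(y)}=\inner{Qf(\bar x_j)}{\tilde g(y)}=\inner{f(\bar x_j)}{Q^\top\tilde g(y)}.
\]
Hence $g(y)-Q^\top\tilde g(y)$ is orthogonal to every $f(\bar x_j)$, and therefore to all of $U_X$. This gives $\mathrm{Proj}_{U_X} g(y)=\mathrm{Proj}_{U_X}Q^\top\tilde g(y)$. Note that only the image-anchor half of the assumption is used; the text anchors and the invertibility of $\tilde G$ play no role, since the map $Q$ is given.

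The main step is to convert this identity into a statement about $\mathrm{Proj}_{\tilde U_X}\tilde g(y)$. Let $P_U$ be the orthogonal projector onto $U_X$. I claim the orthogonal projector onto $QU_X$ is $QP_UQ^\top$. Three checks suffice:
\begin{itemize}
\item It is symmetric, because $P_U$ is.
\item It is idempotent: $QP_UQ^\top QP_UQ^\top=QP_U^2Q^\top=QP_UQ^\top$, using $Q^\top Q=I_d$.
\item Its image is exactly $QU_X$. The image is contained in $QU_X$, and for $u\in U_X$ we have $QP_UQ^\top(Qu)=QP_Uu=Qu$.
\end{itemize}
Applying $Q$ to the identity from the previous step then gives
\[
Q\,\mathrm{Proj}_{U_X}g(y)=QP_UQ^\top\tilde g(y)=\mathrm{Proj}_{\tilde U_X}\tilde g(y),
\]
which is the main claim. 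The only point needing care is this projector identity, which relies on $Q^\top Q=I_d$ to keep the projection orthogonal.

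Finally, the special cases follow directly. If $U_X=\R^d$, then $P_U=I_d$ and $\tilde U_X=\operatorname{Im}(Q)$, so the identity reads $\mathrm{Proj}_{\operatorname{Im}(Q)}\tilde g(y)=Qg(y)$; this recovers \Cref{thm:orth}. If in addition $\tilde d=d$, then $Q$ is square with $Q^\top Q=I_d$, hence orthogonal. In that case $\operatorname{Im}(Q)=\R^d$ and the projection is the identity, so $\tilde g(y)=Qg(y)$.
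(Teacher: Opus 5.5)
Your proof is correct and follows the paper's argument essentially step for step: you use the image-anchor kernel equalities to get $g(y)-Q^\top\tilde g(y)\in U_X^\perp$, then apply $Q$ together with the identity $\mathrm{Proj}_{QU_X}=Q\,\mathrm{Proj}_{U_X}Q^\top$. The only difference is that you explicitly verify $\tilde U_X=QU_X$ and the projector identity (symmetric, idempotent, image $QU_X$), whereas the paper simply asserts both.
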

\begin{proof}
    Since $\tilde f(x)=Q f(x)$ and $Q^\top Q=I_d$, for each anchor $\bar x_j$ with $j\in\{1,\dots,r\}$ and $y\in \Omega_Y$,
\[
  \inner{f(\bar x_j)}{g(y)}
  =
  \inner{\tilde f(\bar x_j)}{\tilde g(y)}
  =
  \inner{Q f(\bar x_j)}{\tilde g(y)}
  =
  \inner{f(\bar x_j)}{Q^\top \tilde g(y)}.
\]
Hence $\inner{f(\bar x_j)}{g(y)-Q^\top \tilde g(y)}=0$ for all anchors $\bar x_j$.
By linearity, $\inner{u}{g(y)-Q^\top \tilde g(y)}=0$ for all $u\in U_X$, so
$g(y)-Q^\top \tilde g(y)\in U_X^\perp$.
Equivalently,
$\mathrm{Proj}_{U_X} g(y)=\mathrm{Proj}_{U_X}(Q^\top\tilde g(y))$.
Applying $Q$ and using $\tilde U_X=Q U_X$ and $\mathrm{Proj}_{\tilde U_X}=Q\,\mathrm{Proj}_{U_X}\,Q^\top$ yields
\[
  \mathrm{Proj}_{\tilde U_X}\tilde g(y)
  =
  Q\,\mathrm{Proj}_{U_X}\,Q^\top \tilde g(y)
  =
  Q\,\mathrm{Proj}_{U_X}\,g(y).
\]
If $U_X=\R^d$, then $\tilde U_X=Q\R^d=\operatorname{Im}(Q)$, giving $\mathrm{Proj}_{\operatorname{Im}(Q)}\tilde g(y)=Qg(y)$.
If moreover $\tilde d=d$, then $\operatorname{Im}(Q)=\R^d$ and the projection is the identity.
\end{proof}

\subsubsection{Alignment Upto Classification Boundaries}\label{sec:exact_classification}
While the preceding theory establishes conditions for exact geometric alignment. In practice however, one often cares about alignment up to concepts or classification that are relevant for the downstream task. We now analyze this regime, where we seek to distinguish a finite family of prompts $\mathcal Y_{\mathrm{cls}}=\{y_c\}_{c=1}^K\subseteq \Omega_Y$ even when pointwise alignment is imprecise. Specifically, we care about the within-modality cross-model top-1 retrieval accuracy i.e. 
\[
  \hat c(y_c)\ \in\ \arg\max_{c'\in\{1,\dots,K\}}\ \inner{Q\,g(y_c)}{\tilde g(y_{c'})}.
\]

\noindent \Cref{thm:orth_subspace} is stated for all $y\in\Omega_Y$ and its proof is pointwise in $y$ and uses the multimodal
kernel equalities only for the pairs $(\bar x_j,y)$ with $j\in\{1,\dots,r\}$.
Therefore, if one only needs the conclusion for a subset $\mathcal Y_0\subseteq\Omega_Y$ (e.g.\ $\mathcal Y_0=\mathcal Y_{\mathrm{cls}}$),
it suffices to assume the anchor kernel equalities only on $\{\bar x_1,\dots,\bar x_r\}\times \mathcal Y_0$. Formally,
\begin{assumption}[Anchor kernel equalities only for class prompts]
\label{ass:kernel-class-only}
For the image anchors $\bar x_1,\dots,\bar x_r$ spanning $U_X$, assume that for all class prompts $y\in\mathcal Y_{\mathrm{cls}}$,
\[
  \inner{f(\bar x_j)}{g(y)}=\inner{\tilde f(\bar x_j)}{\tilde g(y)}
  \qquad \forall j\in\{1,\dots,r\}.
\]
\end{assumption}

\noindent Now, for any class prompt $y \in \mathcal Y_{\mathrm{cls}}$, decompose the embedding into an identifiable signal $u(y)$ and an unidentifiable residual $w(y)$:
$$  g(y) = u(y) + w(y), \;\; u(y) \coloneqq \mathrm{Proj}_{U_X} g(y),\ \ w(y) \in U_X^\perp.$$
Then, from~\Cref{thm:orth_subspace},
define analogously
\[
  \tilde g(y) = Q u(y) + \tilde w(y),
  \qquad
  \tilde w(y)\in (Q U_X)^\perp.
\]

\begin{definition}
\label{def:margin-noise}
Let $\mathcal Y_{\mathrm{cls}}=\{y_c\}_{c=1}^K$
Define
\[
  \gamma
  :=
  \min_{c}
  \Big(
    \|u(y_c)\|_2^2
    -
    \max_{c'\neq c}\inner{u(y_c)}{u(y_{c'})}
  \Big),
\]
that measures class separability \emph{within the image span} $U_X$. Define,
\[
  \eta
  :=
  \max_{c,c'}
  \big|\inner{Q w(y_c)}{\tilde w(y_{c'})}\big|.
\]
that measures the worst-case cross-model interaction of the unidentifiable residuals across models.
\end{definition}

\begin{proposition}
\label{prop:class-retrieval}
Under~\Cref{ass:kernel-class-only}, if $\gamma > 2\eta,$ then nearest-neighbor class retrieval is correct for every class prompt:
\[
  \hat c(y_c) = c
  \qquad \forall c\in\{1,\dots,K\}.
\]
\end{proposition}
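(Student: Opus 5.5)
The plan is to expand the retrieval score $\inner{Qg(y_c)}{\tilde g(y_k)}$ using the two decompositions and show that the correct index $k=c$ beats every competitor $k\neq c$ by at least $\gamma-2\eta>0$. The setting is the one of \Cref{thm:orth_subspace}: $\tilde f=Qf$ on $\Omega_X$ with $Q^\top Q=I_d$, and \Cref{ass:kernel-class-only} holds on the class prompts. The proof of \Cref{thm:orth_subspace} is pointwise in $y$ and uses only the anchor equalities for that $y$, so it applies to each $y_c\in\mathcal Y_{\mathrm{cls}}$. It gives $\mathrm{Proj}_{QU_X}\tilde g(y_c)=Q\,u(y_c)$. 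This justifies the decomposition $\tilde g(y_c)=Qu(y_c)+\tilde w(y_c)$ with $\tilde w(y_c)\in(QU_X)^\perp$.

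Next, I compute the score. Writing $Qg(y_c)=Qu(y_c)+Qw(y_c)$, we get
\[
\inner{Qg(y_c)}{\tilde g(y_k)}=\inner{Qu(y_c)}{Qu(y_k)}+\inner{Qu(y_c)}{\tilde w(y_k)}+\inner{Qw(y_c)}{Qu(y_k)}+\inner{Qw(y_c)}{\tilde w(y_k)}.
\]
\begin{itemize}
\item The first term equals $\inner{u(y_c)}{u(y_k)}$, because $Q^\top Q=I_d$.
\item The second term vanishes, since $Qu(y_c)\in QU_X$ and $\tilde w(y_k)\perp QU_X$.
\item The third term equals $\inner{w(y_c)}{u(y_k)}=0$, since $w\in U_X^\perp$.
\end{itemize}
Hence
\[
\inner{Qg(y_c)}{\tilde g(y_k)}=\inner{u(y_c)}{u(y_k)}+\inner{Qw(y_c)}{\tilde w(y_k)}.
\]

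Finally, I compare $k=c$ with any $k\neq c$. The difference of scores is
\[
\|u(y_c)\|^2-\inner{u(y_c)}{u(y_k)}+\inner{Qw(y_c)}{\tilde w(y_c)}-\inner{Qw(y_c)}{\tilde w(y_k)}.
\]
By the definition of $\gamma$, the first part is at least $\gamma$. By the definition of $\eta$, the second part is at least $-2\eta$. So the difference is at least $\gamma-2\eta>0$, which makes $c$ the unique argmax and gives $\hat c(y_c)=c$.

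The only delicate step is the first one: checking that the orthogonality relations hold, i.e. that $\tilde w$ is genuinely orthogonal to $QU_X$ and that $QU_X=\tilde U_X$. This rests on \Cref{thm:orth_subspace} being valid under the class-only anchor assumption. Everything after that is the cross-term cancellation and a triangle-inequality bound.
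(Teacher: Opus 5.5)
Your proposal is correct and follows essentially the same route as the paper. Both arguments invoke \Cref{thm:orth_subspace} under the class-only anchor assumption to justify $\tilde g(y)=Qu(y)+\tilde w(y)$ with $\tilde w(y)\perp QU_X$, kill the two cross terms to reduce the score to $\inner{u(y_c)}{u(y_k)}+\inner{Qw(y_c)}{\tilde w(y_k)}$, and bound the gap between the correct and any competing index below by $\gamma-2\eta>0$.
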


\begin{proof}
The conclusion of~\Cref{thm:orth_subspace} under~\Cref{ass:kernel-class-only} implies 
$$\tilde U_X = Q U_X \quad \text{and} \quad \mathrm{Proj}_{\tilde U_X}\,\tilde g(y) = Q\,\mathrm{Proj}_{U_X}\,g(y) \;\; \forall y \in \mathcal Y_{\mathrm{cls}}$$
Fix a query class $c$ and a candidate $c'$. Using $g(y)=u(y)+w(y)$ and $\tilde g(y)=Q u(y)+\tilde w(y)$,
\[
  \inner{Q g(y_c)}{\tilde g(y_{c'})}
  =
  \inner{Q(u(y_c)+w(y_c))}{Q u(y_{c'})+\tilde w(y_{c'})}.
\]
$Q u(y_c)$ is orthogonal to $\tilde w(y_{c'})$ and similarly $\inner{Q w(y_c)}{Q u(y_{c'})}=0$ since $w(y_c)\in U_X^\perp$. Also, $Q^\top Q=I_d$ implies $\inner{Q u}{Q u'}=\inner{u}{u'}$. Thus,
$$ \inner{Q g(y_c)}{\tilde g(y_{c'})}
  =
  \inner{u(y_c)}{u(y_{c'})} + \inner{Q w(y_c)}{\tilde w(y_{c'})}.
$$
By the definition of $\gamma$, for any $c'\neq c$ we have
$\inner{u(y_c)}{u(y_{c'})}\le \|u(y_c)\|_2^2-\gamma$.
Therefore, for all $c'\neq c$,
\[
  \inner{Q g(y_c)}{\tilde g(y_c)} - \inner{Q g(y_c)}{\tilde g(y_{c'})}
  \ge (\|u(y_c)\|_2^2-\eta) - (\|u(y_c)\|_2^2-\gamma+\eta)
  = \gamma-2\eta.
\]
If $\gamma>2\eta$, the right-hand side is strictly positive, so $c$ is the unique maximizer and $\hat c(y_c)=c$.
\end{proof}

\noindent \Cref{prop:class-retrieval} shows that class retrieval depends on a \emph{margin} condition inside $U_X$, and is stable
as long as the residual terms in $U_X^\perp$ do not overwhelm that margin.

\subsection{Guarantees Under Approximate Alignment}\label{sec:approx_regime}

We now relax the anchor equalities in~\Cref{sec:exact_regime} and allow small discrepancies on the anchor pairs.
Like before, we assume the contrastive model pairs $(f,g)$ and $(\tilde f, \tilde g)$ to map inputs to $\mathbb S^{d-1} \subset \R^d$ and $ \mathbb S^{\tilde d-1} \subset \R^{\tilde d}$ respectively, where $d \le \tilde d$, without loss of generality.

\subsubsection{Approximate Linear Alignment of Independent Contrastive Models}\label{sec:approx_linear}

Similar to before, fix $\tilde d$ text anchors $\bar y_1,\dots,\bar y_{\tilde d}\in\Omega_Y$ and define
\[
  G := [g(\bar y_1)\ \cdots\ g(\bar y_{\tilde d})]\in\R^{d\times \tilde d},\qquad
  \tilde G := [\tilde g(\bar y_1)\ \cdots\ \tilde g(\bar y_{\tilde d})]\in\R^{\tilde d\times \tilde d}.
\]

\begin{assumption}[Approximate multimodal kernel equalities]
\label{ass:discX}
There exists $\varepsilon\ge 0$ such that for all $x\in\Omega_X$ and all $i\in\{1,\dots,\tilde d\}$,
\[
  \big|\inner{f(x)}{g(\bar y_i)}-\inner{\tilde f(x)}{\tilde g(\bar y_i)}\big|\le \varepsilon.
\]
\end{assumption}

\begin{theorem}
\label{thm:pertX}
Assume $\tilde G$ is invertible with smallest singular value $\sigma_{\min}(\tilde G)>0$. Then under~\Cref{ass:discX}, there exists a linear map $A$ such that for every $x\in\Omega_X$,
\[
  \norm{\tilde f(x)- A f(x)}_2 \le \frac{\sqrt{\tilde d}}{\sigma_{\min}(\tilde G)}\,\varepsilon.
\]
\end{theorem}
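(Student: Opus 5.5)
We mirror the proof of \Cref{thm:lin}, replacing the exact identity by a perturbation bound. We take the same linear map as in the exact case, namely $A := \tilde G^{-\top} G^\top \in \R^{\tilde d\times d}$. This choice does not depend on $x$, and by \Cref{thm:lin} it reduces to the exact map when $\varepsilon = 0$.

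First, for fixed $x\in\Omega_X$, we form the kernel vectors $k(x) := G^\top f(x)\in\R^{\tilde d}$ and $\tilde k(x) := \tilde G^\top \tilde f(x)\in\R^{\tilde d}$. Their $i$-th entries are $\inner{f(x)}{g(\bar y_i)}$ and $\inner{\tilde f(x)}{\tilde g(\bar y_i)}$. By \Cref{ass:discX}, each coordinate of the error $e(x) := \tilde k(x) - k(x)$ has absolute value at most $\varepsilon$. Passing from the coordinatewise bound to the Euclidean norm gives
\[
\|e(x)\|_2 \le \sqrt{\tilde d}\,\varepsilon .
\]

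Second, we invert. Since $\tilde G$ is invertible, $\tilde f(x) = \tilde G^{-\top}\tilde k(x) = \tilde G^{-\top} k(x) + \tilde G^{-\top} e(x)$. The first term is $\tilde G^{-\top} G^\top f(x) = A f(x)$. Hence
\[
\tilde f(x) - A f(x) = \tilde G^{-\top} e(x).
\]

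Third, we bound the operator norm. We have $\|\tilde G^{-\top}\|_2 = \|\tilde G^{-1}\|_2 = 1/\sigma_{\min}(\tilde G)$, because transposition preserves singular values. Combining this with the first step,
\[
\|\tilde f(x) - A f(x)\|_2 \le \frac{\sqrt{\tilde d}}{\sigma_{\min}(\tilde G)}\,\varepsilon ,
\]
uniformly in $x\in\Omega_X$, which is the claimed bound.

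No step is a real obstacle; the argument is a direct perturbation of \Cref{thm:lin}. The only points needing care are the passage from the entrywise $\ell_\infty$ bound to the $\ell_2$ bound, which is where the factor $\sqrt{\tilde d}$ enters, and the fact that $A$ is fixed independently of $x$, so the bound holds with a single linear map.
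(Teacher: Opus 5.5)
Your proposal is correct and matches the paper's proof essentially step for step. Both use the same map $A=\tilde G^{-\top}G^\top$, the identity $\tilde f(x)-Af(x)=\tilde G^{-\top}\big(\tilde k(x)-k(x)\big)$, the operator-norm bound $\|\tilde G^{-\top}\|_2=1/\sigma_{\min}(\tilde G)$, and the passage from the entrywise bound to the $\ell_2$ bound that produces the factor $\sqrt{\tilde d}$.
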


\begin{proof}
Define $k(x):=G^\top f(x)\in\R^{\tilde d}$ and $\tilde k(x):=\tilde G^\top \tilde f(x)\in\R^{\tilde d}$.
Set $A := \tilde G^{-\top} G^\top \in \R^{\tilde d\times d}$, then
\[
  \tilde f(x)-A f(x)
  = \tilde f(x)-\tilde G^{-\top}G^\top f(x)
  = \tilde G^{-\top}\big(\tilde G^\top \tilde f(x)-G^\top f(x)\big)
  = \tilde G^{-\top}\big(\tilde k(x)-k(x)\big).
\]
Therefore
\[
  \norm{\tilde f(x)-A f(x)}_2
  \le \|\tilde G^{-\top}\|_2\,\|\tilde k(x)-k(x)\|_2
  = \frac{1}{\sigma_{\min}(\tilde G)}\,\|\tilde k(x)-k(x)\|_2.
\]
By~\Cref{ass:discX}, each coordinate of $\tilde k(x)-k(x)$ has magnitude at most $\varepsilon$,
so $\|\tilde k(x)-k(x)\|_2\le \sqrt{\tilde d}\,\varepsilon$.
\end{proof}

\subsubsection{Approximate Isometric Alignment of Independent Contrastive Models}\label{sec:approx_isometric}

\begin{assumption}[\emph{$\Sym(d)$-spanning}]
\label{ass:qrichX}
There exist $\bar x_1,\dots,\bar x_m\in\Omega_X$ such that
$S =\{f(\bar x_1),\dots,f(\bar x_m)\}\subset\mathbb S^{d-1}$ is $\Sym(d)$-spanning
\end{assumption}

\noindent Fix $A\in\R^{\tilde d\times d}$ and define the uniform deviation
\[
  \Delta \;:=\; \sup_{x\in\Omega_X}\|\tilde f(x)-A f(x)\|_2.
\]
Also define the constant
\[
  \kappa_S \;:=\; \sup_{M\in\Sym(d)\setminus\{0\}}
  \frac{\|M\|_2}{\max_{u\in S}|u^\top M u|} \;<\;\infty.
\]

\begin{theorem}[Approximate Orthogonal Identifiability]
\label{lem:qrich-makes-A-orth}
Assume $A$ has full column rank. Then under~\Cref{ass:qrichX}, there exists $Q^\top Q = I_d$, such that for all $x\in\Omega_X$
$$ \|\tilde f(x)-Q f(x)\|_2 \le \Delta + \kappa_S\,(2+\Delta)\Delta $$
\end{theorem}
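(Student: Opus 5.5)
The plan is to take $Q$ to be the orthogonal polar factor of $A$ and bound the distance between $A$ and $Q$ using the near-isometry of $A$ on the $\Sym(d)$-spanning set $S$. Since $A\in\R^{\tilde d\times d}$ has full column rank, it has a polar decomposition $A = QP$ with $Q^\top Q = I_d$ and $P = (A^\top A)^{1/2}\succ 0$. Equivalently, with thin SVD $A=U\Sigma V^\top$, we have $Q=UV^\top$ and $P=V\Sigma V^\top$. For every $x\in\Omega_X$, the triangle inequality gives
\[
\|\tilde f(x)-Qf(x)\|_2 \le \|\tilde f(x)-Af(x)\|_2 + \|(A-Q)f(x)\|_2 \le \Delta + \|A-Q\|_2 .
\]
Here we used $\|f(x)\|_2=1$. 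It therefore suffices to show $\|A-Q\|_2\le \kappa_S(2+\Delta)\Delta$.

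\textbf{Step 1: $M:=A^\top A-I_d$ is small on $S$.} For $u=f(\bar x_j)\in S$ we have $\|u\|_2=1$, so $u^\top M u = \|Au\|_2^2-1$. Since $\|\tilde f(\bar x_j)\|_2=1$ and $\|\tilde f(\bar x_j)-Au\|_2\le\Delta$, the norm satisfies $\|Au\|_2\in[1-\Delta,1+\Delta]$. Then
\[
\big|\|Au\|_2^2-1\big| = \big|\|Au\|_2-1\big|\,\big(\|Au\|_2+1\big)\le \Delta(2+\Delta).
\]
Hence $\max_{u\in S}|u^\top M u|\le (2+\Delta)\Delta$. The definition of $\kappa_S$ then gives $\|M\|_2\le \kappa_S(2+\Delta)\Delta$. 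This is the quantitative analogue of Lemma~\ref{lem:orth}.

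\textbf{Step 2: from $\|A^\top A-I\|_2$ to $\|A-Q\|_2$.} We have $A-Q = U(\Sigma-I)V^\top$, so $\|A-Q\|_2=\max_i|\sigma_i-1|$. The eigenvalues of $M$ are $\sigma_i^2-1$, and $|\sigma_i-1| = |\sigma_i^2-1|/(\sigma_i+1)\le|\sigma_i^2-1|$ because $\sigma_i\ge 0$. Therefore $\|A-Q\|_2\le\|M\|_2\le \kappa_S(2+\Delta)\Delta$. Combining this with the triangle inequality above gives the claim.

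\textbf{Main obstacle.} The only delicate point is the finiteness of $\kappa_S$, which is asserted in the statement. It follows because $M\mapsto\max_{u\in S}|u^\top Mu|$ is a norm on $\Sym(d)$ exactly when $S$ is $\Sym(d)$-spanning, and all norms on a finite-dimensional space are equivalent. Everything else is elementary: the elementwise singular value inequality in Step 2 is what avoids needing $\Delta$ small, and full column rank is only used to make the polar factor well defined.
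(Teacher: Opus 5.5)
Your proposal is correct and follows the paper's proof essentially step for step. Both arguments bound $\max_{u\in S}|u^\top M u|$ by $(2+\Delta)\Delta$, convert this to $\|A^\top A-I_d\|_2\le\kappa_S(2+\Delta)\Delta$, take $Q$ to be the polar factor so that $\|A-Q\|_2=\|(A^\top A)^{1/2}-I_d\|_2\le\|A^\top A-I_d\|_2$, and finish with the triangle inequality. Your singular-value justification of that last inequality and your norm-equivalence argument for $\kappa_S<\infty$ fill in details the paper leaves implicit.
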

\begin{proof}
Let $M:=A^\top A-I_d\in\Sym(d)$. For any $u=f(\bar x_i)\in S$,
$\|\tilde f(\bar x_i)\|_2=1$ and $\|Au-\tilde f(\bar x_i)\|_2\le \Delta$ imply $|\|Au\|_2-1|\le \Delta$ and hence
$|\|Au\|_2^2-1|\le (2+\Delta)\Delta$.
But $\|Au\|_2^2-1=u^\top M u$, so $\max_{u\in S}|u^\top M u|\le (2+\Delta)\Delta$.
By definition of $\kappa_S$, $\|M\|_2\le \kappa_S\max_{u\in S}|u^\top M u|$, showing that 
\[
  \|A^\top A-I_d\|_2 \;\le\; \kappa_S\,(2+\Delta)\Delta.
\]
Now, let $Q\in\R^{\tilde d\times d}$ denote the orthogonal factor in the polar decomposition of $A$ i.e. $A = Q (A^\top A)^{1/2}, \;\; Q^\top Q = I_d.$
Then $\|A-Q\|_2 = \|(A^\top A)^{1/2}-I_d\|_2 \leq \|A^\top A-I_d\|_2$. Finally,
$\|\tilde f(x)-Qf(x)\|_2\le \|\tilde f(x)-Af(x)\|_2+\|(A-Q)f(x)\|_2
\le \Delta+\|A-Q\|_2$, proving the claim.
\end{proof}

\noindent Now, fix $d$ image anchors $\bar x_1,\dots,\bar x_d\in\Omega_X$ and define
\[
  F := [f(\bar x_1)\ \cdots\ f(\bar x_d)]\in\R^{d\times d},
  \qquad
  \sigma_{\min}(F)>0.
\]

\begin{assumption}[Approximate multimodal kernel equalities]
\label{ass:discY}
There exists $\varepsilon'\ge 0$ such that for all $y\in\Omega_Y$ and all $j\in\{1,\dots,d\}$,
\[
  \big|\inner{f(\bar x_j)}{g(y)}-\inner{\tilde f(\bar x_j)}{\tilde g(y)}\big|\le \varepsilon'.
\]
\end{assumption}

\begin{theorem}
\label{thm:pertY}
Assume~\Cref{ass:discY} and let $Q\in\R^{\tilde d\times d}$ satisfy $Q^\top Q=I_d$ and
\[
  \max_{j\in\{1,\dots,d\}} \norm{\tilde f(\bar x_j)-Q f(\bar x_j)}_2 \le \delta_f.
\]
Then for every $y\in\Omega_Y$,
\[
  \big\|\mathrm{Proj}_{\operatorname{Im}(Q)}\tilde g(y)-Q g(y)\big\|_2
  =
  \|Q^\top \tilde g(y)-g(y)\|_2
  \le \frac{\sqrt d}{\sigma_{\min}(F)}\,(\varepsilon'+\delta_f).
\]
In particular, if $\tilde d=d$ then $\operatorname{Im}(Q)=\R^d$ and the projection is redundant, giving a bound on $\|\tilde g(y)-Qg(y)\|_2$.
\end{theorem}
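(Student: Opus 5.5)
The argument is the approximate version of the second half of the proof of \Cref{thm:lin}. There, the image anchors were used to force $g(y)=A^\top\tilde g(y)$; here we keep track of the two error sources. Fix $y\in\Omega_Y$ and set $e(y):=g(y)-Q^\top\tilde g(y)\in\R^d$.

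The plan is to control the vector of anchor inner products $F^\top e(y)$ coordinatewise and then invert $F$. For each $j\in\{1,\dots,d\}$ we decompose
\[
  \inner{f(\bar x_j)}{e(y)}
  = \Big(\inner{f(\bar x_j)}{g(y)}-\inner{\tilde f(\bar x_j)}{\tilde g(y)}\Big)
  + \inner{\tilde f(\bar x_j)-Qf(\bar x_j)}{\tilde g(y)},
\]
using $\inner{f(\bar x_j)}{Q^\top\tilde g(y)}=\inner{Qf(\bar x_j)}{\tilde g(y)}$. The first bracket is at most $\varepsilon'$ in absolute value by \Cref{ass:discY}. The second term is at most $\norm{\tilde f(\bar x_j)-Qf(\bar x_j)}_2\,\norm{\tilde g(y)}_2\le\delta_f$ by Cauchy--Schwarz, since $\tilde g(y)\in\mathbb S^{\tilde d-1}$. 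Hence every coordinate of $F^\top e(y)$ has magnitude at most $\varepsilon'+\delta_f$, and so $\norm{F^\top e(y)}_2\le\sqrt d\,(\varepsilon'+\delta_f)$.

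Next, since $F$ is square and invertible, $e(y)=F^{-\top}(F^\top e(y))$. Using $\norm{F^{-\top}}_2=1/\sigma_{\min}(F)$ we get $\norm{e(y)}_2\le \frac{\sqrt d}{\sigma_{\min}(F)}(\varepsilon'+\delta_f)$.

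For the stated equality, $Q^\top Q=I_d$ makes $QQ^\top$ the orthogonal projector onto $\operatorname{Im}(Q)$. Therefore
\[
  \mathrm{Proj}_{\operatorname{Im}(Q)}\tilde g(y)-Qg(y)=Q\big(Q^\top\tilde g(y)-g(y)\big),
\]
and since $Q$ is an isometry on $\R^d$, the norms agree. When $\tilde d=d$, $Q$ is orthogonal with $\operatorname{Im}(Q)=\R^d$, so the projection is the identity and the bound applies directly to $\norm{\tilde g(y)-Qg(y)}_2$.

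No step is a real obstacle. The only point needing care is that the $\delta_f$ error enters through the paired term $\inner{\tilde f(\bar x_j)-Qf(\bar x_j)}{\tilde g(y)}$, and this bound relies on $\norm{\tilde g(y)}_2=1$.
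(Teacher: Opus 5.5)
Your proposal is correct and matches the paper's proof step for step. Both split each anchor inner product $\inner{f(\bar x_j)}{g(y)-Q^\top\tilde g(y)}$ into the $\varepsilon'$ kernel discrepancy plus $\inner{\tilde f(\bar x_j)-Qf(\bar x_j)}{\tilde g(y)}\le\delta_f$ (using $\norm{\tilde g(y)}_2=1$), invert $F$ via $\norm{F^{-\top}}_2=1/\sigma_{\min}(F)$, and transfer the norm through the isometry $Q$ with $QQ^\top$ the projector onto $\operatorname{Im}(Q)$.
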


\begin{proof}
Fix $y\in\Omega_Y$ and set $v:=g(y)-Q^\top \tilde g(y)\in\R^d$.
For each anchor $\bar x_j$,
\[
  \inner{f(\bar x_j)}{v}
  =
  \inner{f(\bar x_j)}{g(y)}-\inner{Q f(\bar x_j)}{\tilde g(y)}.
\]
Thus
\[
  |\inner{f(\bar x_j)}{v}|
  \le
  |\inner{f(\bar x_j)}{g(y)}-\inner{\tilde f(\bar x_j)}{\tilde g(y)}|
  +
  |\inner{\tilde f(\bar x_j)-Q f(\bar x_j)}{\tilde g(y)}|
  \le \varepsilon'+\delta_f,
\]
since $\|\tilde g(y)\|_2=1$.
Hence $\|F^\top v\|_2\le \sqrt d(\varepsilon'+\delta_f)$ and, since $F$ is invertible,
\[
  \|v\|_2\le \|F^{-\top}\|_2\|F^\top v\|_2
  =\frac{1}{\sigma_{\min}(F)}\|F^\top v\|_2
  \le \frac{\sqrt d}{\sigma_{\min}(F)}(\varepsilon'+\delta_f).
\]
Finally, using $Q^\top Q=I_d$,
\[
  \|\mathrm{Proj}_{\operatorname{Im}(Q)}\tilde g(y)-Q g(y)\|_2
  = \|Q(Q^\top \tilde g(y)-g(y))\|_2
  = \|v\|_2.
\]
\end{proof}

\begin{remark}
~\Cref{ass:discX} and~\Cref{ass:discY} are $\varepsilon$-relaxations of~\Cref{ass:kernel_anchors}.~\Cref{thm:pertX} and~\Cref{thm:pertY} are the perturbation analogues of~\Cref{thm:lin} and~\Cref{thm:orth} respectively.
\end{remark}

\subsubsection{Approximate Isometric Alignment of Independent Contrastive Models When One Modality Lies in Low Dimension}\label{sec:approx_low_dim}
Similar to~\Cref{thm:orth_subspace}, we extend our analysis below to cases where the image embeddings lie in a lower-dimensional subspace. Let $U_X := \mathrm{span}\{f(x):x\in\Omega_X\}\subseteq \R^d$
where $U_X$ is a subspace of $\R^d$ of  $\dim(U_X)=r$. Choose anchors $\bar x_1,\dots,\bar x_r\in\Omega_X$ such that
$\mathrm{span}\{f(\bar x_1),\dots,f(\bar x_r)\}=U_X$. Accordingly, denote \[
  F := [f(\bar x_1)\ \cdots\ f(\bar x_r)]\in\R^{d\times r},
  \qquad
  \sigma_{\min}(F)>0.
\]

\begin{assumption}
\label{ass:approx-class-kernel}
Assume there exists a matrix $Q\in\R^{\tilde d\times d}$ with $Q^\top Q=I_d$ and constants $\delta_f,\varepsilon\ge 0$ such that
\[
  \sup_{x\in\Omega_X}\|\tilde f(x)-Q f(x)\|_2 \le \delta_f,
  \qquad
  \sup_{x\in\Omega_X,\ y\in \Omega_Y}
  \big|\inner{f(x)}{g(y)}-\inner{\tilde f(x)}{\tilde g(y)}\big|
  \le \varepsilon.
\]
\end{assumption}

\begin{theorem}
\label{thm:approx-proj-class}
Under~\Cref{ass:approx-class-kernel}, for every $y\in\Omega_Y$,
\[
  \Big\|\mathrm{Proj}_{U_X}\big(g(y)-Q^\top \tilde g(y)\big)\Big\|_2
  \le
  \rho,
  \qquad
  \rho := \frac{\sqrt r}{\sigma_{\min}(F)}\,(\varepsilon+\delta_f).
\]
Equivalently,
\[
  \Big\|\mathrm{Proj}_{Q U_X}\tilde g(y) - Q\,\mathrm{Proj}_{U_X} g(y)\Big\|_2 \le \rho.
\]
\end{theorem}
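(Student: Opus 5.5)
The plan is to follow the proof of \Cref{thm:pertY}, with the full-rank anchor matrix replaced by the rank-$r$ matrix $F\in\R^{d\times r}$ whose columns span $U_X$. Fix $y\in\Omega_Y$ and set $v:=g(y)-Q^\top\tilde g(y)\in\R^d$. The goal is to bound $\|\mathrm{Proj}_{U_X}v\|_2$. We do this by controlling the inner products of $v$ with the anchor embeddings $f(\bar x_j)$, $j=1,\dots,r$, and then inverting $F^\top$ on $U_X$.

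First, for each anchor we write
\[
\inner{f(\bar x_j)}{v}=\big(\inner{f(\bar x_j)}{g(y)}-\inner{\tilde f(\bar x_j)}{\tilde g(y)}\big)+\inner{\tilde f(\bar x_j)-Qf(\bar x_j)}{\tilde g(y)},
\]
using $\inner{f(\bar x_j)}{Q^\top\tilde g(y)}=\inner{Qf(\bar x_j)}{\tilde g(y)}$. \Cref{ass:approx-class-kernel} bounds the first bracket by $\varepsilon$. By Cauchy--Schwarz and $\|\tilde g(y)\|_2=1$, the second term is at most $\delta_f$. Stacking the $r$ coordinates gives $\|F^\top v\|_2\le\sqrt r(\varepsilon+\delta_f)$.

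Second, we pass from $F^\top v$ to $\mathrm{Proj}_{U_X}v$; this is the only step that differs from the full-rank case. Since every column of $F$ lies in $U_X$, we have $F^\top v=F^\top\mathrm{Proj}_{U_X}v$. The columns of $F$ span $U_X$, and we take them to be a basis, so $r$ columns with $\dim U_X=r$ are linearly independent and $\sigma_{\min}(F)>0$ refers to the $r$-th singular value. The map $F^\top$ restricted to $U_X=\operatorname{Im}(F)$ is then injective, with $\|F^\top w\|_2\ge\sigma_{\min}(F)\|w\|_2$ for all $w\in U_X$. To justify this, take the SVD $F=U\Sigma V^\top$ with $U\in\R^{d\times r}$ an orthonormal basis of $U_X$. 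For $w=Uc$ we get $\|F^\top w\|_2=\|\Sigma c\|_2\ge\sigma_{\min}(F)\|c\|_2=\sigma_{\min}(F)\|w\|_2$. Applying this with $w=\mathrm{Proj}_{U_X}v$ gives $\|\mathrm{Proj}_{U_X}v\|_2\le\rho$.

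Third, we prove the equivalent form. Since $Q^\top Q=I_d$, we have $\mathrm{Proj}_{QU_X}=Q\,\mathrm{Proj}_{U_X}Q^\top$, exactly as in \Cref{thm:orth_subspace}. Hence
\[
\mathrm{Proj}_{QU_X}\tilde g(y)-Q\,\mathrm{Proj}_{U_X}g(y)=-Q\,\mathrm{Proj}_{U_X}v.
\]
Because $Q$ is an isometry, this vector has the same norm as $\mathrm{Proj}_{U_X}v$, which is at most $\rho$. No step is a real obstacle; the only care needed is the restricted-injectivity argument in the second step, where $F$ is not square.
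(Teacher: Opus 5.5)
Your proof is correct and follows the paper's argument essentially step for step. It uses the same vector $h=g(y)-Q^\top\tilde g(y)$, the same triangle-inequality bound $|\inner{f(\bar x_j)}{h}|\le\varepsilon+\delta_f$, the identity $F^\top\mathrm{Proj}_{U_X}h=F^\top h$, and the relation $\mathrm{Proj}_{QU_X}=Q\,\mathrm{Proj}_{U_X}Q^\top$ for the equivalent form. The only cosmetic difference is that the paper inverts via $\|(F^\top)^\dagger\|_2=1/\sigma_{\min}(F)$, while you derive the equivalent lower bound $\|F^\top w\|_2\ge\sigma_{\min}(F)\|w\|_2$ for $w\in U_X$ directly from the SVD.
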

\begin{proof}
Fix $y\in\Omega_Y$ and define $h:=g(y)-Q^\top \tilde g(y)$.
For any $x\in\Omega_X$,
\begin{align*}
  \big|\inner{f(x)}{h}\big|
  &=
  \big|\inner{f(x)}{g(y)}-\inner{Q f(x)}{\tilde g(y)}\big| \\
  &\le
  \big|\inner{f(x)}{g(y)}-\inner{\tilde f(x)}{\tilde g(y)}\big|
  +
  \big|\inner{\tilde f(x)-Q f(x)}{\tilde g(y)}\big|
  \le \varepsilon+\delta_f,
\end{align*}
since $\|\tilde g(y)\|_2=1$.
Let $b\in\R^r$ have coordinates $b_j:=\inner{f(\bar x_j)}{h}$.
Then $\|b\|_2\le \sqrt r(\varepsilon+\delta_f)$.
Moreover, $F^\top \mathrm{Proj}_{U_X}h = F^\top h = b$, and since $F$ has full column rank,
\[
  \|\mathrm{Proj}_{U_X}h\|_2 \le \|(F^\top)^{\dagger}\|_2\,\|b\|_2
  = \frac{1}{\sigma_{\min}(F)}\,\|b\|_2
  \le \frac{\sqrt r}{\sigma_{\min}(F)}(\varepsilon+\delta_f).
\]
The equivalent bound in $\tilde U_X$ follows by left-multiplying by $Q$ and using
$\mathrm{Proj}_{Q U_X}=Q\,\mathrm{Proj}_{U_X}\,Q^\top$ (which holds whenever $Q^\top Q=I_d$).

\end{proof}

\section{Supplementary Experimental Details and Assets Disclosure}\label{sec:exp_setup}

\subsection{Assets}\label{sec:appendix_assets}
We do not introduce new data in the course of this work. Instead, we use publicly available, widely used image datasets for the purposes of benchmarking and comparison.

\subsection{Hardware and setup}
\label{sec:compute}
Each experiment was conducted on 1 NVIDIA Tesla V100 GPUs, each with 32GB of accelerator RAM. The CPUs used were Intel Xeon E5-2698 v4 processors with 20 cores and 384GB of RAM. All experiments were implemented using the PyTorch deep learning framework. We provide our experimental code for reproducibility in the supplementary material.

\subsection{Datasets}\label{sec:appendix_datasets}

\subsubsection{Image Classification Benchmarks}
\label{sec:appendix_benchmarks}

We evaluate cross-model alignment on three standard image classification benchmarks:
Oxford Pets~\citep{parkhi2012cats},
CIFAR-100~\citep{krizhevsky2009learning}, Caltech-101~\citep{fei2004learning}, STL10~\citep{coates2011analysis} and DTD~\citep{cimpoi2014describing}.
More details about dataset statistics and splits are provided in~\Cref{tab:dataset-stats}.

\begin{table}[!htb]
  \centering
  \caption{Detailed statistics of the 5 datasets for image classification.}
  \label{tab:dataset-stats}
  \begin{tabular}{l c r r r}
    \toprule
    Dataset                 & Classes &  Train  & Val   &  Test   \\
    \midrule
    Caltech-101~\citep{fei2004learning} &     100 &   4,128 & 1,649 &   2,465 \\
    Oxford Pets~\citep{parkhi2012cats} &      37 &   2,944 &   736 &   3,669 \\
    CIFAR100~\citep{krizhevsky2009learning} & 100 & 50,000 & - & 10,000\\
    STL10~\citep{coates2011analysis} & 10 & 50,000 & -& 8,000 \\
    DTD~\citep{cimpoi2014describing} & 47 & 2,820 & 1,128 & 1,692\\
    \bottomrule
  \end{tabular}
\end{table}

\subsubsection{Constructing Text Templates}
\label{sec:appendix_templates}

We construct class-level text prototypes by instantiating natural-language templates with each class name and embedding the resulting strings using each model’s text encoder. 
All templates are taken directly from the official CLIP repository and prior CLIP evaluation code, ensuring that our protocol matches standard zero-shot classification practice.~\Cref{tab:text-templates} summarizes the exact prompts used for each dataset. We embed all prompts for a class, aggregate them into a single prototype by averaging, and then $\ell_2$-normalizing. Concretely, for templates $\{\tau_k(c)\}_{k=1}^K$ for a class $c$,
\begin{equation}
    p_c = \text{norm}\Big(\frac{1}{K} \sum_k g(\tau_k(c))\Big)
\end{equation}
When the orthogonal map $Q$ is fit on images and evaluated on text, we use these averaged class prototypes $p_c$ for text-side evaluation. In contrast, when fitting $Q$ on text, averaging would collapse each class to a single vector and drastically reduce supervision; therefore, we fit $Q$ on individual prompt embeddings ${g(\tau_k(c))}_{k=1}^K$, treating each template instantiation as a separate training sample.

\begin{table}[!htb]
\centering
\small
\resizebox{0.7\textwidth}{!}{%
\begin{tabular}{p{3cm}p{10cm}}
\toprule
\textbf{Dataset} & \textbf{Text Templates (from official CLIP repo)} \\
\midrule

Oxford Pets &
\texttt{a photo of a \{class\}, a type of pet.} \\[10pt]

CIFAR-100 &
\begin{minipage}[t]{10cm}
\texttt{a photo of a \{class\}.}\\
\texttt{a blurry photo of a \{class\}.}\\
\texttt{a black and white photo of a \{class\}.}\\
\texttt{a low contrast photo of a \{class\}.}\\
\texttt{a high contrast photo of a \{class\}.}\\
\texttt{a bad photo of a \{class\}.}\\
\texttt{a good photo of a \{class\}.}\\
\texttt{a photo of a small \{class\}.}\\
\texttt{a photo of a big \{class\}.}\\
\texttt{a photo of the \{class\}.}\\
\texttt{a blurry photo of the \{class\}.}\\
\texttt{a black and white photo of the \{class\}.}\\
\texttt{a low contrast photo of the \{class\}.}\\
\texttt{a high contrast photo of the \{class\}.}\\
\texttt{a bad photo of the \{class\}.}\\
\texttt{a good photo of the \{class\}.}\\
\texttt{a photo of the small \{class\}.}\\
\texttt{a photo of the big \{class\}.}\\[10pt]
\end{minipage}
\\[10pt]

Caltech-101 &
\begin{minipage}[t]{10cm}
\texttt{a photo of a \{class\}.}\\
\texttt{a painting of a \{class\}.}\\
\texttt{a plastic \{class\}.}\\
\texttt{a sculpture of a \{class\}.}\\
\texttt{a sketch of a \{class\}.}\\
\texttt{a tattoo of a \{class\}.}\\
\texttt{a toy \{class\}.}\\
\texttt{a rendition of a \{class\}.}\\
\texttt{a embroidered \{class\}.}\\
\texttt{a cartoon \{class\}.}\\
\texttt{a \{class\} in a video game.}\\
\texttt{a plushie \{class\}.}\\
\texttt{a origami \{class\}.}\\
\texttt{art of a \{class\}.}\\
\texttt{graffiti of a \{class\}.}\\
\texttt{a drawing of a \{class\}.}\\
\texttt{a doodle of a \{class\}.}\\
\texttt{a photo of the \{class\}.}\\
\texttt{a painting of the \{class\}.}\\
\texttt{the plastic \{class\}.}\\
\texttt{a sculpture of the \{class\}.}\\
\texttt{a sketch of the \{class\}.}\\
\texttt{a tattoo of the \{class\}.}\\
\texttt{the toy \{class\}.}\\
\texttt{a rendition of the \{class\}.}\\
\texttt{the embroidered \{class\}.}\\
\texttt{the cartoon \{class\}.}\\
\texttt{the \{class\} in a video game.}\\
\texttt{the plushie \{class\}.}\\
\texttt{the origami \{class\}.}\\
\texttt{art of the \{class\}.}\\
\texttt{graffiti of the \{class\}.}\\
\texttt{a drawing of the \{class\}.}\\
\texttt{a doodle of the \{class\}.}\\[10pt]
\end{minipage}
\\[10pt]

STL-10 &
\begin{minipage}[t]{10cm}
\texttt{a photo of a \{class\}.}\\
\texttt{a photo of the \{class\}.}\\[10pt]
\end{minipage}
\\[10pt]

DTD &
\begin{minipage}[t]{10cm}
\texttt{a photo of a \{class\} texture.}\\
\texttt{a photo of a \{class\} pattern.}\\
\texttt{a photo of a \{class\} thing.}\\
\texttt{a photo of a \{class\} object.}\\
\texttt{a photo of the \{class\} texture.}\\
\texttt{a photo of the \{class\} pattern.}\\
\texttt{a photo of the \{class\} thing.}\\
\texttt{a photo of the \{class\} object.}
\end{minipage}
\\

\bottomrule
\end{tabular}
}
\caption{Text templates used to construct class-level text prototypes for each dataset.
All templates are taken from the official CLIP repository.
For all reported results, we use a single template per class for evaluation}
\label{tab:text-templates}
\end{table}

\FloatBarrier

\subsection{Training and Evaluation Protocol}\label{sec:app_training}
\subsubsection{Learning An Orthogonal Transformation}\label{sec:app_learning}
We consider two independently trained multimodal contrastive models with image encoders $f,\tilde f$ and text encoders $g,\tilde g$. Our goal is to learn a single orthogonal map $\mathcal R \in O(d)$ that aligns representations across models. All experiments use fixed train/validation/test splits with a fixed random seed. We report the mean and standard deviation averaged over three random seeds. We fit $\mathcal R$ using paired embeddings from a single modality (images or text). Let $\{x_i\}_{i=1}^n$ denote paired inputs in the chosen modality, and define
\[
X = \begin{bmatrix} f(x_1)^\top \\ \vdots \\ f(x_n)^\top \end{bmatrix},
\qquad
\tilde X = \begin{bmatrix} \tilde f(x_1)^\top \\ \vdots \\ \tilde f(x_n)^\top \end{bmatrix},
\]
with $f,\tilde f$ replaced by $g,\tilde g$ when fitting on text. For numerical stability, we center the embeddings only during estimation i.e. $ X \rightarrow X - \mu_X, \;\; \tilde X \rightarrow \tilde X - \mu_{\tilde X},$
and compute the closed form Orthogonal Procrustes solution
\[
\mathcal Q \;=\; \arg\min_{Q \in O(d)} \| X Q - {\tilde X}\|_F
\;=\; UV^\top,
\]
where $U\Sigma V^\top = \mathrm{SVD}( X^\top {\tilde X})$.
At evaluation, we deploy $\mathcal Q$ as a pure orthogonal transformation on the centered point clouds,
\[
z \;\mapsto\; Q\big(z - \mu^{(\cdot)}\big) + \tilde{\mu}^{(\cdot)},
\]
where $\mu^{(\cdot)}$ and $\tilde{\mu}^{(\cdot)}$ denote the empirical means of the training embeddings from the source and target models, respectively, for the modality on which the map is applied. In particular, when deploying on image embeddings we use the training image means $(\mu^{\text{img}}, \tilde{\mu}^{\text{img}})$, and when deploying on text embeddings we use the training text means $(\mu^{\text{text}}, \tilde{\mu}^{\text{text}})$. Centering isolates the rotational relationship by removing this offset while preserving the orthogonal correspondence in the centered space. Note that, we also ablate using pure orthogonal transformation on the raw embedding point clouds and find that mean centering mainly improves \emph{pointwise} cosine agreement, while class-level retrieval and decision geometry remain essentially unchanged even without mean-centering (see~\Cref{sec:app_procrustes_variants}).

\noindent For more details, refer to the pseudocode in~\Cref{alg:procrustes}. 
\begin{algorithm}[!htb]
\caption{Pseudocode for fitting an orthogonal cross-model map $\mathcal R$ (Procrustes)}
\label{alg:procrustes}
\begin{minipage}{\linewidth}
\begin{lstlisting}[language=python,basicstyle=\ttfamily\small]
# f, g: source image/text encoders with outputs of unit-norm
# f_tilde, g_tilde: target image/text encoders with outputs of unit-norm
# pairs: paired samples from ONE modality (images x_i or texts y_i)
# fit_modality: "image" or "text"
# norm(v): l2-normalize v

def fit_procrustes(pairs, fit_modality = 'image'):
    X, X_tilde = [], []
    for s in pairs:
        if fit_modality == "image":
            X.append(f(s))
            X_tilde.append(f_tilde(s))
        else: 
            X.append(g(s))
            X_tilde.append(g_tilde(s))

    X = stack_rows(X)              # (n, d)
    X_tilde = stack_rows(X_tilde)  # (n, d)

    # center for numerical stability during estimation
    Xc = X - mean_row(X) 
    Xtc = X_tilde - mean_row(X_tilde)

    # orthogonal Procrustes
    M = transpose(Xc) @ Xtc
    U, S, Vt = svd(M)
    R = U @ Vt

# deployment (used for BOTH modalities at evaluation time)
def deploy(z, R, mu, mu_tilde):
    return R @ (z-mu) + mu_tilde  

R = fit_procrustes(pairs, fit_modality)
\end{lstlisting}
\end{minipage}
\end{algorithm}

\FloatBarrier
\subsubsection{Performance Metrics at Evaluation}\label{sec:app_metrics}
We evaluate alignment at two granularities: the instance level and the class level. For the former, we report paired-image cosine $\langle \mathcal R f(x),\,\tilde f(x)\rangle$ and paired-text cosine $\langle \mathcal R g(y),\,\tilde g(y)\rangle$ and the correspondingly the Euclidean distance between image embeddings ($R f(x)$ and $\tilde f(x)$) and text embeddings ($R g(y)$ and $\tilde g(y)$), measuring how well a single orthogonal map $\mathcal R$ matches corresponding embeddings across models at the instance level. For the class level alignment, we report:
\begin{itemize}
    \item \textbf{Intra-modal alignment across models.} We measure class-level image–image and text–text top-1 retrieval across models. For image–image, each query is an aligned image embedding $\mathcal R f(x)$, and we predict its class by nearest-neighbor search over the target image embedding set $\{\tilde f(x')\}_{x'}$, counting a hit if the retrieved neighbor shares the same class label. For text-text, each query is an aligned class text embedding $\mathcal R g(y_c)$, and we retrieve within the target class text embedding set $\{\tilde g(y_{c'})\}_{c'}$, counting a hit if the retrieved class equals $c$.
    \item \textbf{Cross-modal transfer across models (task accuracy).}
    We evaluate zero-shot classification under three retrieval settings, each probing a different notion of transfer: (a) aligned images with target text $\hat c=\arg\max_c \langle \mathcal R f(x),\,\tilde g(y_c)\rangle$ measures whether image–text semantics are preserved when only the image space is aligned; (b) target images with aligned text $\hat c=\arg\max_c \langle \tilde f(x),\,\mathcal R g(y_c)\rangle$ measures whether text semantics transfer under alignment without modifying the image space; (c) aligned images with aligned text $\hat c=\arg\max_c \langle \mathcal R f(x),\,\mathcal R g(y_c)\rangle$ measures whether a single map $\mathcal R$ induces a coherent shared space across both modalities.
\end{itemize}

In all, we report five metrics: (1) paired-instance cosine (or euclidean distance), either image $\langle \mathcal R f(x),,\tilde f(x)\rangle$ or text $\langle \mathcal R g(y),,\tilde g(y)\rangle$ depending on the transfer direction; (2) class-level image-image top-1 retrieval or text-text top-1 retrieval; (3) zero-shot accuracy with aligned images and target text, $\hat c=\arg\max_c \langle \mathcal R f(x),,\tilde g(y_c)\rangle$; (4) zero-shot accuracy with target images and aligned text, $\hat c=\arg\max_c \langle \tilde f(x),,\mathcal R g(y_c)\rangle$; and (5) zero-shot accuracy with aligned images and aligned text, $\hat c=\arg\max_c \langle \mathcal R f(x),,\mathcal R g(y_c)\rangle$. Throughout the evaluation, every inner product $\langle \cdot,\cdot\rangle$ is measured through \emph{cosine similarity}, i.e., a dot product between $\ell_2$-normalized embeddings.

\FloatBarrier

\section{Additional Experiments}\label{sec:app_additional_experiments}

\subsection{Modality Gap in Contrastive Models}

\begin{figure*}[!htb]
    \centering
\includegraphics[width=1.\linewidth]{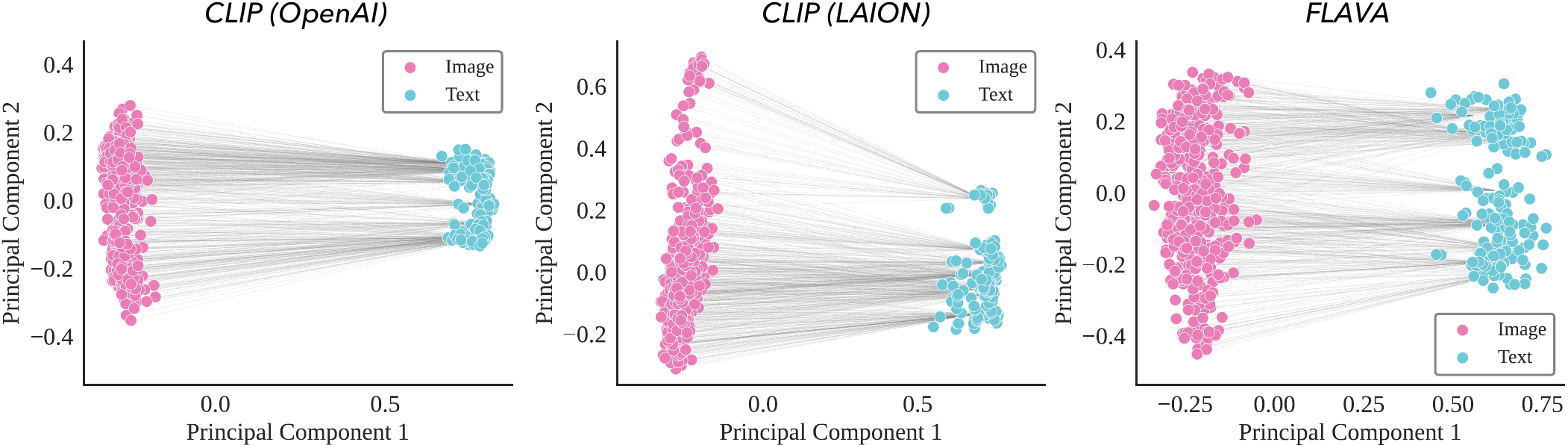}
    \caption{PCA visualization in 2D of generated embeddings of CIFAR-100 from pre-trained models showing pervasive modality gap in multi-modal contrastive representation learning whereby image and text embeddings are located in two completely separate regions of the embedding space.}
    \label{fig:appendix_modality_gap}
\end{figure*}

\Cref{fig:appendix_modality_gap} visualizes the modality gap in contrastive models via a 2D PCA projection of image and text embeddings. Across CLIP (OpenAI), CLIP (LAION), and FLAVA, image and text representations form two well-separated clusters, occupying distinct regions of the embedding space rather than a unified manifold. Gray lines connect matched image-text pairs, highlighting that although the pairs are semantically aligned, their embeddings remain geometrically distant. 

\FloatBarrier

\subsection{Visualization of Multimodal Kernels}
In~\Cref{fig:appendix_kernels}, we visualize the PCA projection of the image/text geometry across models. We observe that while the absolute coordinates of the embedding cones shift arbitrarily between models, the angular arrangement of the texts with respect to the images remains consistent. Mathematically, this means that the \emph{multimodal kernels} are approximately preserved across models: $\langle f, g \rangle \approx \langle \tilde{f}, \tilde{g} \rangle$. This observation can be viewed as a multimodal analogue of the Platonic Representation Hypothesis~\citep{huh2024prh} that posits that models converge to similar \emph{unimodal} kernels ($\langle f, f \rangle \approx \langle \tilde{f}, \tilde{f} \rangle$).

\begin{figure*}[!htb]
    \centering
\includegraphics[width=1.\linewidth]{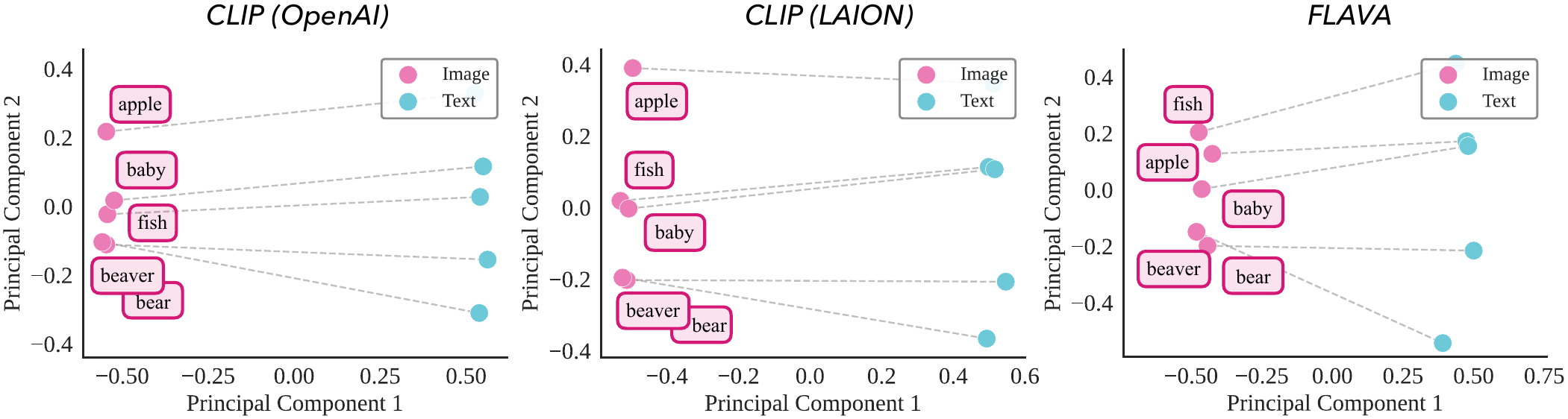}
    \caption{Across CLIP variants, the multimodal kernel $\langle f(x), g(y)\rangle$ (relative angles between image and text embeddings) is strongly preserved (dashed lines), unlike the unimodal kernel $\langle f(x), \tilde f(x')\rangle$;\looseness=-1}
    \label{fig:appendix_kernels}
\end{figure*}

\FloatBarrier

\subsection{Independently Trained Contrastive Models Differ by an Orthogonal Map That Is Shared Across Modalities}\label{sec:app_all_metrics}
In this section, we report complete results across five benchmarks—Oxford Pets (\Cref{fig:mainresults_oxford}), Caltech-101 (\Cref{fig:mainresults_caltech101}), CIFAR-100 (\Cref{fig:mainresults_cifar100}), STL10 (\Cref{fig:mainresults_stl10}), and DTD (\Cref{fig:mainresults_dtd})—covering all five evaluation metrics used throughout the paper and explained in~\Cref{sec:app_metrics}. Two consistent conclusions emerge across datasets and model pairs. First, a single orthogonal map $\mathcal Q$ accurately aligns image embeddings between independently trained cpmtrastive models, yielding near-oracle image-image agreement. Second, the \emph{same} map transfers across modalities: applying $\mathcal Q$ learned from images substantially improves text-text pointwise alignment, cross-model retrieval, and zero-shot classification, without degrading image performance. Together, these results confirm that independently trained contrastive models are related by a shared, modality-invariant orthogonal reparameterization.

\begin{figure*}[!htb]
    \centering
    \begin{subfigure}{0.32\textwidth}
    \centering
    \includegraphics[width=\textwidth]{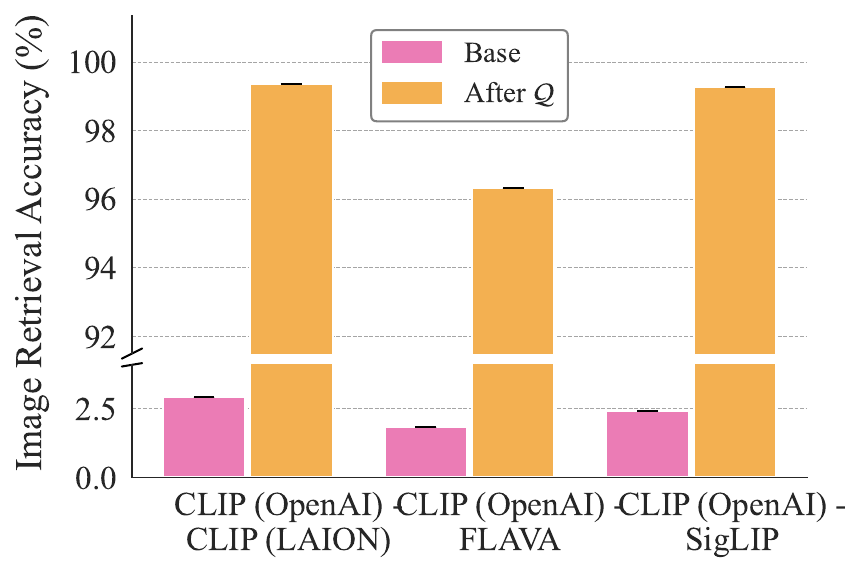}
    \caption{}
    \end{subfigure}
    \begin{subfigure}{0.32\textwidth}
    \centering
    \includegraphics[width=\textwidth]{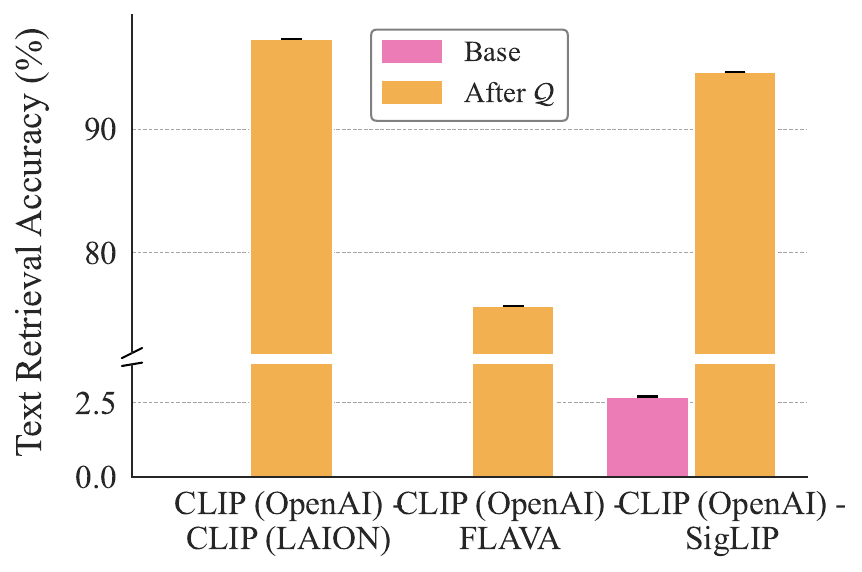}
    \caption{}
    \end{subfigure}
    \begin{subfigure}{0.32\textwidth}
    \centering
    \includegraphics[width=\textwidth]{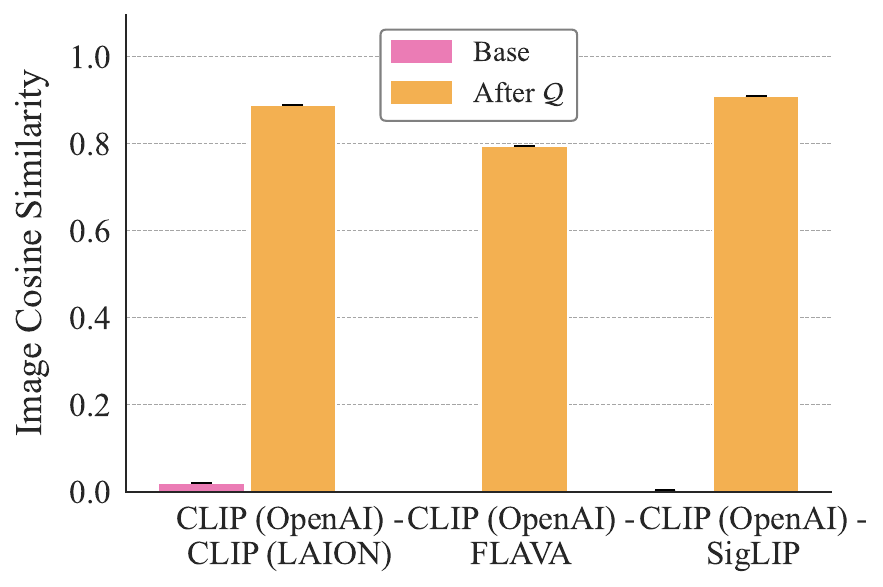}
    \caption{}
    \end{subfigure}
    \begin{subfigure}{0.32\textwidth}
    \centering
    \includegraphics[width=\textwidth]{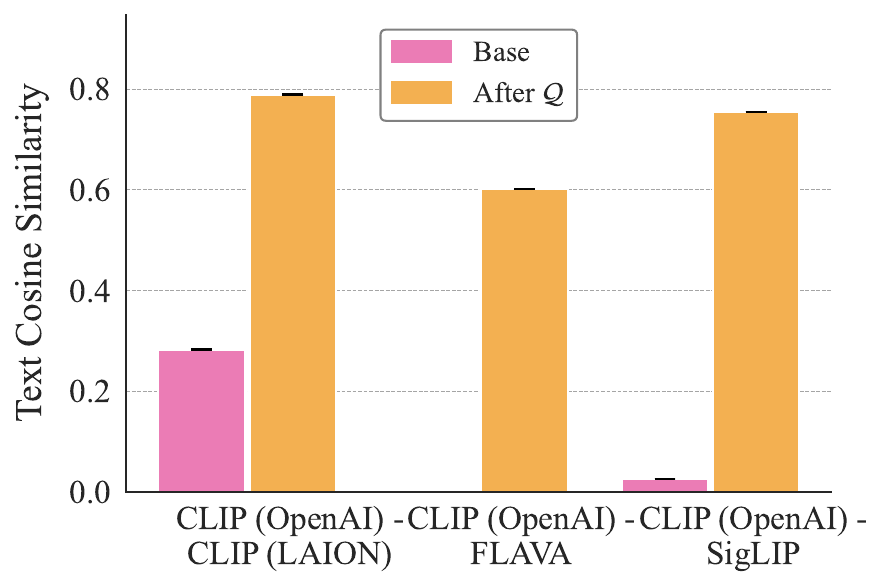}
    \caption{}
    \end{subfigure}
    \begin{subfigure}{0.32\textwidth}
    \centering
    \includegraphics[width=\textwidth]{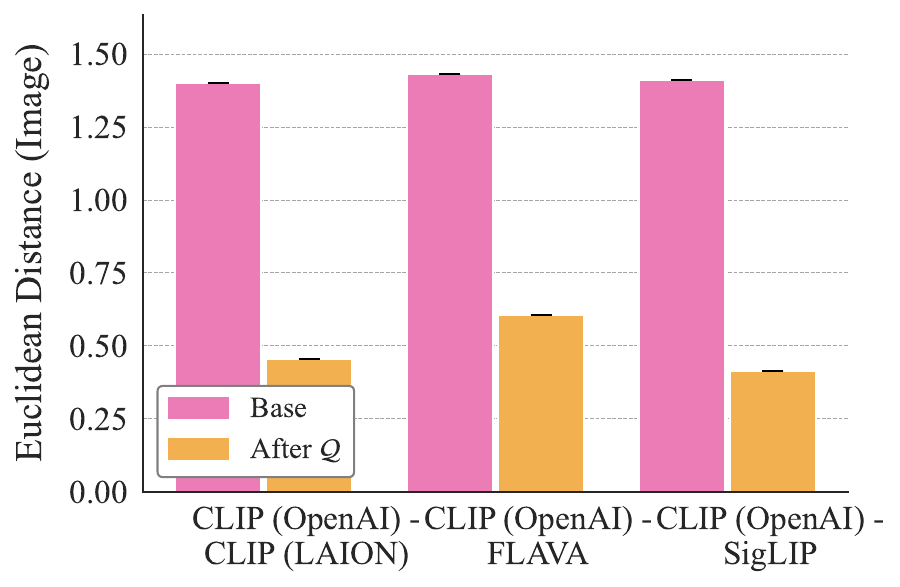}
    \caption{}
    \end{subfigure}
    \begin{subfigure}{0.32\textwidth}
    \centering
    \includegraphics[width=\textwidth]{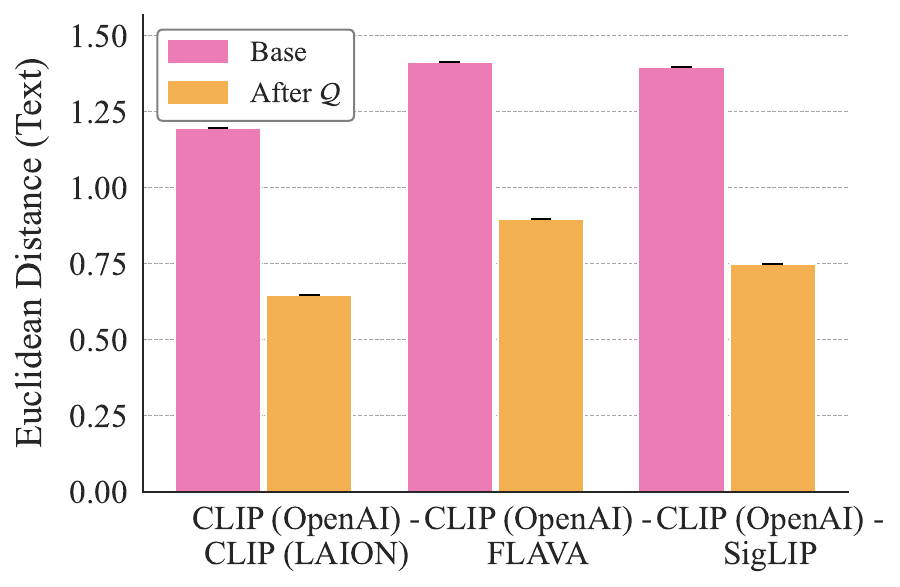}
    \caption{}
    \end{subfigure}
    \begin{subfigure}{0.32\textwidth}
    \centering
    \includegraphics[width=\textwidth]{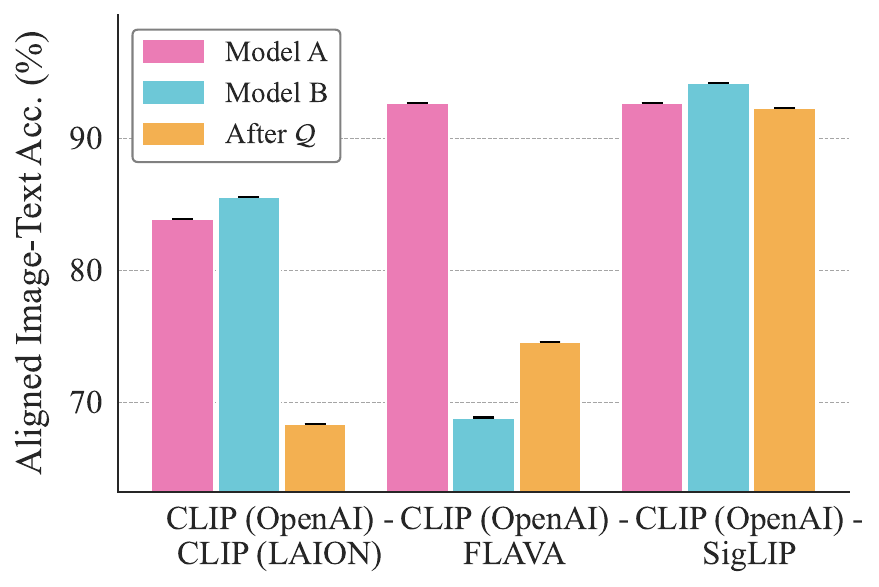}
    \caption{}
    \end{subfigure}
    \begin{subfigure}{0.32\textwidth}
    \centering
    \includegraphics[width=\textwidth]{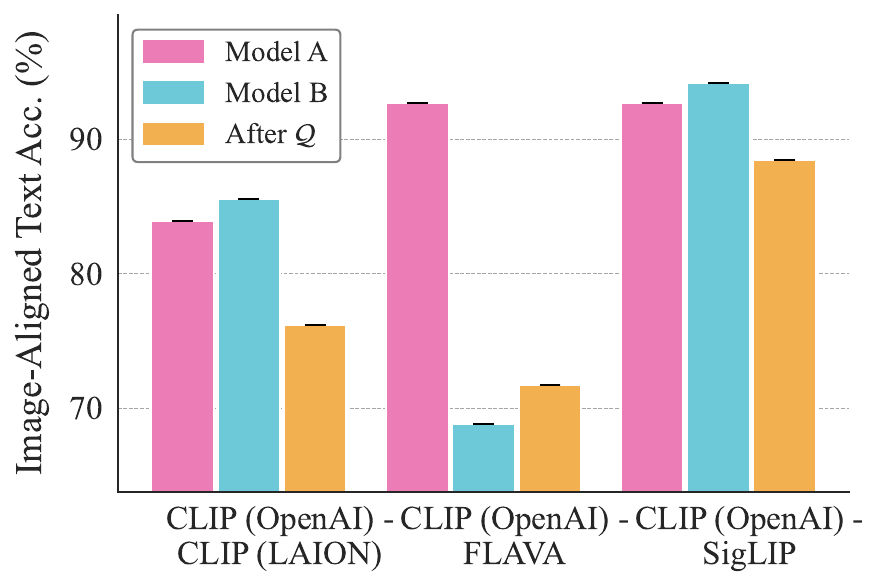}
    \caption{}
    \end{subfigure}
    \begin{subfigure}{0.32\textwidth}
    \centering
    \includegraphics[width=\textwidth]{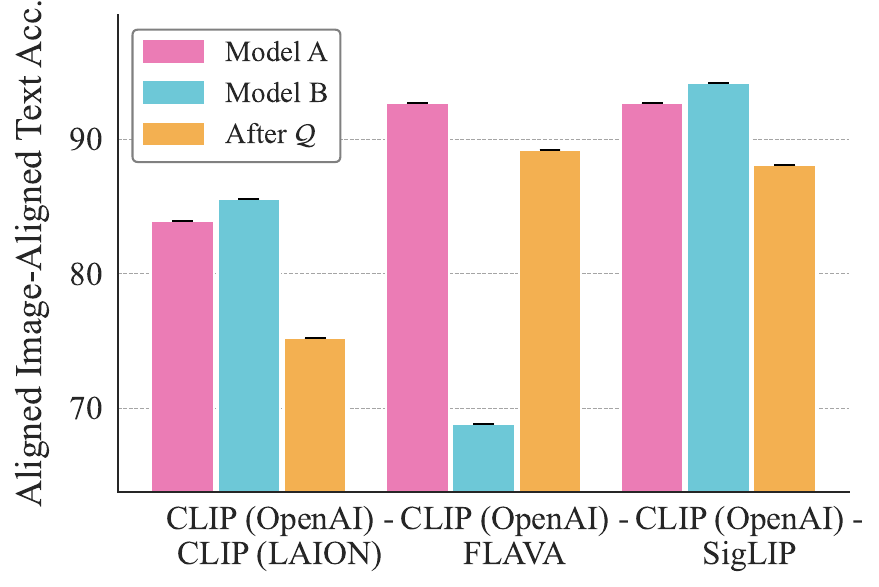}
    \caption{}
    \end{subfigure}
    \caption{\textit{Cross-model alignment on Oxford Pets across independently trained contrastive models before and after fitting a single orthogonal map $\mathcal R$.} (a) Image–image class retrieval and (b) text–text class retrieval (c) Mean image–image cosine similarity. (d) Mean text–text cosine similarity. (e) Euclidean distance (without normalization) between image embeddings. (f) Euclidean distance (without normalization) between text embeddings. (g) Image–text retrieval using aligned images from model A and text from model B. (h) Image–text retrieval using images from model B and aligned text from model A. (i) Image–text retrieval using aligned images and aligned text from model A. $\mathcal Q$ aligns images across models with a single orthogonal map, and the same $\mathcal Q$ learned only from image embeddings transfers to text, boosting text-text retrieval from near-chance to near-oracle, all while preserving strong image classification accuracy.}
    
    \label{fig:mainresults_oxford}
\end{figure*}

\begin{figure*}[!htb]
    \centering
    \begin{subfigure}{0.32\textwidth}
    \centering
    \includegraphics[width=\textwidth]{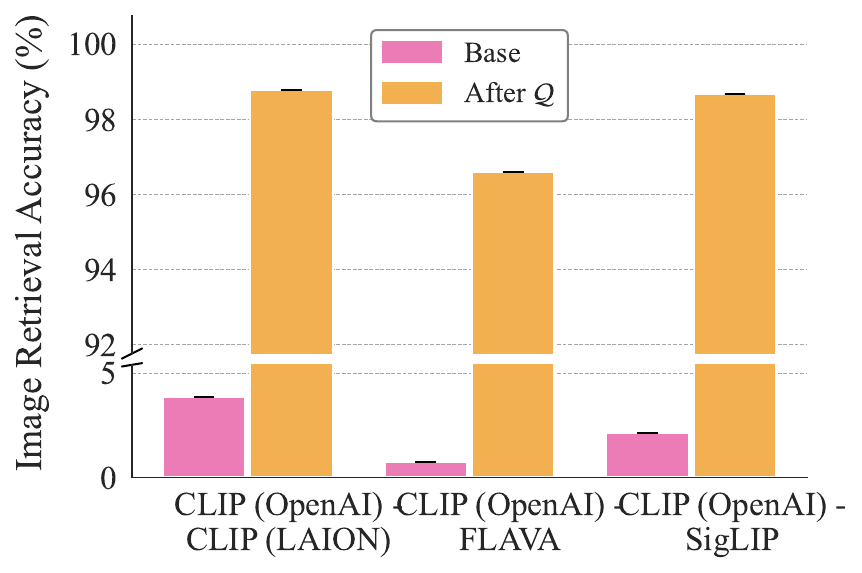}
    \caption{}
    \end{subfigure}
    \begin{subfigure}{0.32\textwidth}
    \centering
    \includegraphics[width=\textwidth]{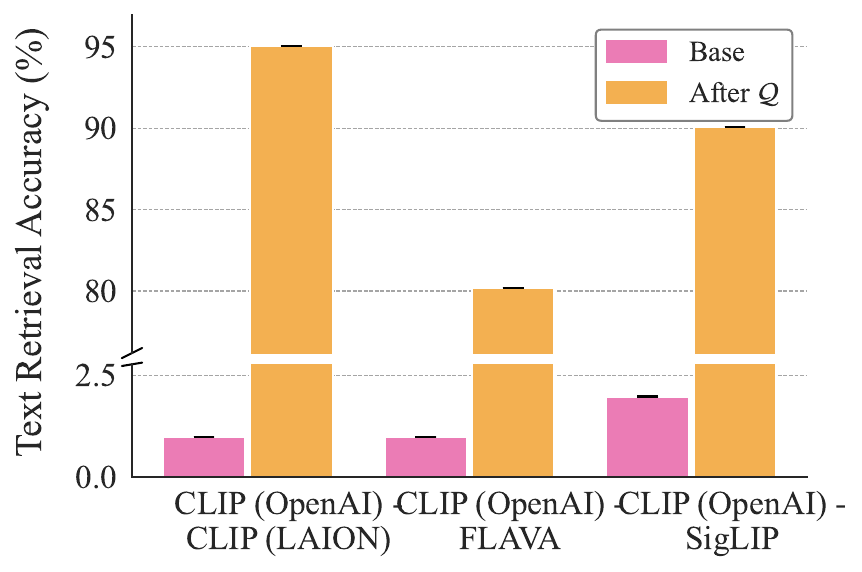}
    \caption{}
    \end{subfigure}
    \begin{subfigure}{0.32\textwidth}
    \centering
    \includegraphics[width=\textwidth]{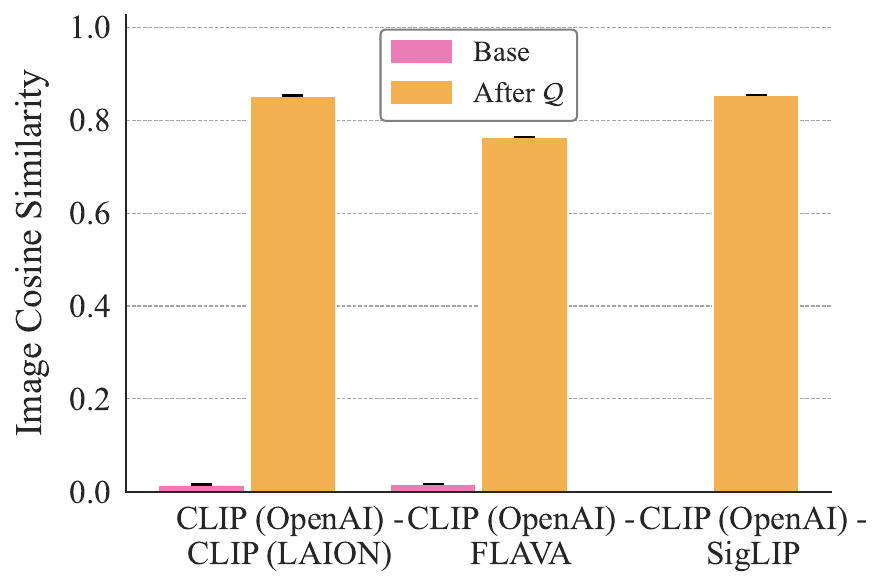}
    \caption{}
    \end{subfigure}
    \begin{subfigure}{0.32\textwidth}
    \centering
    \includegraphics[width=\textwidth]{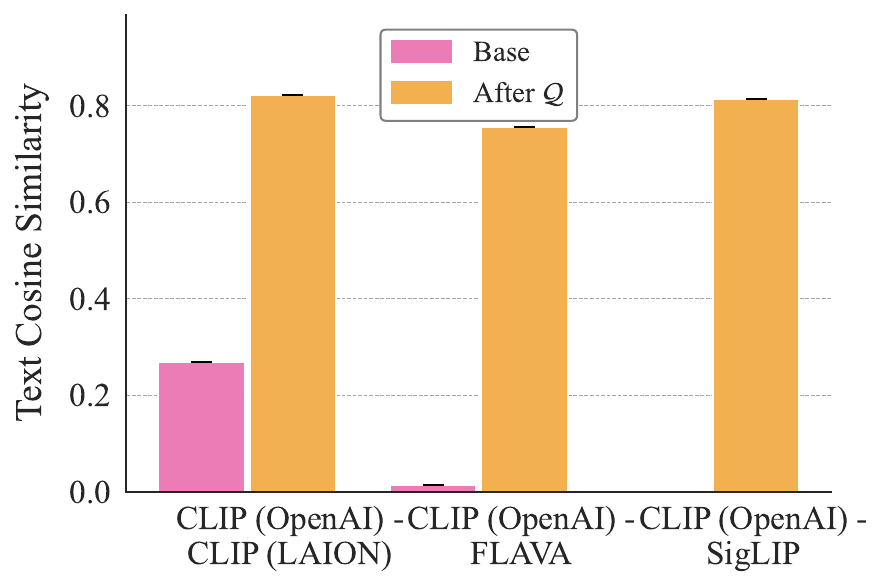}
    \caption{}
    \end{subfigure}
    \begin{subfigure}{0.32\textwidth}
    \centering
    \includegraphics[width=\textwidth]{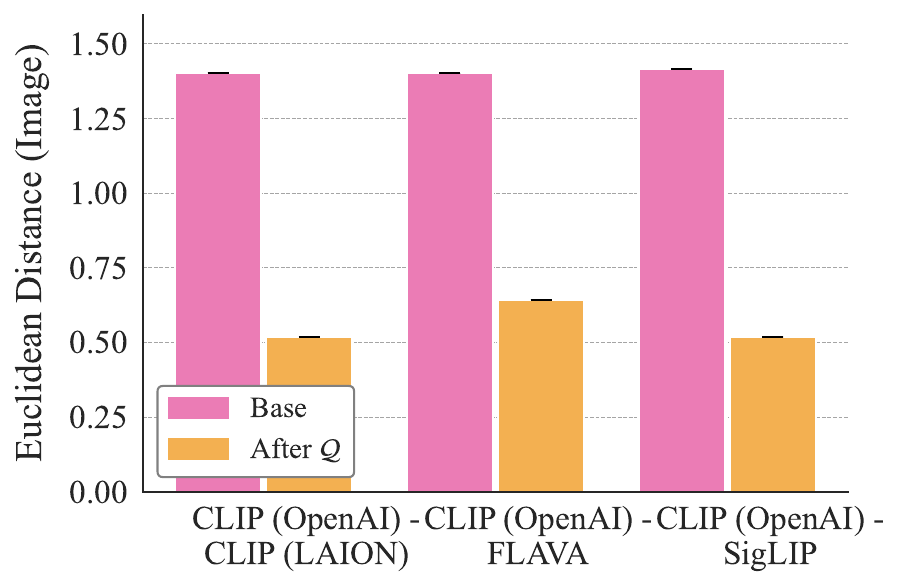}
    \caption{}
    \end{subfigure}
    \begin{subfigure}{0.32\textwidth}
    \centering
    \includegraphics[width=\textwidth]{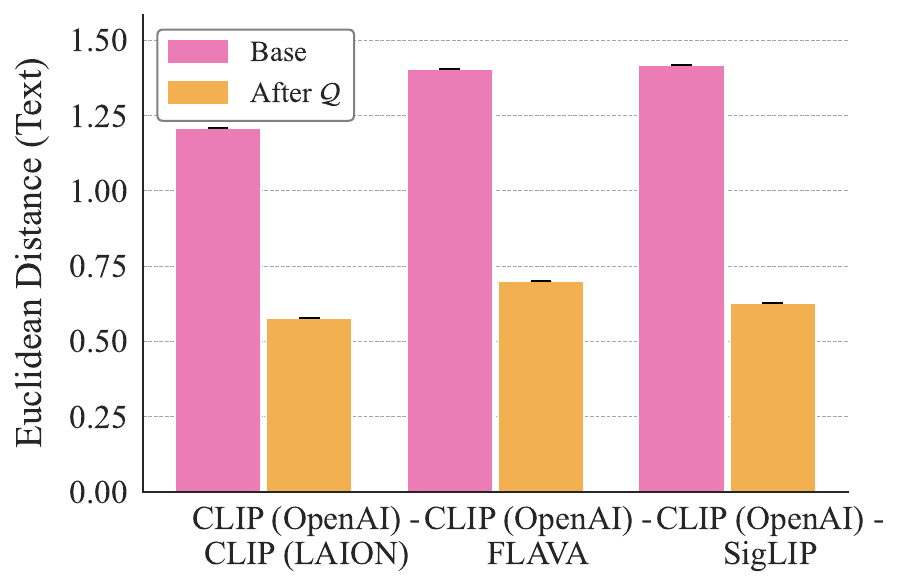}
    \caption{}
    \end{subfigure}
    \begin{subfigure}{0.32\textwidth}
    \centering
    \includegraphics[width=\textwidth]{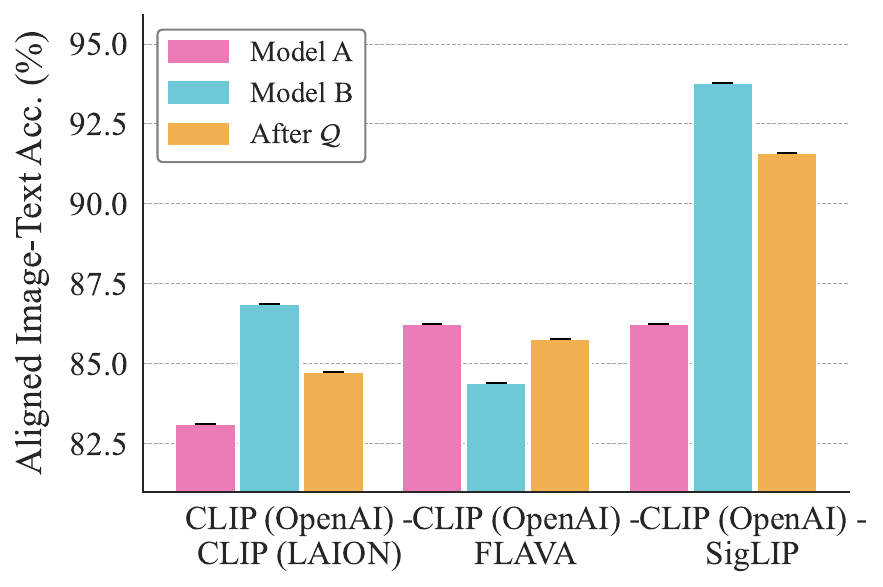}
    \caption{}
    \end{subfigure}
    \begin{subfigure}{0.32\textwidth}
    \centering
    \includegraphics[width=\textwidth]{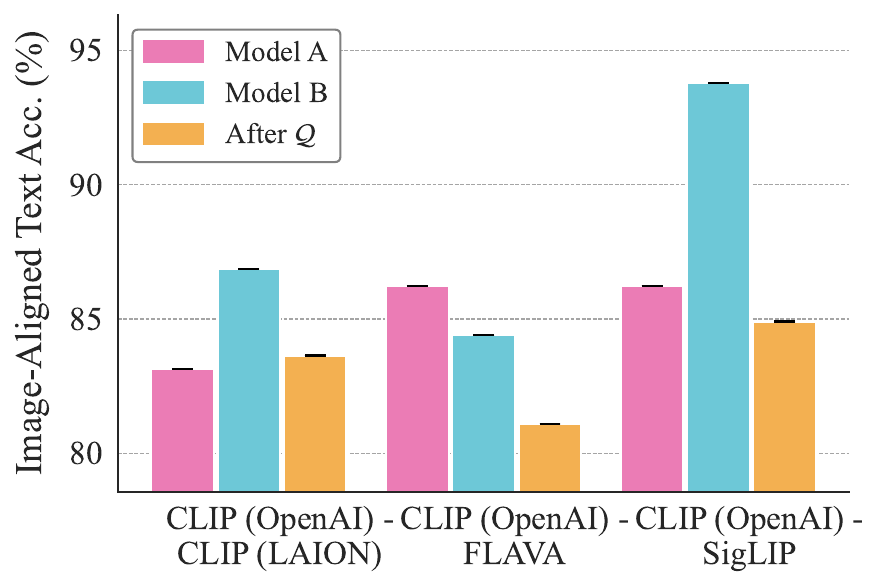}
    \caption{}
    \end{subfigure}
    \begin{subfigure}{0.32\textwidth}
    \centering
    \includegraphics[width=\textwidth]{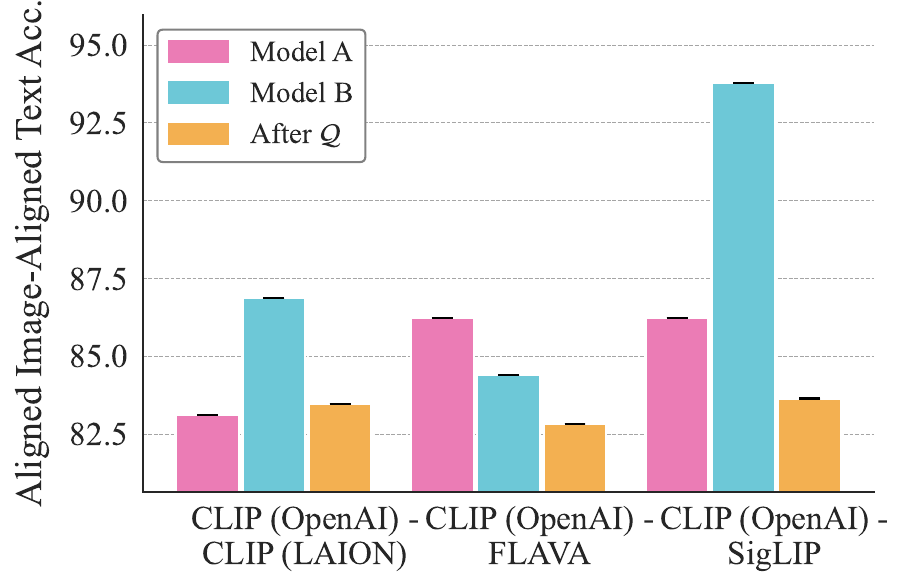}
    \caption{}
    \end{subfigure}
    \caption{\textit{Cross-model alignment on Caltech-101 across independently trained contrastive models before and after fitting a single orthogonal map $\mathcal R$.} (a) Image–image class retrieval and (b) text–text class retrieval (c) Mean image–image cosine similarity. (d) Mean text–text cosine similarity. (e) Euclidean distance (without normalization) between image embeddings. (f) Euclidean distance (without normalization) between text embeddings. (g) Image–text retrieval using aligned images from model A and text from model B. (h) Image–text retrieval using images from model B and aligned text from model A. (i) Image–text retrieval using aligned images and aligned text from model A. $\mathcal Q$ aligns images across models with a single orthogonal map, and the same $\mathcal Q$ learned only from image embeddings transfers to text, boosting text-text retrieval from near-chance to near-oracle, all while preserving strong image classification accuracy.}
    \label{fig:mainresults_caltech101}
\end{figure*}

\begin{figure*}[!htb]
    \centering
    \begin{subfigure}{0.32\textwidth}
    \centering
    \includegraphics[width=\textwidth]{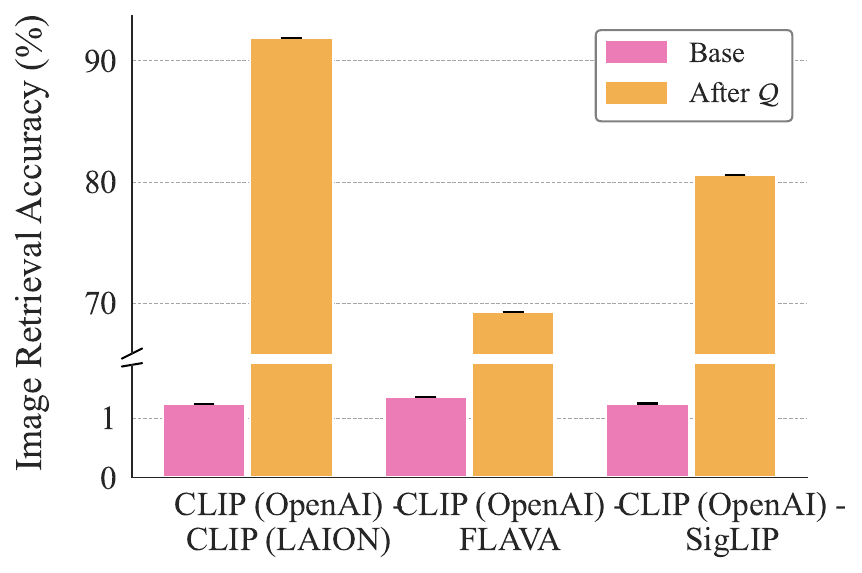}
    \caption{}
    \end{subfigure}
    \begin{subfigure}{0.32\textwidth}
    \centering
    \includegraphics[width=\textwidth]{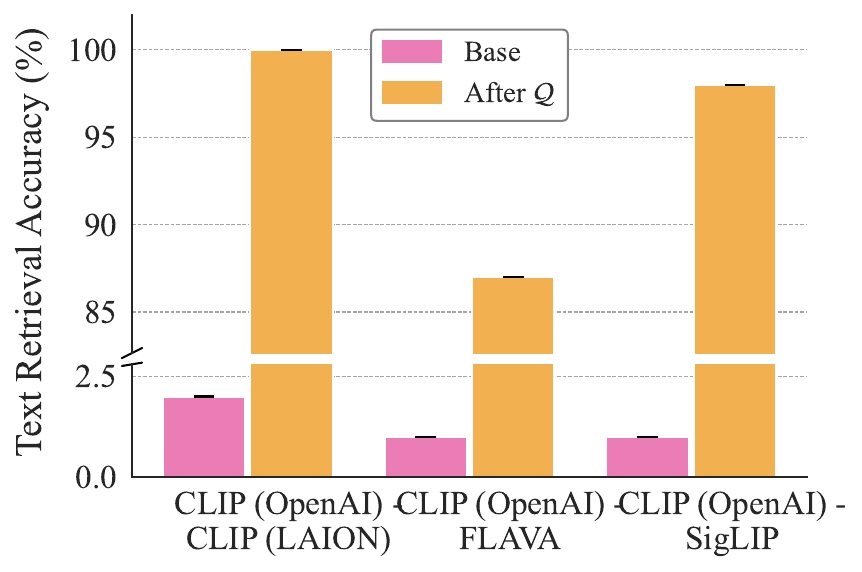}
    \caption{}
    \end{subfigure}
    \begin{subfigure}{0.32\textwidth}
    \centering
    \includegraphics[width=\textwidth]{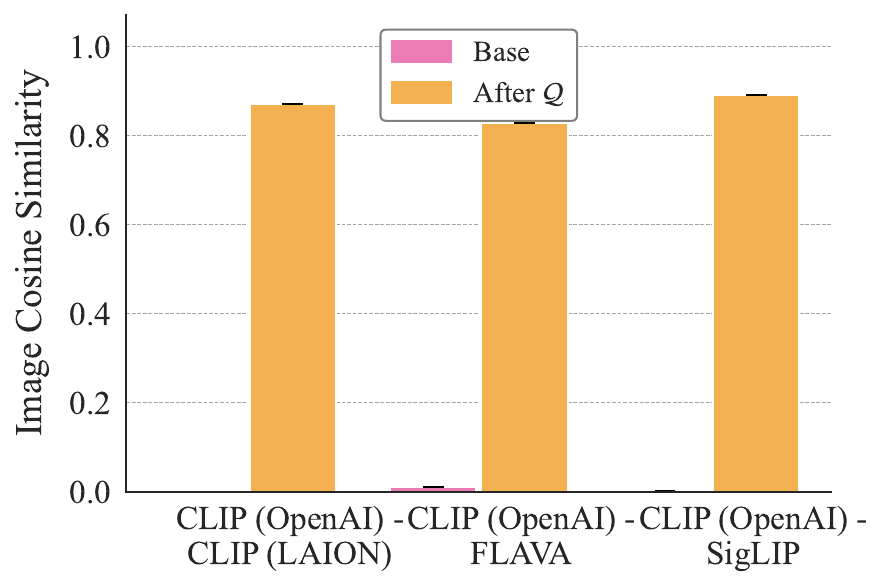}
    \caption{}
    \end{subfigure}
    \begin{subfigure}{0.32\textwidth}
    \centering
    \includegraphics[width=\textwidth]{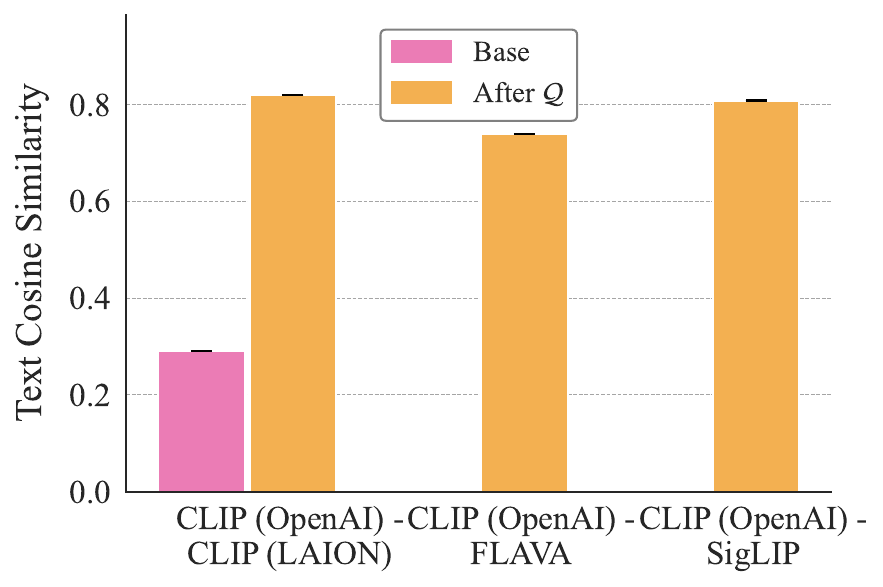}
    \caption{}
    \end{subfigure}
    \begin{subfigure}{0.32\textwidth}
    \centering
    \includegraphics[width=\textwidth]{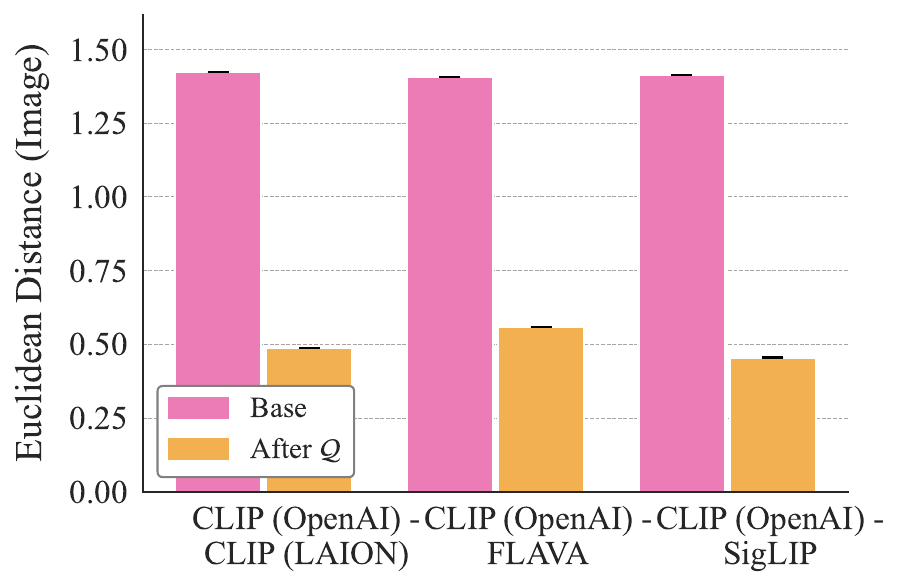}
    \caption{}
    \end{subfigure}
    \begin{subfigure}{0.32\textwidth}
    \centering
    \includegraphics[width=\textwidth]{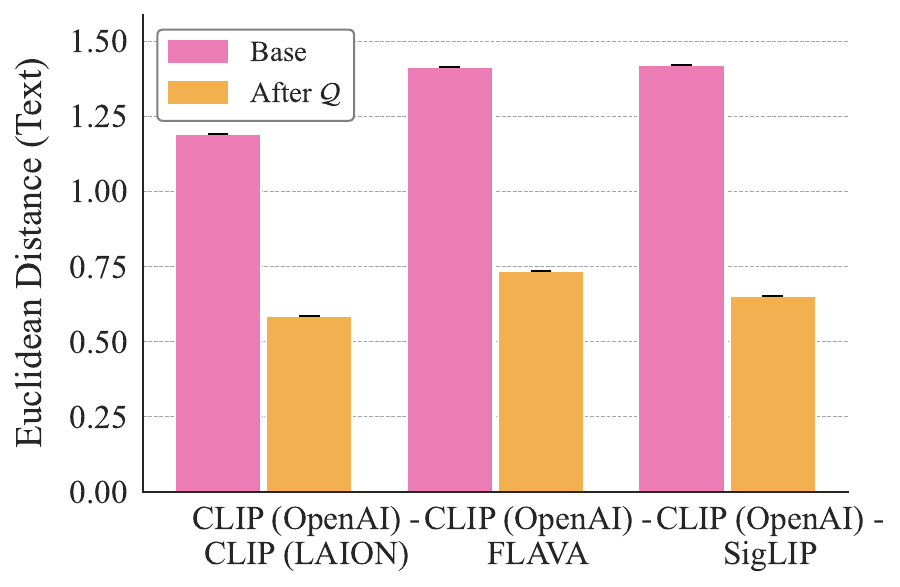}
    \caption{}
    \end{subfigure}
    \begin{subfigure}{0.32\textwidth}
    \centering
    \includegraphics[width=\textwidth]{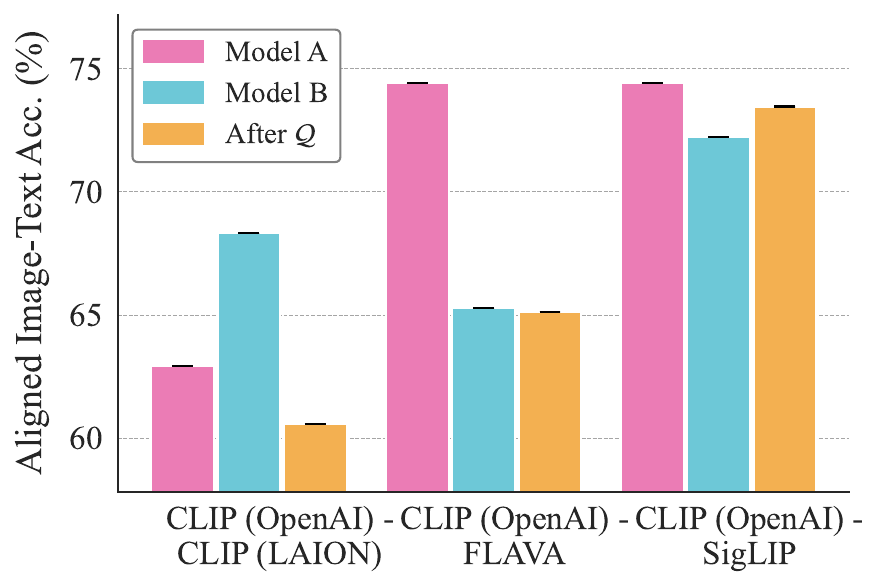}
    \caption{}
    \end{subfigure}
    \begin{subfigure}{0.32\textwidth}
    \centering
    \includegraphics[width=\textwidth]{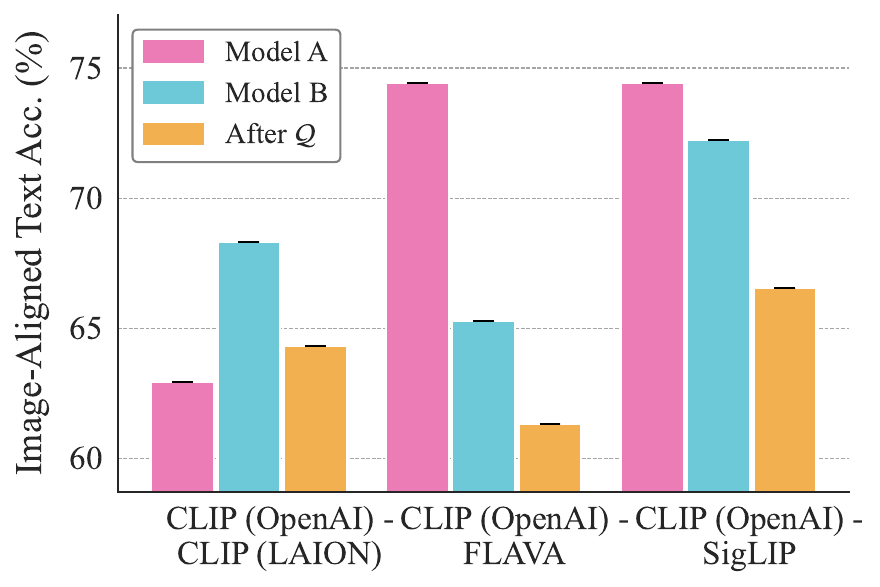}
    \caption{}
    \end{subfigure}
    \begin{subfigure}{0.32\textwidth}
    \centering
    \includegraphics[width=\textwidth]{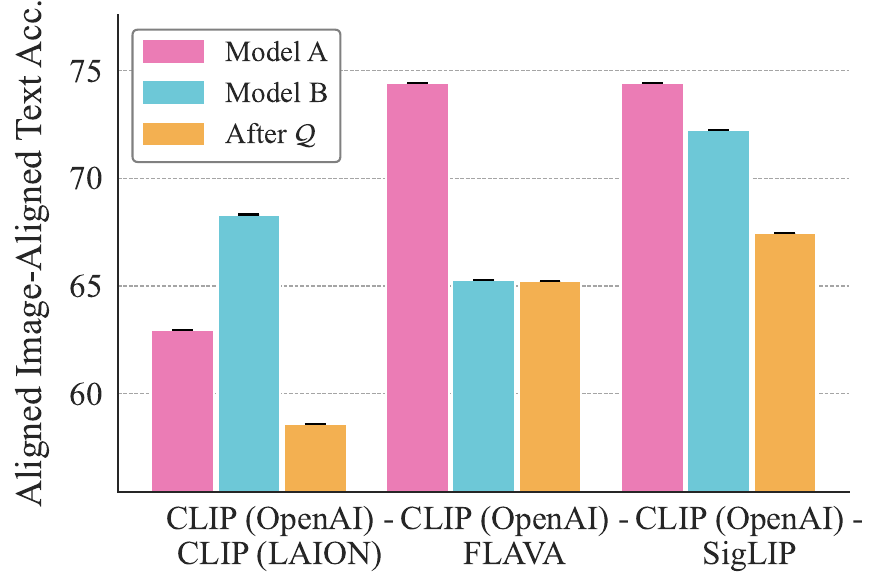}
    \caption{}
    \end{subfigure}
    \caption{\textit{Cross-model alignment on CIFAR-100 across independently trained contrastive models before and after fitting a single orthogonal map $\mathcal R$.} (a) Image–image class retrieval and (b) text–text class retrieval (c) Mean image–image cosine similarity. (d) Mean text–text cosine similarity. (e) Euclidean distance (without normalization) between image embeddings. (f) Euclidean distance (without normalization) between text embeddings. (g) Image–text retrieval using aligned images from model A and text from model B. (h) Image–text retrieval using images from model B and aligned text from model A. (i) Image–text retrieval using aligned images and aligned text from model A. $\mathcal Q$ aligns images across models with a single orthogonal map, and the same $\mathcal Q$ learned only from image embeddings transfers to text, boosting text-text retrieval from near-chance to near-oracle, all while preserving strong image classification accuracy.}
    \label{fig:mainresults_cifar100}
\end{figure*}

\begin{figure*}[!htb]
    \centering
    \begin{subfigure}{0.32\textwidth}
    \centering
    \includegraphics[width=\textwidth]{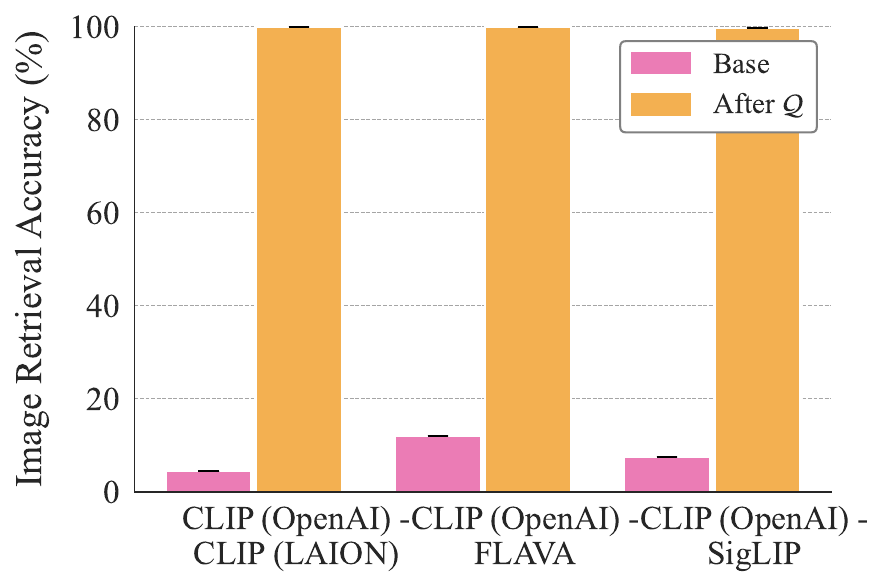}
    \caption{}
    \end{subfigure}
    \begin{subfigure}{0.32\textwidth}
    \centering
    \includegraphics[width=\textwidth]{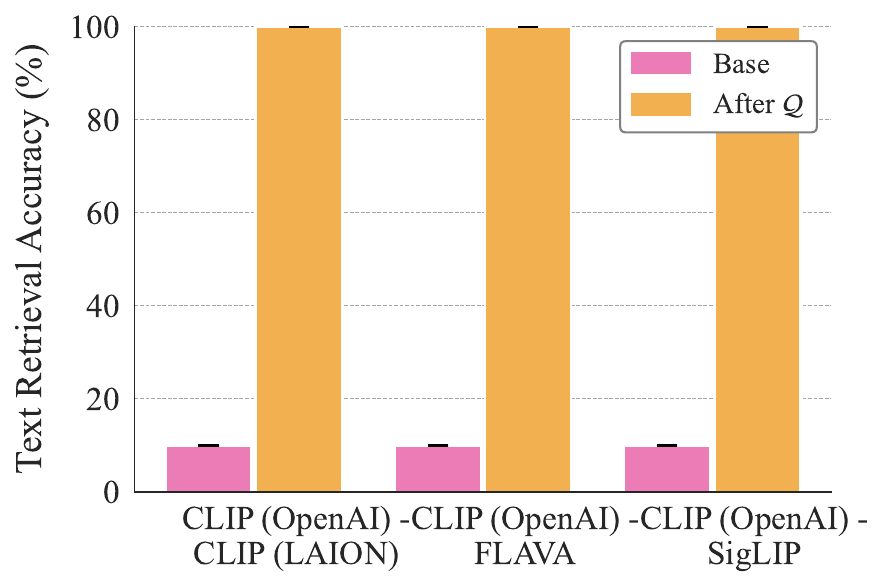}
    \caption{}
    \end{subfigure}
    \begin{subfigure}{0.32\textwidth}
    \centering
    \includegraphics[width=\textwidth]{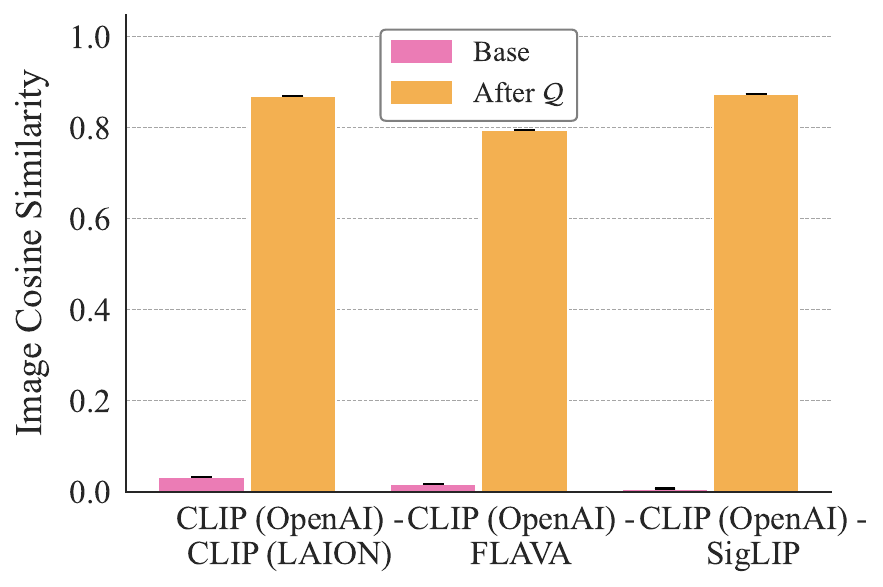}
    \caption{}
    \end{subfigure}
    \begin{subfigure}{0.32\textwidth}
    \centering
    \includegraphics[width=\textwidth]{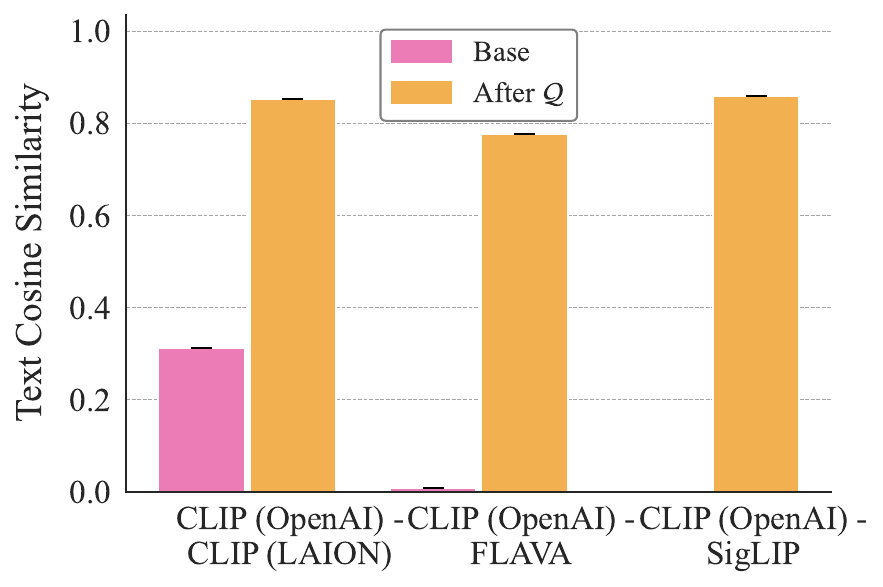}
    \caption{}
    \end{subfigure}
    \begin{subfigure}{0.32\textwidth}
    \centering
    \includegraphics[width=\textwidth]{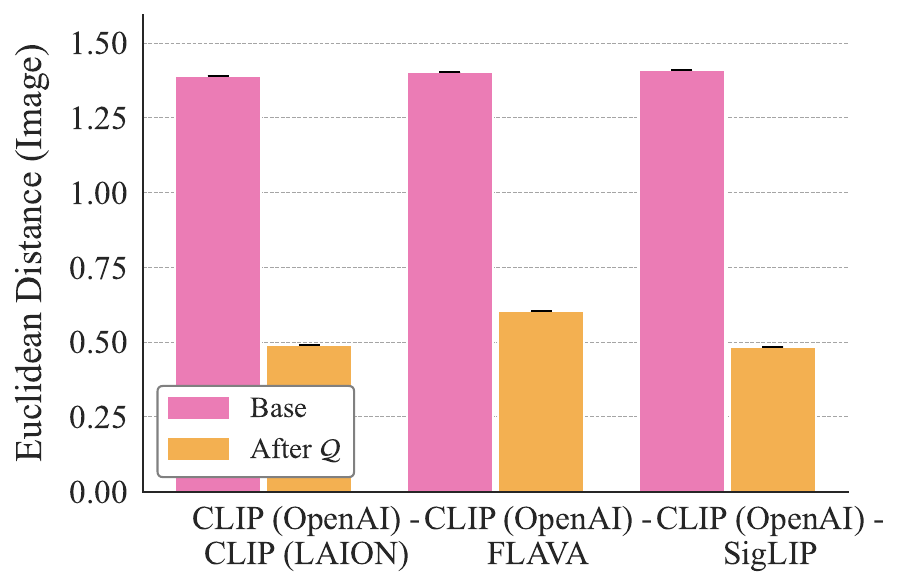}
    \caption{}
    \end{subfigure}
    \begin{subfigure}{0.32\textwidth}
    \centering
    \includegraphics[width=\textwidth]{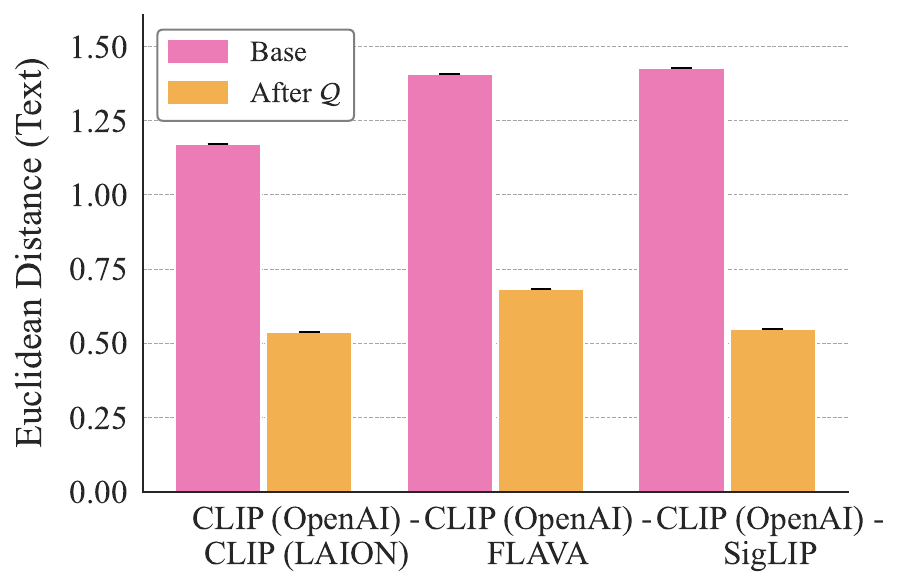}
    \caption{}
    \end{subfigure}
    \begin{subfigure}{0.32\textwidth}
    \centering
    \includegraphics[width=\textwidth]{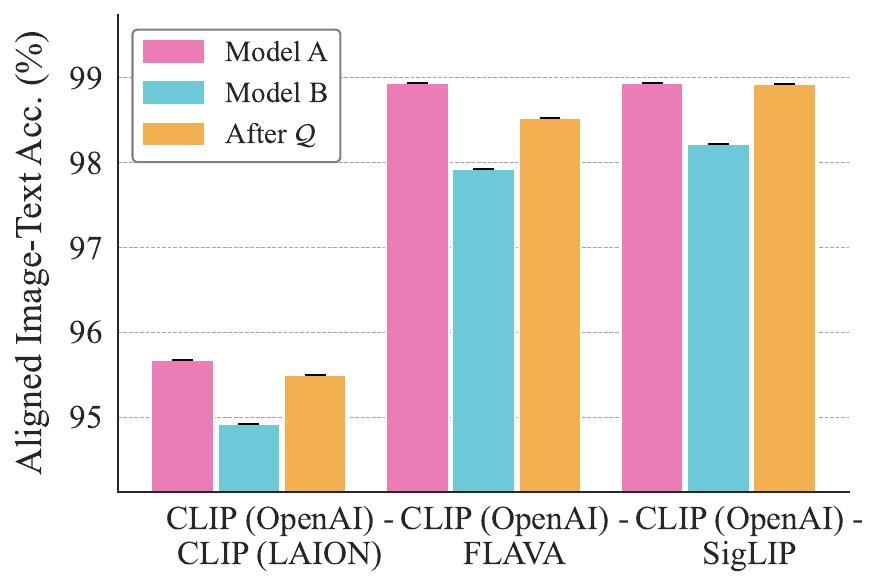}
    \caption{}
    \end{subfigure}
    \begin{subfigure}{0.32\textwidth}
    \centering
    \includegraphics[width=\textwidth]{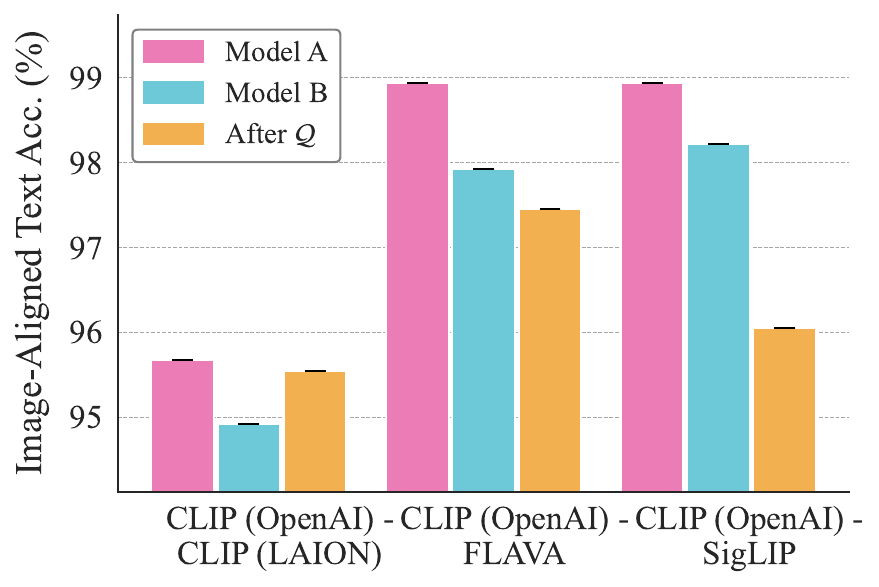}
    \caption{}
    \end{subfigure}
    \begin{subfigure}{0.32\textwidth}
    \centering
    \includegraphics[width=\textwidth]{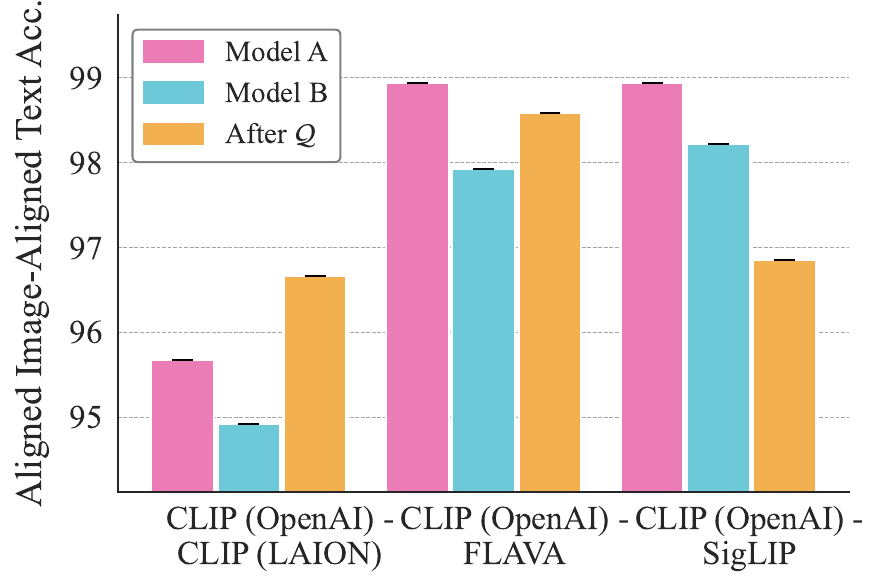}
    \caption{}
    \end{subfigure}
    \caption{\textit{Cross-model alignment on STL10 across independently trained contrastive models before and after fitting a single orthogonal map $\mathcal R$.} (a) Image–image class retrieval and (b) text–text class retrieval (c) Mean image–image cosine similarity. (d) Mean text–text cosine similarity. (e) Euclidean distance (without normalization) between image embeddings. (f) Euclidean distance (without normalization) between text embeddings. (g) Image–text retrieval using aligned images from model A and text from model B. (h) Image–text retrieval using images from model B and aligned text from model A. (i) Image–text retrieval using aligned images and aligned text from model A. $\mathcal Q$ aligns images across models with a single orthogonal map, and the same $\mathcal Q$ learned only from image embeddings transfers to text, boosting text-text retrieval from near-chance to near-oracle, all while preserving strong image classification accuracy.}
    \label{fig:mainresults_stl10}
\end{figure*}

\begin{figure*}[!htb]
    \centering
    \begin{subfigure}{0.32\textwidth}
    \centering
    \includegraphics[width=\textwidth]{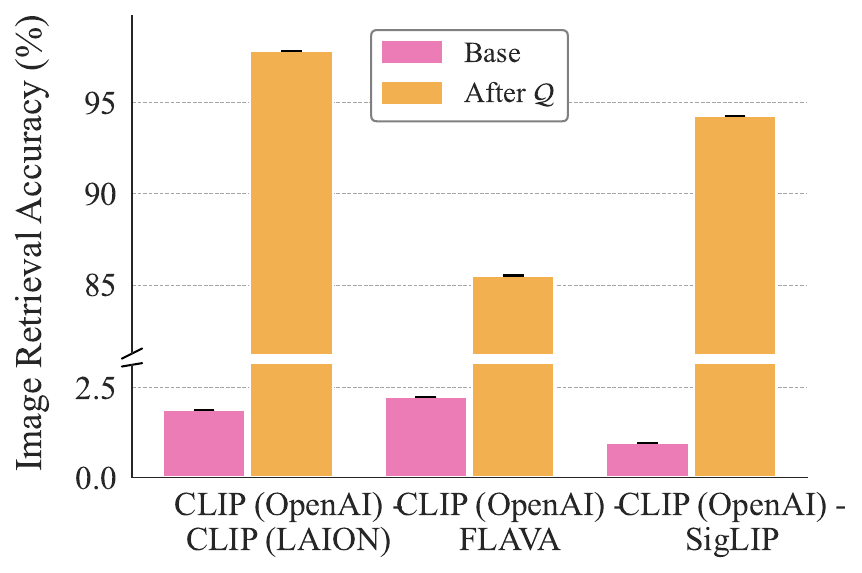}
    \caption{}
    \end{subfigure}
    \begin{subfigure}{0.32\textwidth}
    \centering
    \includegraphics[width=\textwidth]{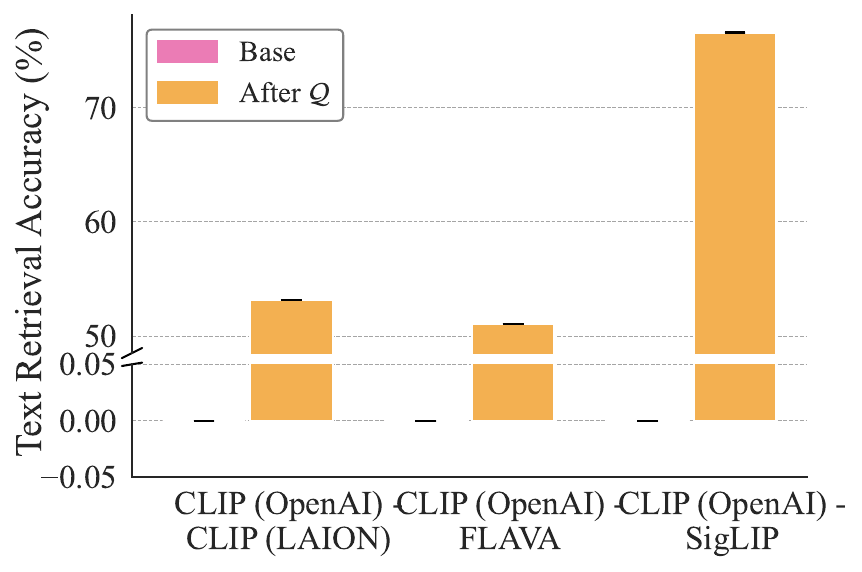}
    \caption{}
    \end{subfigure}
    \begin{subfigure}{0.32\textwidth}
    \centering
    \includegraphics[width=\textwidth]{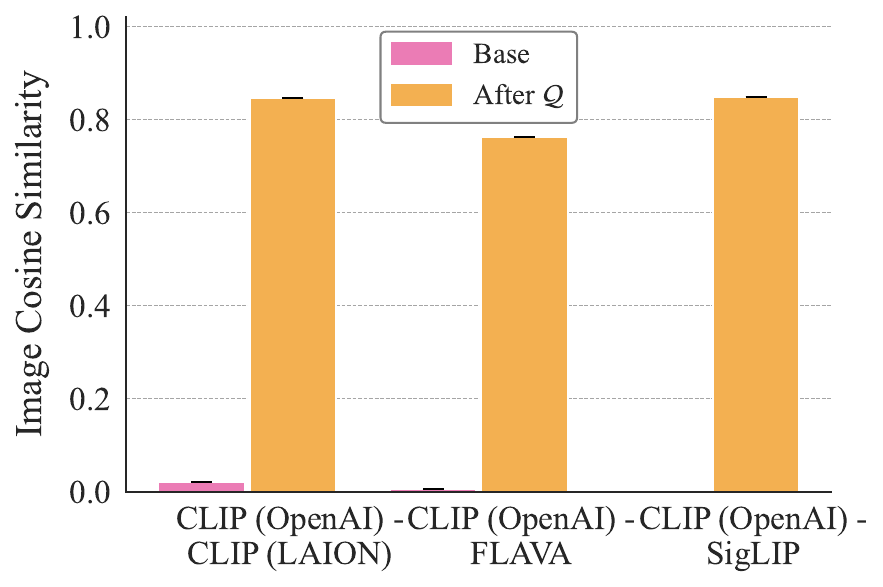}
    \caption{}
    \end{subfigure}
    \begin{subfigure}{0.32\textwidth}
    \centering
    \includegraphics[width=\textwidth]{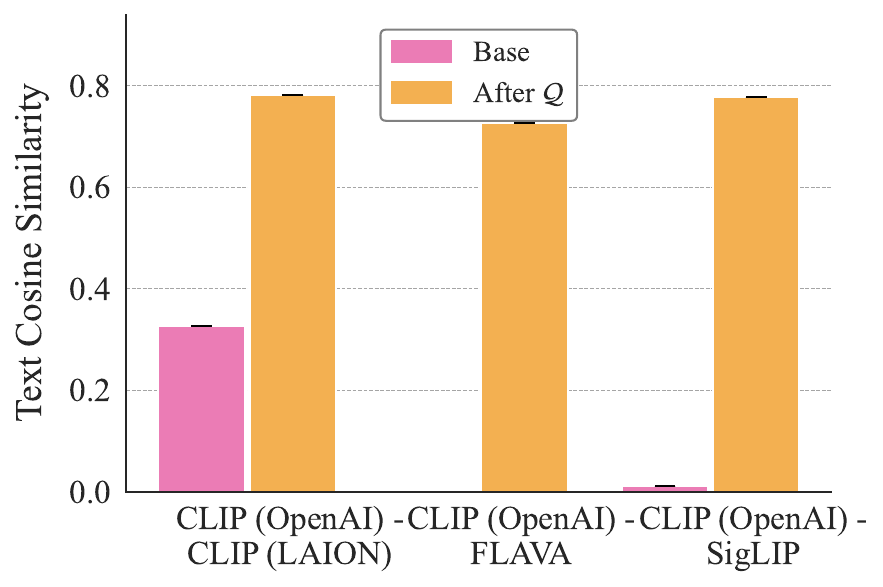}
    \caption{}
    \end{subfigure}
    \begin{subfigure}{0.32\textwidth}
    \centering
    \includegraphics[width=\textwidth]{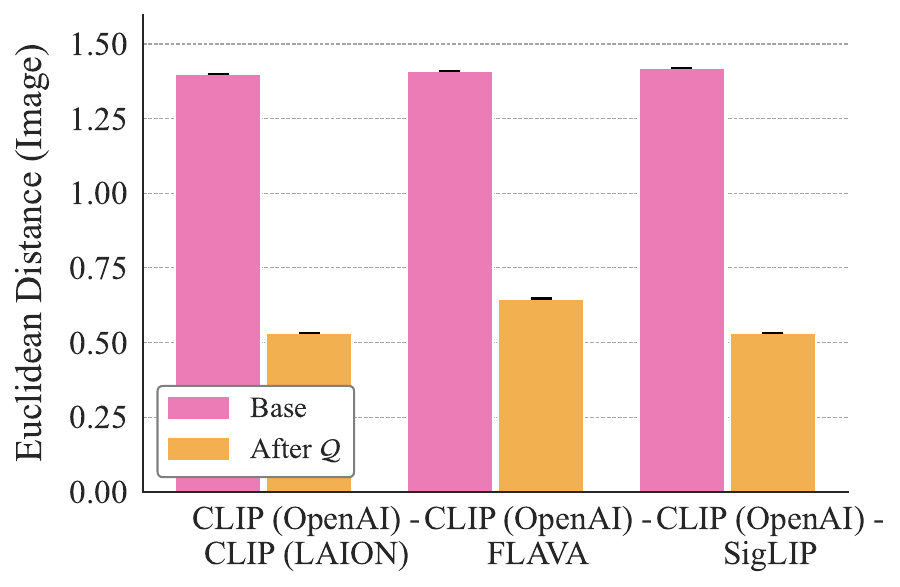}
    \caption{}
    \end{subfigure}
    \begin{subfigure}{0.32\textwidth}
    \centering
    \includegraphics[width=\textwidth]{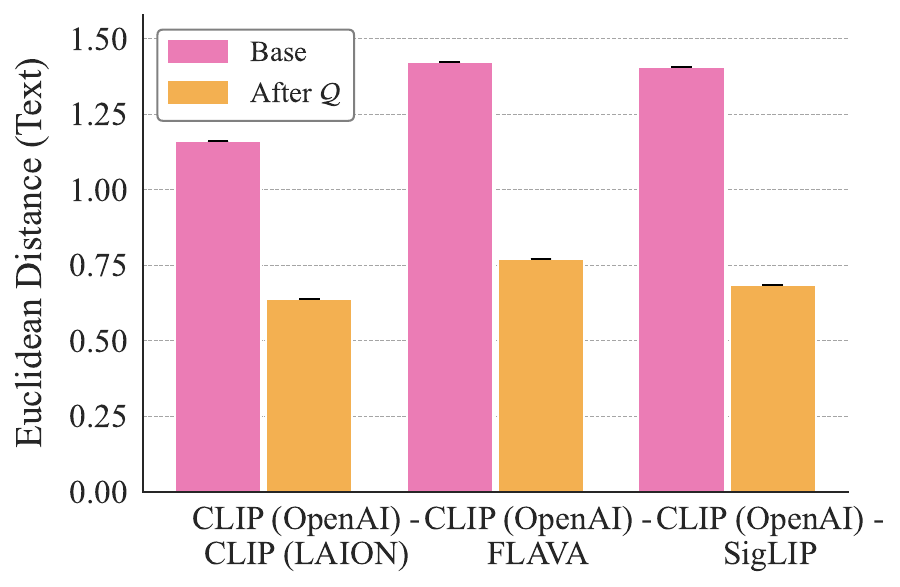}
    \caption{}
    \end{subfigure}
    \begin{subfigure}{0.32\textwidth}
    \centering
    \includegraphics[width=\textwidth]{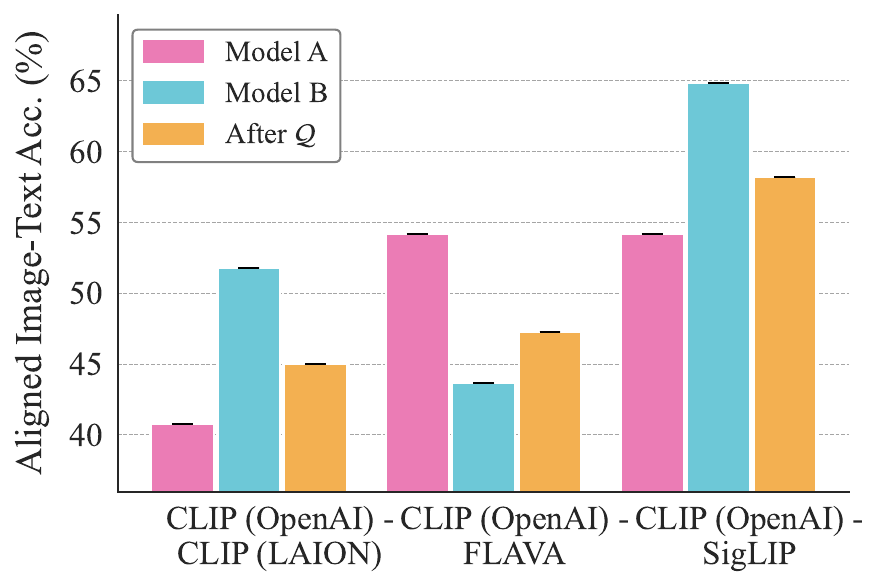}
    \caption{}
    \end{subfigure}
    \begin{subfigure}{0.32\textwidth}
    \centering
    \includegraphics[width=\textwidth]{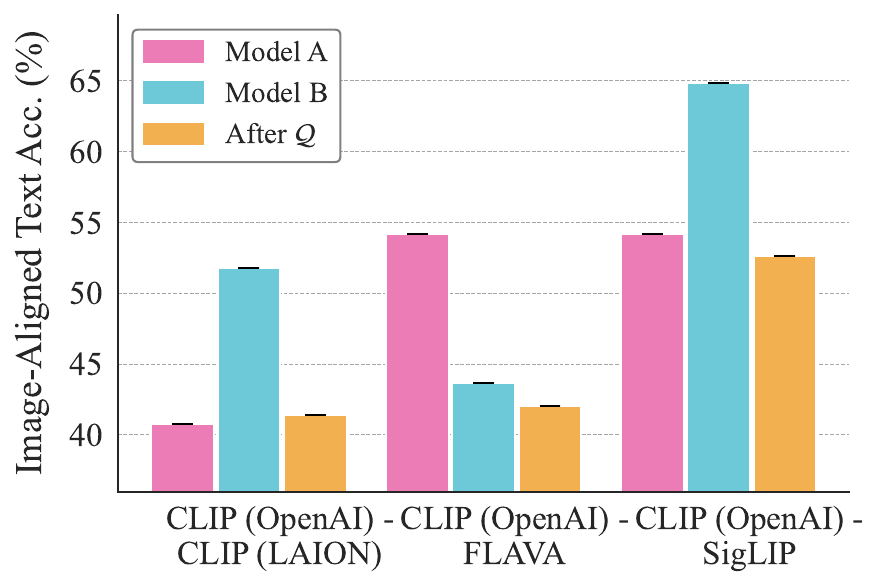}
    \caption{}
    \end{subfigure}
    \begin{subfigure}{0.32\textwidth}
    \centering
    \includegraphics[width=\textwidth]{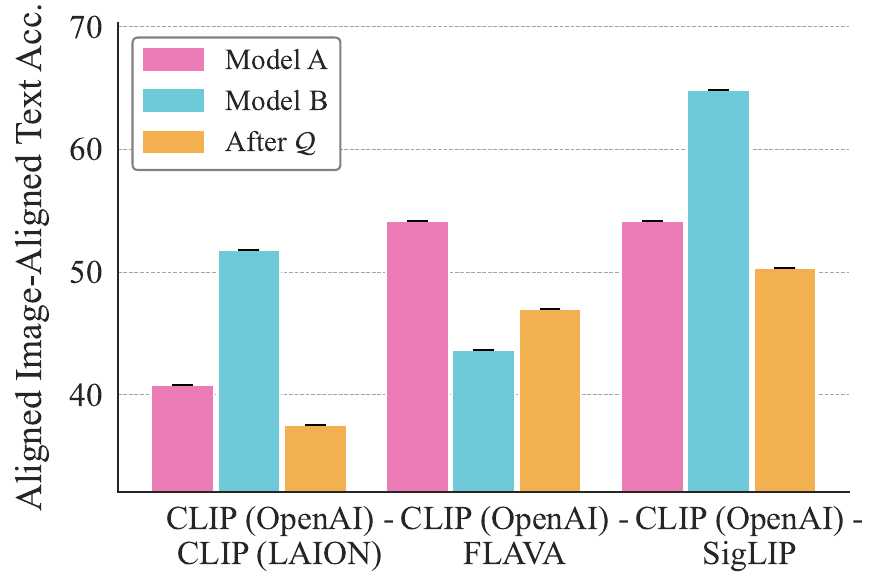}
    \caption{}
    \end{subfigure}
    \caption{\textit{Cross-model alignment on DTD across independently trained contrastive models before and after fitting a single orthogonal map $\mathcal R$.} (a) Image–image class retrieval and (b) text–text class retrieval (c) Mean image–image cosine similarity. (d) Mean text–text cosine similarity. (e) Euclidean distance (without normalization) between image embeddings. (f) Euclidean distance (without normalization) between text embeddings. (g) Image–text retrieval using aligned images from model A and text from model B. (h) Image–text retrieval using images from model B and aligned text from model A. (i) Image–text retrieval using aligned images and aligned text from model A. $\mathcal Q$ aligns images across models with a single orthogonal map, and the same $\mathcal Q$ learned only from image embeddings transfers to text, boosting text-text retrieval from near-chance to near-oracle, all while preserving strong image classification accuracy.}
    \label{fig:mainresults_dtd}
\end{figure*}

\FloatBarrier

\subsection{Only a Few Data Points Are Needed to Learn the
Orthogonal Map}
\label{sec:app_seen_unseen}
In~\Cref{sec:rotation_alignment}, we proved that if the multimodal kernels induced by two contrastive models agree on a sufficiently rich but \emph{small finite} anchor set, then a single global orthogonal map aligns their representations across both modalities. We empirically validate this prediction by fitting $\mathcal Q$ using paired images from only $N$ classes and evaluating transfer on the remaining unseen classes. We report this analysis across additional model pairs and datasets. The results for \textsc{CLIP} (OpenAI) aligned to \textsc{CLIP} (OpenAI), \textsc{CLIP} (LAION), \textsc{FLAVA}, and \textsc{SigLIP} on Caltech-101 are shown in~\Cref{fig:app_split_caltech_openai_to_openai,fig:app_split_caltech_openai_to_laion,fig:app_split_caltech_openai_to_flava,fig:app_split_caltech_openai_to_siglip}. For CIFAR100 and Oxford Pets, the analogous results are reports in~\Cref{fig:app_split_cifar_openai_to_openai,fig:app_split_cifar_openai_to_laion,fig:app_split_cifar_openai_to_flava,fig:app_split_cifar_openai_to_siglip} and~\Cref{fig:app_split_oxford_openai_to_openai,fig:app_split_oxford_openai_to_laion,fig:app_split_oxford_openai_to_flava,fig:app_split_oxford_openai_to_siglip}. 

Across all settings, the same trend holds: performance on both seen and unseen classes improves quickly with just a few anchor classes and essentially saturates once $N$ reaches a modest value, after which additional anchors provide little benefit. Thus, practitioners can recover near-full cross-model transfer by fitting $\mathcal Q$ on a lightweight image-only calibration set, rather than curating large-scale cross-model supervision.

\begin{figure*}[!htb]
    \centering
    \begin{subfigure}{0.32\textwidth}
    \centering
    \includegraphics[width=\textwidth]{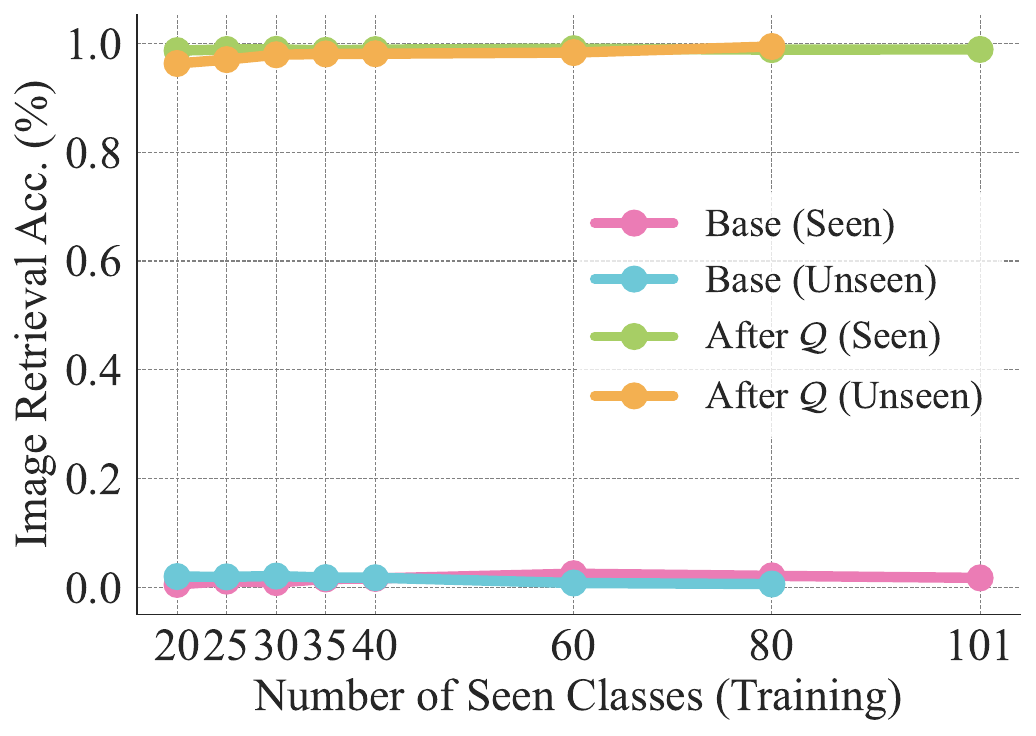}
    \caption{}
    \end{subfigure}
    \begin{subfigure}{0.32\textwidth}
    \centering
    \includegraphics[width=\textwidth]{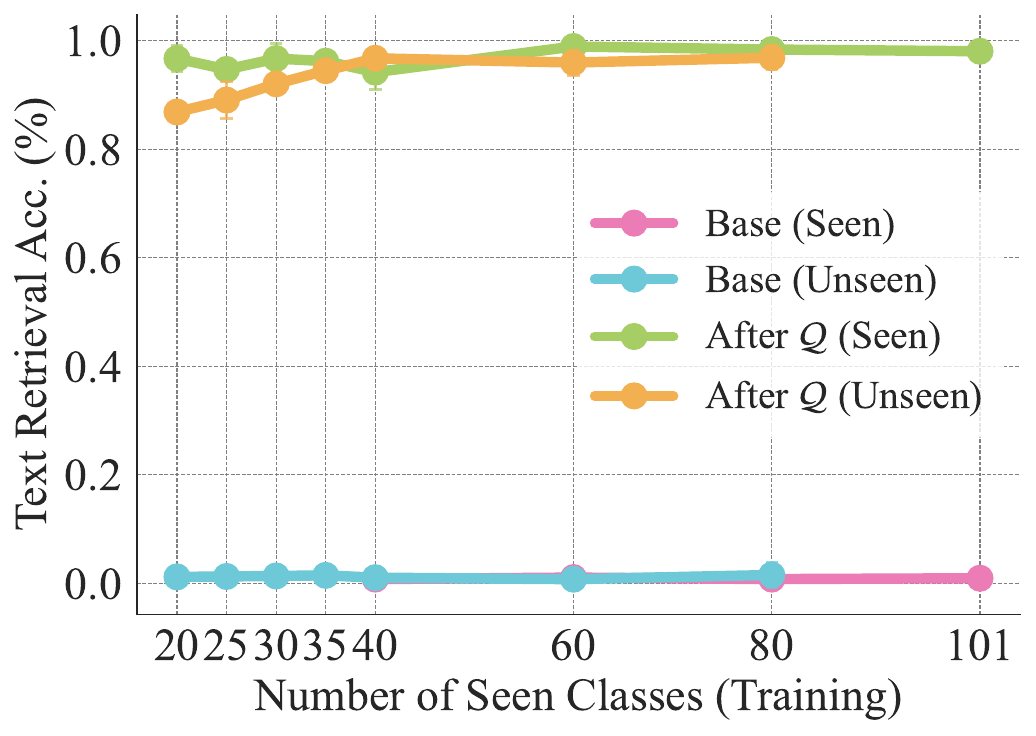}
    \caption{}
    \end{subfigure}
    \begin{subfigure}{0.32\textwidth}
    \centering
    \includegraphics[width=\textwidth]{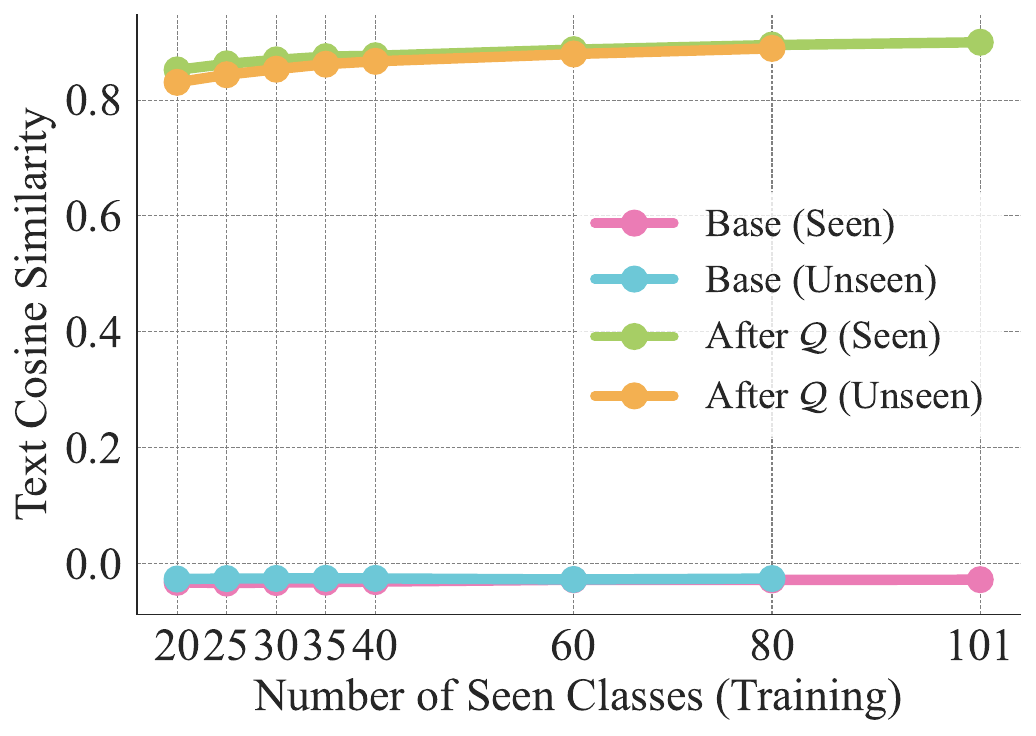}
    \caption{}
    \end{subfigure}
    \begin{subfigure}{0.32\textwidth}
    \centering
    \includegraphics[width=\textwidth]{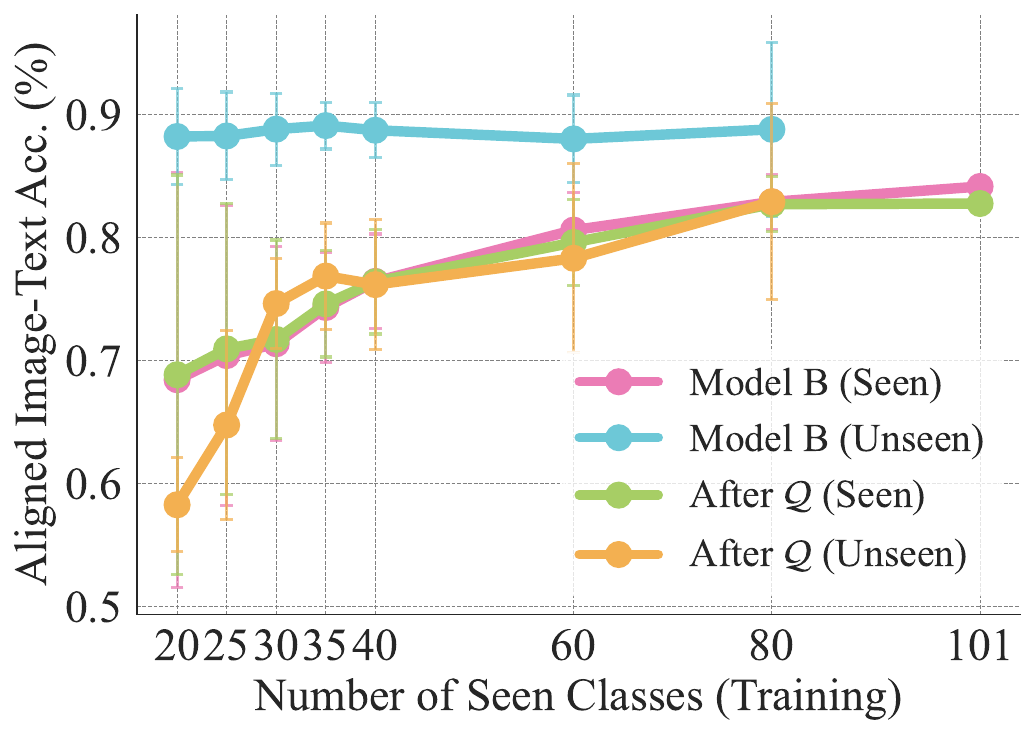}
    \caption{}
    \end{subfigure}
    \begin{subfigure}{0.32\textwidth}
    \centering
    \includegraphics[width=\textwidth]{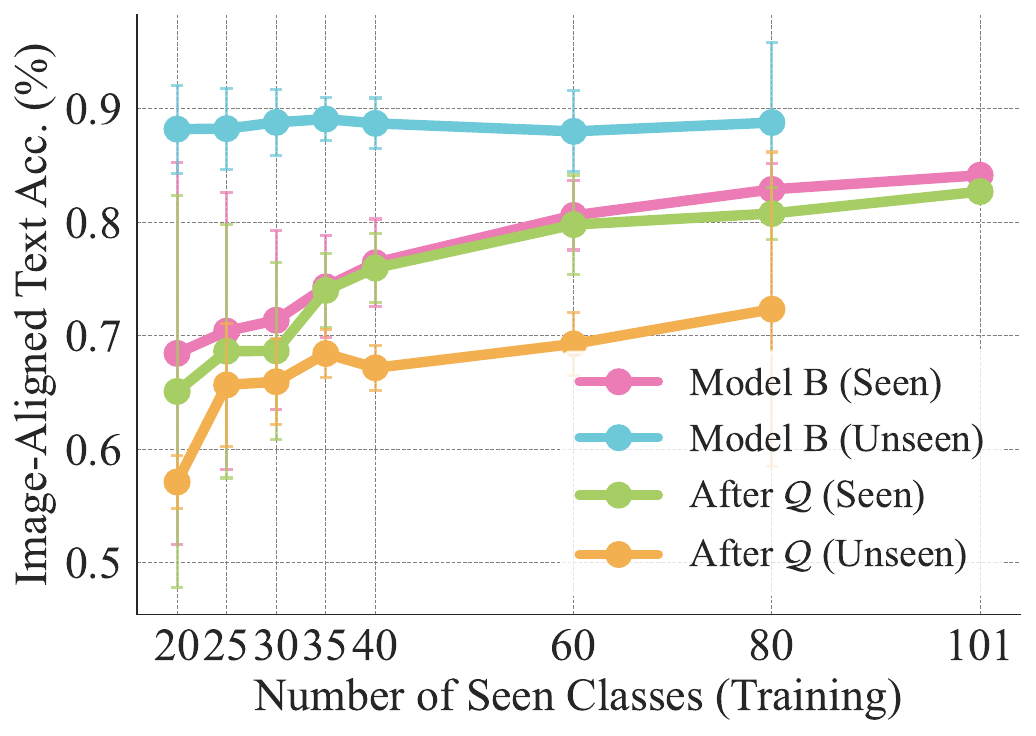}
    \caption{}
    \end{subfigure}
    \begin{subfigure}{0.32\textwidth}
    \centering
    \includegraphics[width=\textwidth]{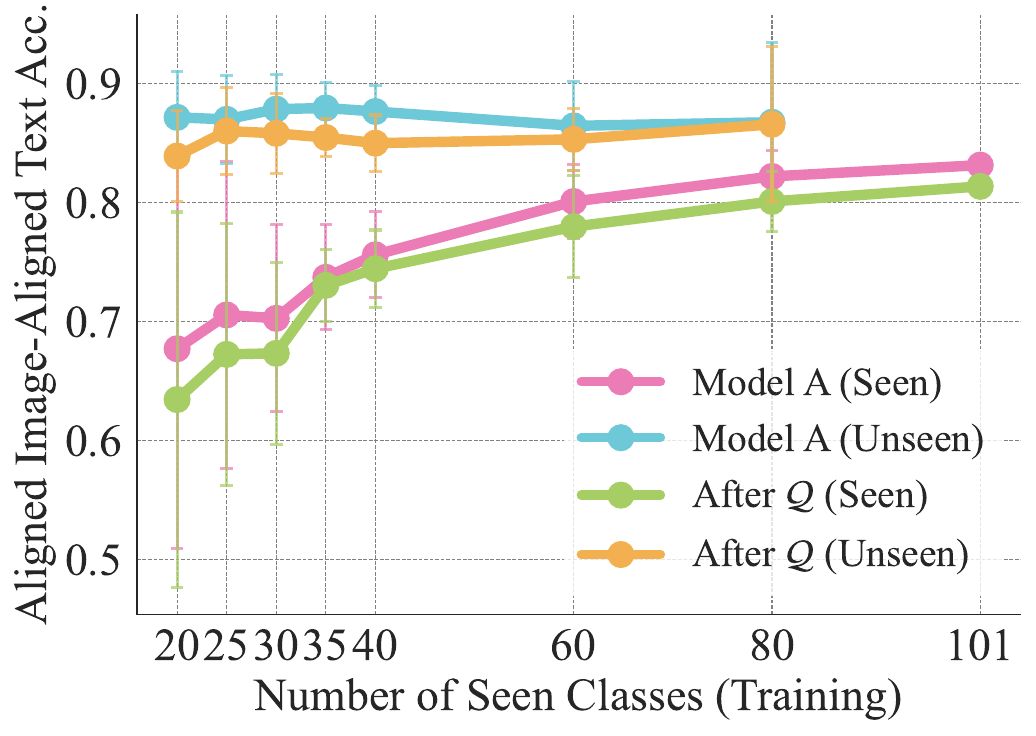}
    \caption{}
    \end{subfigure}
    \caption{\textit{Generalization of orthogonal alignment under limited supervision (Caltech-101; CLIP ViT-B/32 to CLIP ViT-B/16, OpenAI).} (a) Image-image class retrieval, (b) text-text class retrieval, and (c) mean text-text cosine similarity, each reported on seen and unseen classes. (d-f) Downstream transfer: (d) aligned images from model A with text from model B, (e) images from model B with aligned text from model A, and (f) aligned images with aligned text from model A. $\mathcal R$ learned from few classes transfers across modalities and generalizes to unseen classes, achieving near-oracle cross-model retrieval and classification.\looseness=-1}
\label{fig:app_split_caltech_openai_to_openai}
\end{figure*}

\begin{figure*}[!htb]
    \centering
    \begin{subfigure}{0.32\textwidth}
    \centering
    \includegraphics[width=\textwidth]{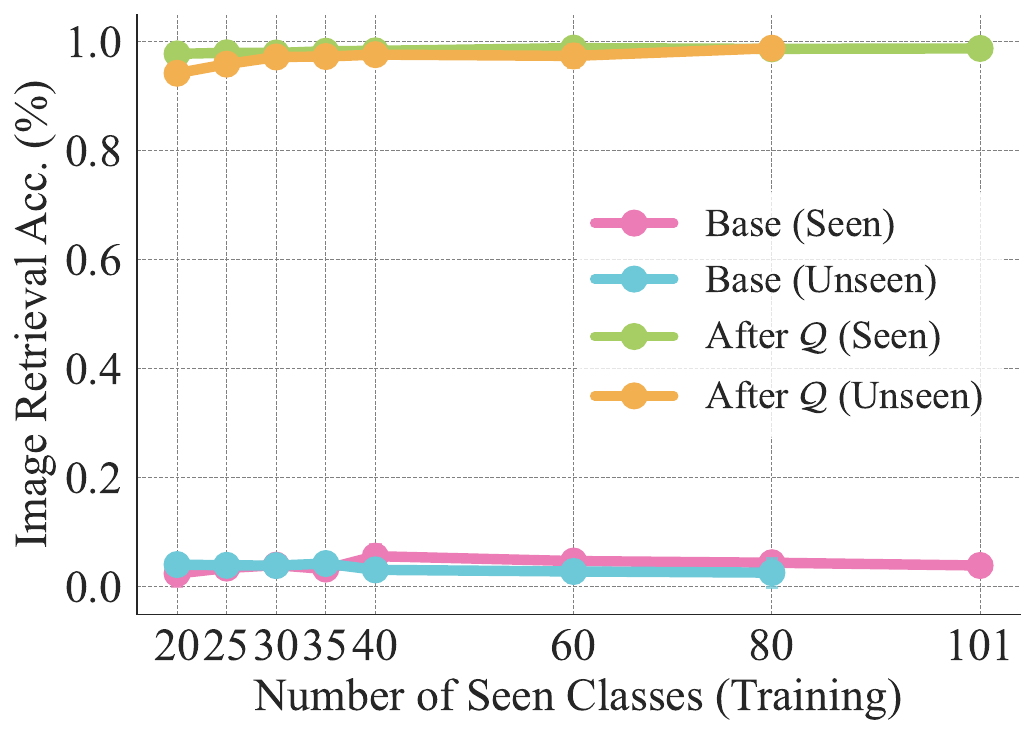}
    \caption{}
    \end{subfigure}
    \begin{subfigure}{0.32\textwidth}
    \centering
    \includegraphics[width=\textwidth]{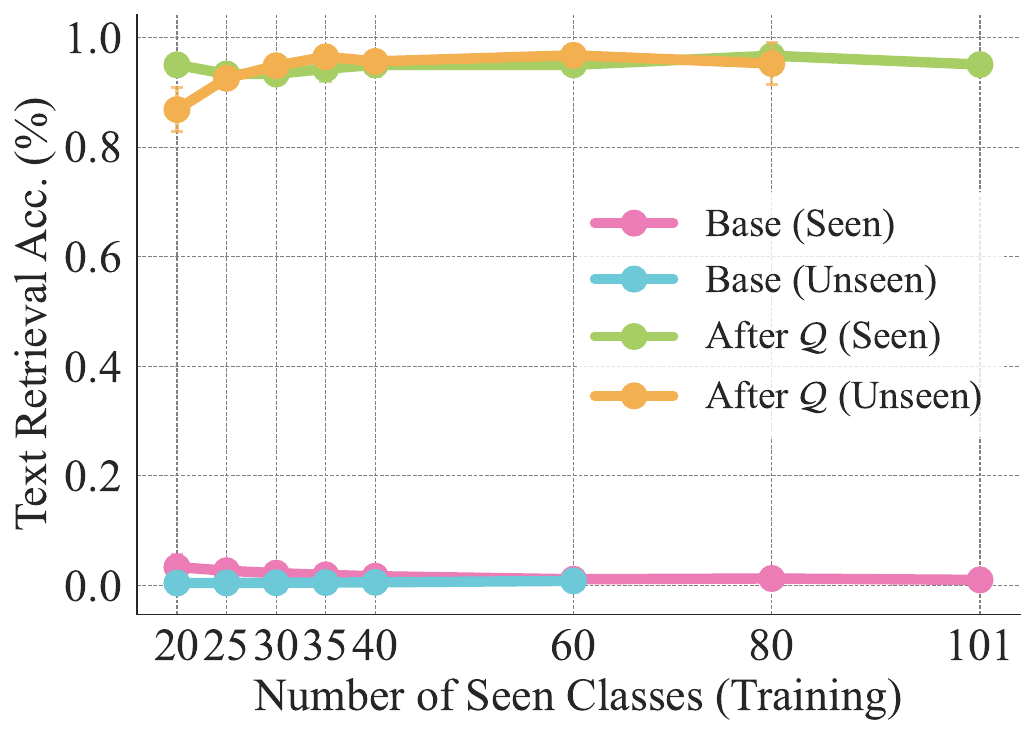}
    \caption{}
    \end{subfigure}
    \begin{subfigure}{0.32\textwidth}
    \centering
    \includegraphics[width=\textwidth]{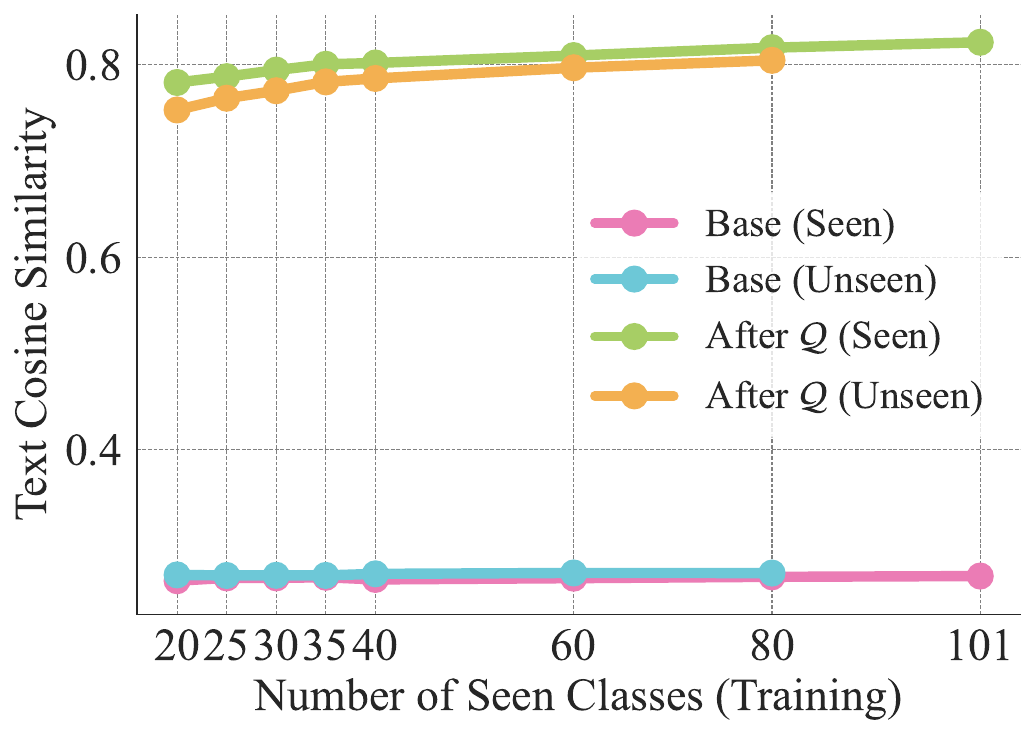}
    \caption{}
    \end{subfigure}
    \begin{subfigure}{0.32\textwidth}
    \centering
    \includegraphics[width=\textwidth]{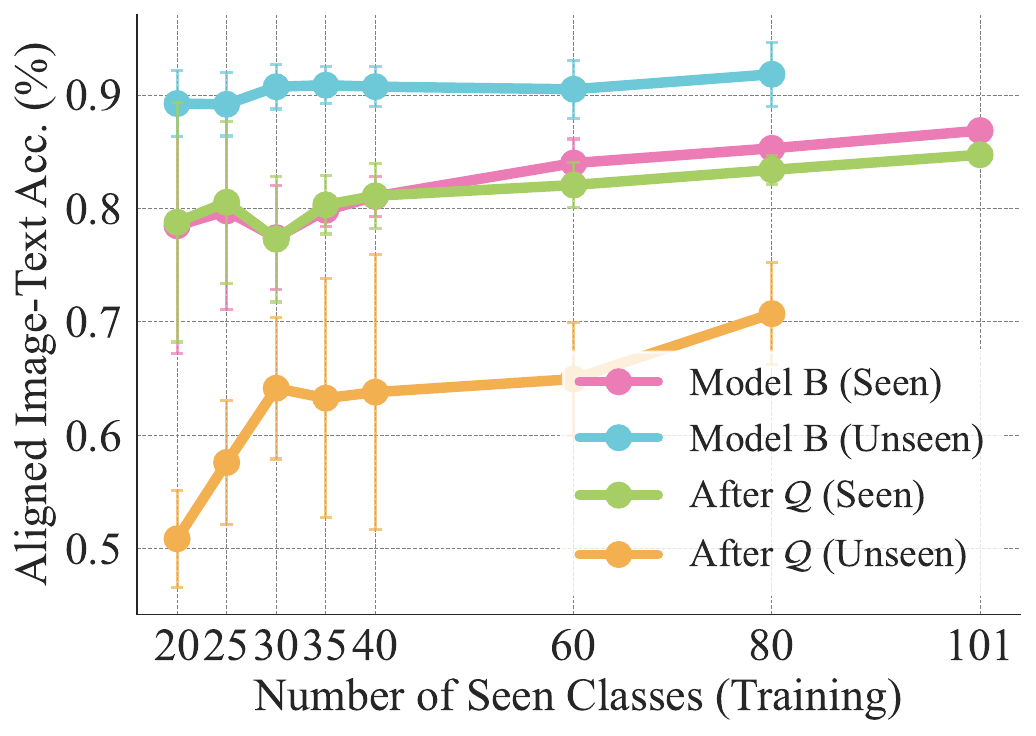}
    \caption{}
    \end{subfigure}
    \begin{subfigure}{0.32\textwidth}
    \centering
    \includegraphics[width=\textwidth]{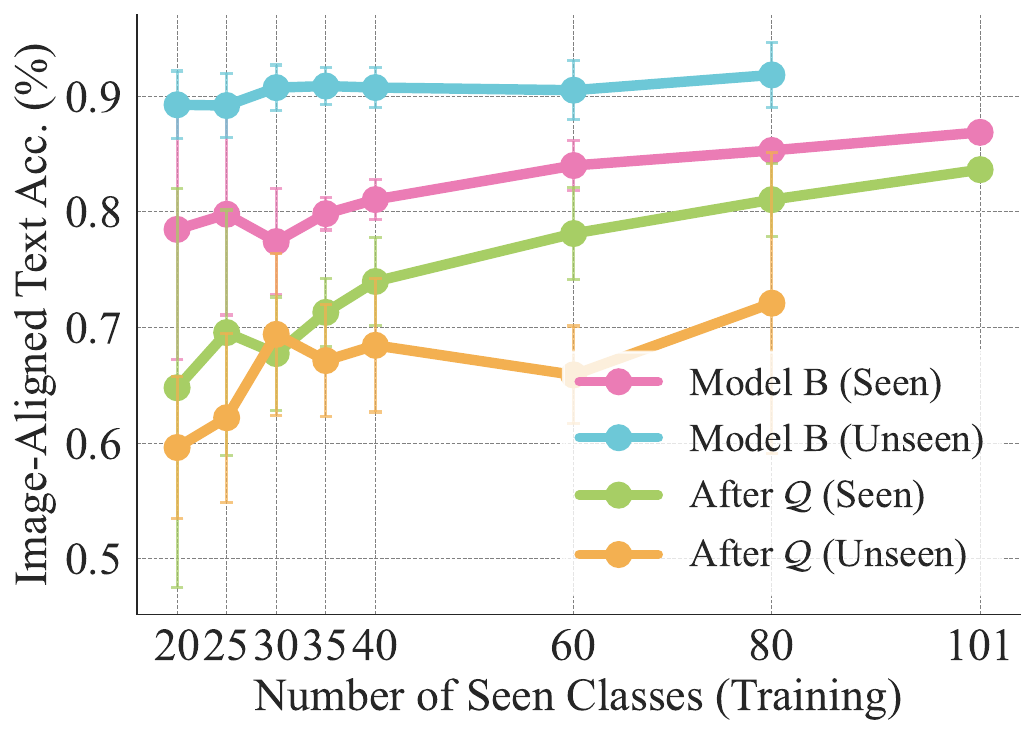}
    \caption{}
    \end{subfigure}
    \begin{subfigure}{0.32\textwidth}
    \centering
    \includegraphics[width=\textwidth]{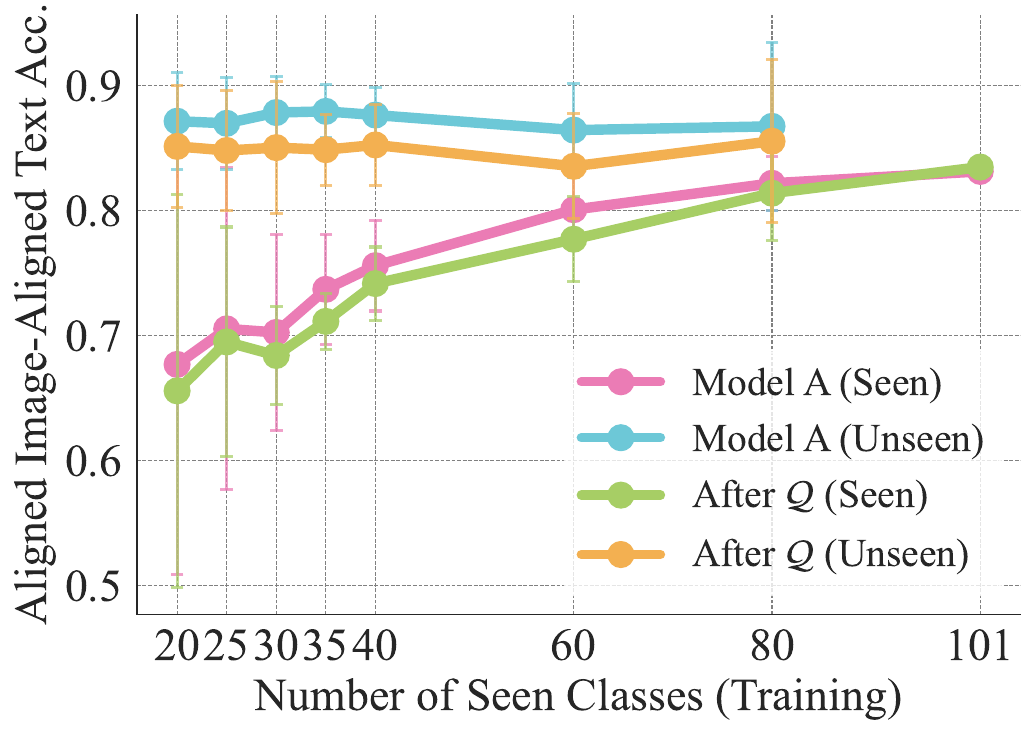}
    \caption{}
    \end{subfigure}
    \caption{\textit{Generalization of orthogonal alignment under limited supervision (Caltech-101; CLIP ViT-B/32 (OpenAI) to CLIP ViT-B/32 (LAION).} (a) Image-image class retrieval, (b) text-text class retrieval, and (c) mean text-text cosine similarity, each reported on seen and unseen classes. (d-f) Downstream transfer: (d) aligned images from model A with text from model B, (e) images from model B with aligned text from model A, and (f) aligned images with aligned text from model A. $\mathcal R$ learned from few classes transfers across modalities and generalizes to unseen classes, achieving near-oracle cross-model retrieval and classification.\looseness=-1}
\label{fig:app_split_caltech_openai_to_laion}
\end{figure*}

\begin{figure*}[!htb]
    \centering
    \begin{subfigure}{0.32\textwidth}
    \centering
    \includegraphics[width=\textwidth]{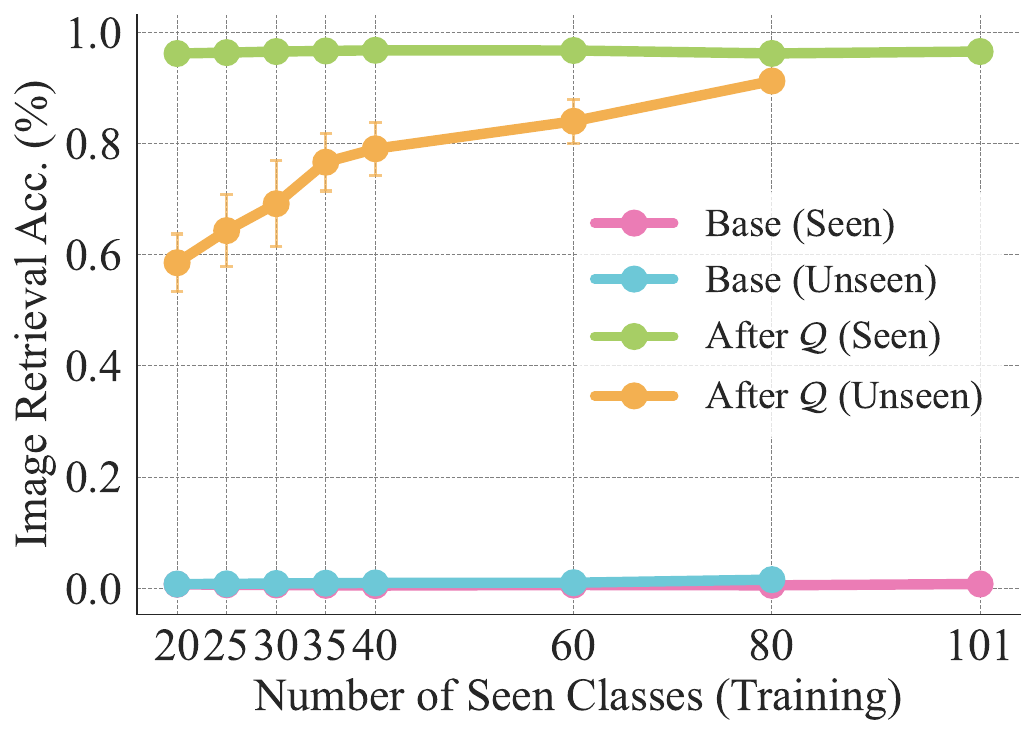}
    \caption{}
    \end{subfigure}
    \begin{subfigure}{0.32\textwidth}
    \centering
    \includegraphics[width=\textwidth]{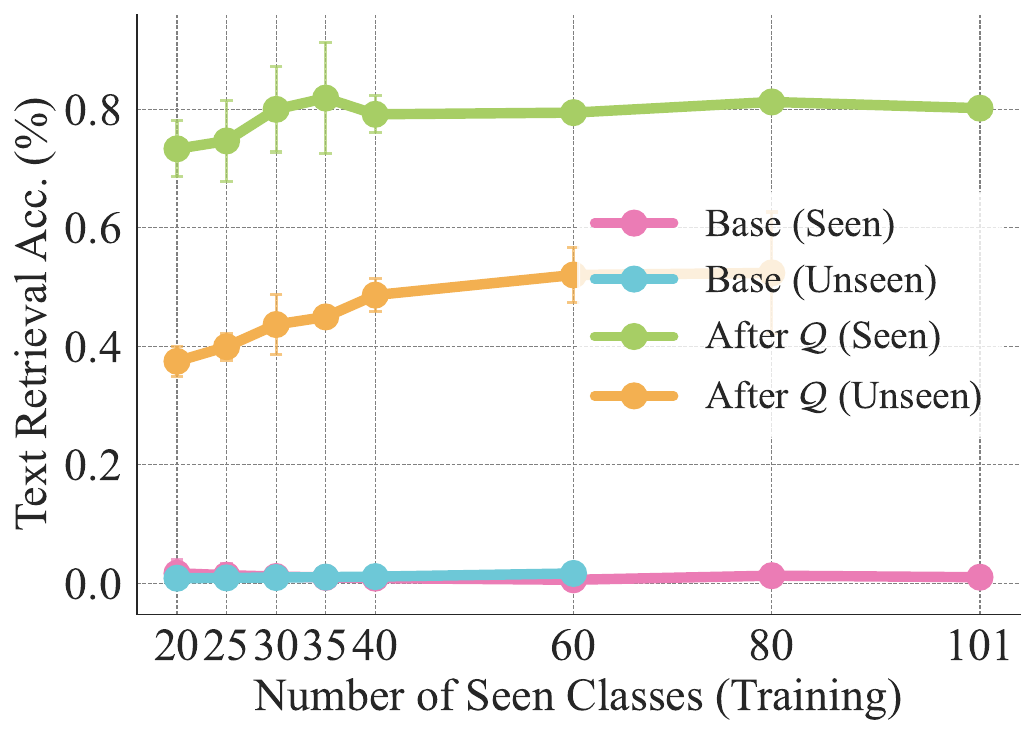}
    \caption{}
    \end{subfigure}
    \begin{subfigure}{0.32\textwidth}
    \centering
    \includegraphics[width=\textwidth]{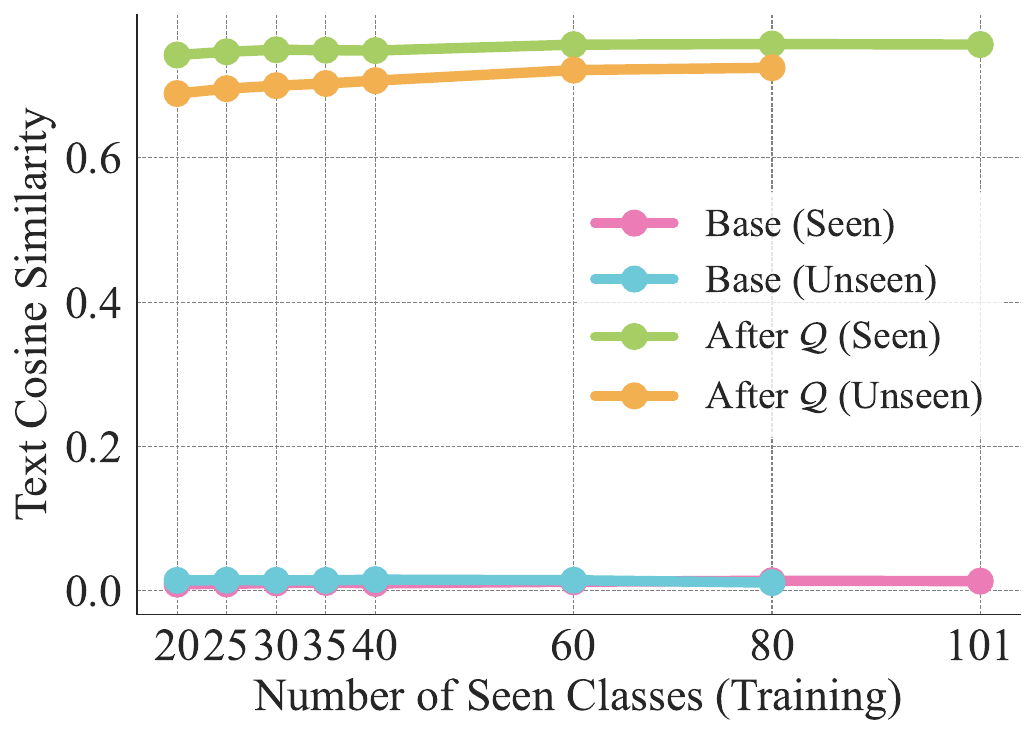}
    \caption{}
    \end{subfigure}
    \begin{subfigure}{0.32\textwidth}
    \centering
    \includegraphics[width=\textwidth]{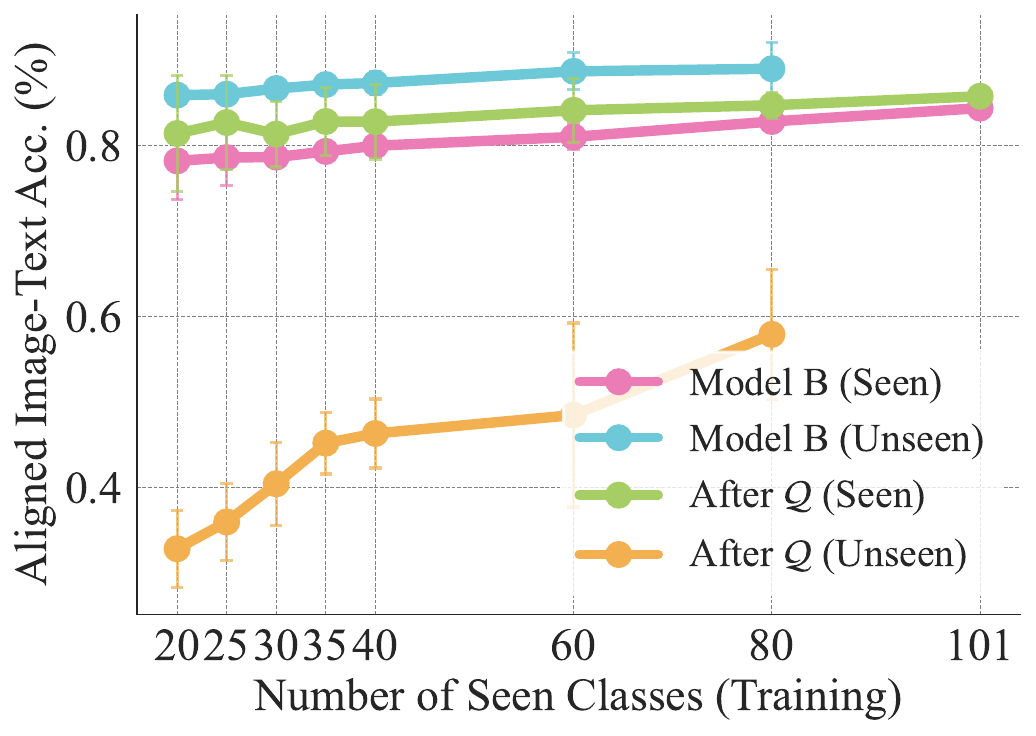}
    \caption{}
    \end{subfigure}
    \begin{subfigure}{0.32\textwidth}
    \centering
    \includegraphics[width=\textwidth]{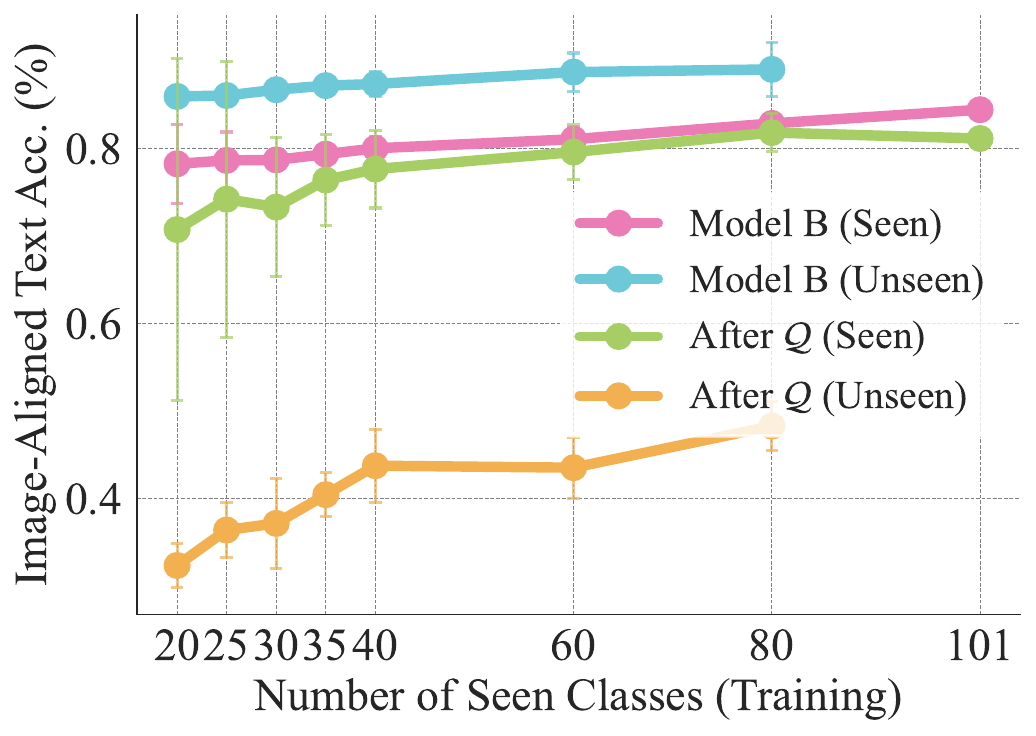}
    \caption{}
    \end{subfigure}
    \begin{subfigure}{0.32\textwidth}
    \centering
    \includegraphics[width=\textwidth]{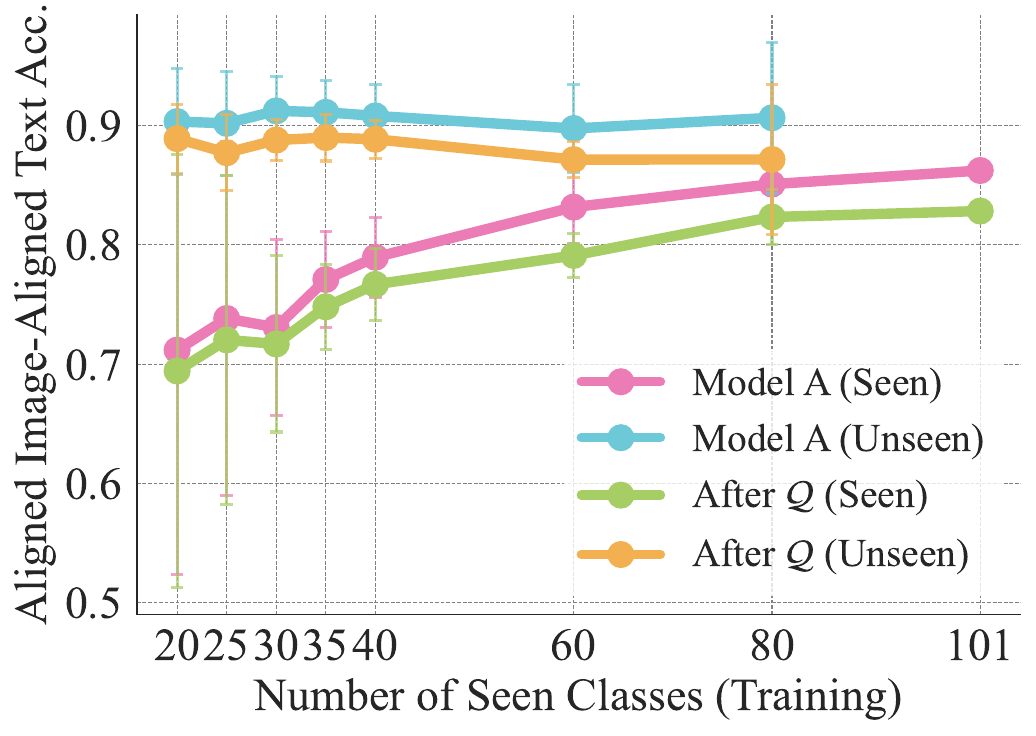}
    \caption{}
    \end{subfigure}
    \caption{\textit{Generalization of orthogonal alignment under limited supervision (Caltech-101; CLIP ViT-L/14 (OpenAI) to FLAVA} (a) Image-image class retrieval, (b) text-text class retrieval, and (c) mean text-text cosine similarity, each reported on seen and unseen classes. (d-f) Downstream transfer: (d) aligned images from model A with text from model B, (e) images from model B with aligned text from model A, and (f) aligned images with aligned text from model A. $\mathcal R$ learned from few classes transfers across modalities and generalizes to unseen classes, achieving near-oracle cross-model retrieval and classification.\looseness=-1}
\label{fig:app_split_caltech_openai_to_flava}
\end{figure*}

\begin{figure*}[!htb]
    \centering
    \begin{subfigure}{0.32\textwidth}
    \centering
    \includegraphics[width=\textwidth]{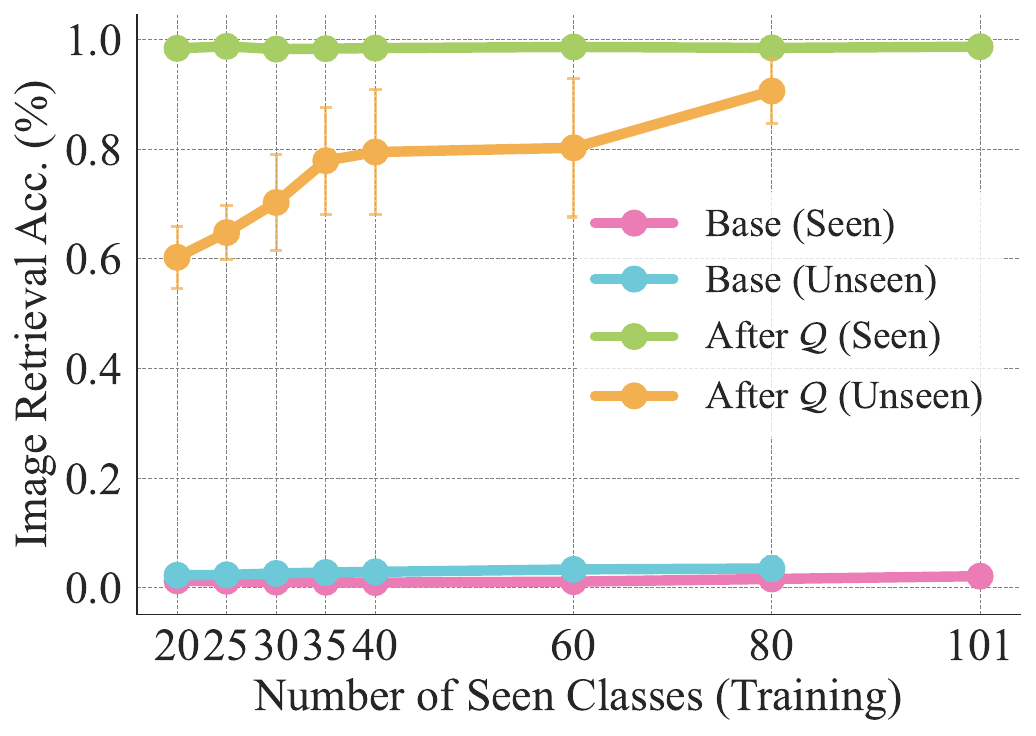}
    \caption{}
    \end{subfigure}
    \begin{subfigure}{0.32\textwidth}
    \centering
    \includegraphics[width=\textwidth]{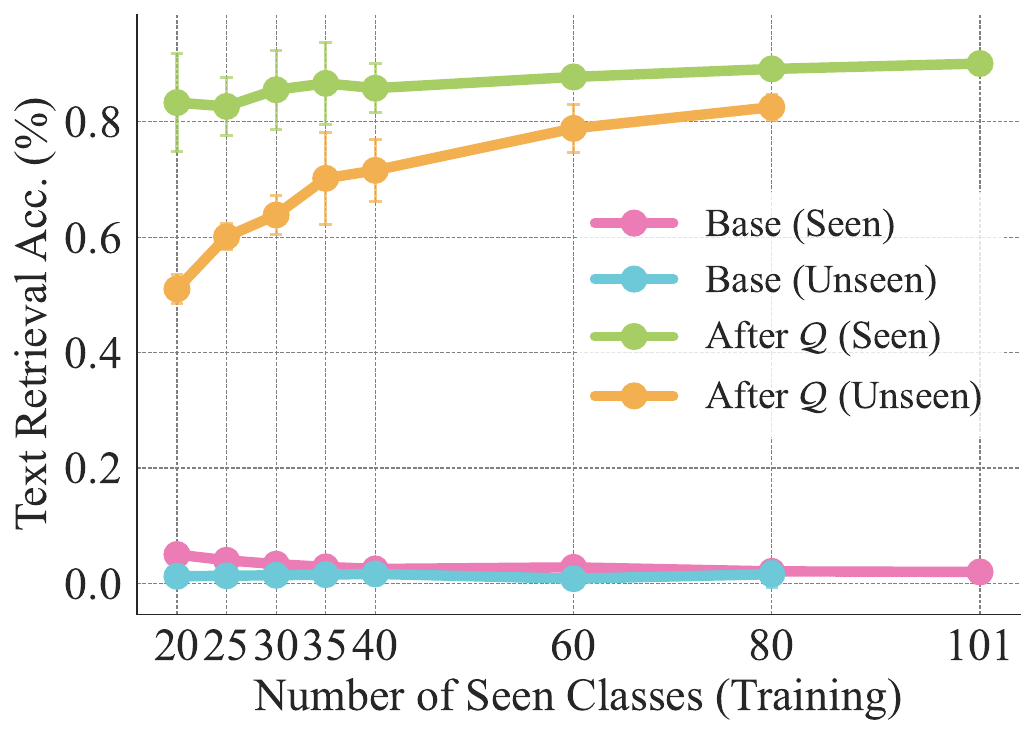}
    \caption{}
    \end{subfigure}
    \begin{subfigure}{0.32\textwidth}
    \centering
    \includegraphics[width=\textwidth]{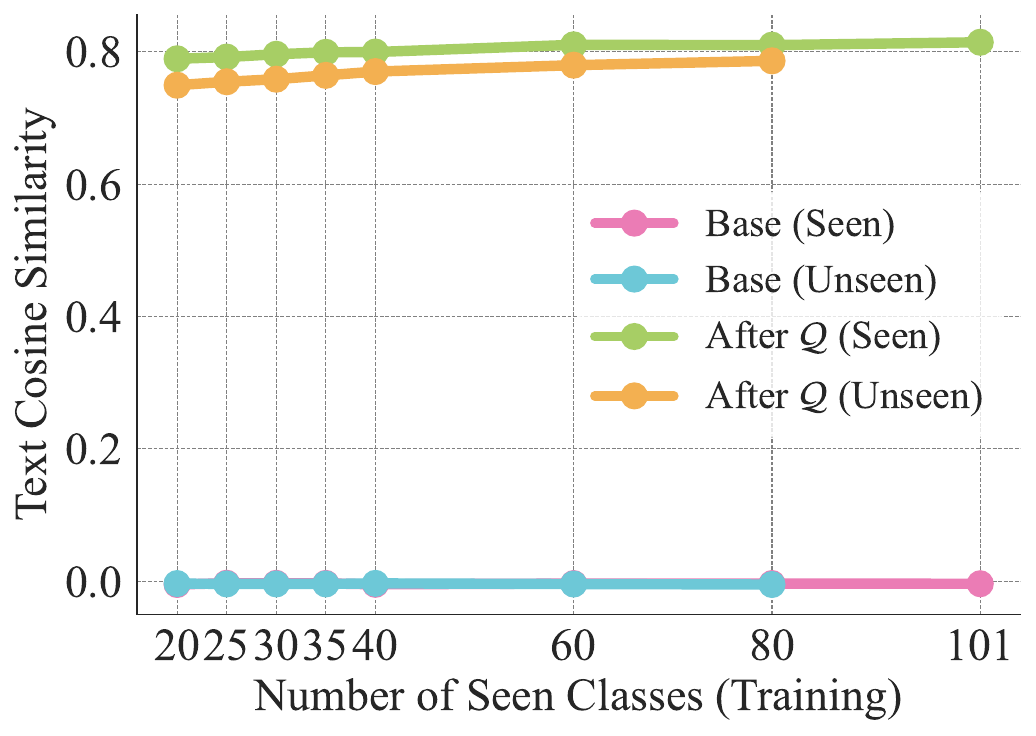}
    \caption{}
    \end{subfigure}
    \begin{subfigure}{0.32\textwidth}
    \centering
    \includegraphics[width=\textwidth]{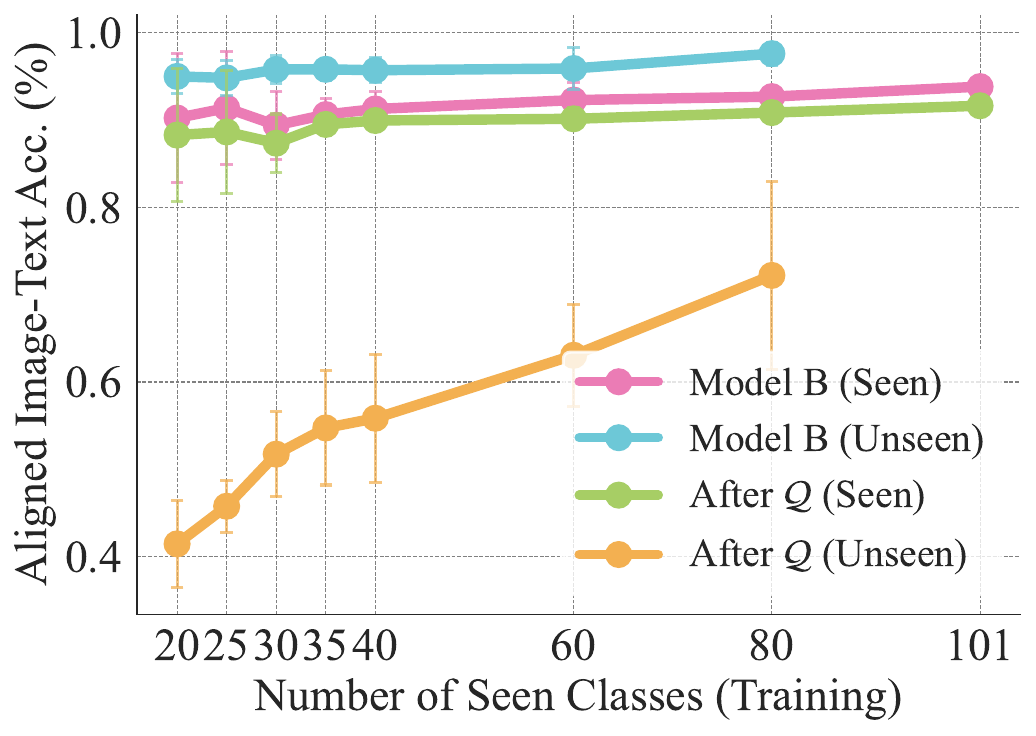}
    \caption{}
    \end{subfigure}
    \begin{subfigure}{0.32\textwidth}
    \centering
    \includegraphics[width=\textwidth]{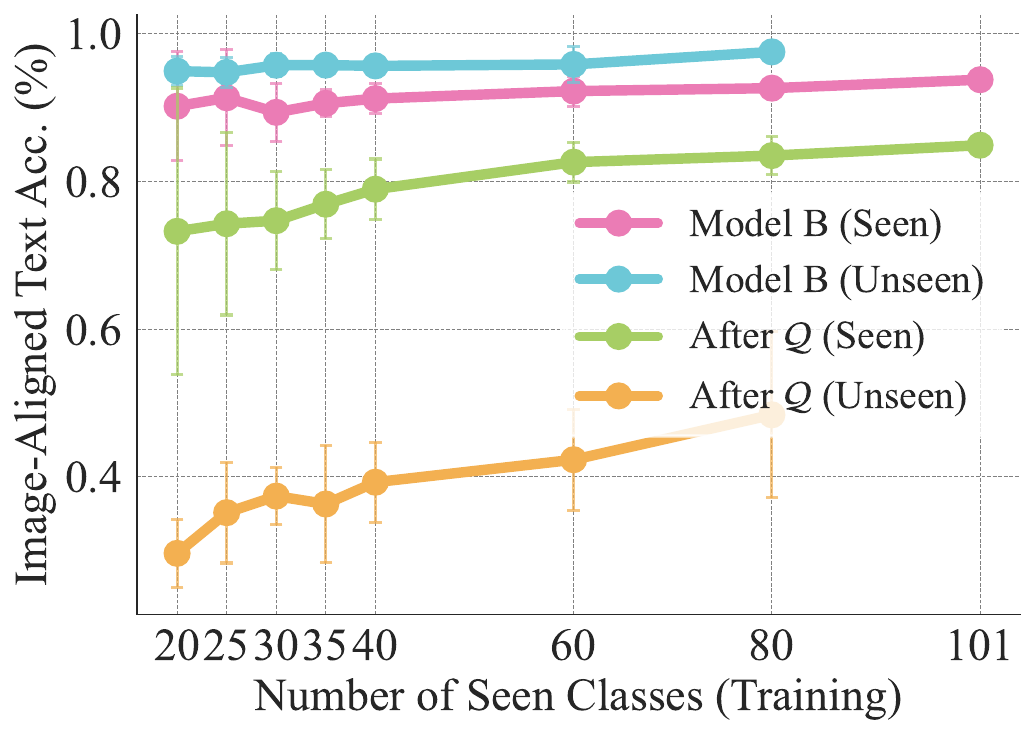}
    \caption{}
    \end{subfigure}
    \begin{subfigure}{0.32\textwidth}
    \centering
    \includegraphics[width=\textwidth]{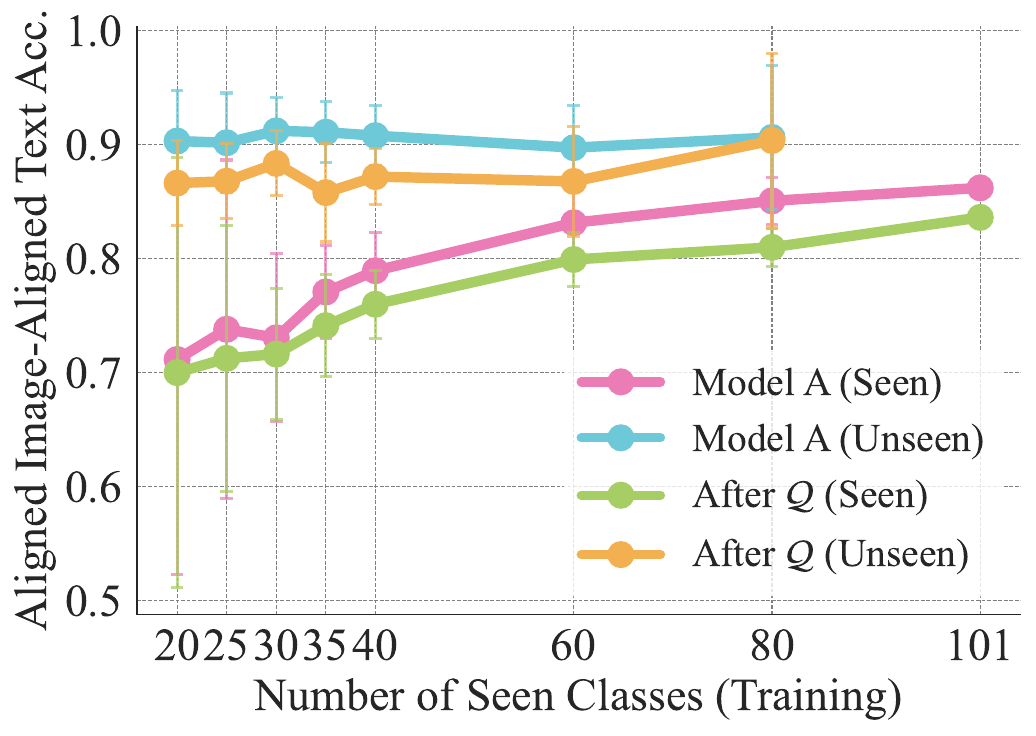}
    \caption{}
    \end{subfigure}
    \caption{\textit{Generalization of orthogonal alignment under limited supervision (Caltech-101; CLIP ViT-L/14 (OpenAI) to SigLIP.} (a) Image-image class retrieval, (b) text-text class retrieval, and (c) mean text-text cosine similarity, each reported on seen and unseen classes. (d-f) Downstream transfer: (d) aligned images from model A with text from model B, (e) images from model B with aligned text from model A, and (f) aligned images with aligned text from model A. $\mathcal R$ learned from few classes transfers across modalities and generalizes to unseen classes, achieving near-oracle cross-model retrieval and classification.\looseness=-1}
\label{fig:app_split_caltech_openai_to_siglip}
\end{figure*}

\begin{figure*}[!htb]
    \centering
    \begin{subfigure}{0.32\textwidth}
    \centering
    \includegraphics[width=\textwidth]{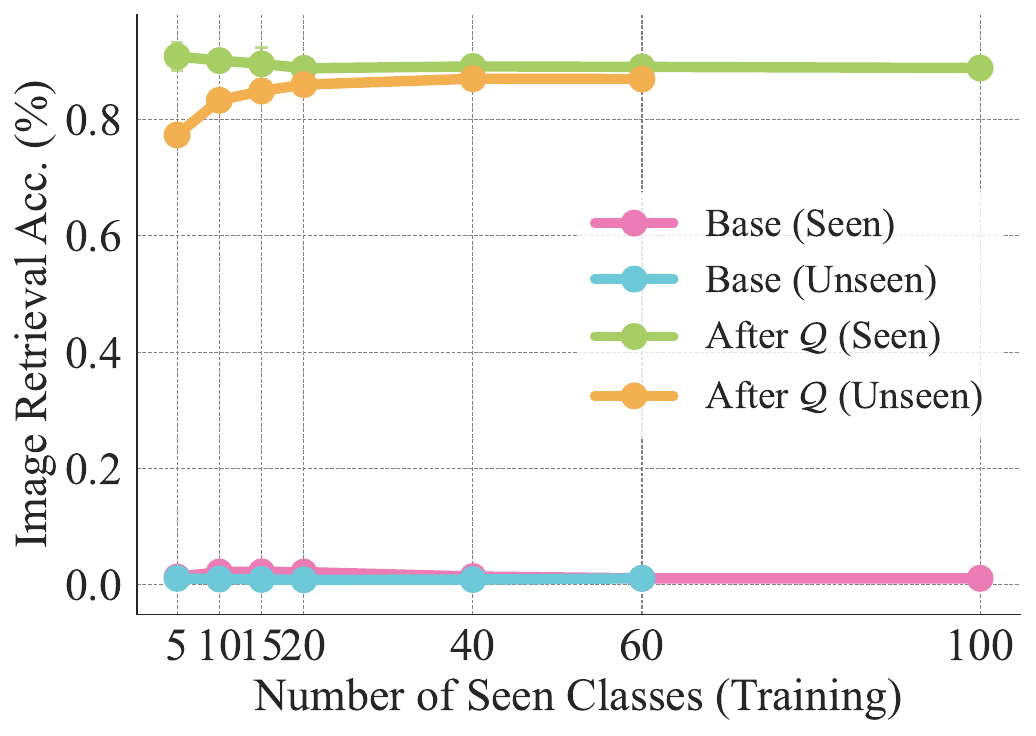}
    \caption{}
    \end{subfigure}
    \begin{subfigure}{0.32\textwidth}
    \centering
    \includegraphics[width=\textwidth]{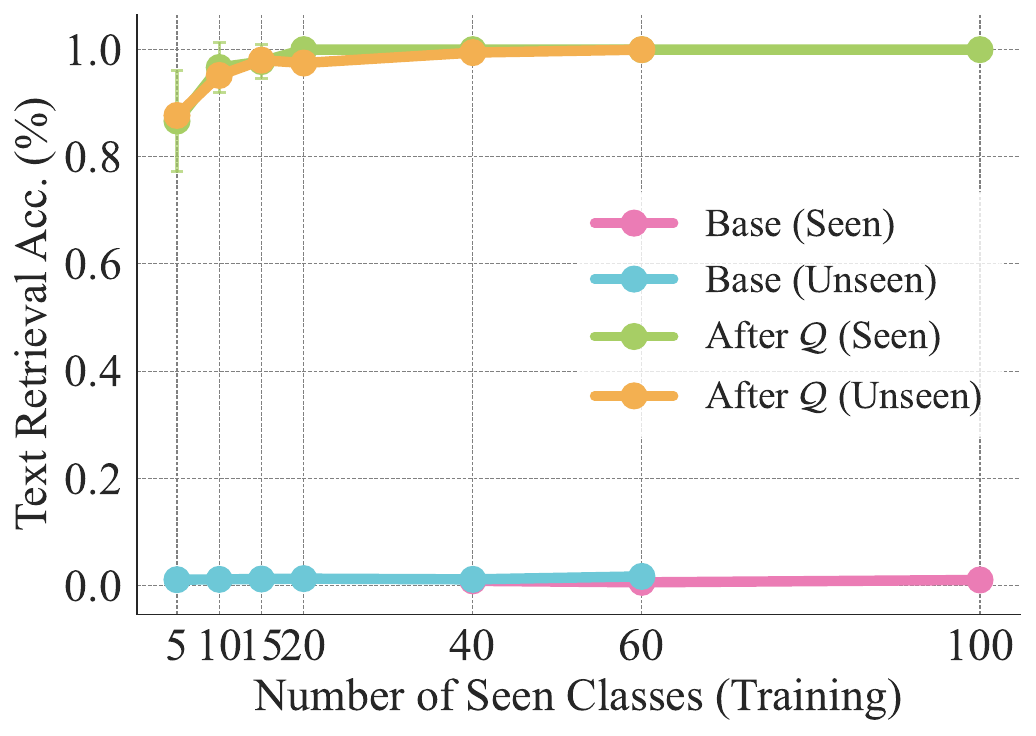}
    \caption{}
    \end{subfigure}
    \begin{subfigure}{0.32\textwidth}
    \centering
    \includegraphics[width=\textwidth]{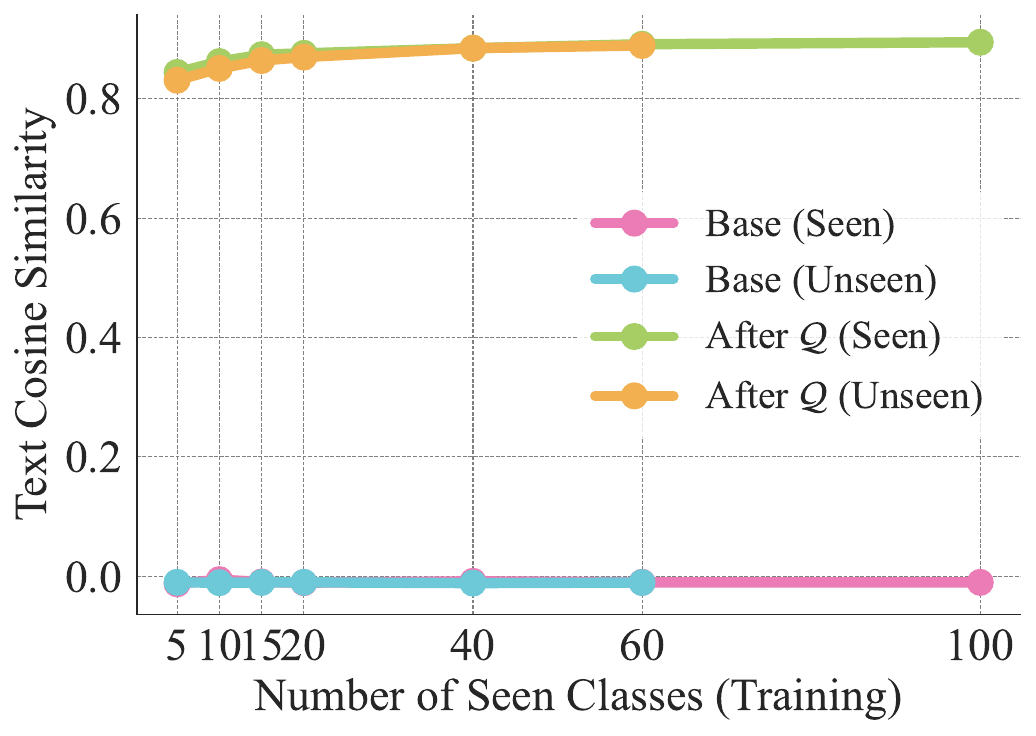}
    \caption{}
    \end{subfigure}
    \begin{subfigure}{0.32\textwidth}
    \centering
    \includegraphics[width=\textwidth]{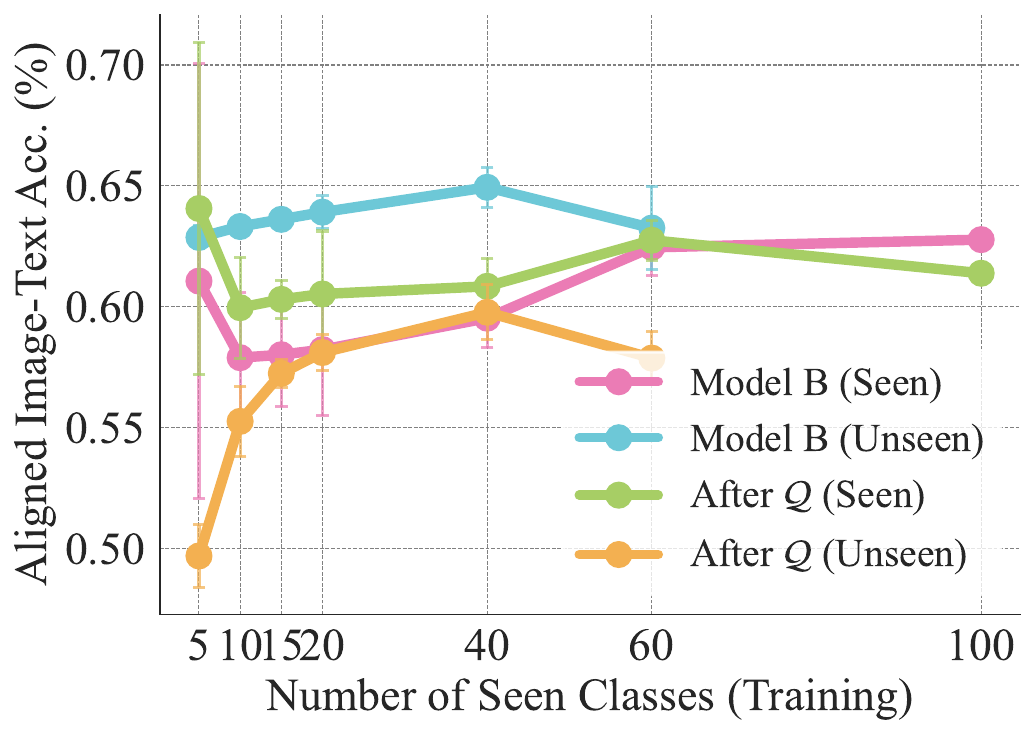}
    \caption{}
    \end{subfigure}
    \begin{subfigure}{0.32\textwidth}
    \centering
    \includegraphics[width=\textwidth]{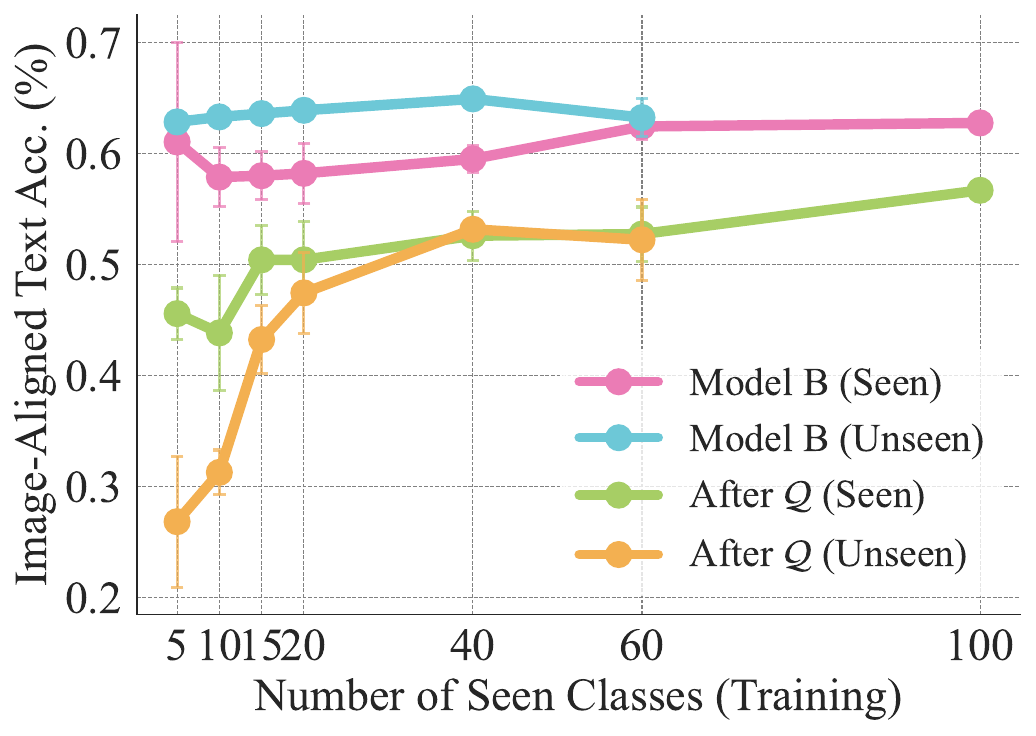}
    \caption{}
    \end{subfigure}
    \begin{subfigure}{0.32\textwidth}
    \centering
    \includegraphics[width=\textwidth]{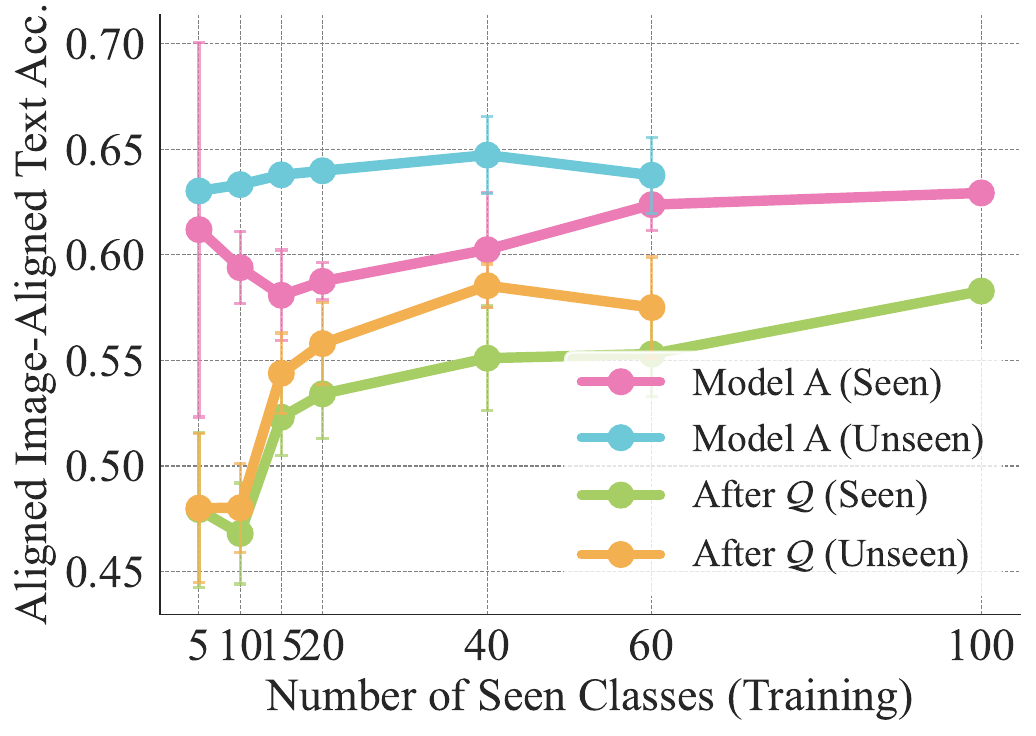}
    \caption{}
    \end{subfigure}
    \caption{\textit{Generalization of orthogonal alignment under limited supervision (CIFAR-100; CLIP ViT-B/32 to CLIP ViT-B/16, OpenAI).} (a) Image-image class retrieval, (b) text-text class retrieval, and (c) mean text-text cosine similarity, each reported on seen and unseen classes. (d-f) Downstream transfer: (d) aligned images from model A with text from model B, (e) images from model B with aligned text from model A, and (f) aligned images with aligned text from model A. $\mathcal R$ learned from few classes transfers across modalities and generalizes to unseen classes, achieving near-oracle cross-model retrieval and classification.\looseness=-1}
\label{fig:app_split_cifar_openai_to_openai}
\end{figure*}

\begin{figure*}[!htb]
    \centering
    \begin{subfigure}{0.32\textwidth}
    \centering
    \includegraphics[width=\textwidth]{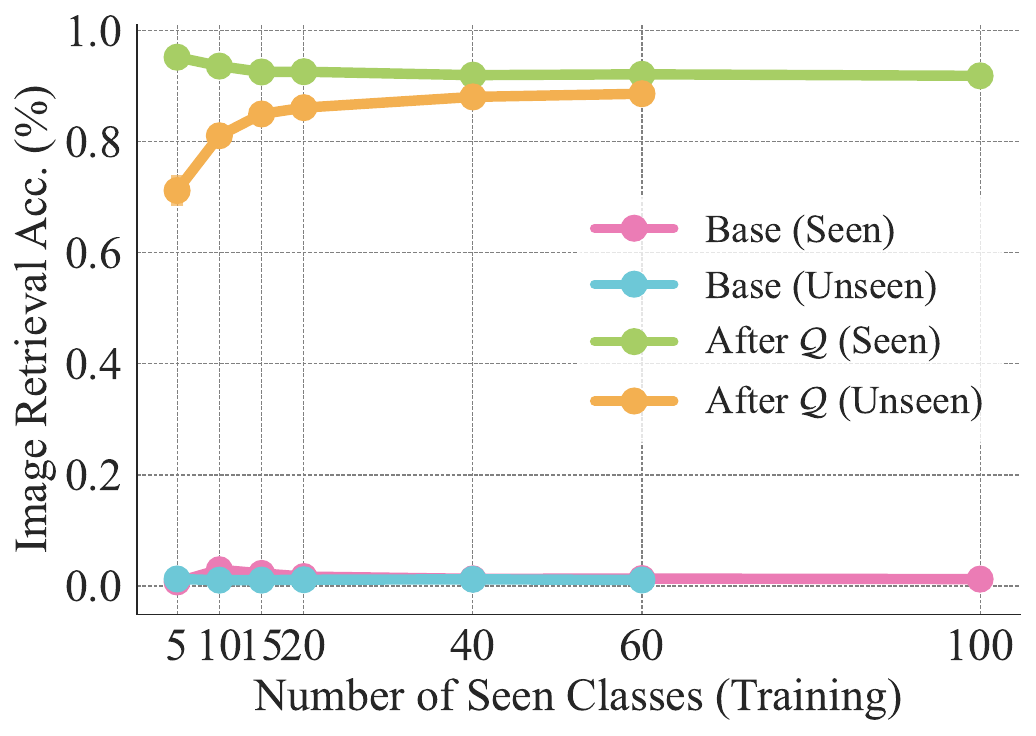}
    \caption{}
    \end{subfigure}
    \begin{subfigure}{0.32\textwidth}
    \centering
    \includegraphics[width=\textwidth]{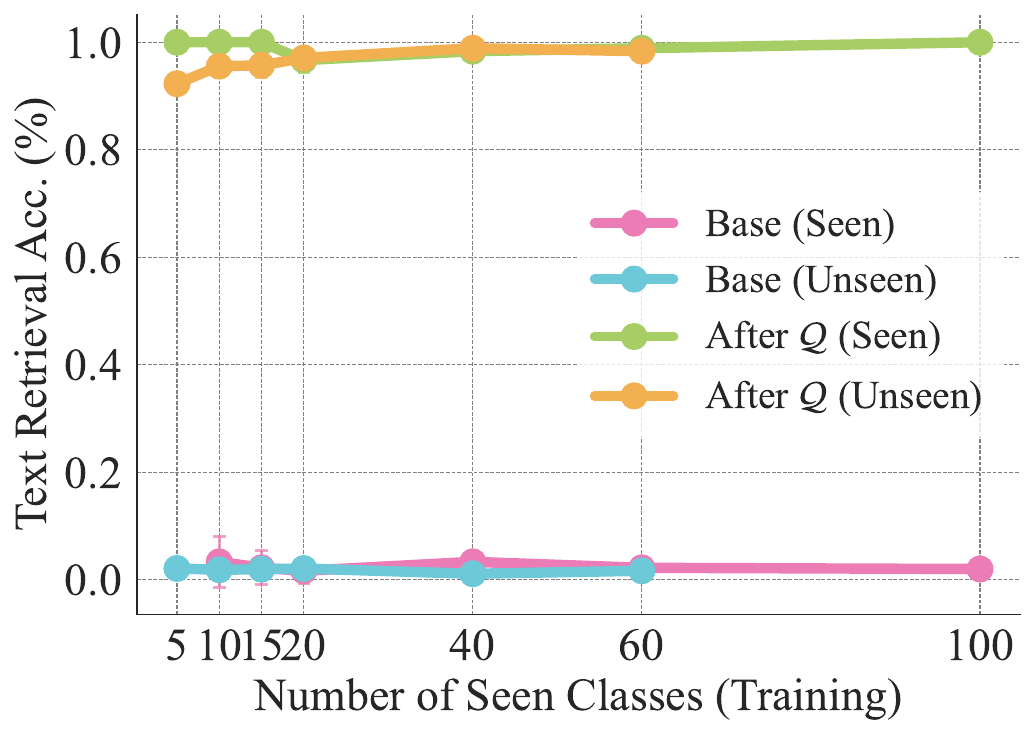}
    \caption{}
    \end{subfigure}
    \begin{subfigure}{0.32\textwidth}
    \centering
    \includegraphics[width=\textwidth]{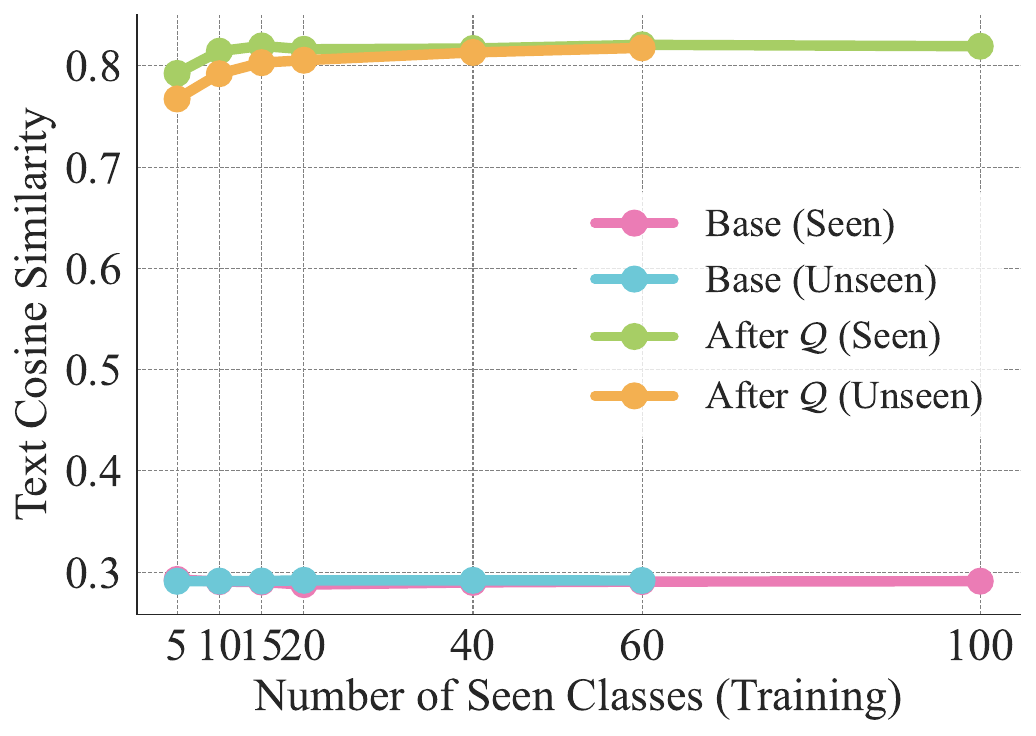}
    \caption{}
    \end{subfigure}
    \begin{subfigure}{0.32\textwidth}
    \centering
    \includegraphics[width=\textwidth]{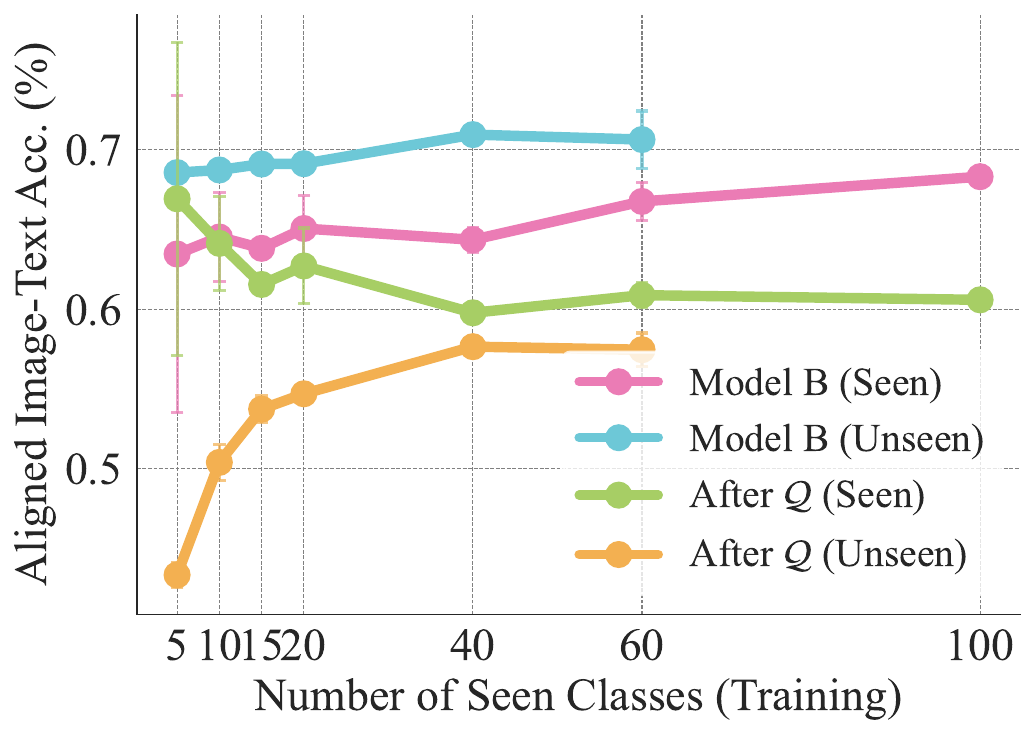}
    \caption{}
    \end{subfigure}
    \begin{subfigure}{0.32\textwidth}
    \centering
    \includegraphics[width=\textwidth]{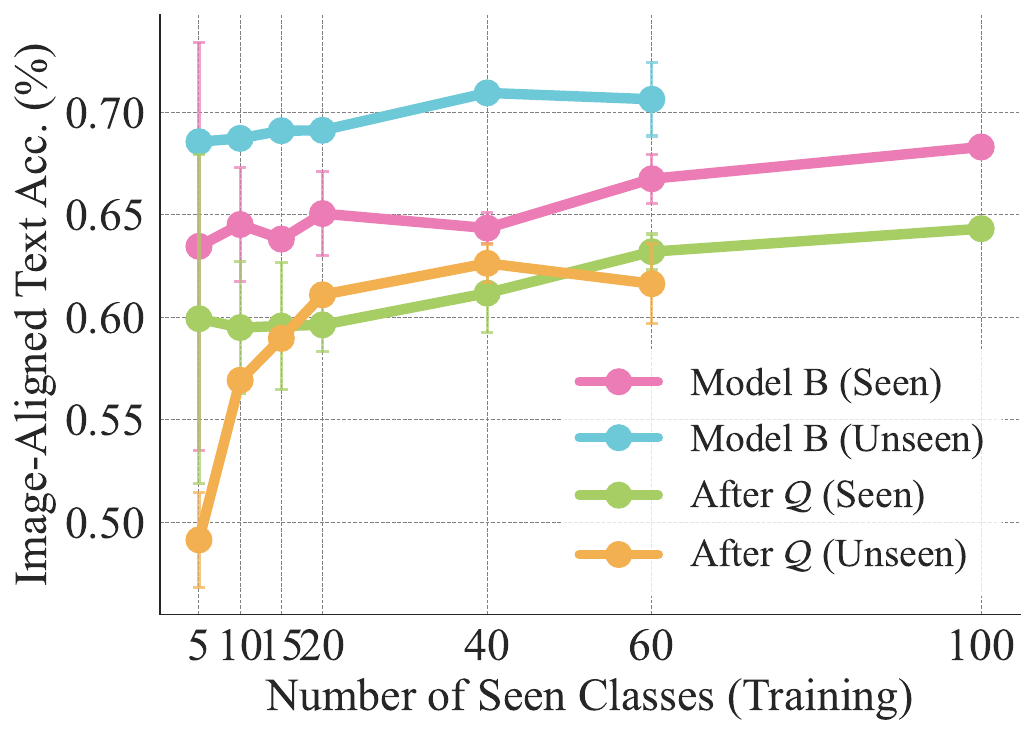}
    \caption{}
    \end{subfigure}
    \begin{subfigure}{0.32\textwidth}
    \centering
    \includegraphics[width=\textwidth]{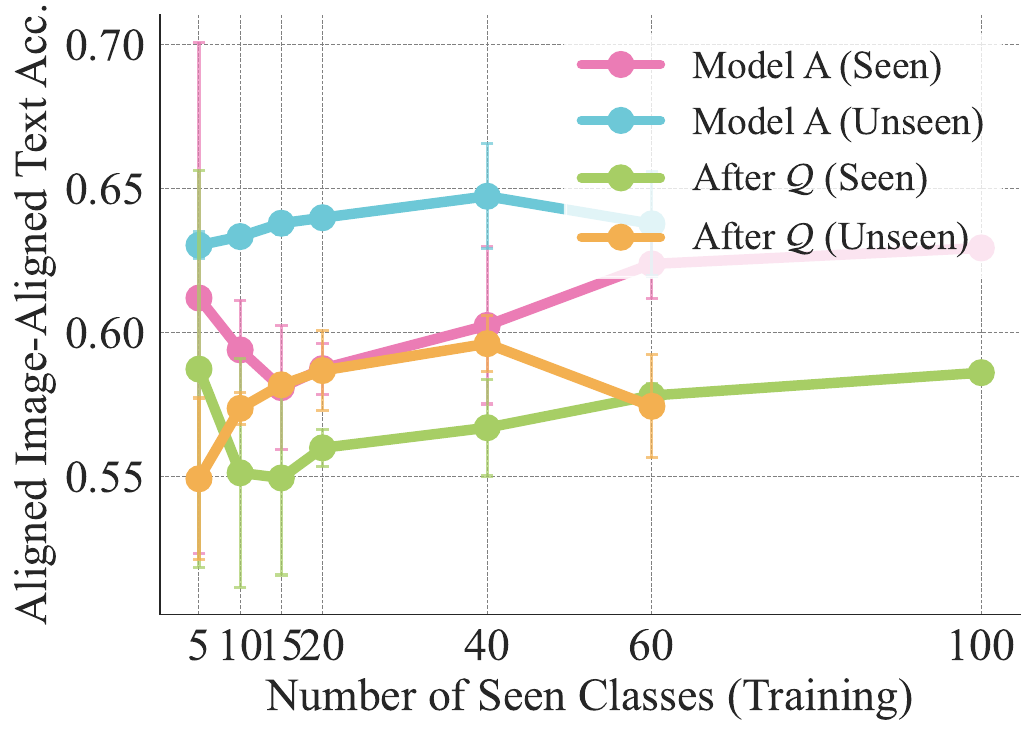}
    \caption{}
    \end{subfigure}
    \caption{\textit{Generalization of orthogonal alignment under limited supervision (CIFAR100; CLIP ViT-B/32 (OpenAI) to CLIP ViT-B/32 (LAION).} (a) Image-image class retrieval, (b) text-text class retrieval, and (c) mean text-text cosine similarity, each reported on seen and unseen classes. (d-f) Downstream transfer: (d) aligned images from model A with text from model B, (e) images from model B with aligned text from model A, and (f) aligned images with aligned text from model A. $\mathcal R$ learned from few classes transfers across modalities and generalizes to unseen classes, achieving near-oracle cross-model retrieval and classification.\looseness=-1}
\label{fig:app_split_cifar_openai_to_laion}
\end{figure*}

\begin{figure*}[!htb]
    \centering
    \begin{subfigure}{0.32\textwidth}
    \centering
    \includegraphics[width=\textwidth]{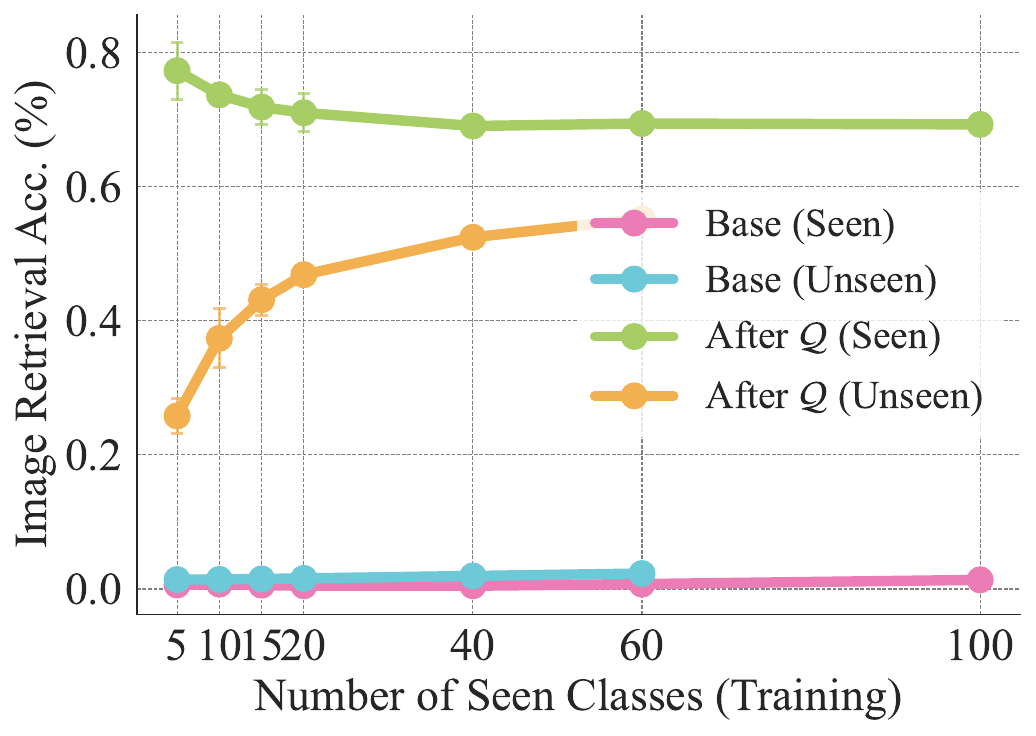}
    \caption{}
    \end{subfigure}
    \begin{subfigure}{0.32\textwidth}
    \centering
    \includegraphics[width=\textwidth]{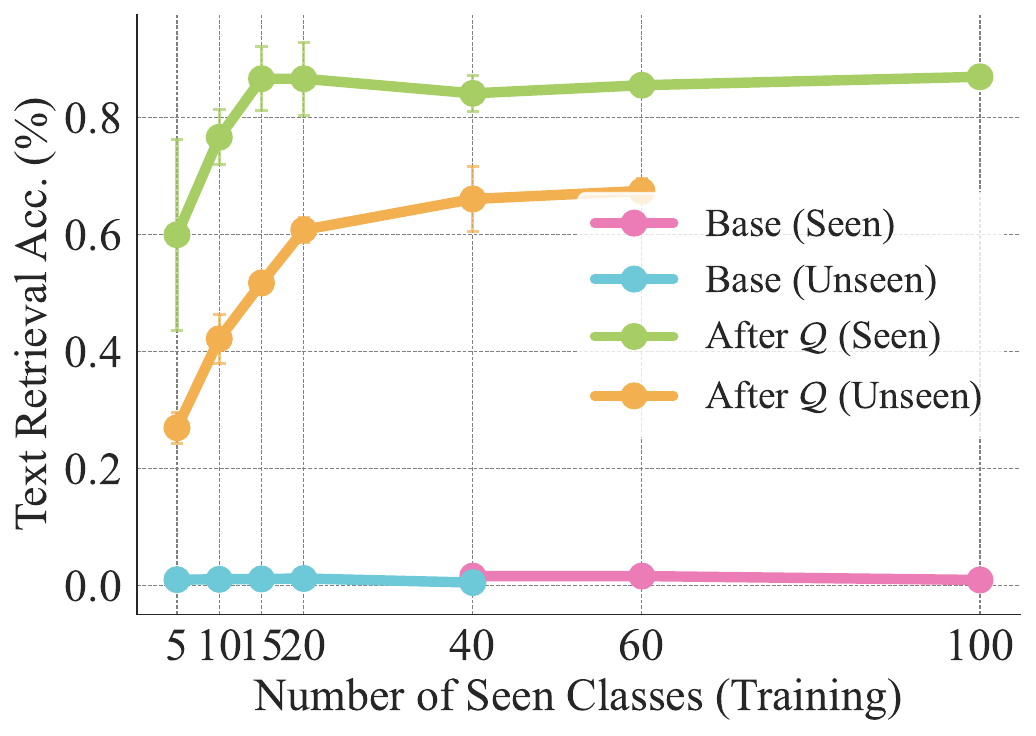}
    \caption{}
    \end{subfigure}
    \begin{subfigure}{0.32\textwidth}
    \centering
    \includegraphics[width=\textwidth]{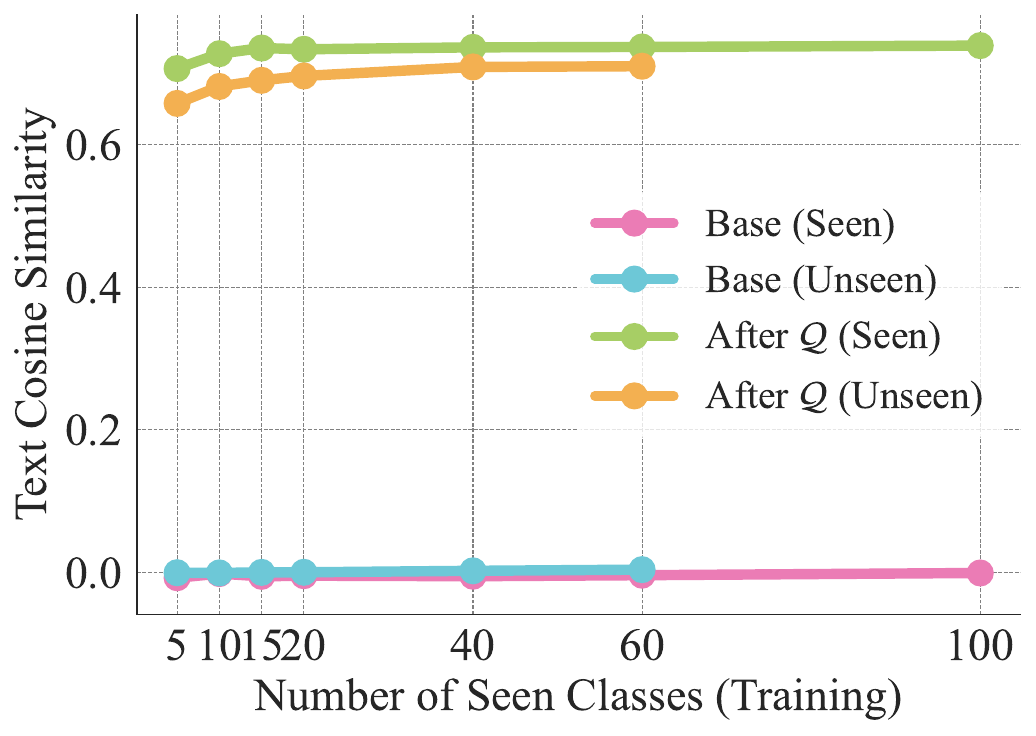}
    \caption{}
    \end{subfigure}
    \begin{subfigure}{0.32\textwidth}
    \centering
    \includegraphics[width=\textwidth]{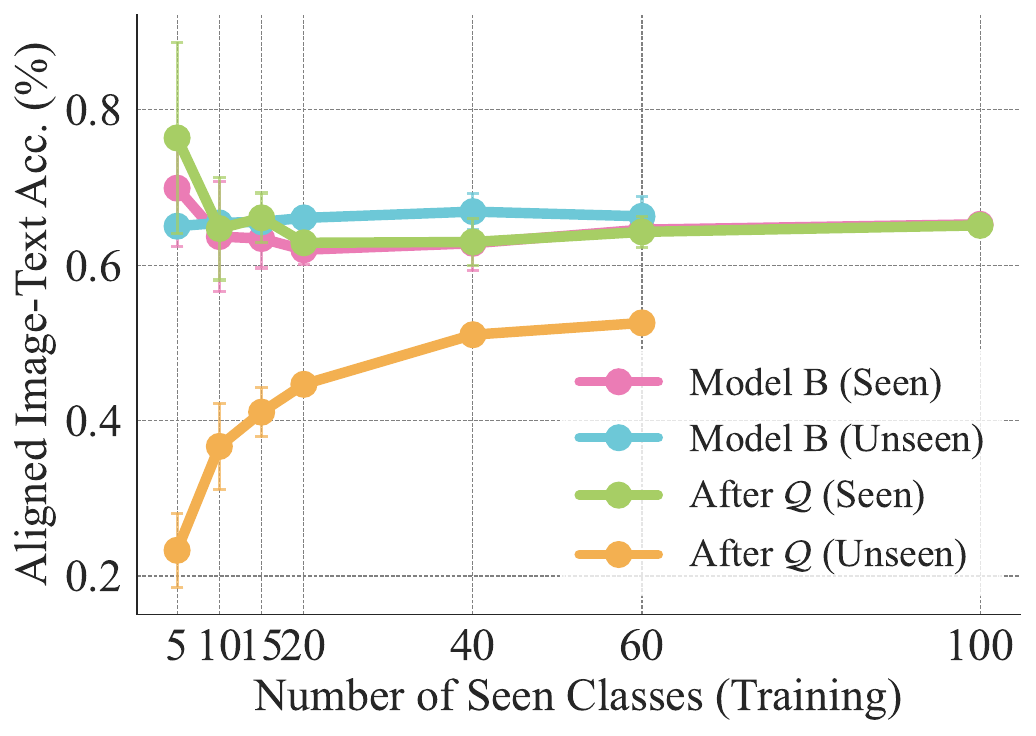}
    \caption{}
    \end{subfigure}
    \begin{subfigure}{0.32\textwidth}
    \centering
    \includegraphics[width=\textwidth]{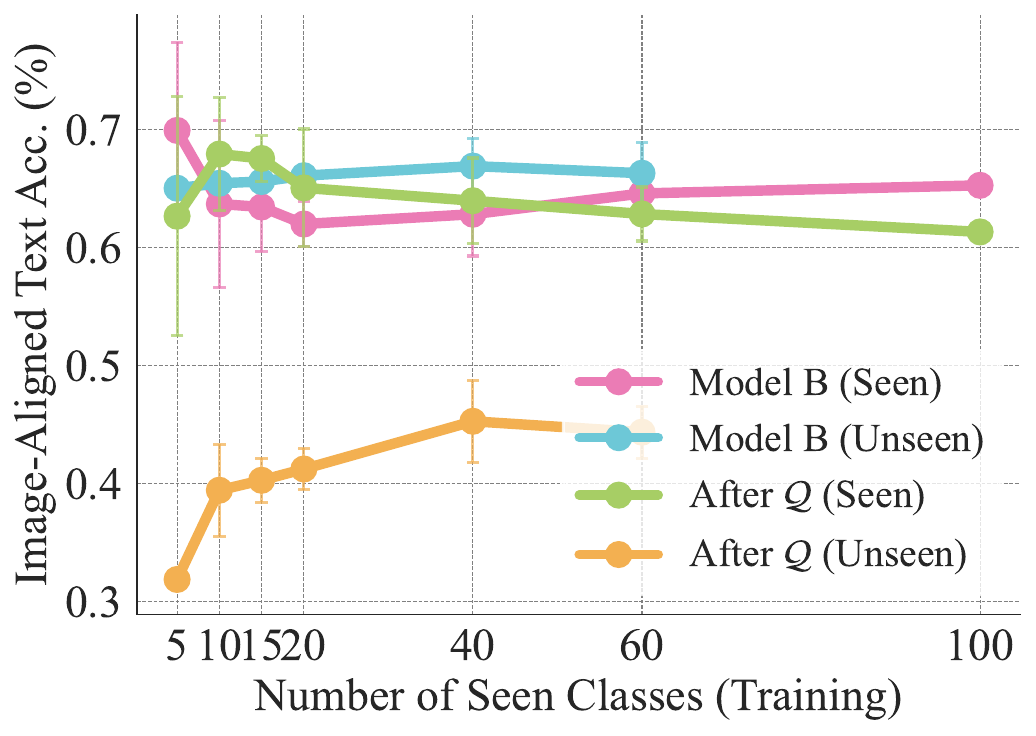}
    \caption{}
    \end{subfigure}
    \begin{subfigure}{0.32\textwidth}
    \centering
    \includegraphics[width=\textwidth]{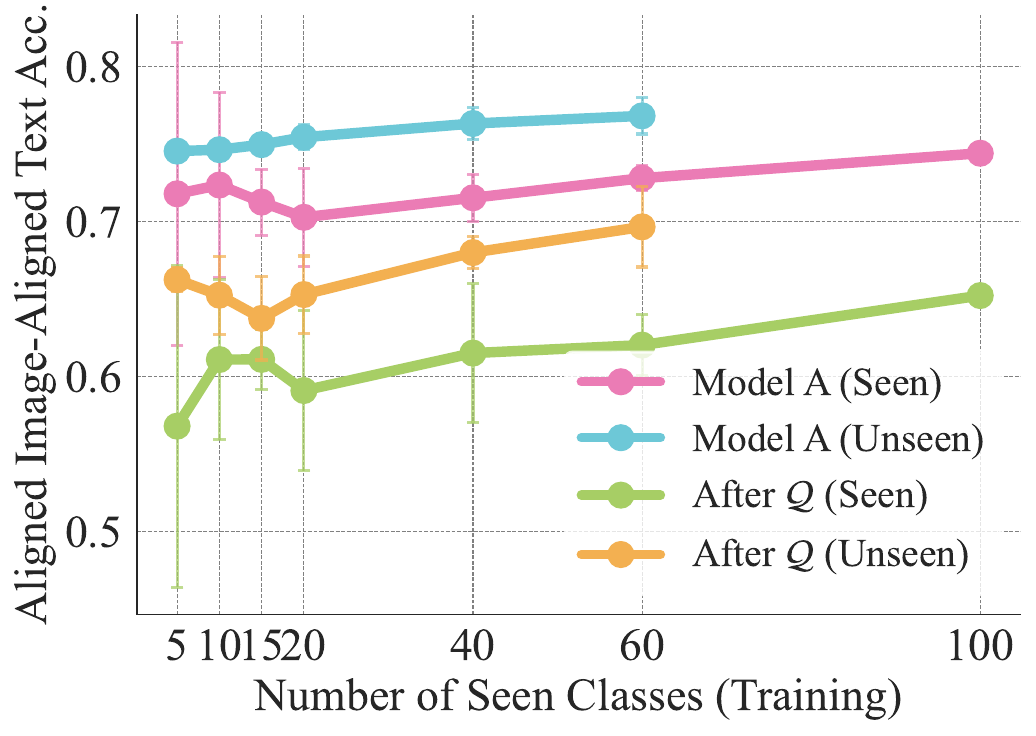}
    \caption{}
    \end{subfigure}
    \caption{\textit{Generalization of orthogonal alignment under limited supervision (CIFAR100; CLIP ViT-L/14 (OpenAI) to FLAVA} (a) Image-image class retrieval, (b) text-text class retrieval, and (c) mean text-text cosine similarity, each reported on seen and unseen classes. (d-f) Downstream transfer: (d) aligned images from model A with text from model B, (e) images from model B with aligned text from model A, and (f) aligned images with aligned text from model A. $\mathcal R$ learned from few classes transfers across modalities and generalizes to unseen classes, achieving near-oracle cross-model retrieval and classification.\looseness=-1}
\label{fig:app_split_cifar_openai_to_flava}
\end{figure*}

\begin{figure*}[!htb]
    \centering
    \begin{subfigure}{0.32\textwidth}
    \centering
    \includegraphics[width=\textwidth]{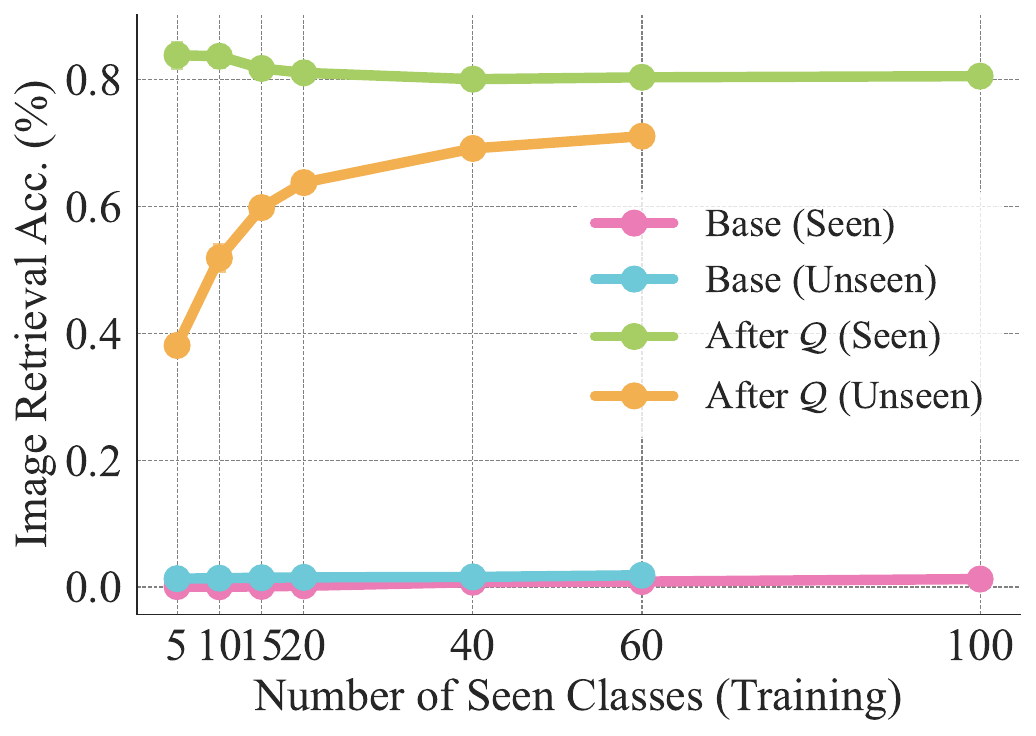}
    \caption{}
    \end{subfigure}
    \begin{subfigure}{0.32\textwidth}
    \centering
    \includegraphics[width=\textwidth]{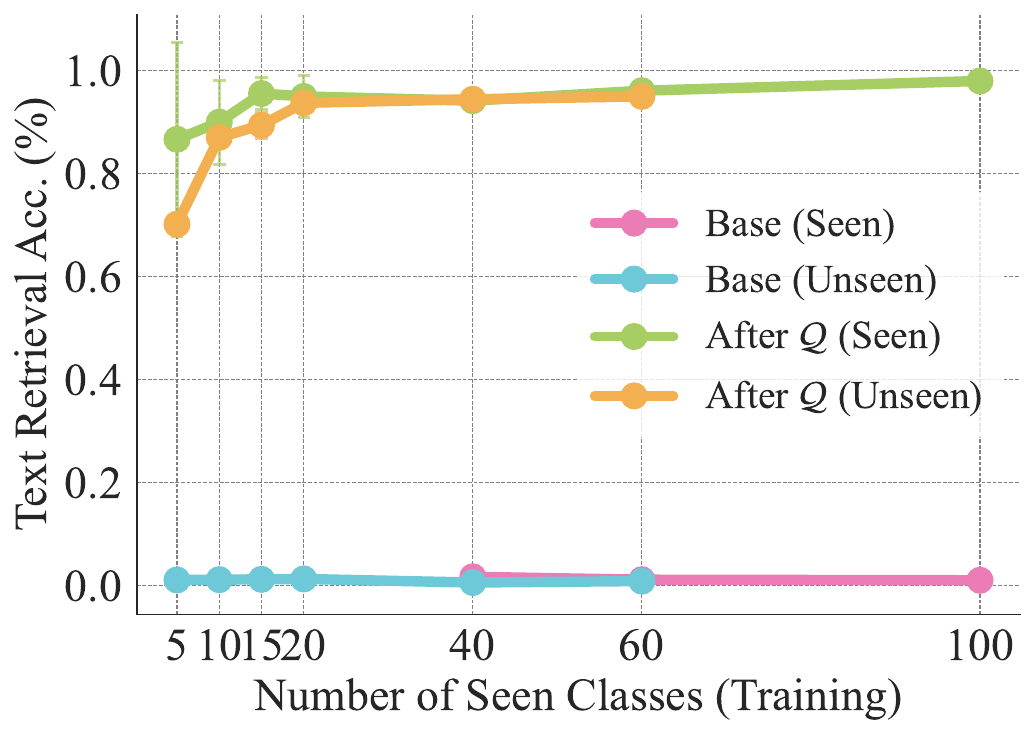}
    \caption{}
    \end{subfigure}
    \begin{subfigure}{0.32\textwidth}
    \centering
    \includegraphics[width=\textwidth]{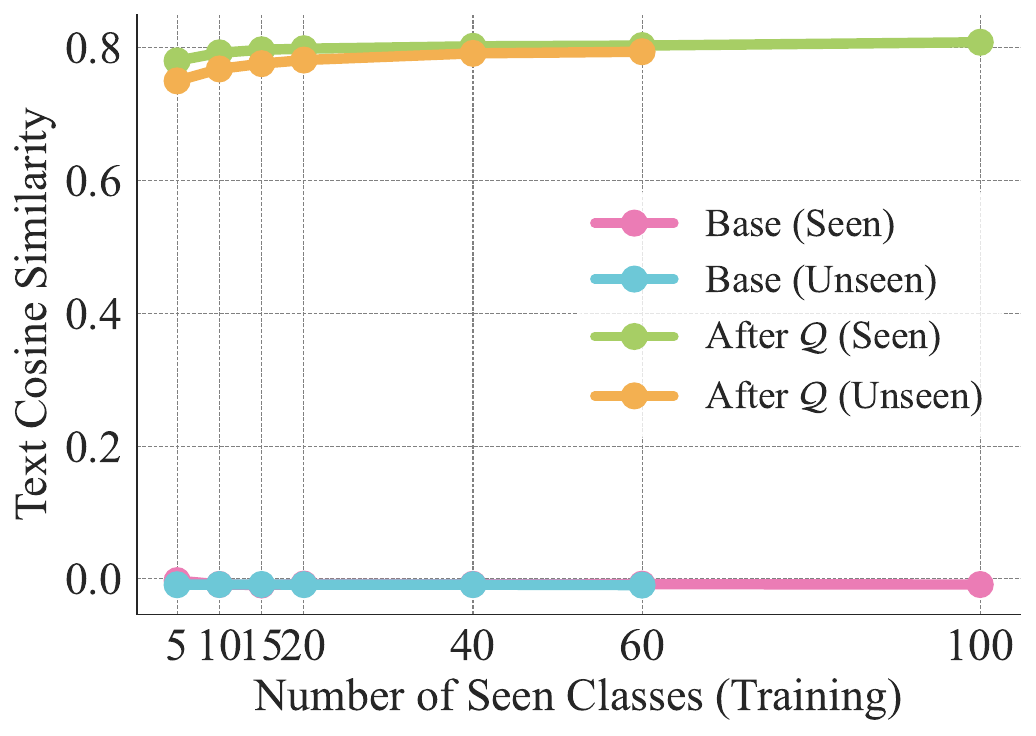}
    \caption{}
    \end{subfigure}
    \begin{subfigure}{0.32\textwidth}
    \centering
    \includegraphics[width=\textwidth]{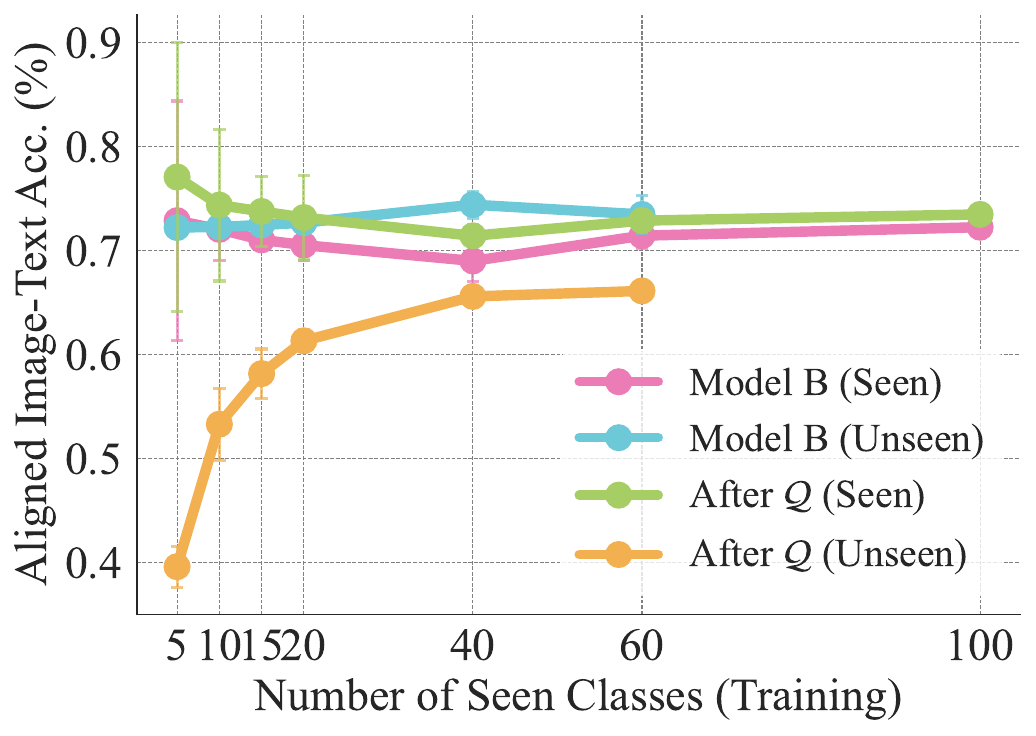}
    \caption{}
    \end{subfigure}
    \begin{subfigure}{0.32\textwidth}
    \centering
    \includegraphics[width=\textwidth]{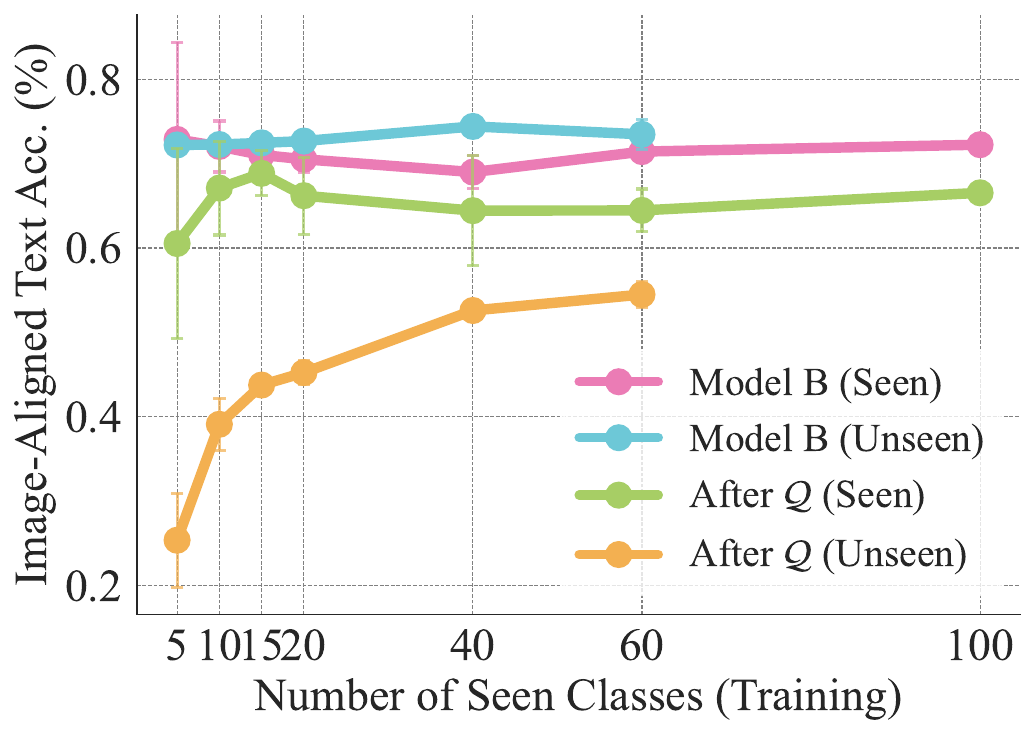}
    \caption{}
    \end{subfigure}
    \begin{subfigure}{0.32\textwidth}
    \centering
    \includegraphics[width=\textwidth]{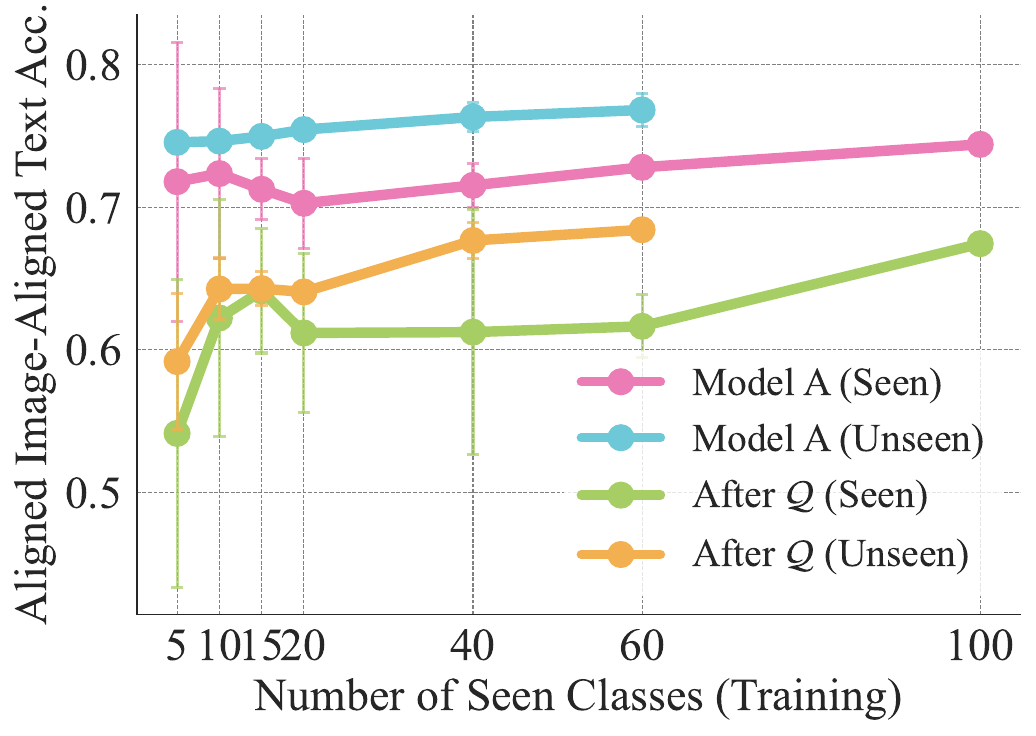}
    \caption{}
    \end{subfigure}
    \caption{\textit{Generalization of orthogonal alignment under limited supervision (CIFAR100; CLIP ViT-L/14 (OpenAI) to SigLIP.} (a) Image-image class retrieval, (b) text-text class retrieval, and (c) mean text-text cosine similarity, each reported on seen and unseen classes. (d-f) Downstream transfer: (d) aligned images from model A with text from model B, (e) images from model B with aligned text from model A, and (f) aligned images with aligned text from model A. $\mathcal R$ learned from few classes transfers across modalities and generalizes to unseen classes, achieving near-oracle cross-model retrieval and classification.\looseness=-1}
\label{fig:app_split_cifar_openai_to_siglip}
\end{figure*}

\begin{figure*}[!htb]
    \centering
    \begin{subfigure}{0.32\textwidth}
    \centering
    \includegraphics[width=\textwidth]{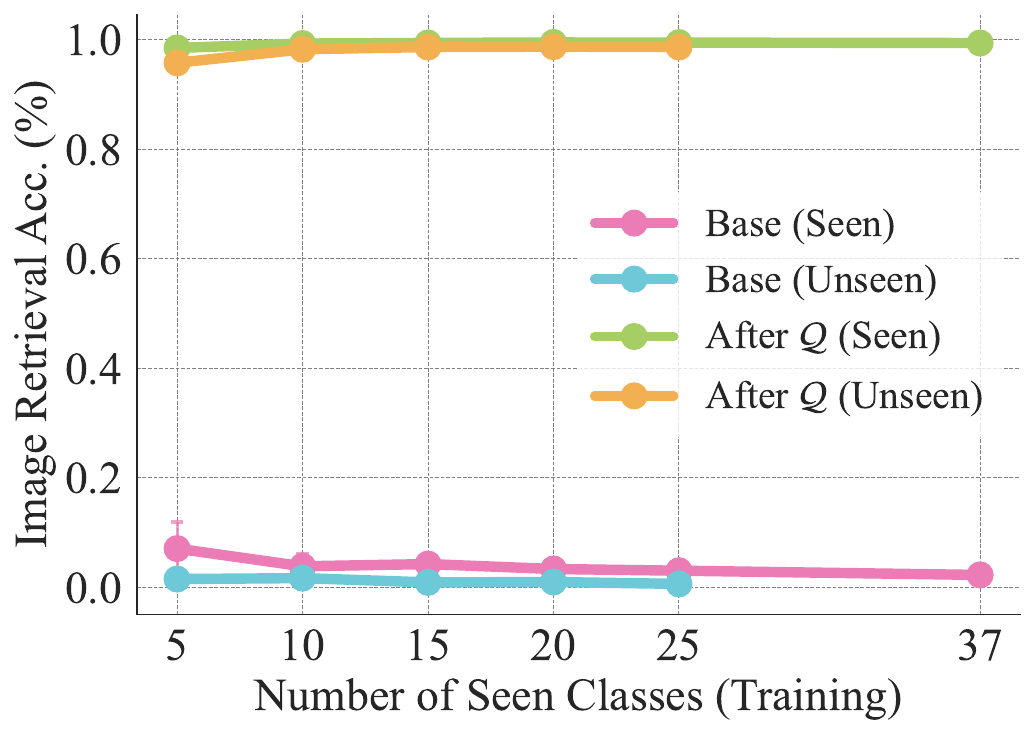}
    \caption{}
    \end{subfigure}
    \begin{subfigure}{0.32\textwidth}
    \centering
    \includegraphics[width=\textwidth]{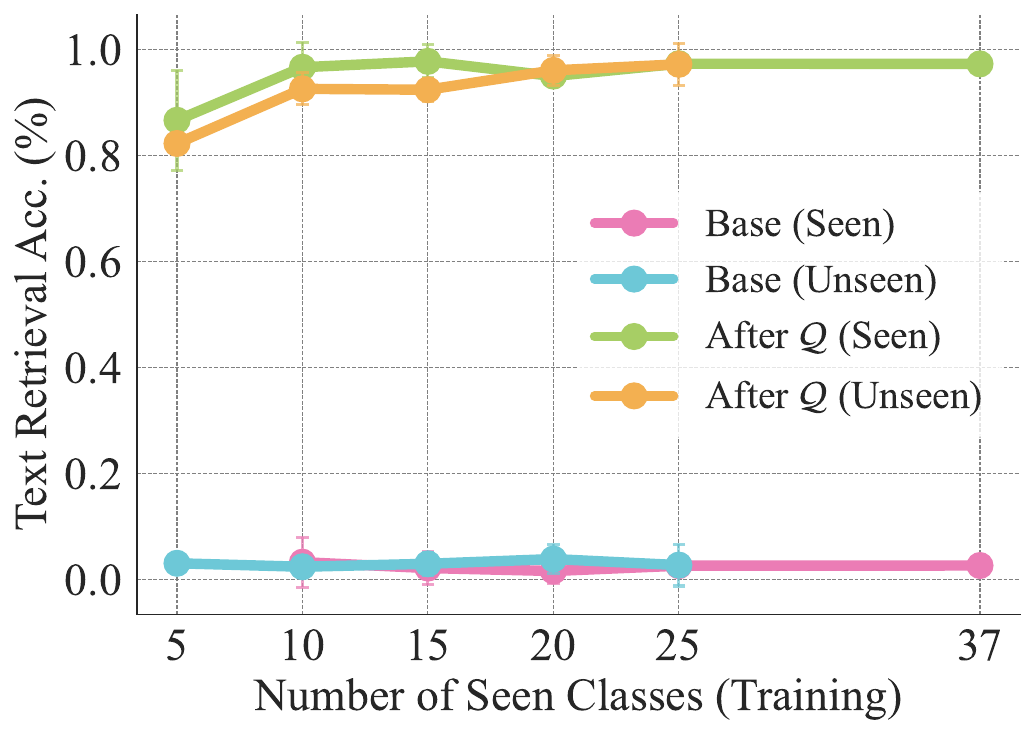}
    \caption{}
    \end{subfigure}
    \begin{subfigure}{0.32\textwidth}
    \centering
    \includegraphics[width=\textwidth]{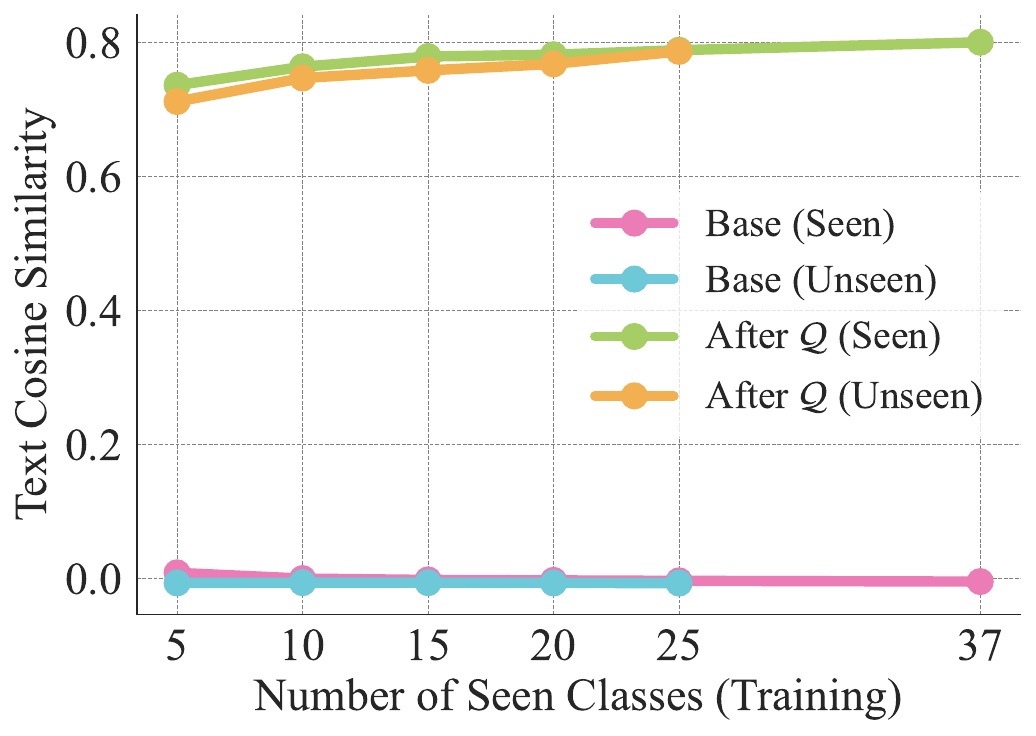}
    \caption{}
    \end{subfigure}
    \begin{subfigure}{0.32\textwidth}
    \centering
    \includegraphics[width=\textwidth]{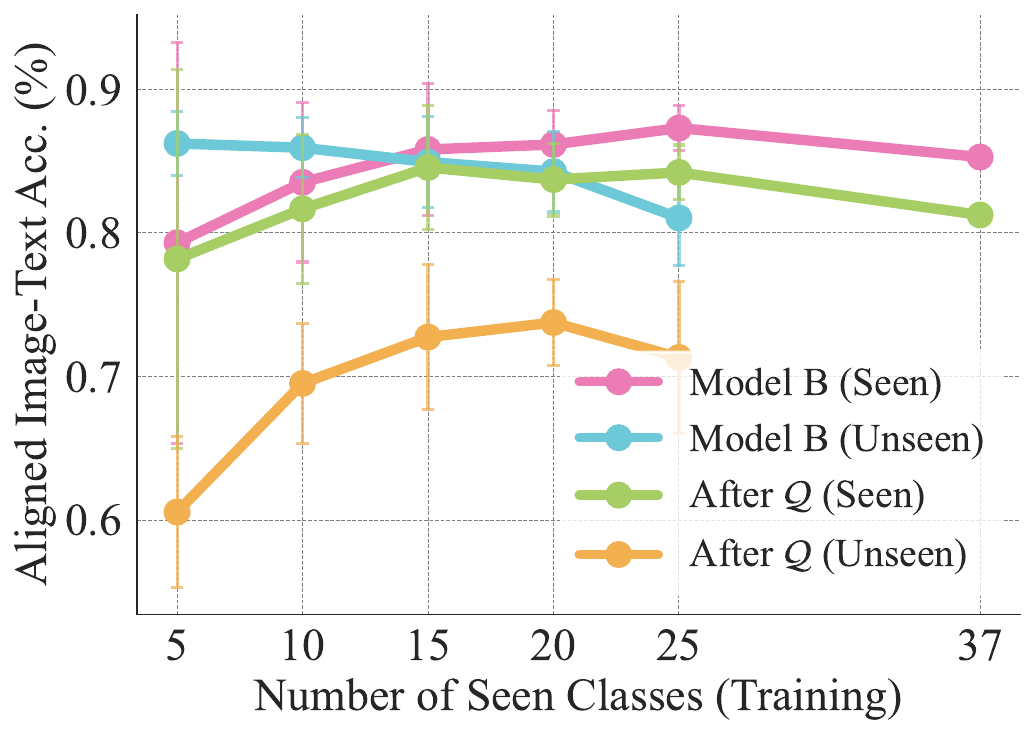}
    \caption{}
    \end{subfigure}
    \begin{subfigure}{0.32\textwidth}
    \centering
    \includegraphics[width=\textwidth]{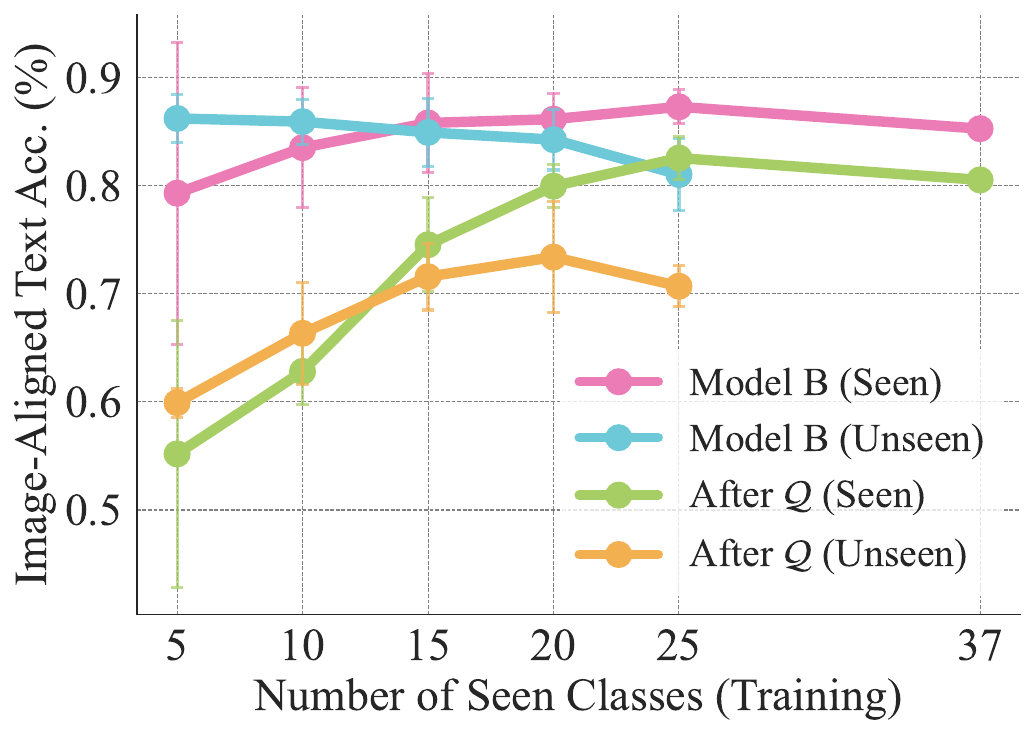}
    \caption{}
    \end{subfigure}
    \begin{subfigure}{0.32\textwidth}
    \centering
    \includegraphics[width=\textwidth]{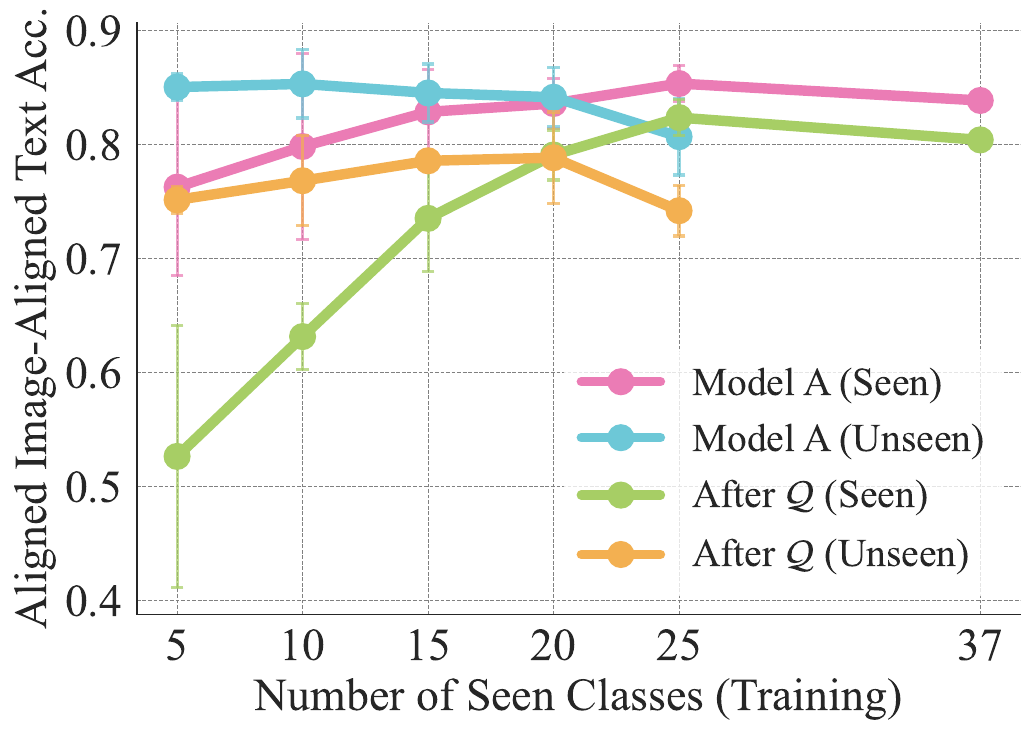}
    \caption{}
    \end{subfigure}
    \caption{\textit{Generalization of orthogonal alignment under limited supervision (Oxford Pets; CLIP ViT-B/32 to CLIP ViT-B/16, OpenAI).} (a) Image-image class retrieval, (b) text-text class retrieval, and (c) mean text-text cosine similarity, each reported on seen and unseen classes. (d-f) Downstream transfer: (d) aligned images from model A with text from model B, (e) images from model B with aligned text from model A, and (f) aligned images with aligned text from model A. $\mathcal R$ learned from few classes transfers across modalities and generalizes to unseen classes, achieving near-oracle cross-model retrieval and classification.\looseness=-1}
\label{fig:app_split_oxford_openai_to_openai}
\end{figure*}

\begin{figure*}[!htb]
    \centering
    \begin{subfigure}{0.32\textwidth}
    \centering
    \includegraphics[width=\textwidth]{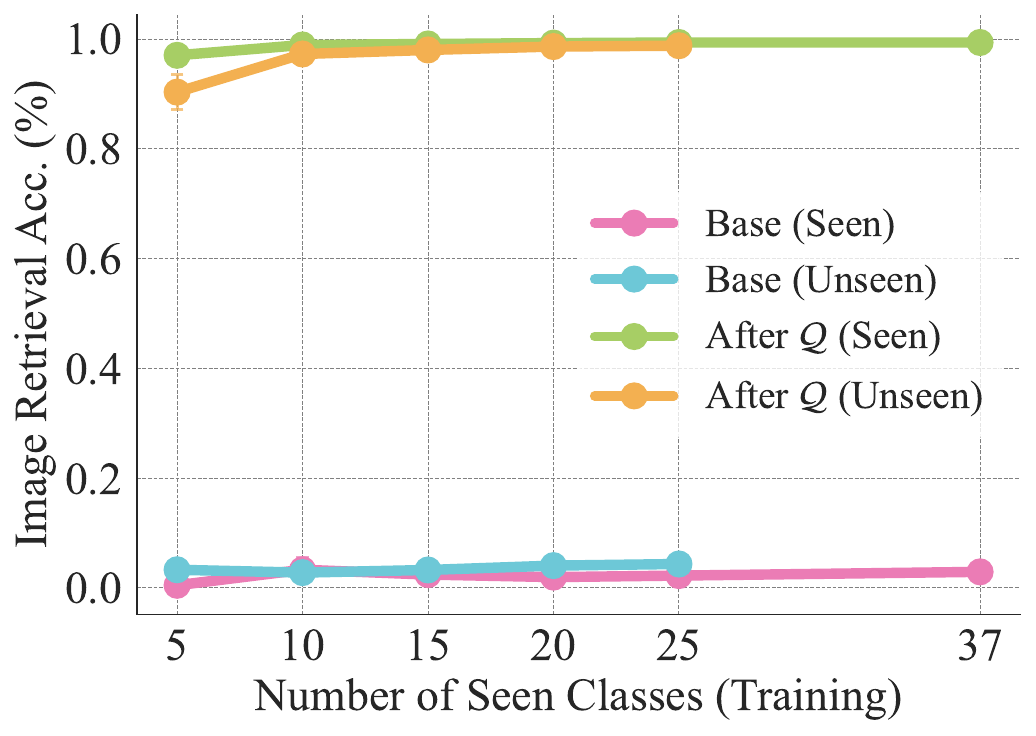}
    \caption{}
    \end{subfigure}
    \begin{subfigure}{0.32\textwidth}
    \centering
    \includegraphics[width=\textwidth]{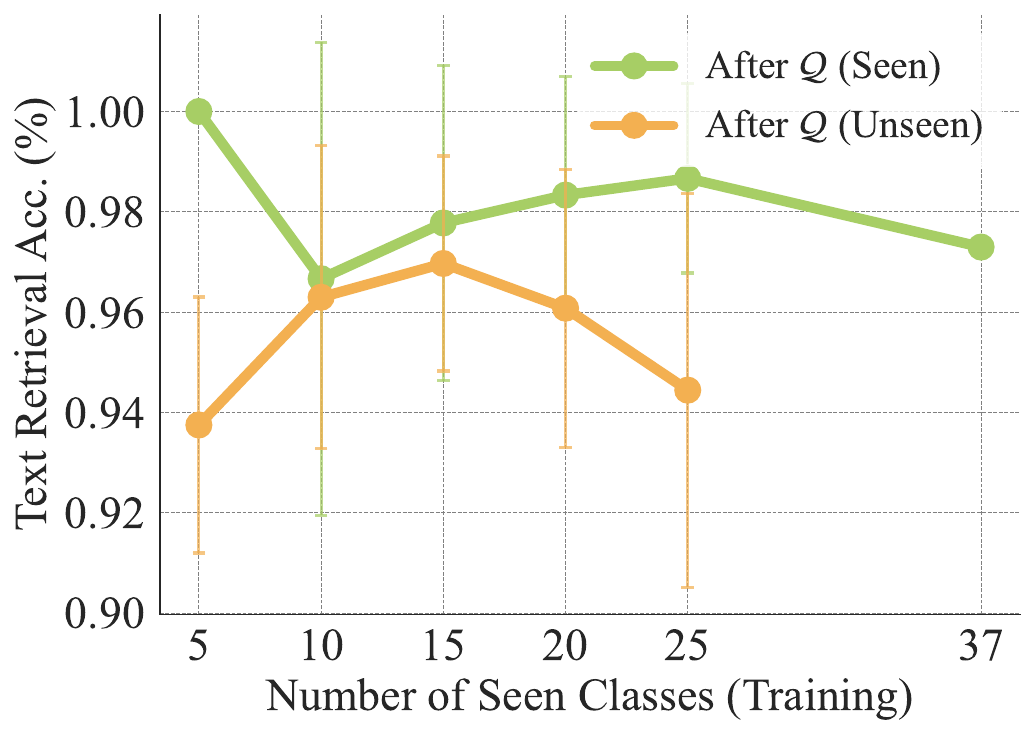}
    \caption{}
    \end{subfigure}
    \begin{subfigure}{0.32\textwidth}
    \centering
    \includegraphics[width=\textwidth]{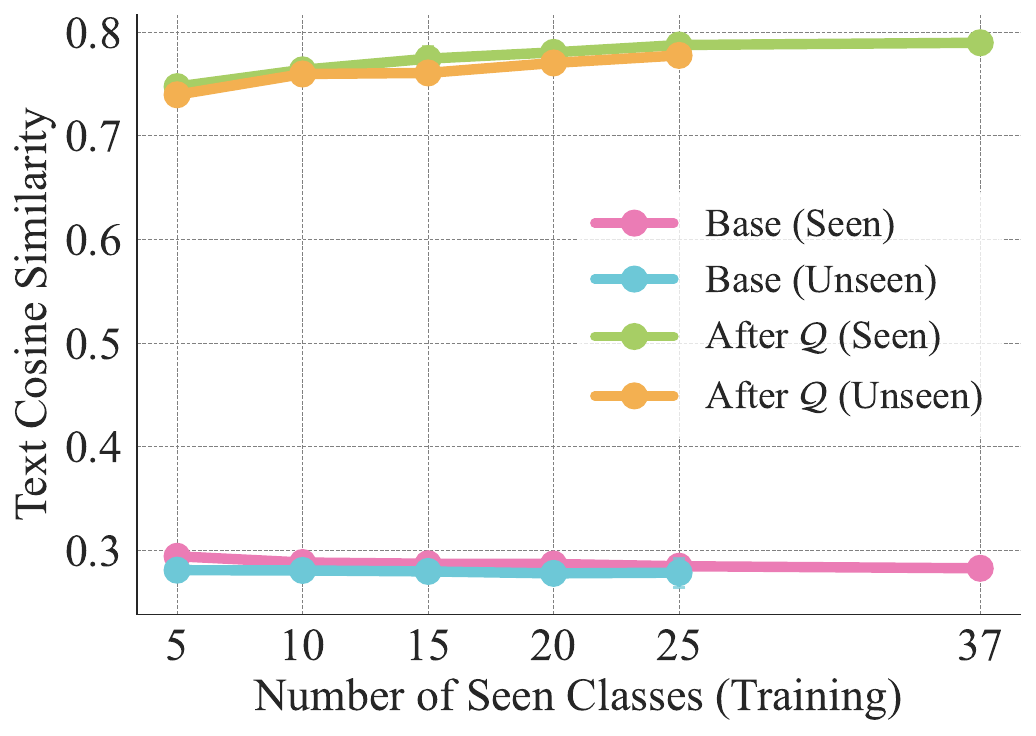}
    \caption{}
    \end{subfigure}
    \begin{subfigure}{0.32\textwidth}
    \centering
    \includegraphics[width=\textwidth]{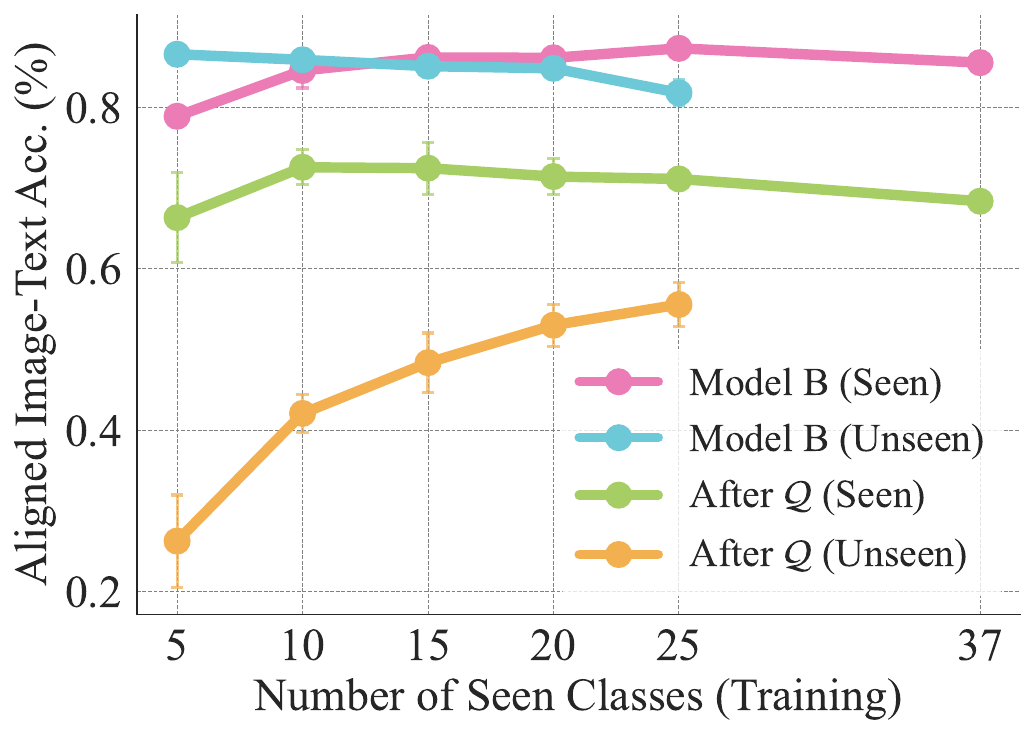}
    \caption{}
    \end{subfigure}
    \begin{subfigure}{0.32\textwidth}
    \centering
    \includegraphics[width=\textwidth]{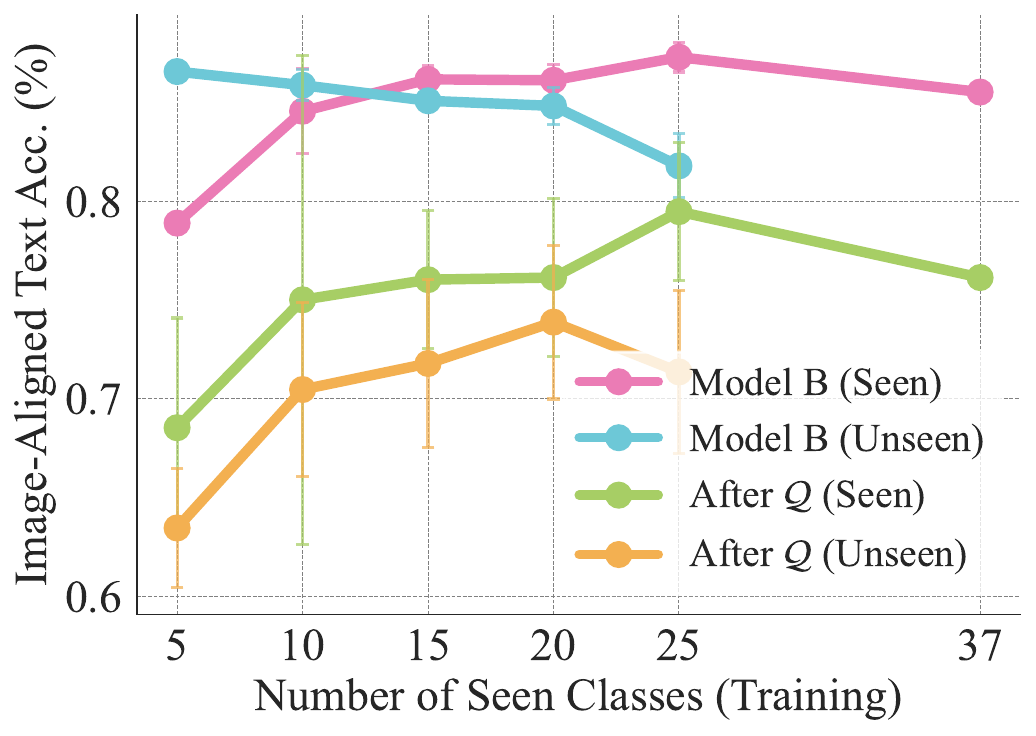}
    \caption{}
    \end{subfigure}
    \begin{subfigure}{0.32\textwidth}
    \centering
    \includegraphics[width=\textwidth]{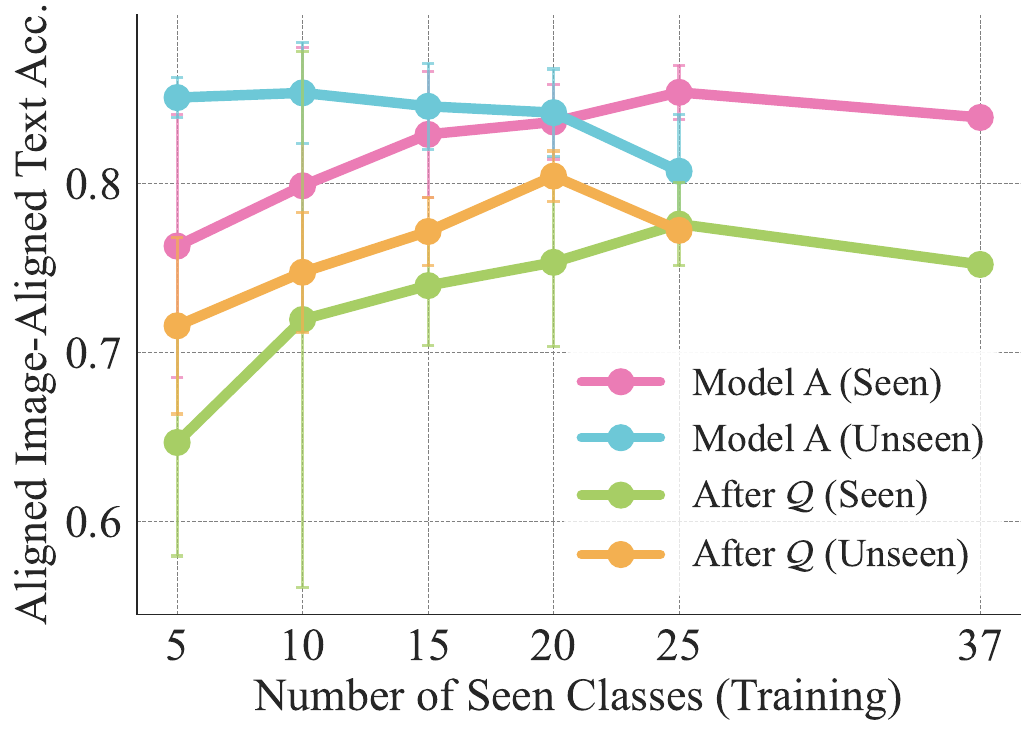}
    \caption{}
    \end{subfigure}
    \caption{\textit{Generalization of orthogonal alignment under limited supervision (Oxford Pets; CLIP ViT-B/32 (OpenAI) to CLIP ViT-B/32 (LAION).} (a) Image-image class retrieval, (b) text-text class retrieval, and (c) mean text-text cosine similarity, each reported on seen and unseen classes. (d-f) Downstream transfer: (d) aligned images from model A with text from model B, (e) images from model B with aligned text from model A, and (f) aligned images with aligned text from model A. $\mathcal R$ learned from few classes transfers across modalities and generalizes to unseen classes, achieving near-oracle cross-model retrieval and classification.\looseness=-1}
\label{fig:app_split_oxford_openai_to_laion}
\end{figure*}

\begin{figure*}[!htb]
    \centering
    \begin{subfigure}{0.32\textwidth}
    \centering
    \includegraphics[width=\textwidth]{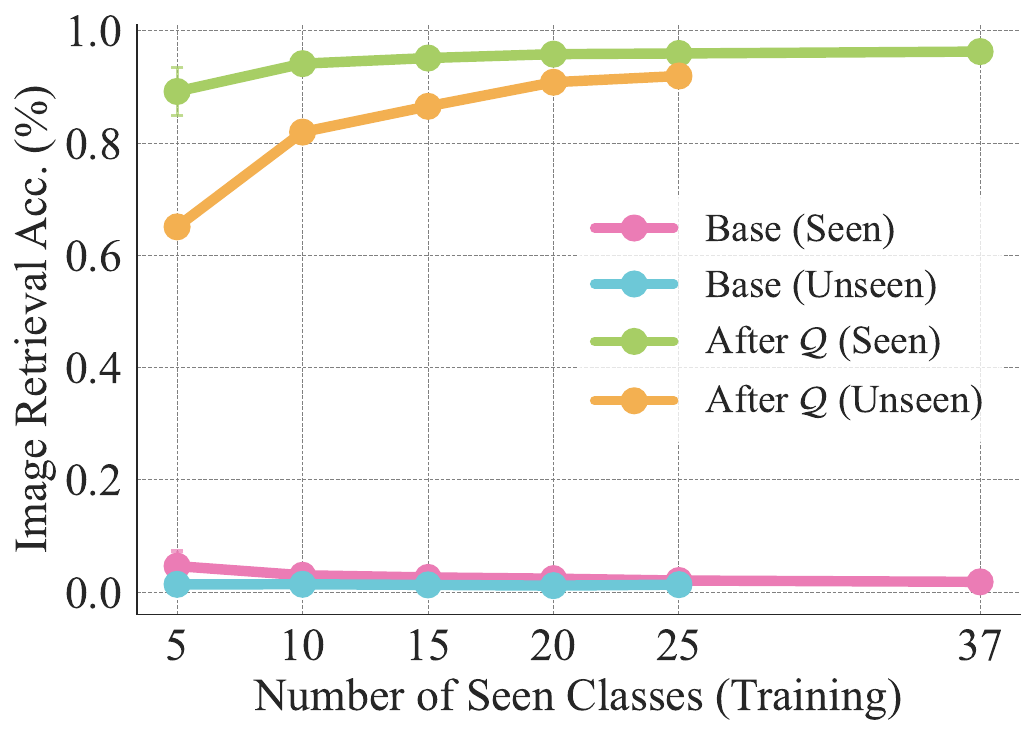}
    \caption{}
    \end{subfigure}
    \begin{subfigure}{0.32\textwidth}
    \centering
    \includegraphics[width=\textwidth]{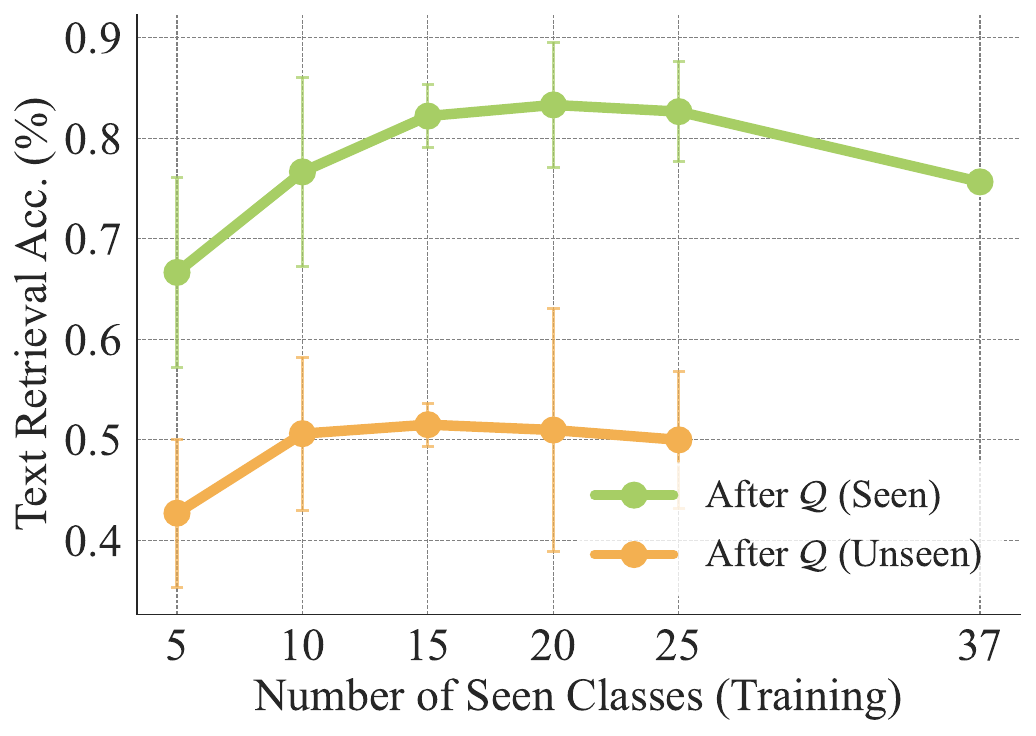}
    \caption{}
    \end{subfigure}
    \begin{subfigure}{0.32\textwidth}
    \centering
    \includegraphics[width=\textwidth]{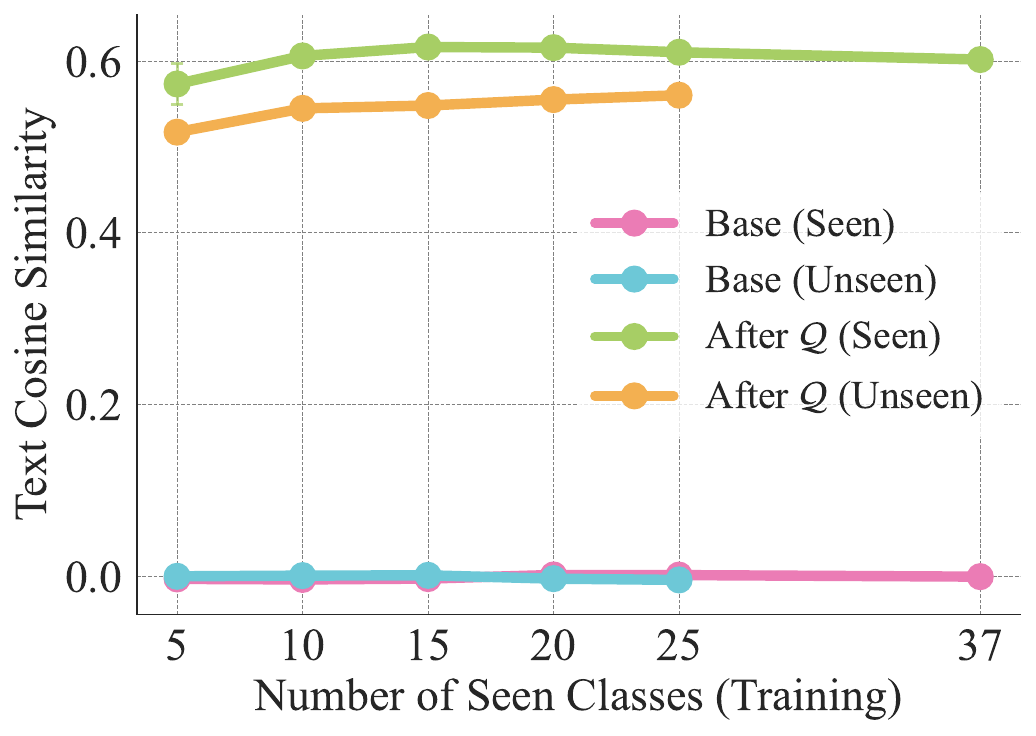}
    \caption{}
    \end{subfigure}
    \begin{subfigure}{0.32\textwidth}
    \centering
    \includegraphics[width=\textwidth]{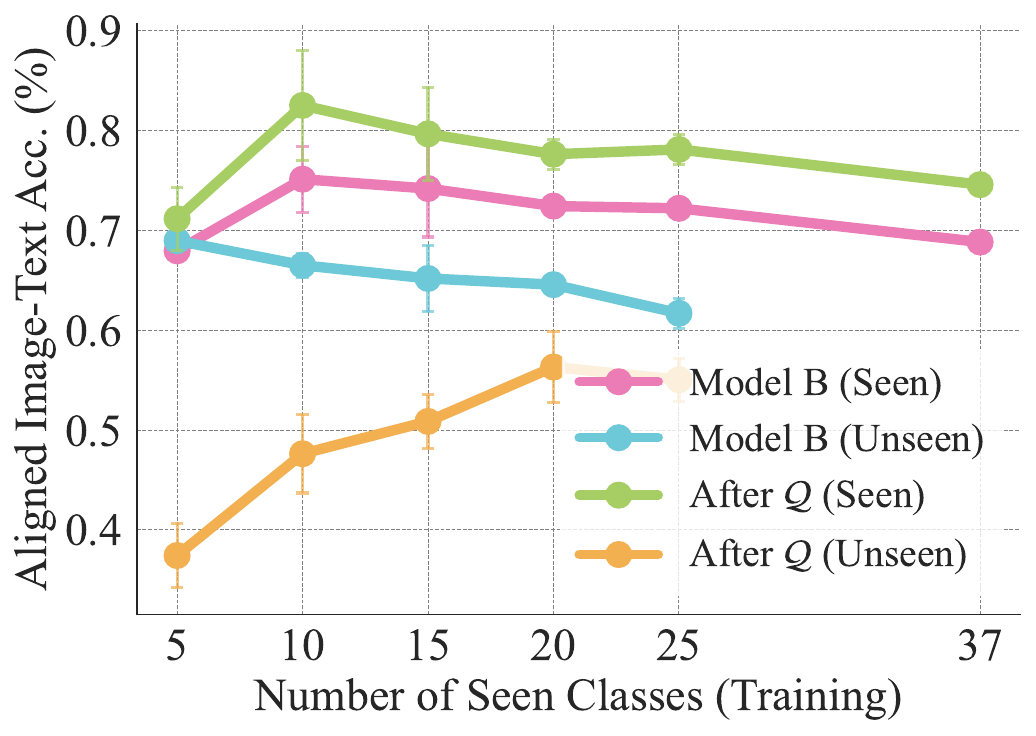}
    \caption{}
    \end{subfigure}
    \begin{subfigure}{0.32\textwidth}
    \centering
    \includegraphics[width=\textwidth]{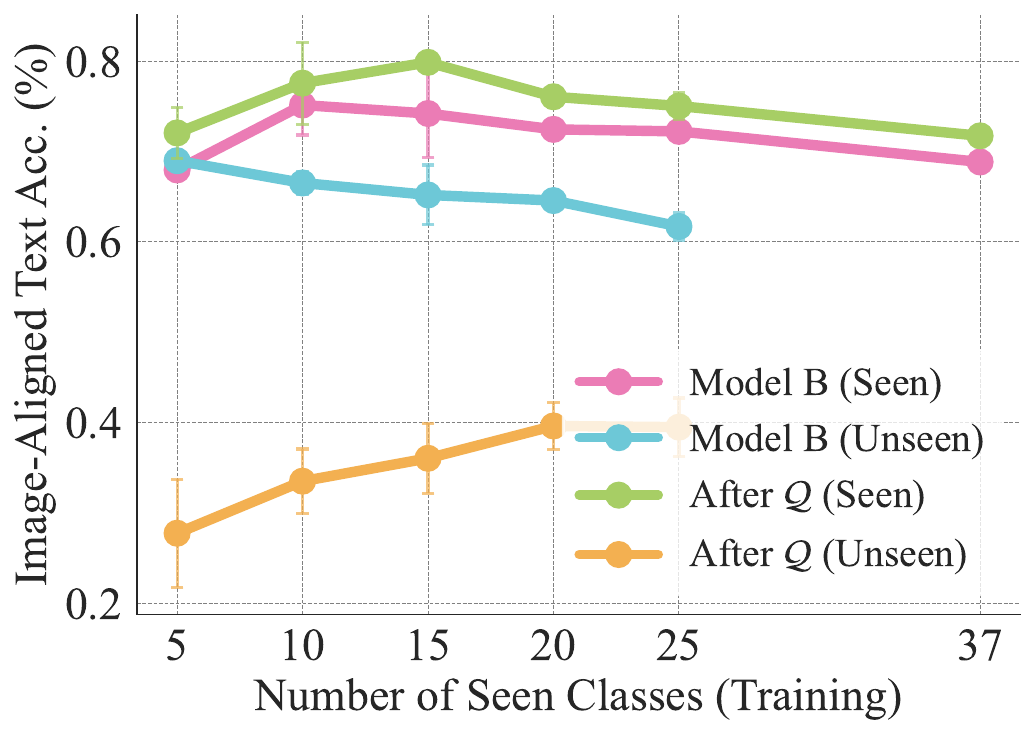}
    \caption{}
    \end{subfigure}
    \begin{subfigure}{0.32\textwidth}
    \centering
    \includegraphics[width=\textwidth]{arxiv_figs/appendix/oxford_A=ViT-L-14-openai__B=flava-facebook_flava-full_4_aligned_img_to_aligned_text.pdf}
    \caption{}
    \end{subfigure}
    \caption{\textit{Generalization of orthogonal alignment under limited supervision (Oxford Pets; CLIP ViT-L/14 (OpenAI) to FLAVA} (a) Image-image class retrieval, (b) text-text class retrieval, and (c) mean text-text cosine similarity, each reported on seen and unseen classes. (d-f) Downstream transfer: (d) aligned images from model A with text from model B, (e) images from model B with aligned text from model A, and (f) aligned images with aligned text from model A. $\mathcal R$ learned from few classes transfers across modalities and generalizes to unseen classes, achieving near-oracle cross-model retrieval and classification.\looseness=-1}
\label{fig:app_split_oxford_openai_to_flava}
\end{figure*}

\begin{figure*}[!htb]
    \centering
    \begin{subfigure}{0.32\textwidth}
    \centering
    \includegraphics[width=\textwidth]{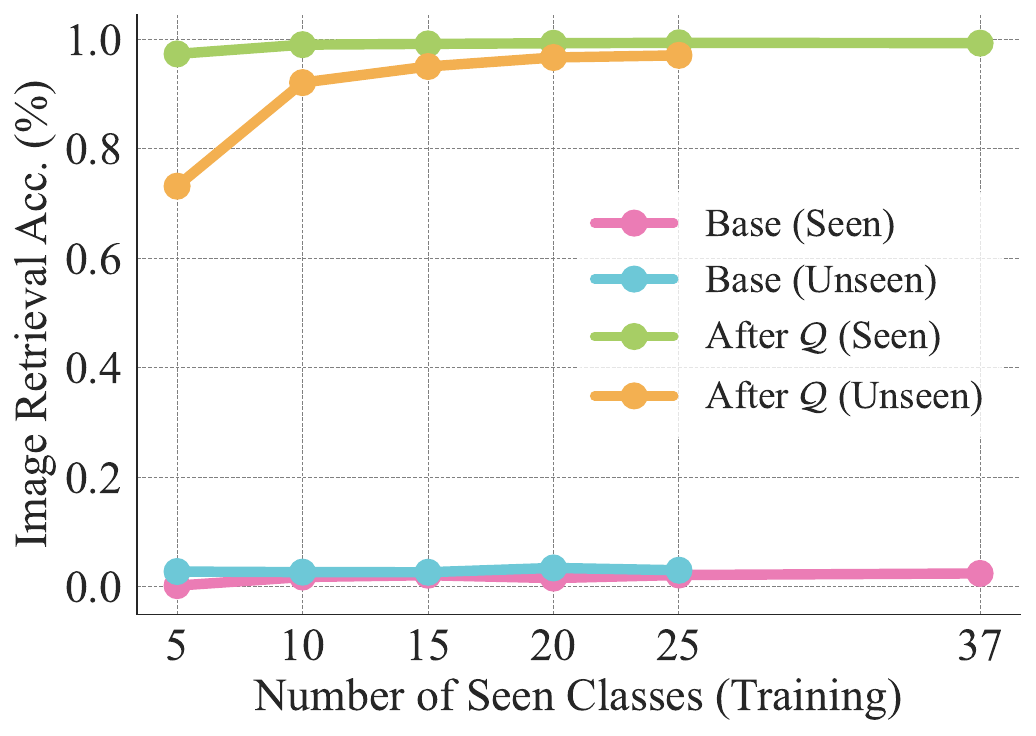}
    \caption{}
    \end{subfigure}
    \begin{subfigure}{0.32\textwidth}
    \centering
    \includegraphics[width=\textwidth]{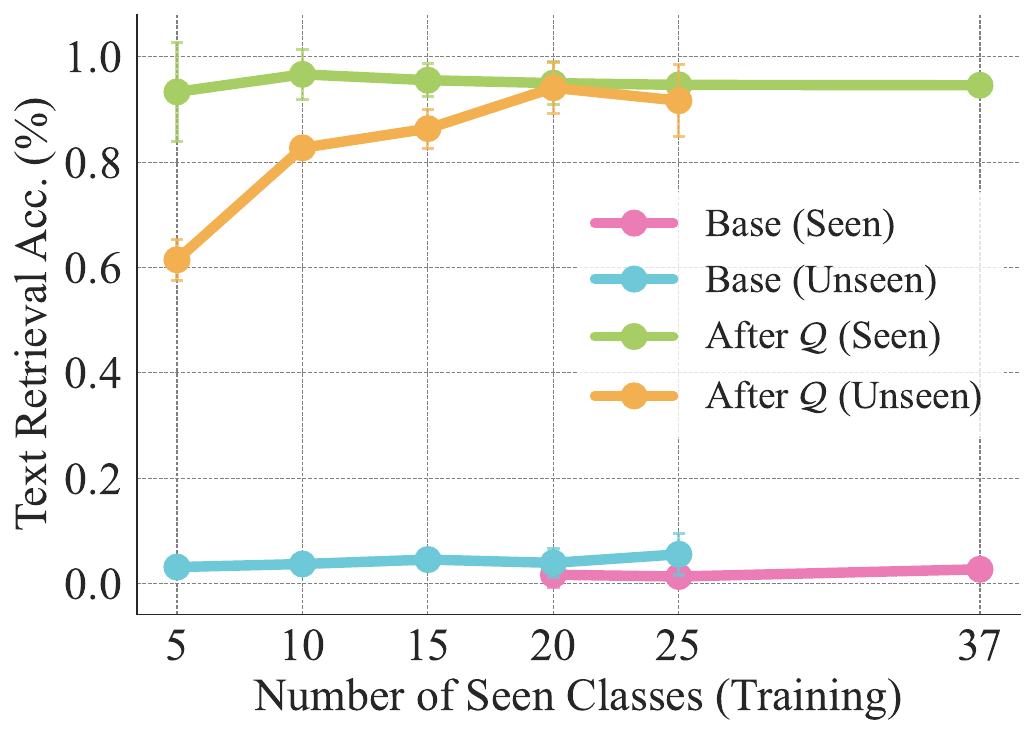}
    \caption{}
    \end{subfigure}
    \begin{subfigure}{0.32\textwidth}
    \centering
    \includegraphics[width=\textwidth]{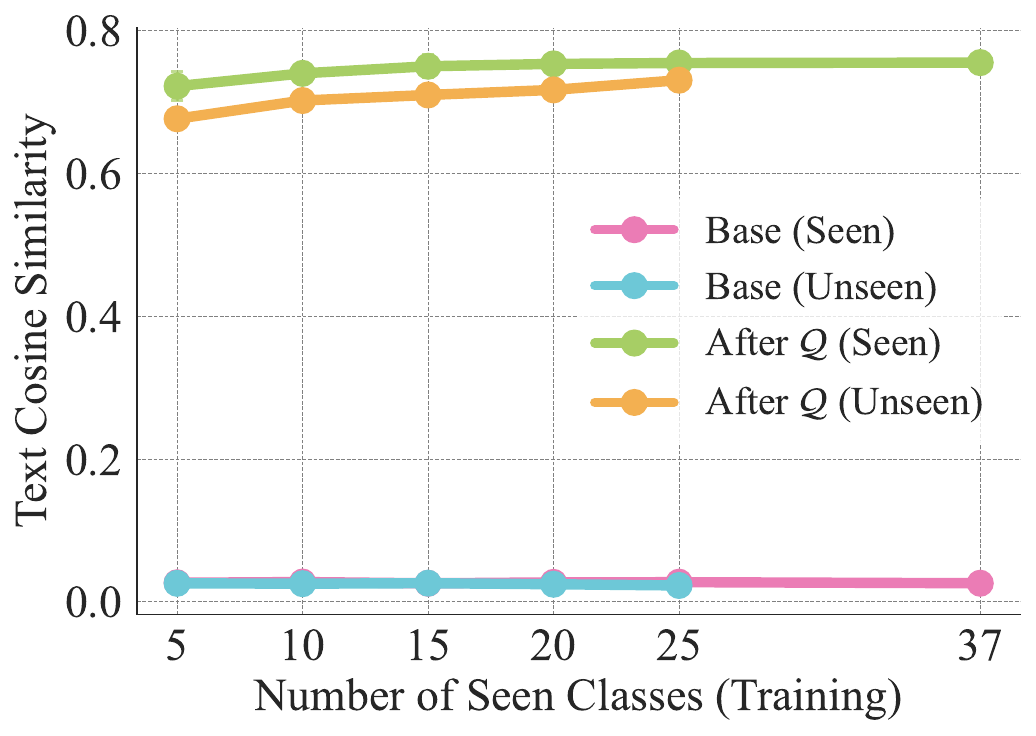}
    \caption{}
    \end{subfigure}
    \begin{subfigure}{0.32\textwidth}
    \centering
    \includegraphics[width=\textwidth]{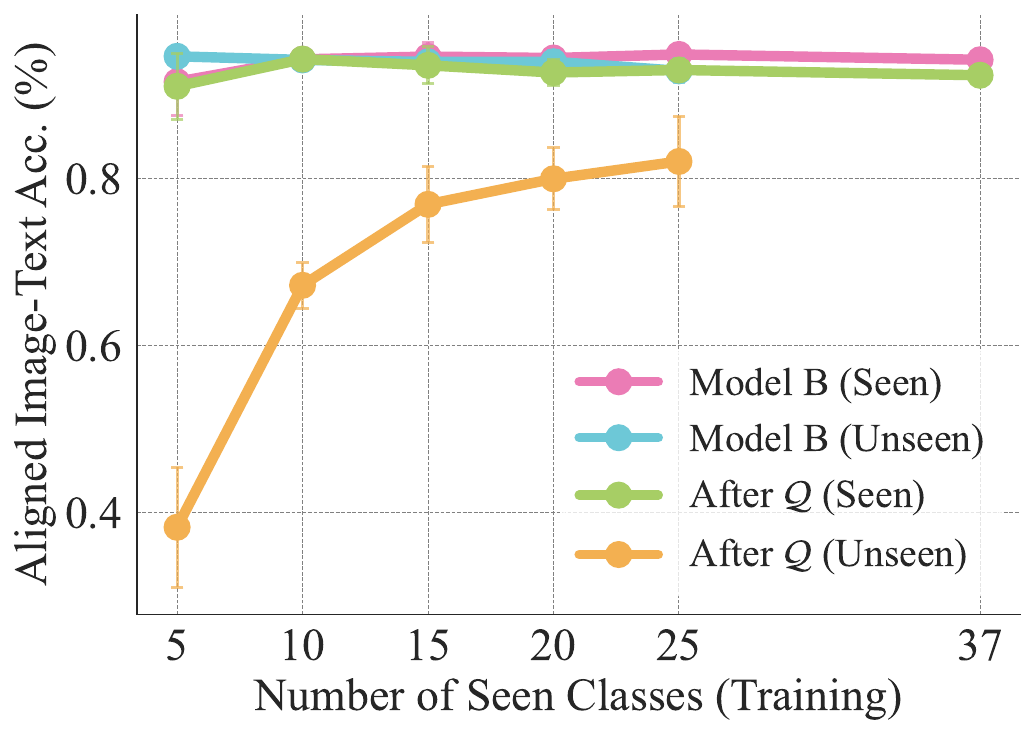}
    \caption{}
    \end{subfigure}
    \begin{subfigure}{0.32\textwidth}
    \centering
    \includegraphics[width=\textwidth]{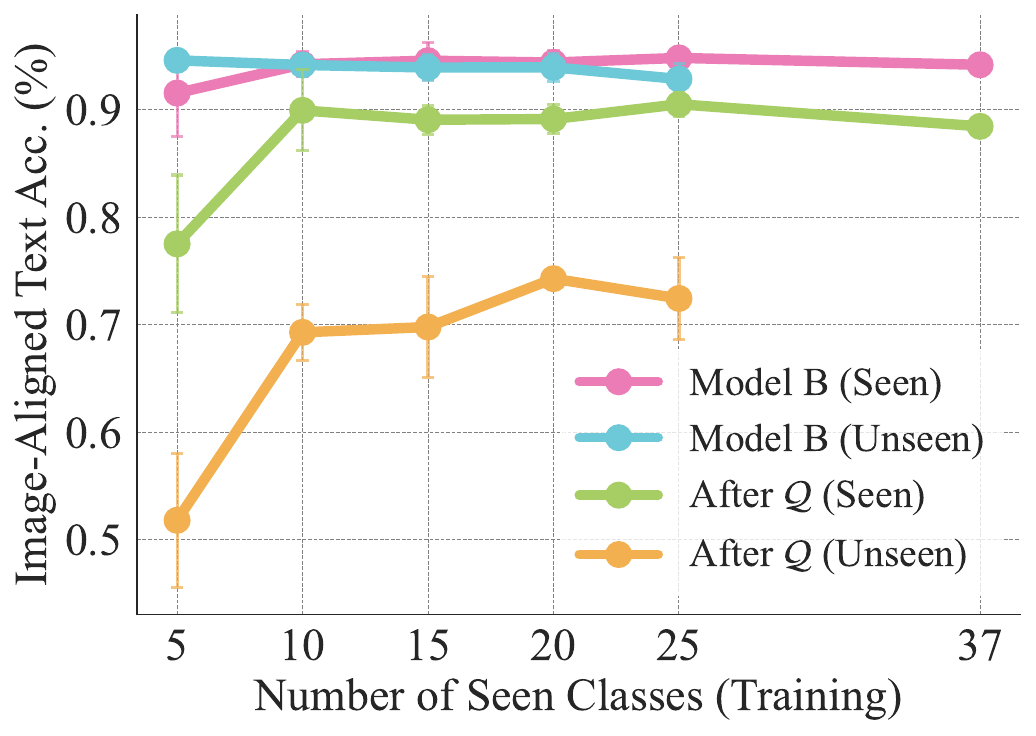}
    \caption{}
    \end{subfigure}
    \begin{subfigure}{0.32\textwidth}
    \centering
    \includegraphics[width=\textwidth]{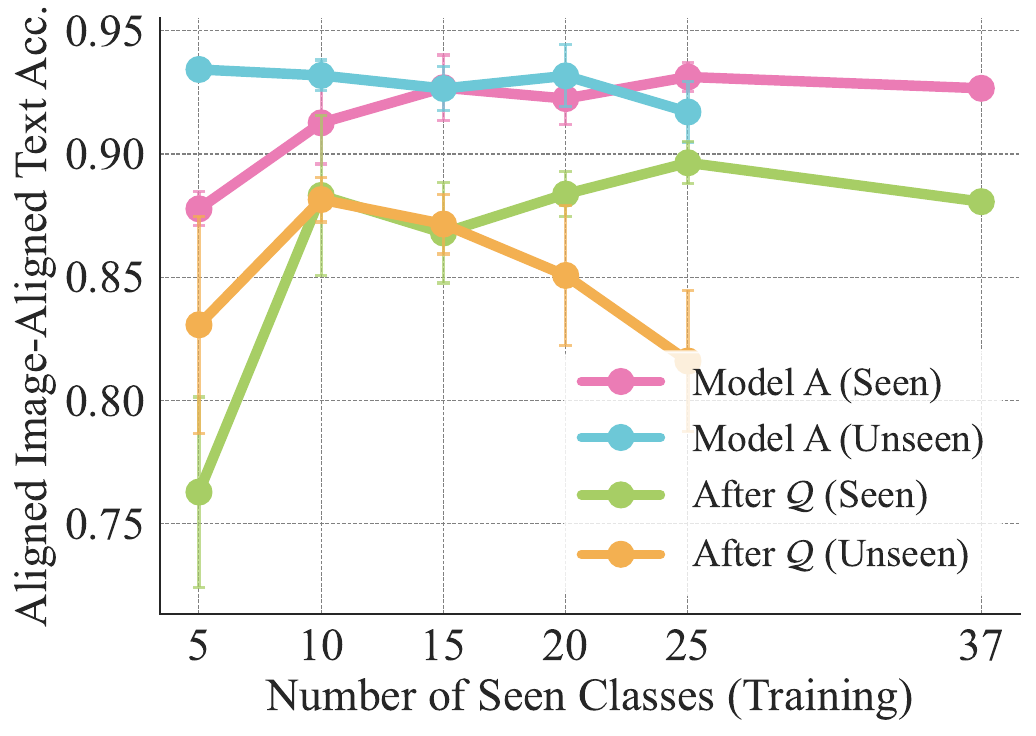}
    \caption{}
    \end{subfigure}
    \caption{\textit{Generalization of orthogonal alignment under limited supervision (Oxford Pets; CLIP ViT-L/14 (OpenAI) to SigLIP.} (a) Image-image class retrieval, (b) text-text class retrieval, and (c) mean text-text cosine similarity, each reported on seen and unseen classes. (d-f) Downstream transfer: (d) aligned images from model A with text from model B, (e) images from model B with aligned text from model A, and (f) aligned images with aligned text from model A. $\mathcal R$ learned from few classes transfers across modalities and generalizes to unseen classes, achieving near-oracle cross-model retrieval and classification.\looseness=-1}
\label{fig:app_split_oxford_openai_to_siglip}
\end{figure*}

\FloatBarrier

\subsection{The Learned Orthogonal Map Generalizes Broadly}\label{sec:app_cross_dataset}
The previous experiment shows that $\mathcal Q$ is identifiable from few anchors within a dataset. We next ask whether the same $\mathcal Q$ depends on the downstream distribution used to estimate it. We fit $\mathcal Q$ using paired images from one dataset (Oxford Pets) and evaluate on a different dataset (Caltech-101). When transferring $\mathcal Q$ to Caltech-101, we keep $\mathcal Q$ fixed and re-center using the modality-specific means computed on Caltech-101’s training split (for both source and target models), i.e., we apply $z \mapsto \mathcal Q(z- \mu_{\text{Caltech}}^{(\cdot)}) + \tilde{\mu}_{\text{Caltech}}^{(\cdot)}$, to account for cross-dataset mean shift. 

Results for transfer from \textsc{CLIP} (OpenAI) to \textsc{CLIP} (OpenAI), \textsc{CLIP} (LAION), \textsc{FLAVA}, and \textsc{SigLIP} are shown in~\Cref{fig:app_cross_dataset_B32_B16,fig:app_cross_dataset_B32_OpenAI_LAION,fig:app_cross_L14_FLAVA,fig:app_cross_L14_SigLIP} respectively. Across all figures, the orthogonal map learned on images from one dataset transfers to the text from another dataset, as shown by strong text-text cosine similarity (subplot c). Further, downstream image-text retrieval metrics (subplots d,e,f) remain strong under transfer---often closely matching
or even exceeding an in-domain fit---indicating that Q generalizes beyond the calibration dataset.

\begin{figure*}[!htb]
    \centering
    \begin{minipage}{0.85\linewidth} 
    \centering
    \begin{subfigure}{0.32\textwidth}
    \centering
    \includegraphics[width=\textwidth]{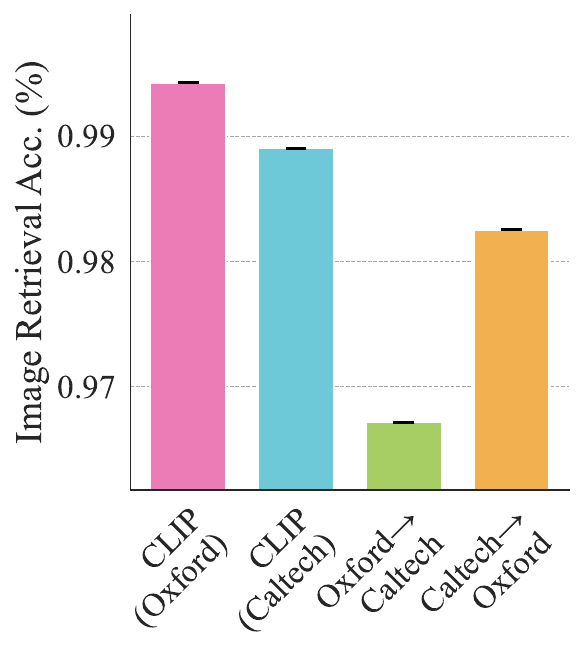}
    \caption{}
    \end{subfigure}
    \begin{subfigure}{0.32\textwidth}
    \centering
    \includegraphics[width=\textwidth]{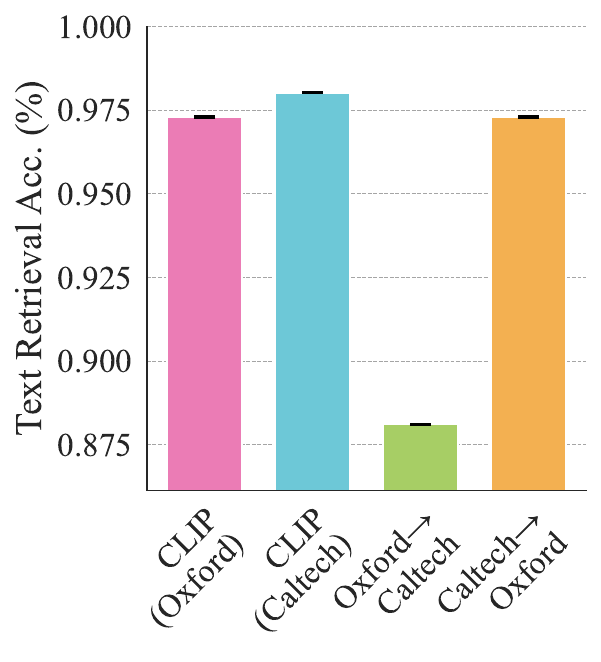}
    \caption{}
    \end{subfigure}
    \begin{subfigure}{0.32\textwidth}
    \centering
    \includegraphics[width=\textwidth]{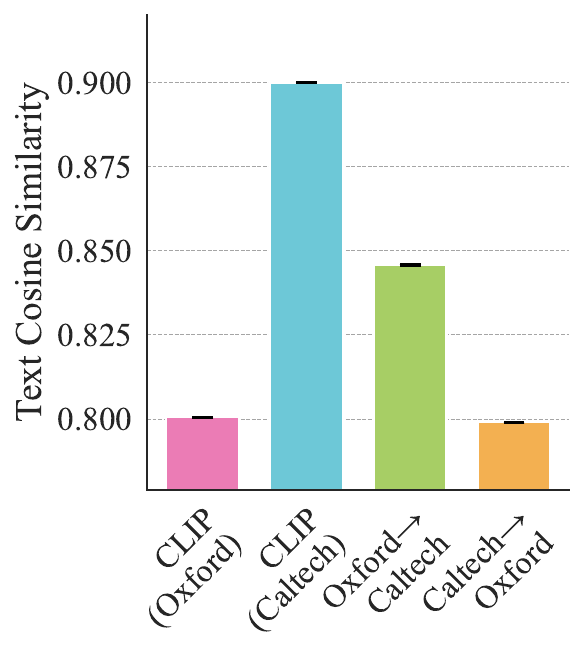}
    \caption{}
    \end{subfigure}
    \begin{subfigure}{0.32\textwidth}
    \centering
    \includegraphics[width=\textwidth]{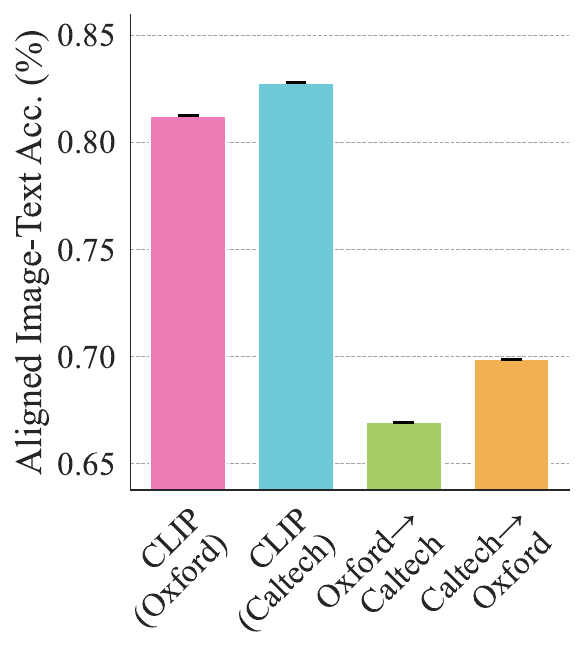}
    \caption{}
    \end{subfigure}
    \begin{subfigure}{0.32\textwidth}
    \centering
    \includegraphics[width=\textwidth]{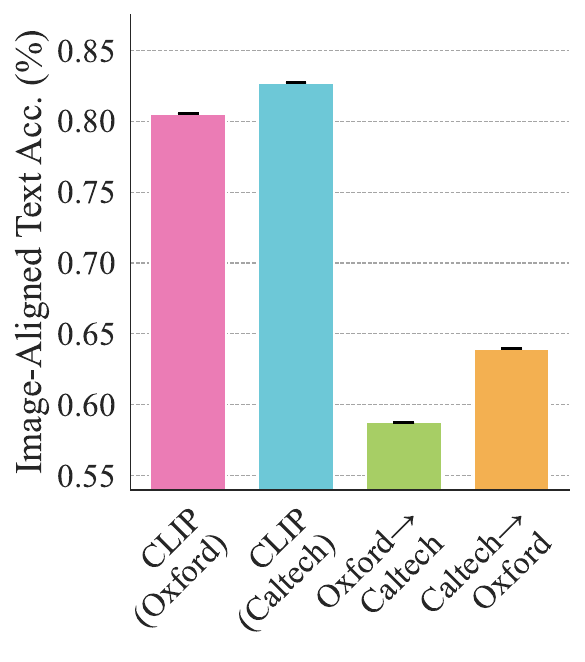}
    \caption{}
    \end{subfigure}
    \begin{subfigure}{0.32\textwidth}
    \centering
    \includegraphics[width=\textwidth]{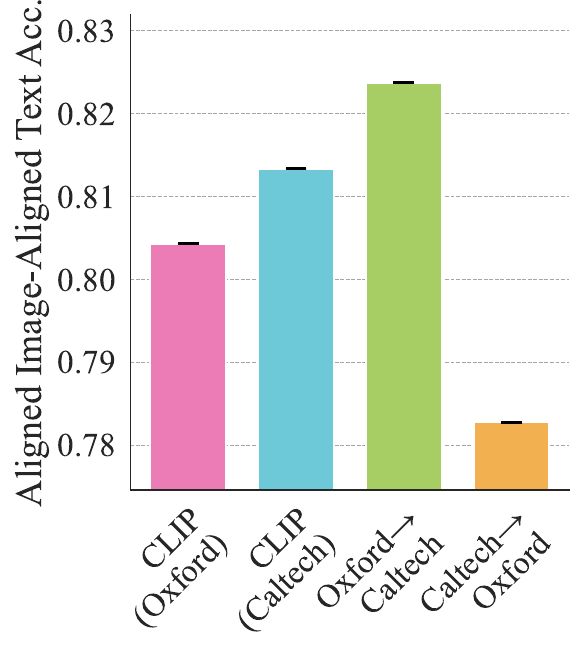}
    \caption{}
    \end{subfigure}
    \end{minipage}
    \caption{\textit{Transfer of $\mathcal Q$ learnt on Oxford Pets to Caltech-101 and vice-versa across CLIP ViT-B/32 and ViT-B/16 (OpenAI).} (a) Image–image class retrieval and (b) text–text class retrieval (c) Mean text–text cosine similarity. (d) Image–text retrieval using aligned images from model A and text from model B. (e) Image–text retrieval using images from model B and aligned text from model A. (f) Image–text retrieval using aligned images and aligned text from model A. }
    \label{fig:app_cross_dataset_B32_B16}
\end{figure*}

\begin{figure*}[!htb]
    \centering
    \begin{minipage}{0.85\linewidth}  
    \centering
    \begin{subfigure}{0.32\textwidth}
    \centering
    \includegraphics[width=\textwidth]{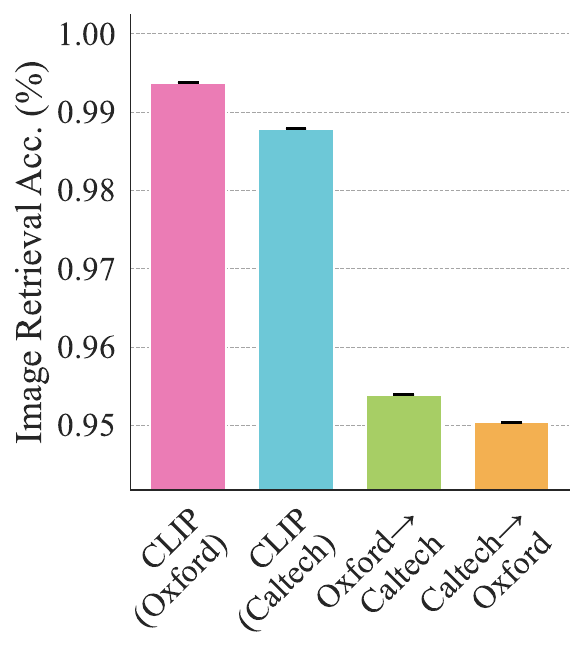}
    \caption{}
    \end{subfigure}
    \begin{subfigure}{0.32\textwidth}
    \centering
    \includegraphics[width=\textwidth]{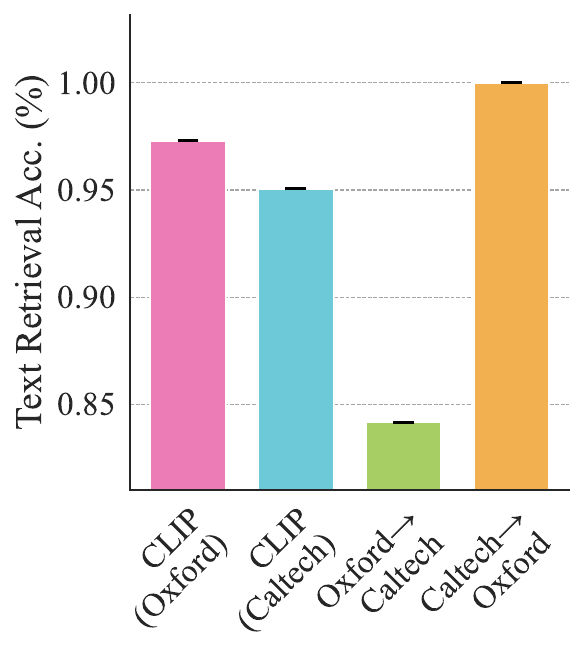}
    \caption{}
    \end{subfigure}
    \begin{subfigure}{0.32\textwidth}
    \centering
    \includegraphics[width=\textwidth]{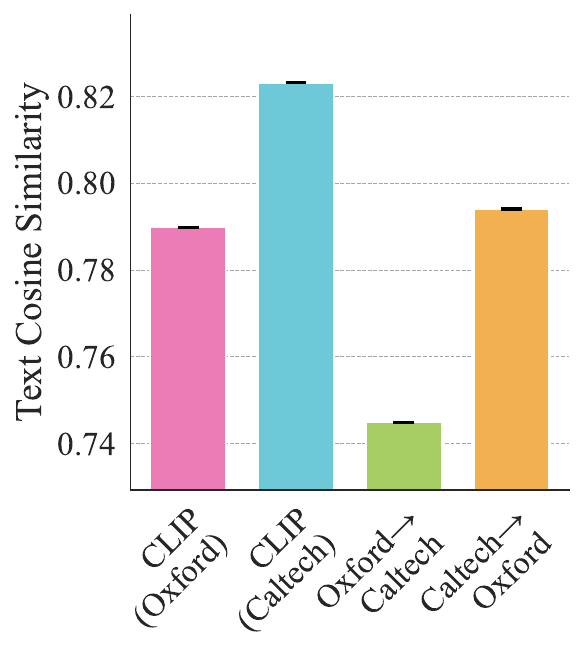}
    \caption{}
    \end{subfigure}
    \begin{subfigure}{0.32\textwidth}
    \centering
    \includegraphics[width=\textwidth]{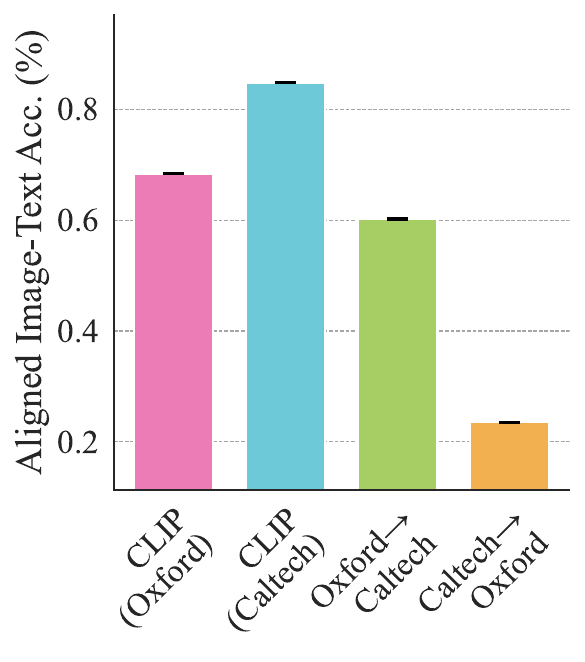}
    \caption{}
    \end{subfigure}
    \begin{subfigure}{0.32\textwidth}
    \centering
    \includegraphics[width=\textwidth]{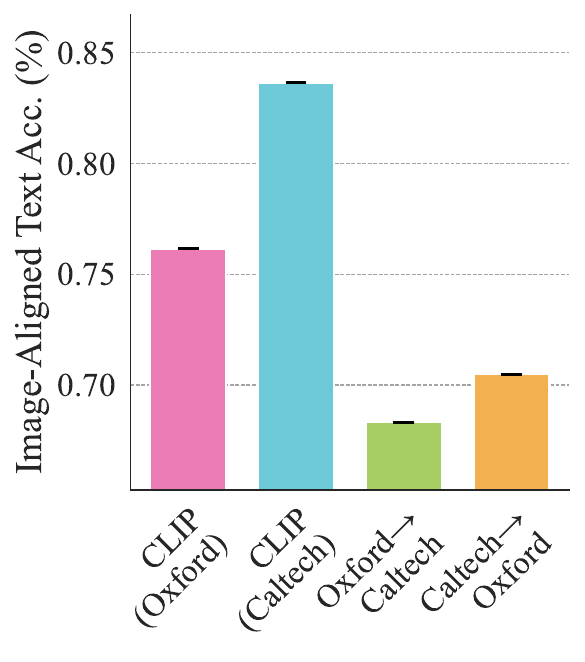}
    \caption{}
    \end{subfigure}
    \begin{subfigure}{0.32\textwidth}
    \centering
    \includegraphics[width=\textwidth]{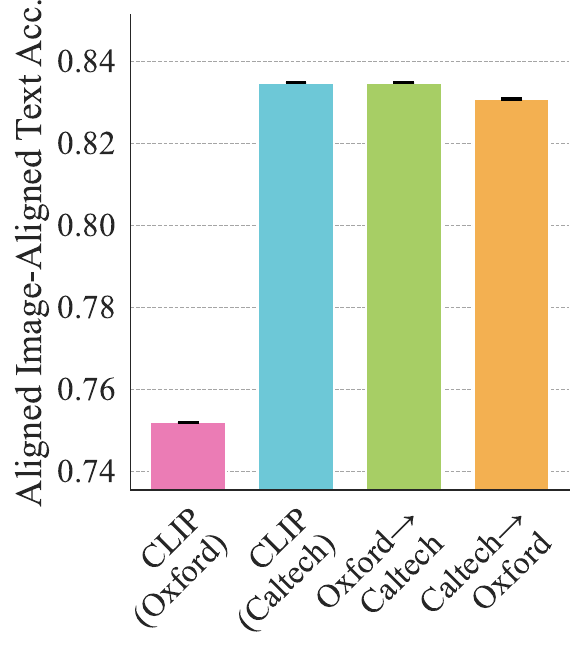}
    \caption{}
    \end{subfigure}
    \end{minipage}
    \caption{\textit{Transfer of $\mathcal Q$ learnt on Oxford Pets to Caltech-101 and vice-versa across CLIP ViT-B/32 (OpenAI) and ViT-B/32 (LAION)} (a) Image–image class retrieval and (b) text–text class retrieval (c) Mean text–text cosine similarity. (d) Image–text retrieval using aligned images from model A and text from model B. (e) Image–text retrieval using images from model B and aligned text from model A. (f) Image–text retrieval using aligned images and aligned text from model A.}
    \label{fig:app_cross_dataset_B32_OpenAI_LAION}
\end{figure*}

\begin{figure*}[!htb]
    \centering
    \begin{minipage}{0.85\linewidth}  
    \centering
    \begin{subfigure}{0.32\textwidth}
    \centering
    \includegraphics[width=\textwidth]{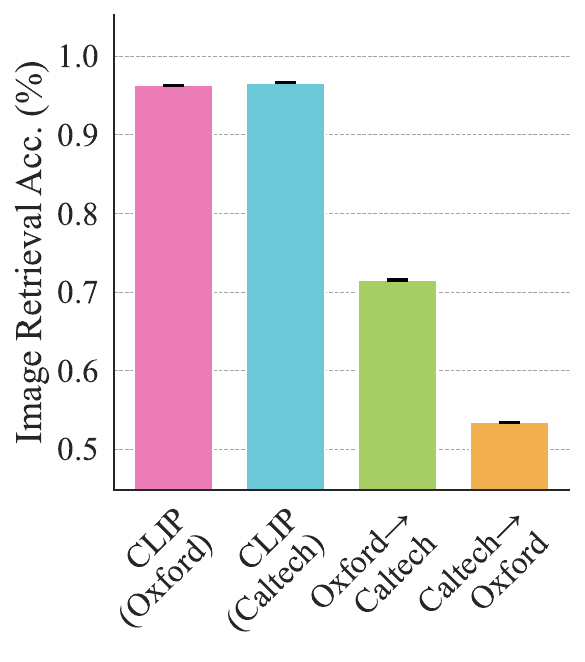}
    \caption{}
    \end{subfigure}
    \begin{subfigure}{0.32\textwidth}
    \centering
    \includegraphics[width=\textwidth]{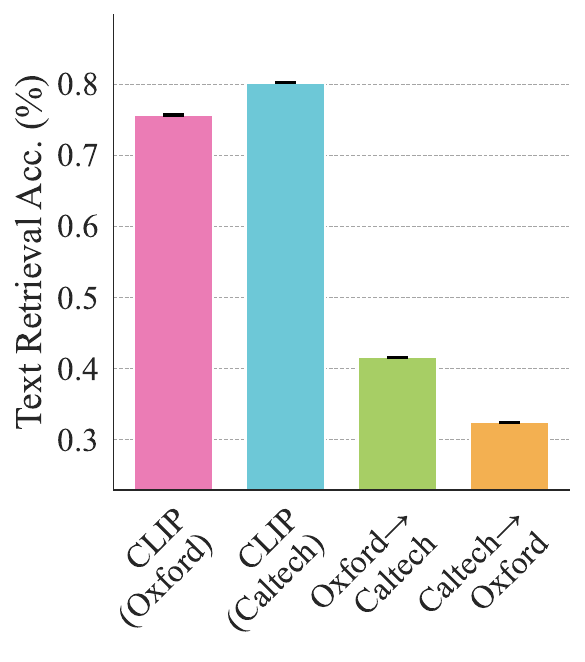}
    \caption{}
    \end{subfigure}
    \begin{subfigure}{0.32\textwidth}
    \centering
    \includegraphics[width=\textwidth]{arxiv_figs/appendix/cross_A=ViT-L-14-openai__B=flava-facebook_flava-full_cosine_text_prototypes.pdf}
    \caption{}
    \end{subfigure}
    \begin{subfigure}{0.32\textwidth}
    \centering
    \includegraphics[width=\textwidth]{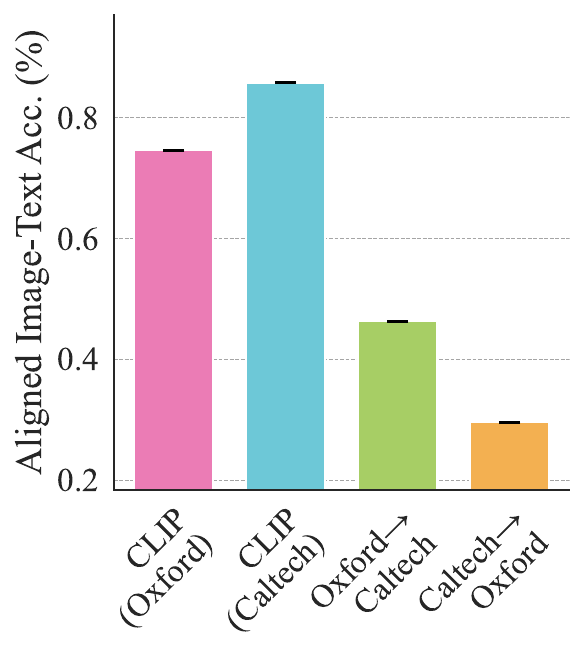}
    \caption{}
    \end{subfigure}
    \begin{subfigure}{0.32\textwidth}
    \centering
    \includegraphics[width=\textwidth]{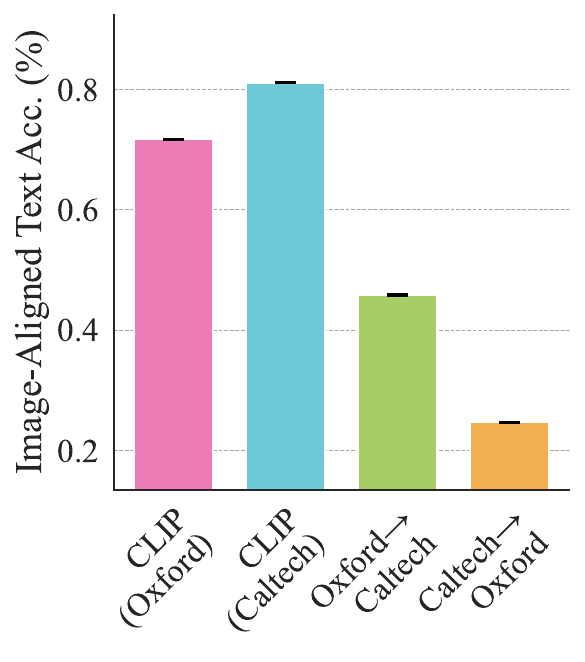}
    \caption{}
    \end{subfigure}
    \begin{subfigure}{0.32\textwidth}
    \centering
    \includegraphics[width=\textwidth]{arxiv_figs/appendix/cross_A=ViT-L-14-openai__B=flava-facebook_flava-full_aligned_img_to_aligned_text.pdf}
    \caption{}
    \end{subfigure}
    \end{minipage}
    \caption{\textit{Transfer of $\mathcal Q$ learnt on Oxford Pets to Caltech-101 and vice-versa across CLIP ViT-L/14 (OpenAI) and FLAVA} (a) Image–image class retrieval and (b) text–text class retrieval (c) Mean text–text cosine similarity. (d) Image–text retrieval using aligned images from model A and text from model B. (e) Image–text retrieval using images from model B and aligned text from model A. (f) Image–text retrieval using aligned images and aligned text from model A.}
    \label{fig:app_cross_L14_FLAVA}
\end{figure*}

\begin{figure*}[!htb]
    \centering
    \begin{minipage}{0.85\linewidth}  
    \centering
    \begin{subfigure}{0.32\textwidth}
    \centering
    \includegraphics[width=\textwidth]{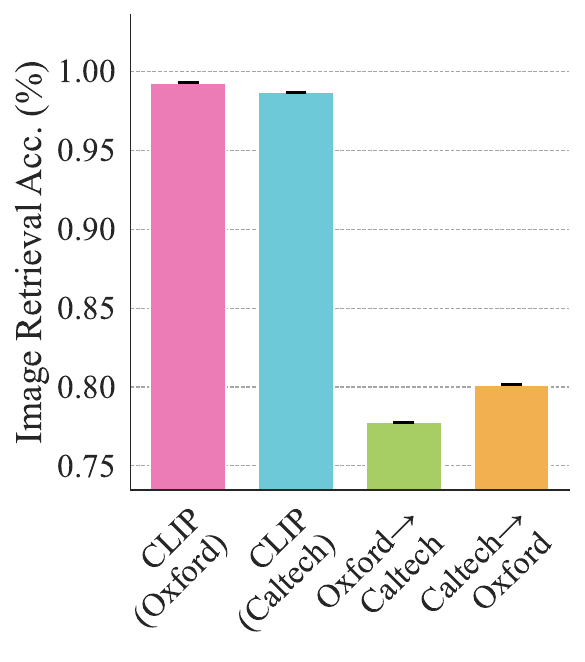}
    \caption{}
    \end{subfigure}
    \begin{subfigure}{0.32\textwidth}
    \centering
    \includegraphics[width=\textwidth]{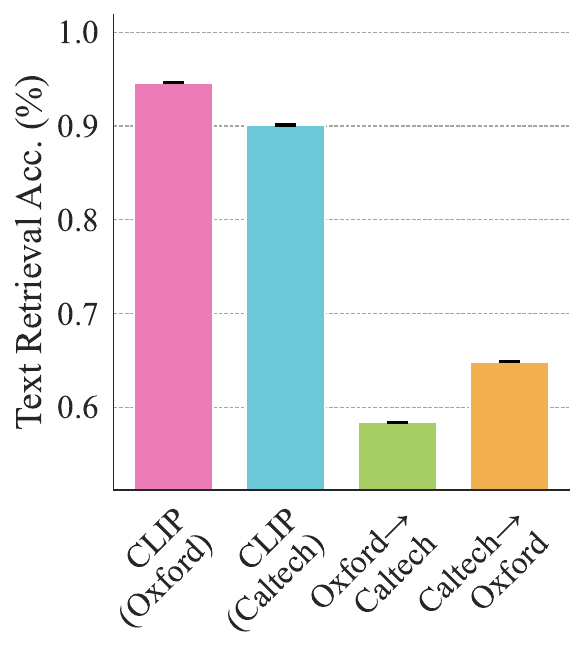}
    \caption{}
    \end{subfigure}
    \begin{subfigure}{0.32\textwidth}
    \centering
    \includegraphics[width=\textwidth]{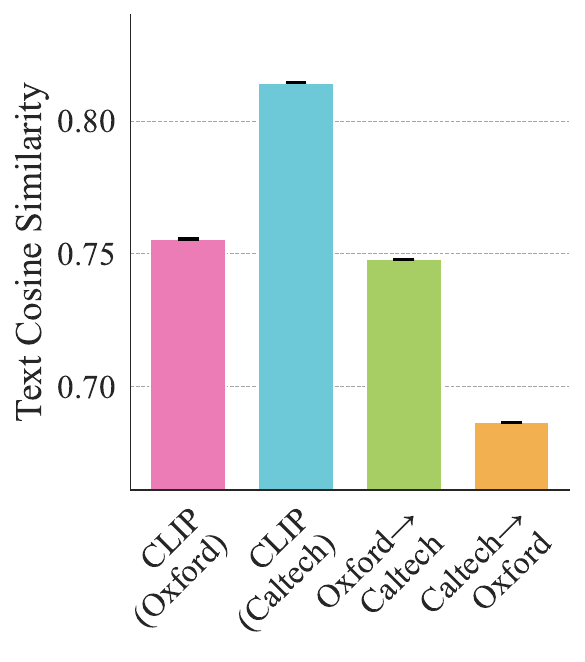}
    \caption{}
    \end{subfigure}
    \begin{subfigure}{0.32\textwidth}
    \centering
    \includegraphics[width=\textwidth]{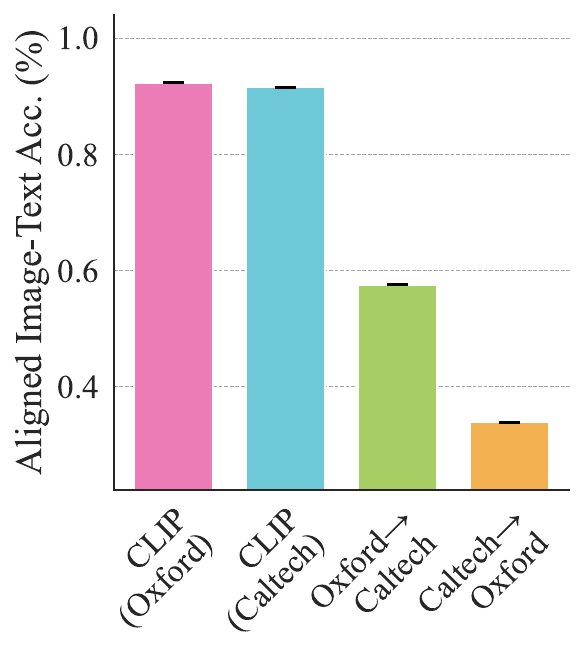}
    \caption{}
    \end{subfigure}
    \begin{subfigure}{0.32\textwidth}
    \centering
    \includegraphics[width=\textwidth]{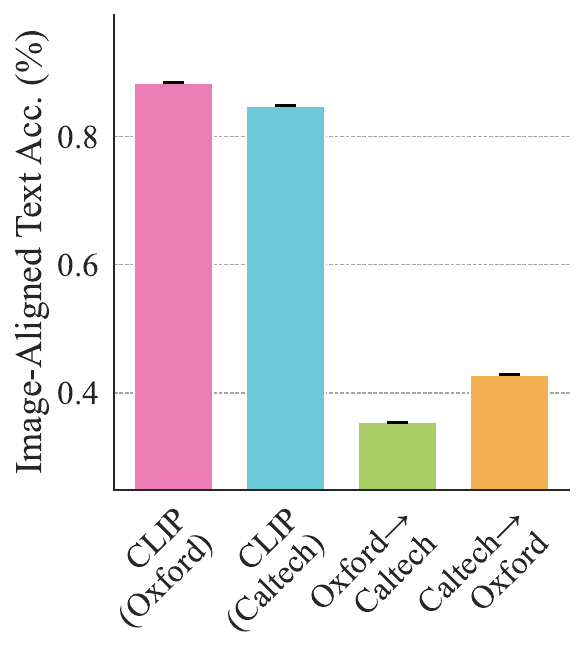}
    \caption{}
    \end{subfigure}
    \begin{subfigure}{0.32\textwidth}
    \centering
    \includegraphics[width=\textwidth]{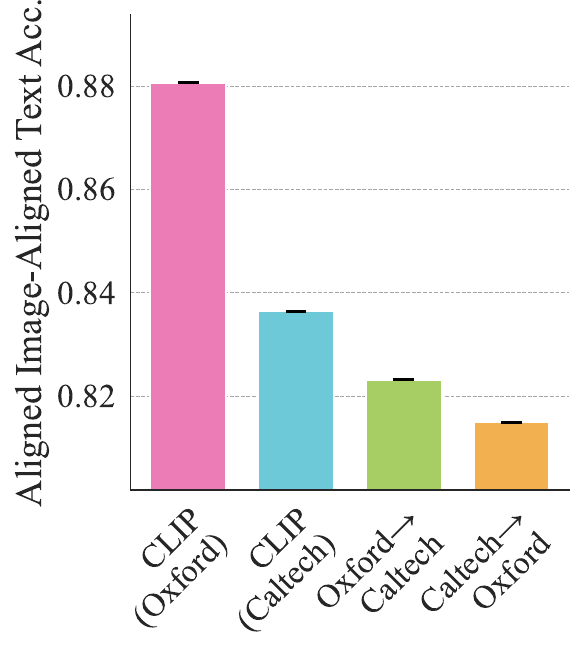}
    \caption{}
    \end{subfigure}
    \end{minipage}
    \caption{\textit{Transfer of $\mathcal Q$ learnt on Oxford Pets to Caltech-101 and vice-versa across CLIP ViT-L/14 (OpenAI) and SigLIP.} (a) Image–image class retrieval and (b) text–text class retrieval (c) Mean text–text cosine similarity. (d) Image–text retrieval using aligned images from model A and text from model B. (e) Image–text retrieval using images from model B and aligned text from model A. (f) Image–text retrieval using aligned images and aligned text from model A.}
    \label{fig:app_cross_L14_SigLIP}
\end{figure*}

\FloatBarrier

\subsection{Learning the Orthogonal Map from Text Instead of
Images Transfers to Images}\label{sec:app_text_trained}
So far, we fit $\mathcal Q$ using paired images and evaluated whether it transfers to the text modality. We now ask the converse: can we fit $\mathcal Q$ using only text and recover the same cross-model transform that governs images?~\Cref{fig:app_text_trained_caltech,fig:app_text_trained_cifar,fig:app_text_trained_oxford}, shows these results for Caltech-101, CIFAR-100 and Oxford Pets, respectively. As observed from these figures, across datasets and model pairs, this text-trained map substantially improves downstream task accuracy after $\mathcal Q$ (image-text retrieval), recovering a large fraction of the stronger model’s performance. However, compared to image-trained maps, text-trained maps yield weaker visual transfer (image-image retrieval and paired-image cosine), indicating that dense image supervision better constrains the shared orthogonal map than class-level text anchors.

\begin{figure*}[!htb]
    \centering
    \begin{subfigure}{0.32\textwidth}
    \centering
    \includegraphics[width=\textwidth]{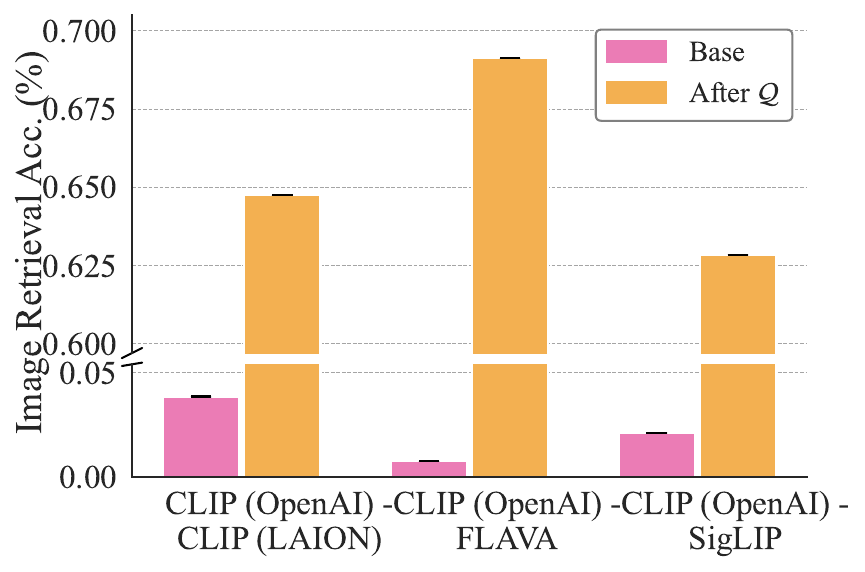}
    \caption{}
    \end{subfigure}
    \begin{subfigure}{0.32\textwidth}
    \centering
    \includegraphics[width=\textwidth]{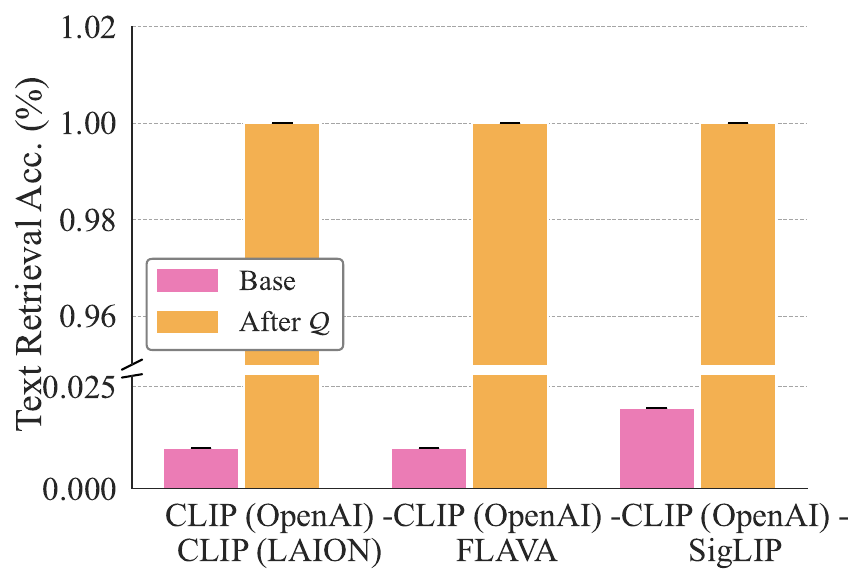}
    \caption{}
    \end{subfigure}
    \begin{subfigure}{0.32\textwidth}
    \centering
    \includegraphics[width=\textwidth]{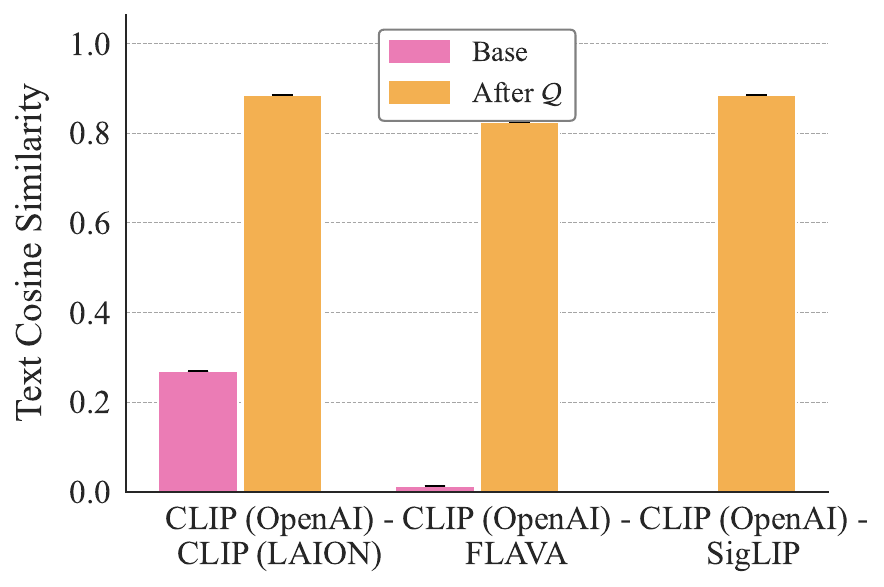}
    \caption{}
    \end{subfigure}
    \begin{subfigure}{0.32\textwidth}
    \centering
    \includegraphics[width=\textwidth]{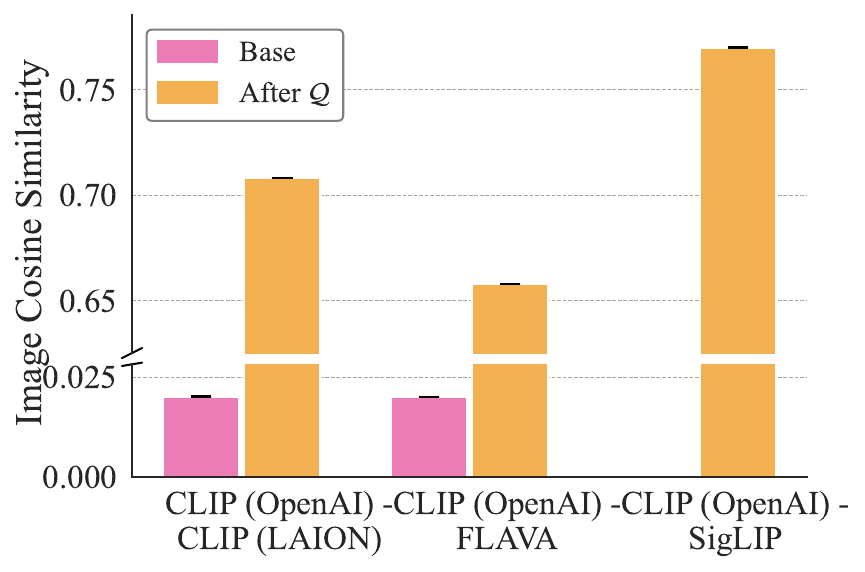}
    \caption{}
    \end{subfigure}
    \begin{subfigure}{0.32\textwidth}
    \centering
    \includegraphics[width=\textwidth]{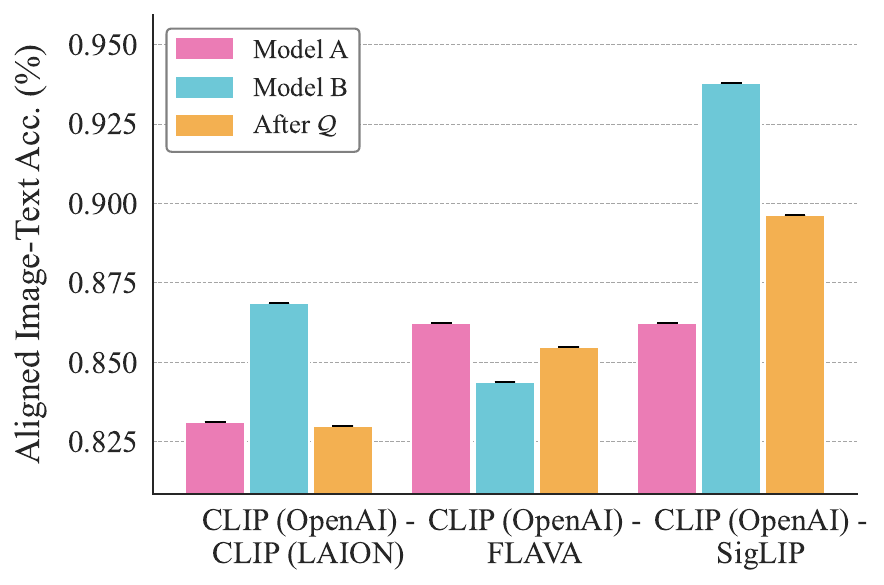}
    \caption{}
    \end{subfigure}
    \begin{subfigure}{0.32\textwidth}
    \centering
    \includegraphics[width=\textwidth]{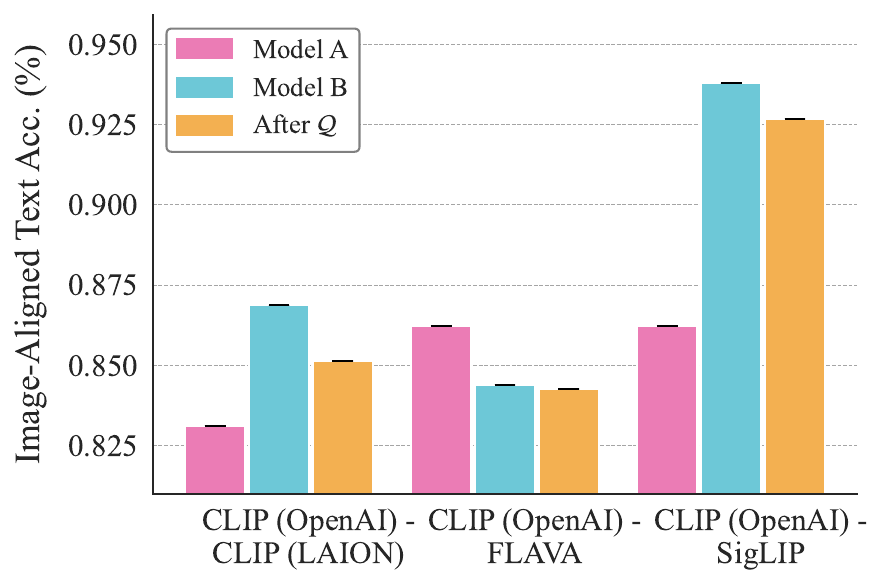}
    \caption{}
    \end{subfigure}
    \begin{subfigure}{0.32\textwidth}
    \centering
    \includegraphics[width=\textwidth]{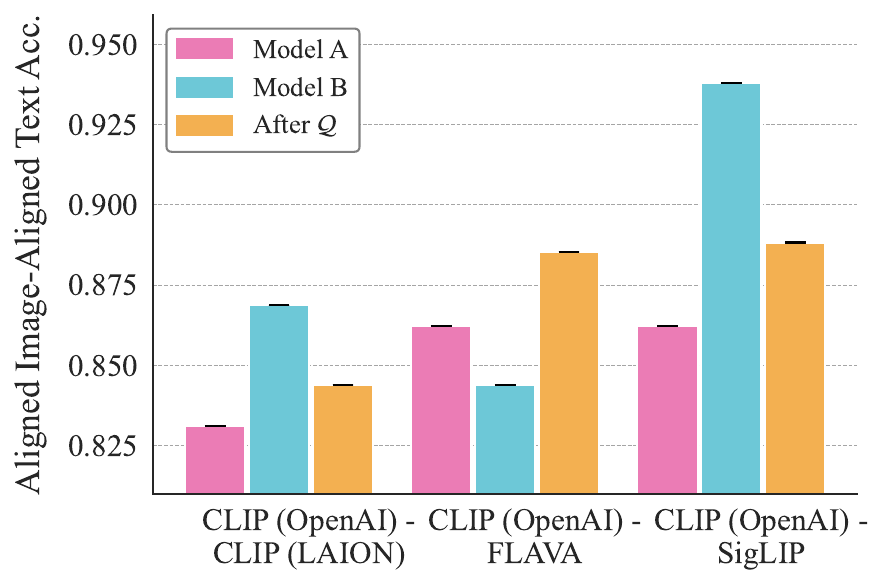}
    \caption{}
    \end{subfigure}
     \caption{\textit{Cross-model alignment on Caltech-101 learned from text prototypes instead of images, before and after fitting a single orthogonal map $\mathcal{Q}$.} (a) Image–image class retrieval and (b) text–text class retrieval (c) Mean text-text cosine similarity. (d) Mean image-image cosine similarity. (e) Image–text retrieval using aligned images from model A and text from model B. (f) Image–text retrieval using images from model B and aligned text from model A. (g) Image–text retrieval using aligned images and aligned text from model A.}
    \label{fig:app_text_trained_caltech}
\end{figure*}

\begin{figure*}[!htb]
    \centering
    \begin{subfigure}{0.32\textwidth}
    \centering
    \includegraphics[width=\textwidth]{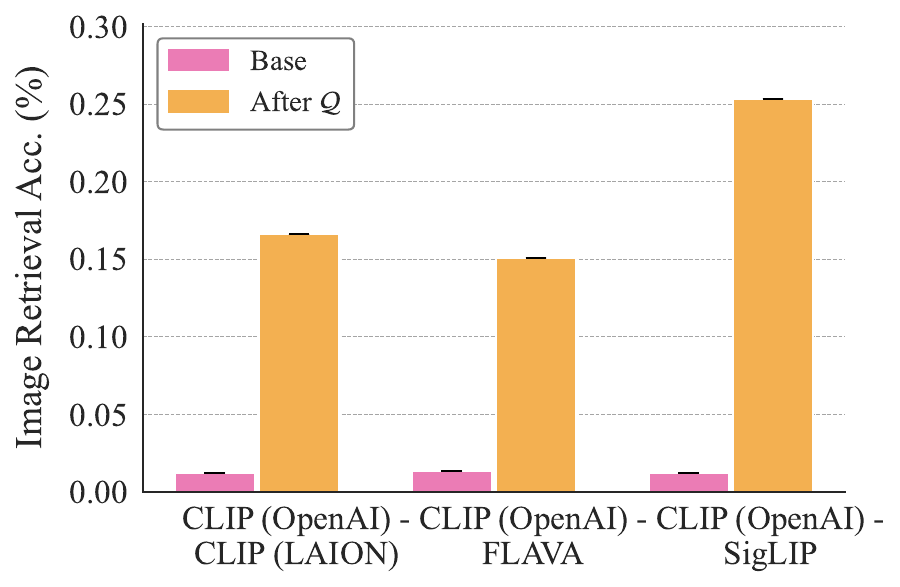}
    \caption{}
    \end{subfigure}
    \begin{subfigure}{0.32\textwidth}
    \centering
    \includegraphics[width=\textwidth]{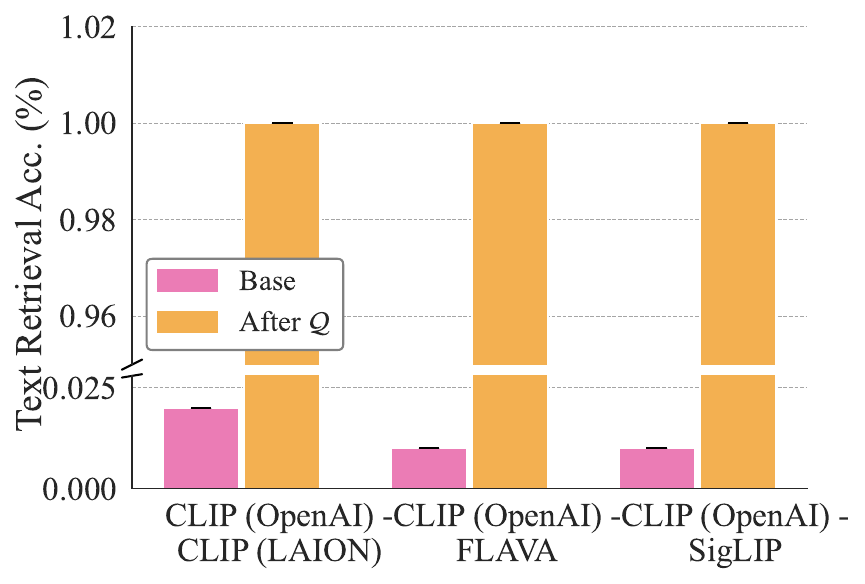}
    \caption{}
    \end{subfigure}
    \begin{subfigure}{0.32\textwidth}
    \centering
    \includegraphics[width=\textwidth]{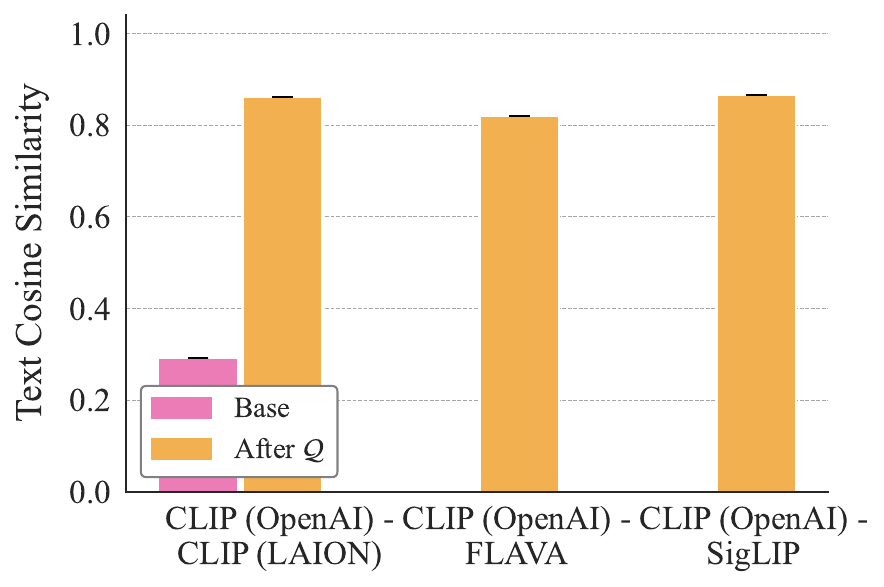}
    \caption{}
    \end{subfigure}
    \begin{subfigure}{0.32\textwidth}
    \centering
    \includegraphics[width=\textwidth]{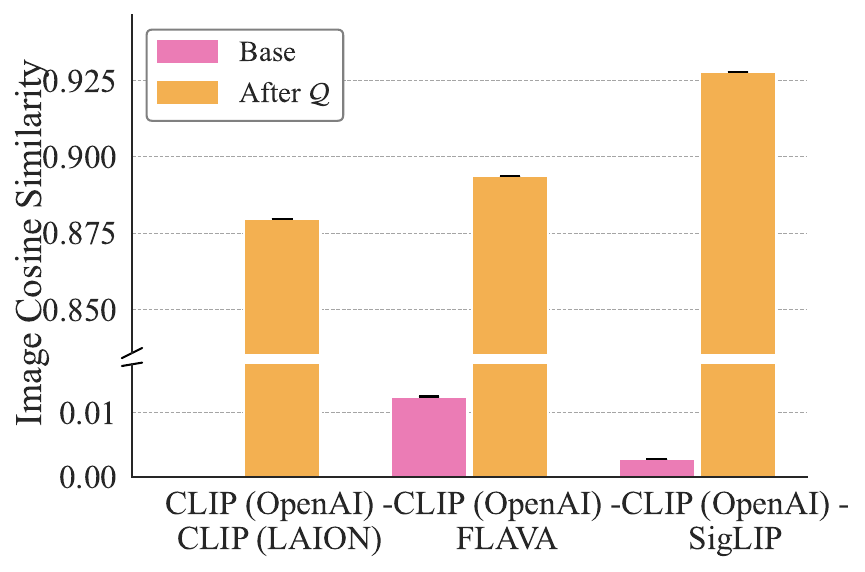}
    \caption{}
    \end{subfigure}
    \begin{subfigure}{0.32\textwidth}
    \centering
    \includegraphics[width=\textwidth]{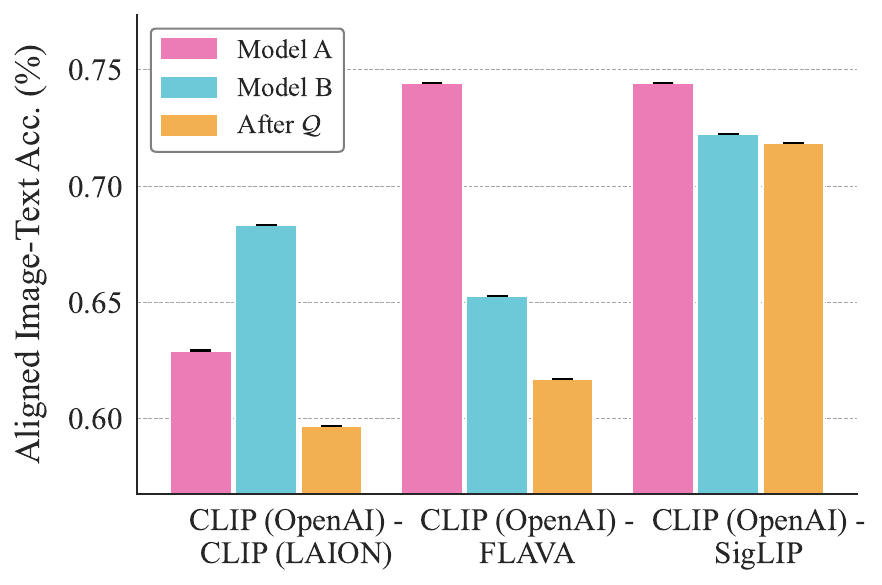}
    \caption{}
    \end{subfigure}
    \begin{subfigure}{0.32\textwidth}
    \centering
    \includegraphics[width=\textwidth]{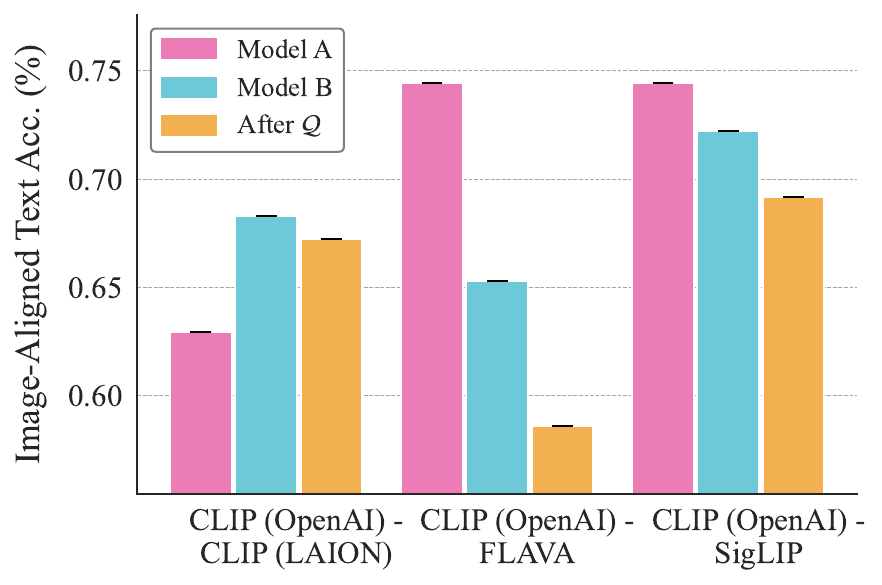}
    \caption{}
    \end{subfigure}
    \begin{subfigure}{0.32\textwidth}
    \centering
    \includegraphics[width=\textwidth]{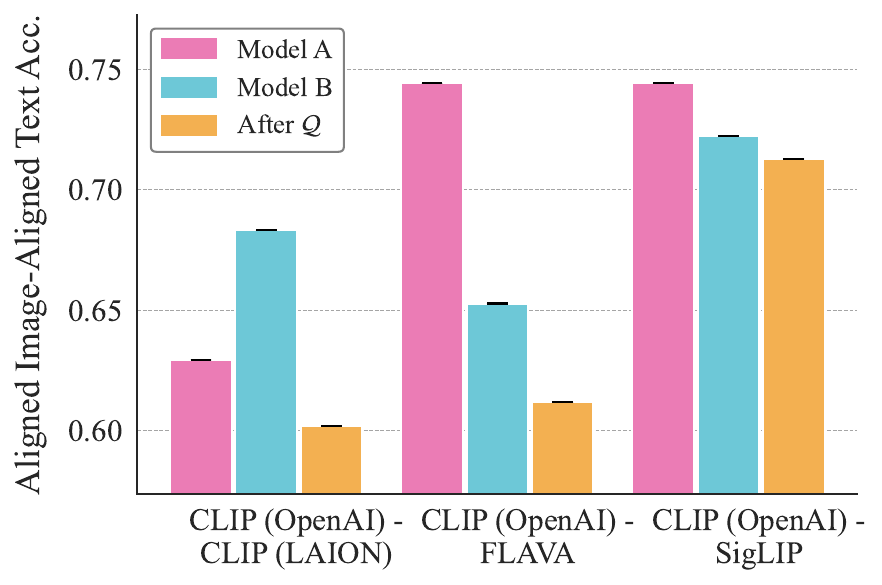}
    \caption{}
    \end{subfigure}
    \caption{\textit{Cross-model alignment on CIFAR-100 learned from text prototypes alone before and after fitting a single orthogonal map $\mathcal{Q}$.} (a) Image–image class retrieval and (b) text–text class retrieval (c) Mean text-text cosine similarity. (d) Mean image-image cosine similarity. (e) Image–text retrieval using aligned images from model A and text from model B. (f) Image–text retrieval using images from model B and aligned text from model A. (g) Image–text retrieval using aligned images and aligned text from model A.}
    \label{fig:app_text_trained_cifar}
\end{figure*}

\begin{figure*}[!htb]
    \centering
    \begin{subfigure}{0.32\textwidth}
    \centering
    \includegraphics[width=\textwidth]{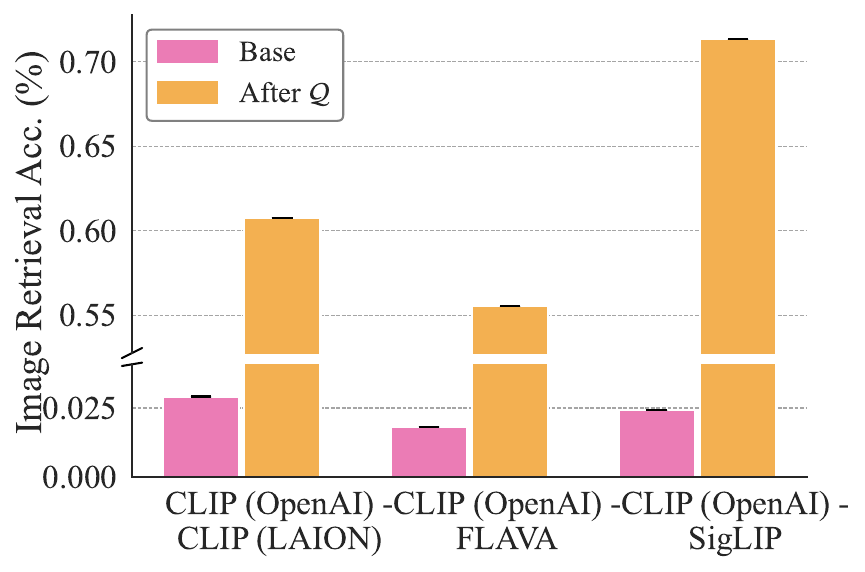}
    \caption{}
    \end{subfigure}
    \begin{subfigure}{0.32\textwidth}
    \centering
    \includegraphics[width=\textwidth]{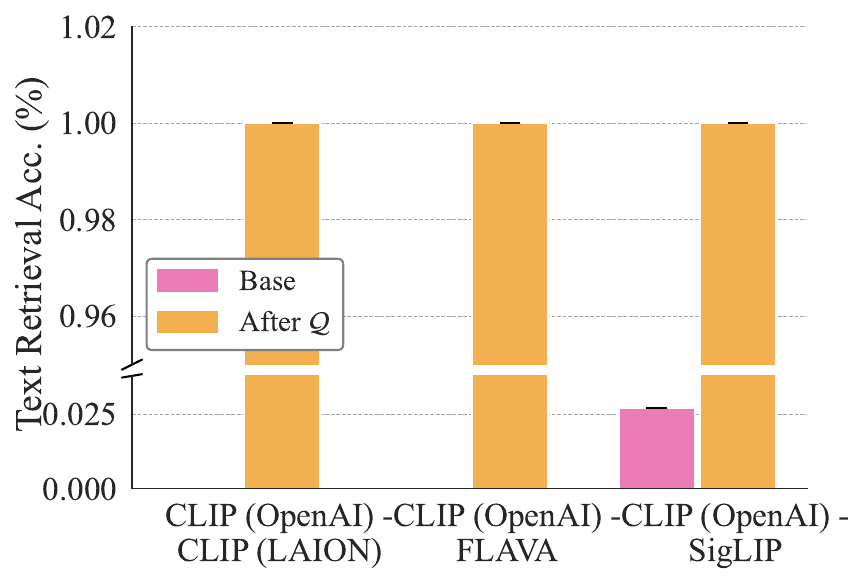}
    \caption{}
    \end{subfigure}
    \begin{subfigure}{0.32\textwidth}
    \centering
    \includegraphics[width=\textwidth]{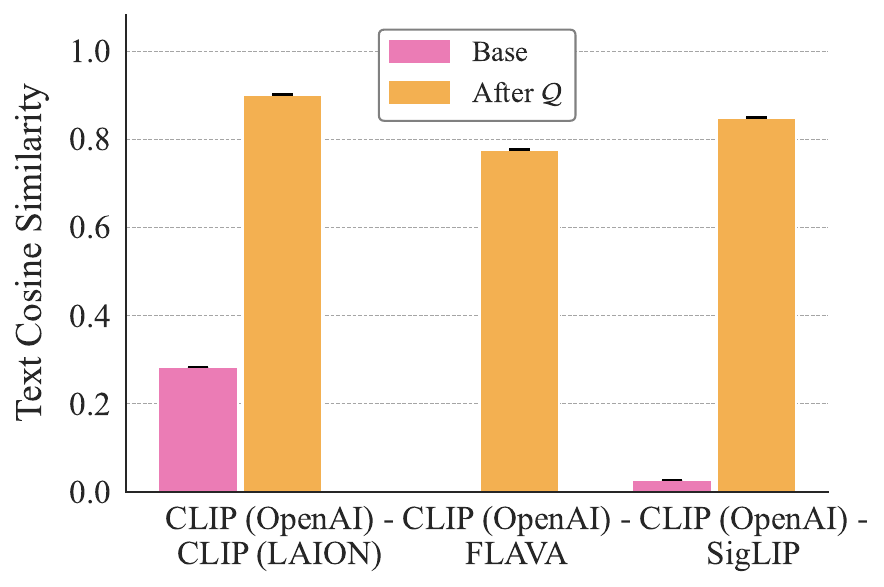}
    \caption{}
    \end{subfigure}
    \begin{subfigure}{0.32\textwidth}
    \centering
    \includegraphics[width=\textwidth]{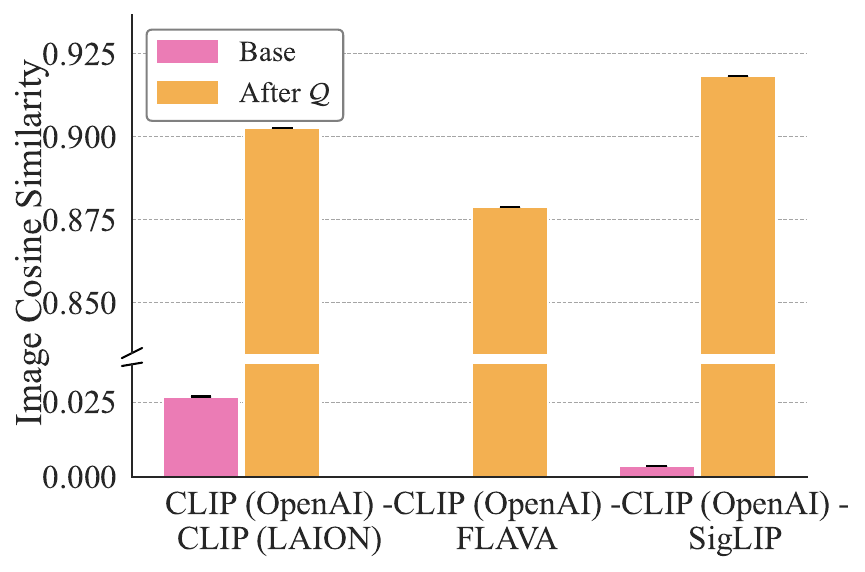}
    \caption{}
    \end{subfigure}
    \begin{subfigure}{0.32\textwidth}
    \centering
    \includegraphics[width=\textwidth]{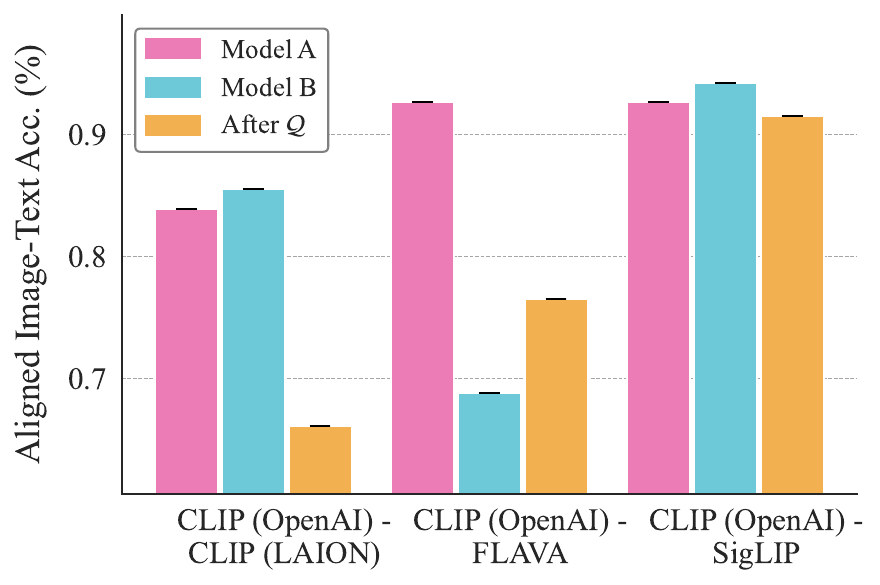}
    \caption{}
    \end{subfigure}
    \begin{subfigure}{0.32\textwidth}
    \centering
    \includegraphics[width=\textwidth]{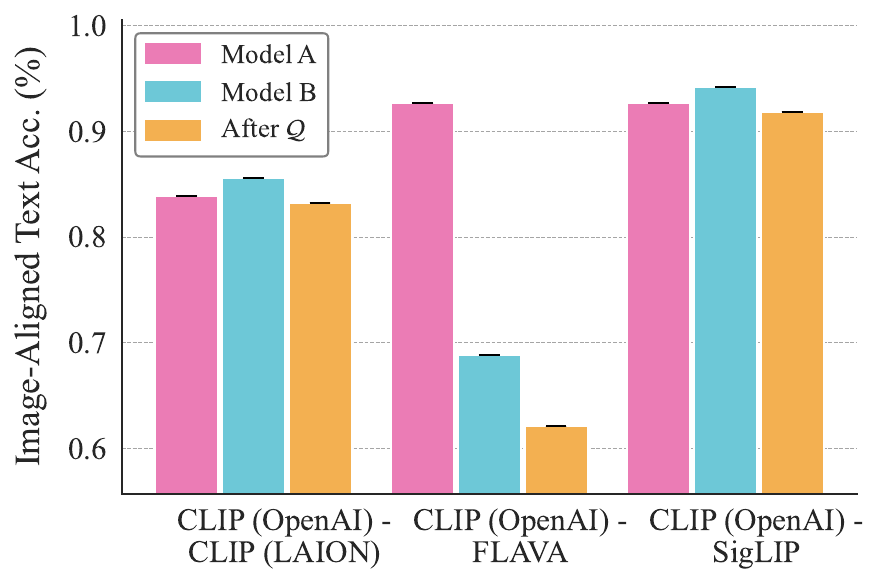}
    \caption{}
    \end{subfigure}
    \begin{subfigure}{0.32\textwidth}
    \centering
    \includegraphics[width=\textwidth]{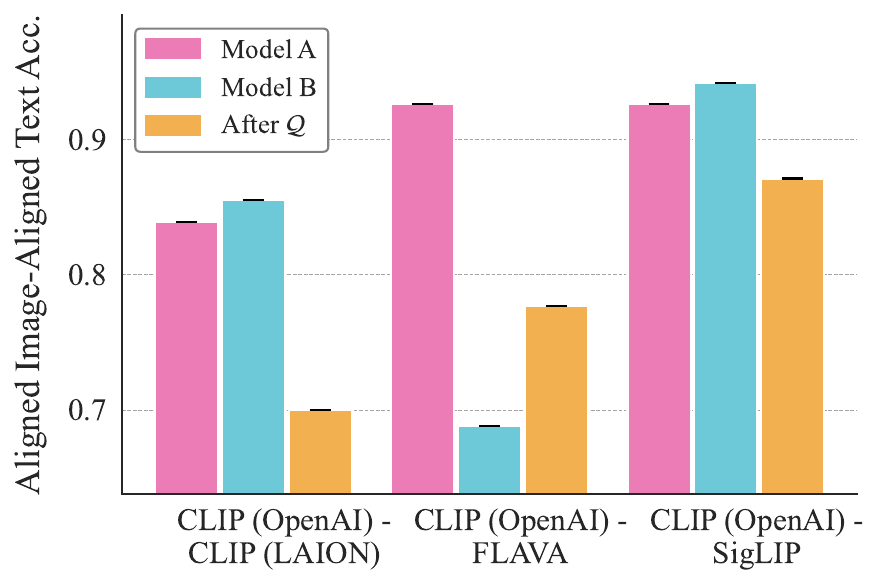}
    \caption{}
    \end{subfigure}
    \caption{\textit{Cross-model alignment on Oxford Pets learned from text prototypes alone before and after fitting a single orthogonal map $\mathcal{Q}$.} (a) Image–image class retrieval and (b) text–text class retrieval (c) Mean text-text cosine similarity. (d) Mean image-image cosine similarity. (e) Image–text retrieval using aligned images from model A and text from model B. (f) Image–text retrieval using images from model B and aligned text from model A. (g) Image–text retrieval using aligned images and aligned text from model A.}
    \label{fig:app_text_trained_oxford}
\end{figure*}

\FloatBarrier

\subsection{Cycle Consistency and Consistency Under
Composition}\label{sec:app_compositionality}
Our theory and experiment so far posit that different contrastive models represent the same underlying semantic geometry, differing only by a global orthogonal reparameterization. This would imply two forms of consistency: (1) aligning model $(f, g)$ to $(\tilde f, \tilde g)$ and then $(\tilde f, \tilde g)$ to $(\hat f, \hat g)$ should align $(f, g)$ to $(\hat f, \hat g)$ (composition), and aligning $(f, g)$ to $(\tilde f, \tilde g)$ should be undone by aligning $(\tilde f, \tilde g)$ back to $(f, g)$ (cycle). We therefore test transitivity and cycle consistency to ensure the learned maps reflect a shared geometry rather than pair-specific fitting. 
\par In this section, we present complete results for cycle consistency and composition on Caltech-101 in~\Cref{fig:app_cycle_caltech101_7} and~\Cref{fig:app_trans_caltech101} respectively, on CIFAR100 in~\Cref{fig:app_cycle_cifar100_7} and~\Cref{fig:app_trans_cifar100} respectively, and on Oxford Pets in~\Cref{fig:app_cycle_oxford_7} and~\Cref{fig:app_trans_oxford} respectively. 
\par Across all composition figures i.e.~\Cref{fig:app_trans_caltech101,fig:app_trans_cifar100,fig:app_trans_oxford}, we compare composing two learned isometries, $\mathcal Q_{BC}\circ \mathcal Q_{AB}$, with directly learning $\mathcal Q_{AC}$ between the endpoint models. Across all model triplets, the composed map almost matches the direct map. Strikingly, in some cases, the composed map outperforms the direct map, showing that the intermediate model’s stronger geometry propagates through the chain. Thus, isometries learned on disjoint model pairs compose reliably and can align a model pair that has never been calibrated together. 
\par Across all cycle-consistency figures i.e.~\Cref{fig:app_cycle_caltech101_7,fig:app_cycle_cifar100_7,fig:app_cycle_oxford_7}, we compare the downstream performance of Model A to a round trip that applies the forward map followed by the reverse map (the composition labeled $\mathcal Q_{BA}\circ \mathcal Q_{AB}$). Since each map is estimated via closed-form Procrustes, $\mathcal Q_{BA}$ is the inverse of $\mathcal Q_{AB}$ by construction; we therefore fit them on independent 95\% random subsets of the training pairs to test under finite-sample estimation. Across all pairs and datasets, image-text classification accuracies (across different aligned pairs) nearly matches the original model, indicating that the learned forward and reverse isometries behave as near-inverses and that the induced coordinate change is stable.

\begin{figure*}[!htb]
    \centering
    \begin{subfigure}{0.32\textwidth}
    \centering
    \includegraphics[width=\textwidth]{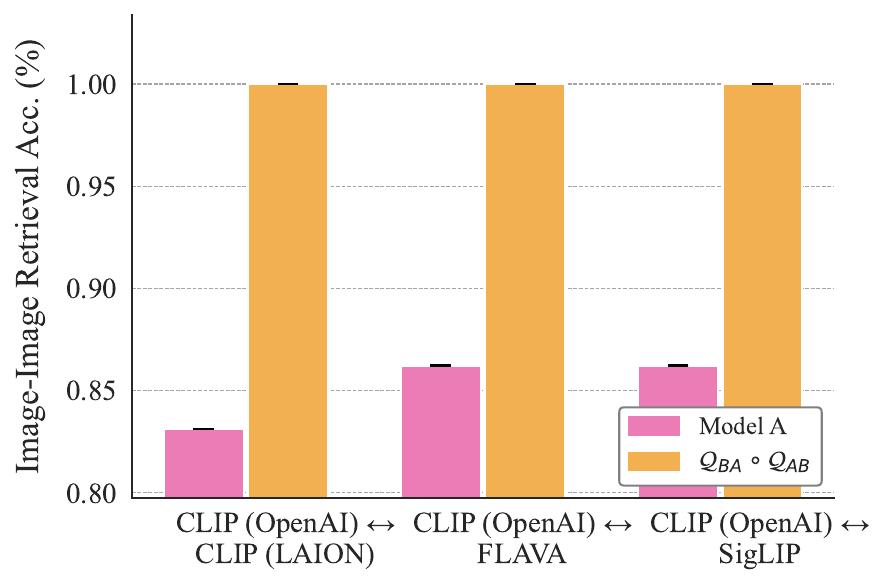}
    \caption{}
    \end{subfigure}
    \begin{subfigure}{0.32\textwidth}
    \centering
    \includegraphics[width=\textwidth]{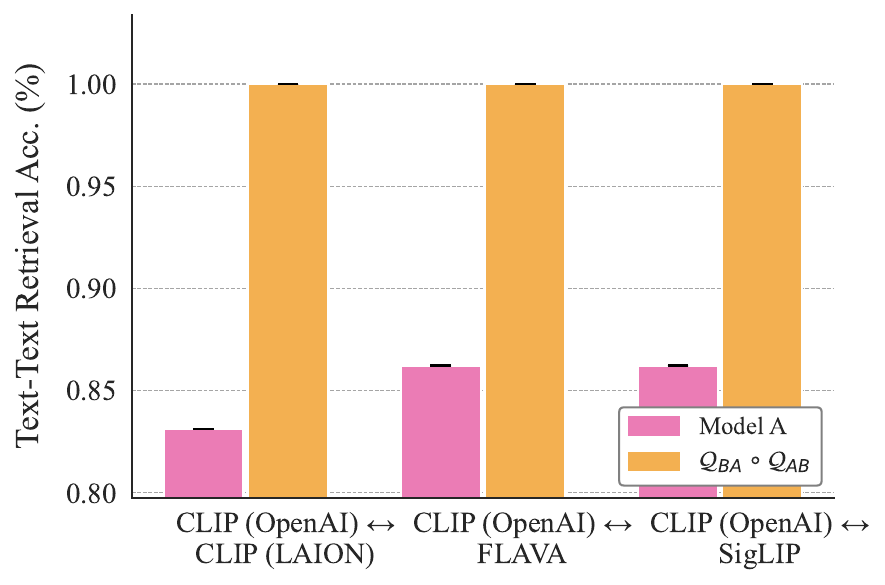}
    \caption{}
    \end{subfigure}
    \begin{subfigure}{0.32\textwidth}
    \centering
    \includegraphics[width=\textwidth]{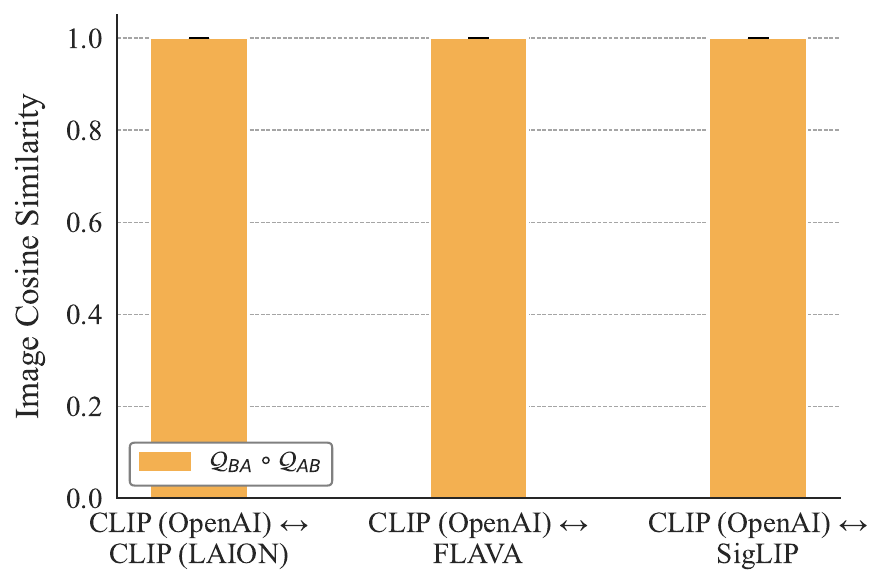}
    \caption{}
    \end{subfigure}
    \begin{subfigure}{0.32\textwidth}
    \centering
    \includegraphics[width=\textwidth]{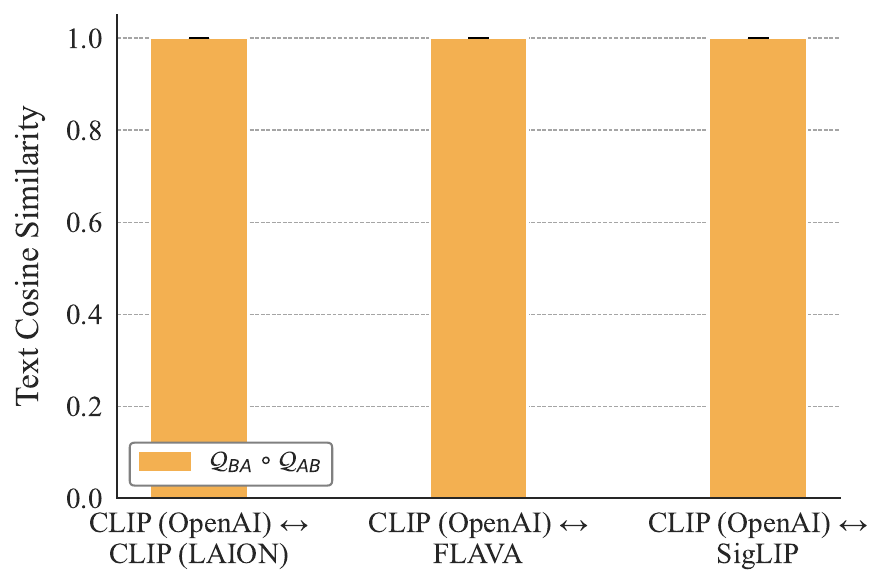}
    \caption{}
    \end{subfigure}
    \begin{subfigure}{0.32\textwidth}
    \centering
    \includegraphics[width=\textwidth]{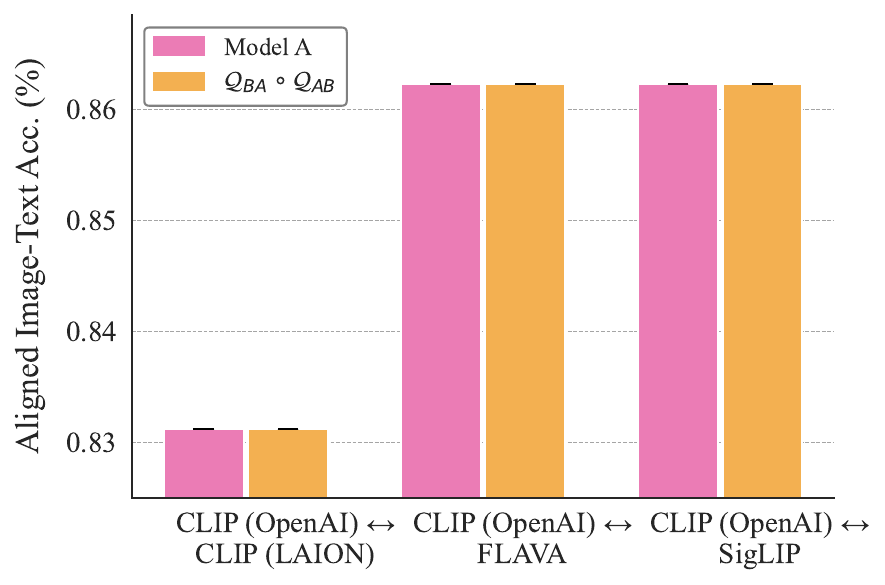}
    \caption{}
    \end{subfigure}
    \begin{subfigure}{0.32\textwidth}
    \centering
    \includegraphics[width=\textwidth]{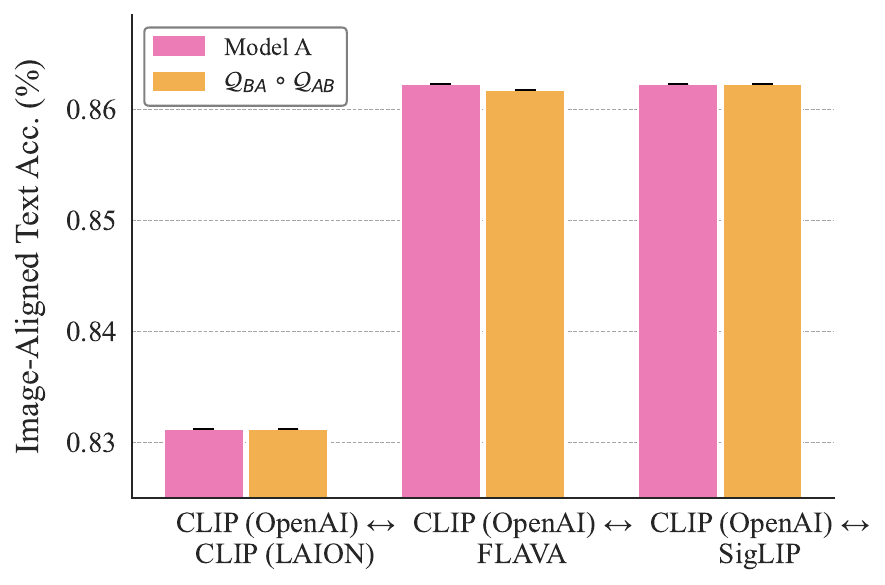}
    \caption{}
    \end{subfigure}
    \begin{subfigure}{0.32\textwidth}
    \centering
    \includegraphics[width=\textwidth]{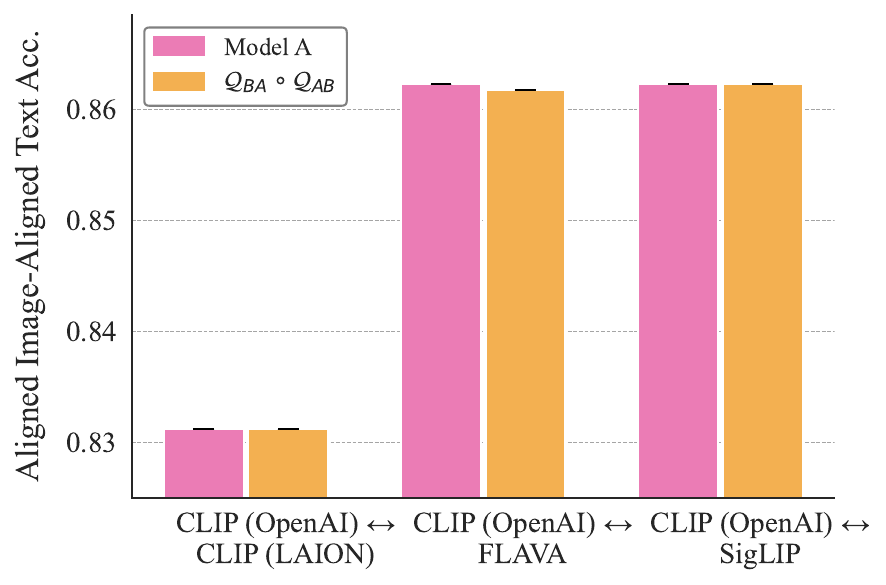}
    \caption{}
    \end{subfigure}
    \caption{\textit{Cycle consistency on Caltech-101.}
    (a) Image–image class retrieval. (b) text–text class retrieval (c) Mean image-image cosine similarity. (c) Mean text-text cosine similarity. (e) Image–text retrieval using aligned images from model A and text from model B. (f) Image–text retrieval using images from model B and aligned text from model A. (g) Image–text retrieval using aligned images and aligned text from model A.}
    \label{fig:app_cycle_caltech101_7}
\end{figure*}

\begin{figure*}[!htb]
    \centering
    \begin{subfigure}{0.32\textwidth}
    \centering
    \includegraphics[width=\textwidth]{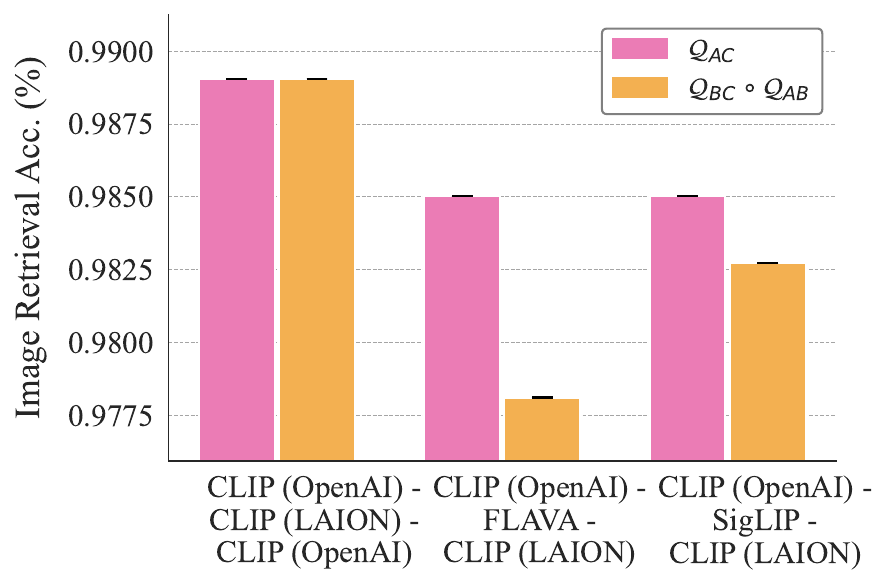}
    \caption{}
    \end{subfigure}
    \begin{subfigure}{0.32\textwidth}
    \centering
    \includegraphics[width=\textwidth]{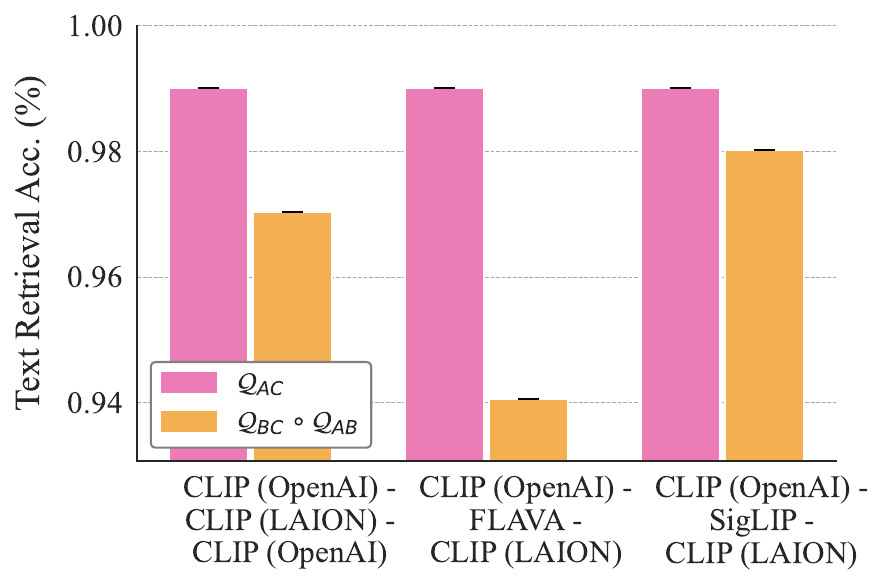}
    \caption{}
    \end{subfigure}
    \begin{subfigure}{0.32\textwidth}
    \centering
    \includegraphics[width=\textwidth]{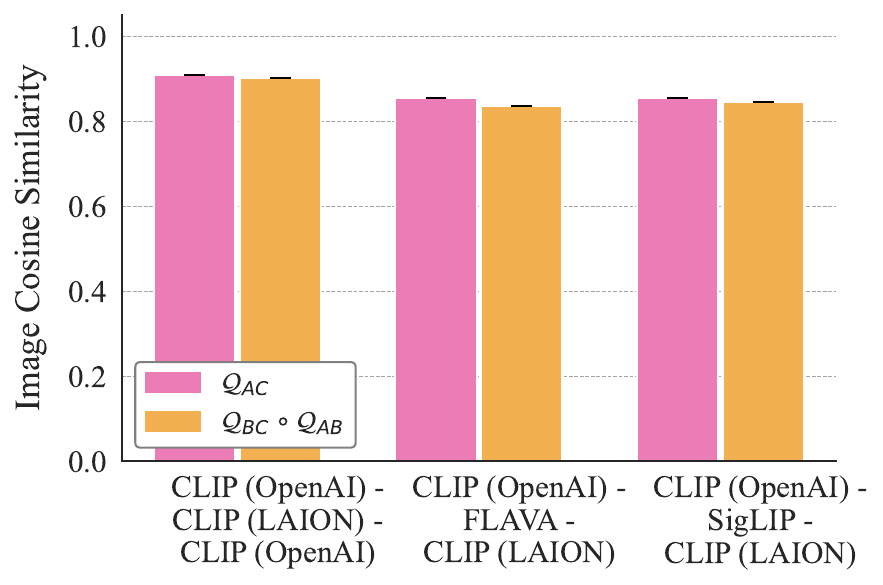}
    \caption{}
    \end{subfigure}
    \begin{subfigure}{0.32\textwidth}
    \centering
    \includegraphics[width=\textwidth]{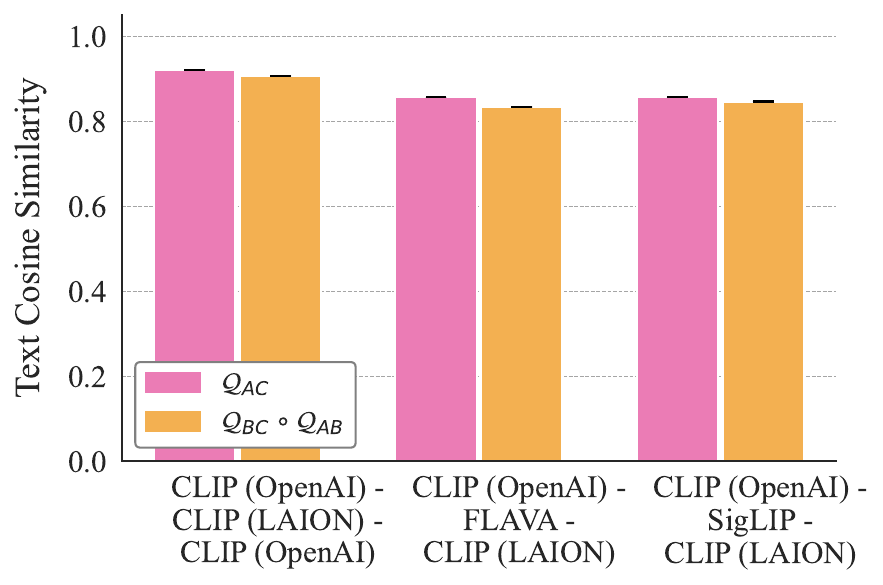}
    \caption{}
    \end{subfigure}
    \begin{subfigure}{0.32\textwidth}
    \centering
    \includegraphics[width=\textwidth]{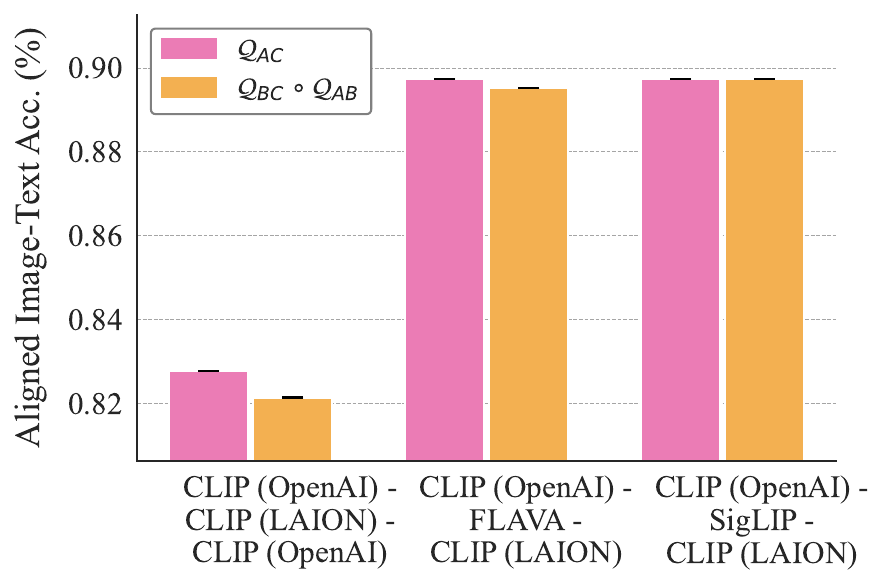}
    \caption{}
    \end{subfigure}
    \begin{subfigure}{0.32\textwidth}
    \centering
    \includegraphics[width=\textwidth]{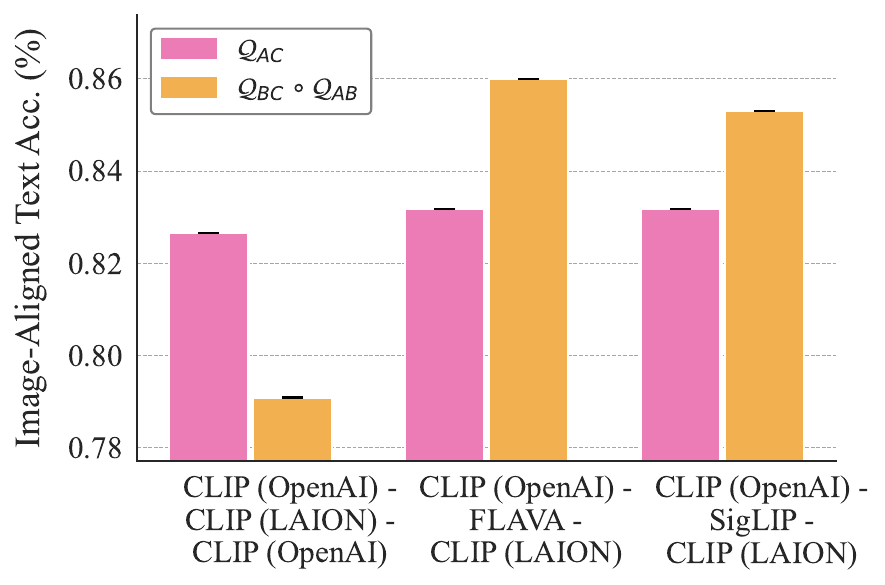}
    \caption{}
    \end{subfigure}
    \begin{subfigure}{0.32\textwidth}
    \centering
    \includegraphics[width=\textwidth]{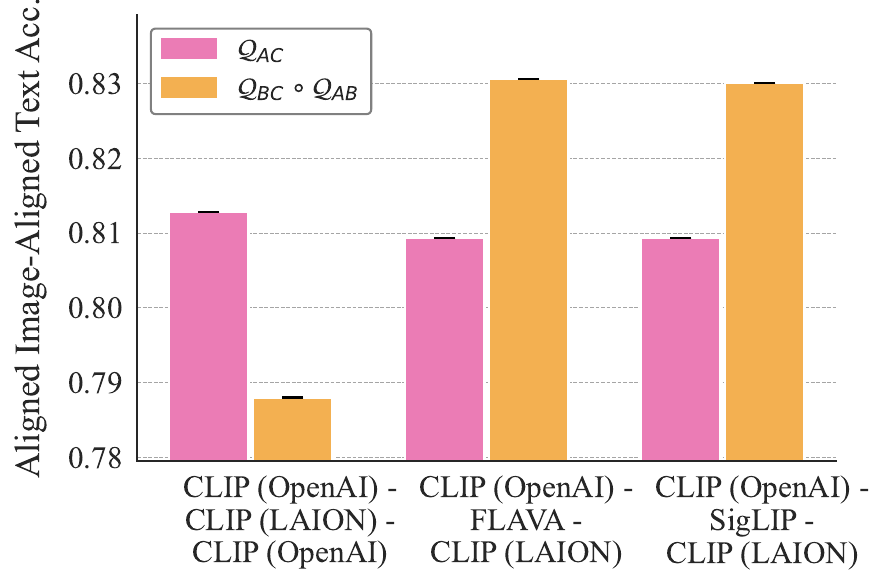}
    \caption{}
    \end{subfigure}
    \caption{\textit{Compositionality on Caltech-101 by comparing $Q_{BC}\circ Q_{AB}$ with $Q_{AC}$.}
    (a) Image–image class retrieval. (b) text–text class retrieval (c) Mean image-image cosine similarity. (c) Mean text-text cosine similarity. (e) Image–text retrieval using aligned images from model A and text from model B. (f) Image–text retrieval using images from model B and aligned text from model A. (g) Image–text retrieval using aligned images and aligned text from model A.\looseness=-1}
    \label{fig:app_trans_caltech101}
\end{figure*}

\begin{figure*}[!htb]
    \centering
    \begin{subfigure}{0.32\textwidth}
    \centering
    \includegraphics[width=\textwidth]{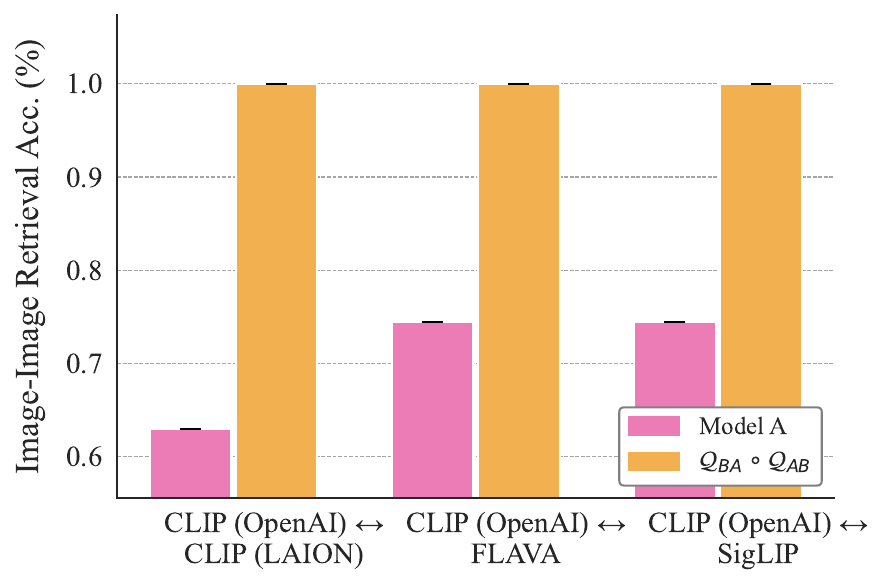}
    \caption{}
    \end{subfigure}
    \begin{subfigure}{0.32\textwidth}
    \centering
    \includegraphics[width=\textwidth]{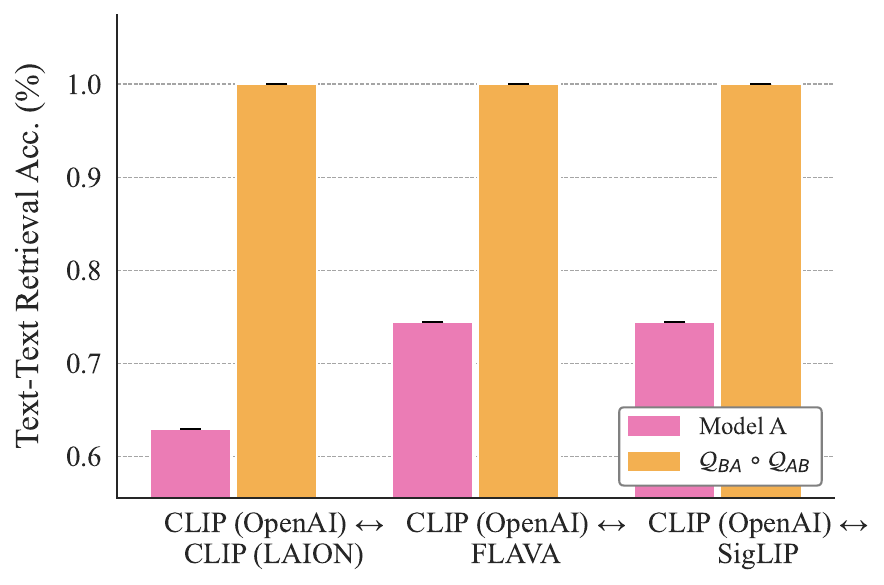}
    \caption{}
    \end{subfigure}
    \begin{subfigure}{0.32\textwidth}
    \centering
    \includegraphics[width=\textwidth]{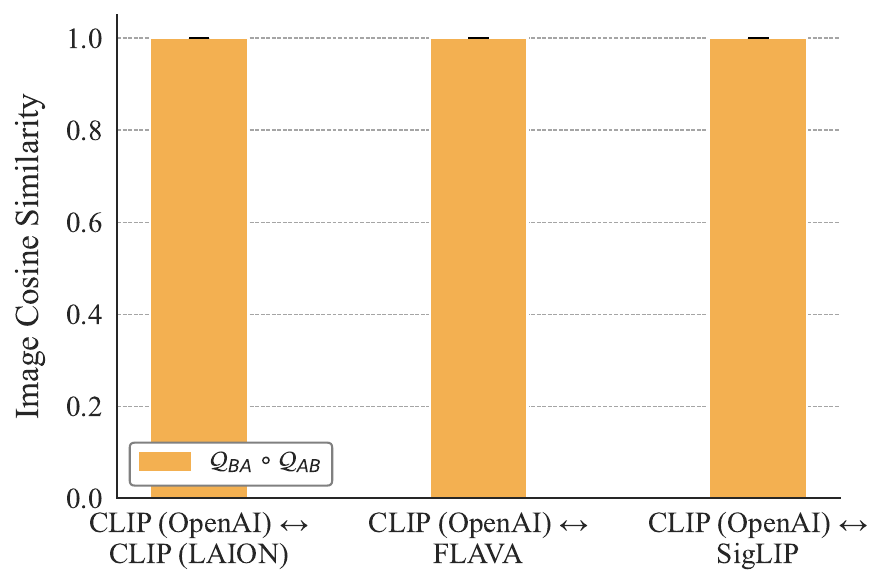}
    \caption{}
    \end{subfigure}
    \begin{subfigure}{0.32\textwidth}
    \centering
    \includegraphics[width=\textwidth]{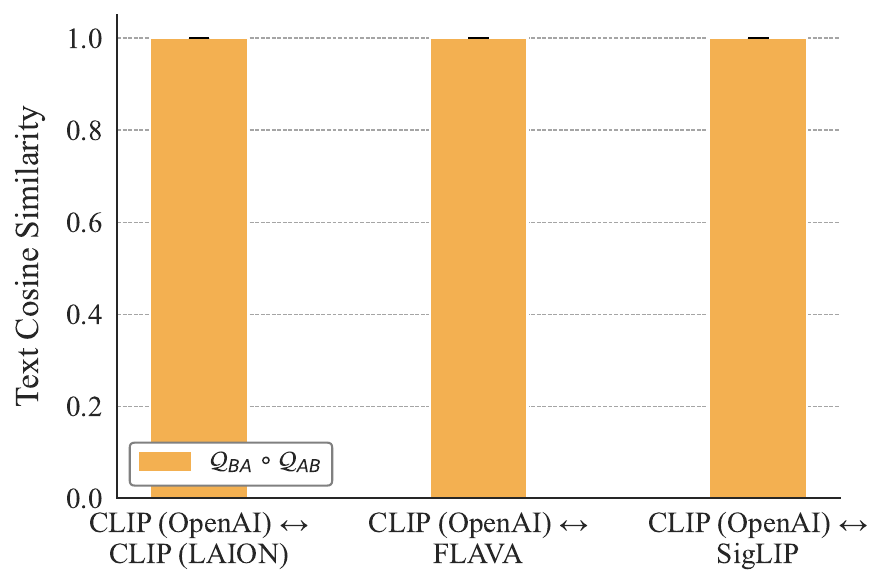}
    \caption{}
    \end{subfigure}
    \begin{subfigure}{0.32\textwidth}
    \centering
    \includegraphics[width=\textwidth]{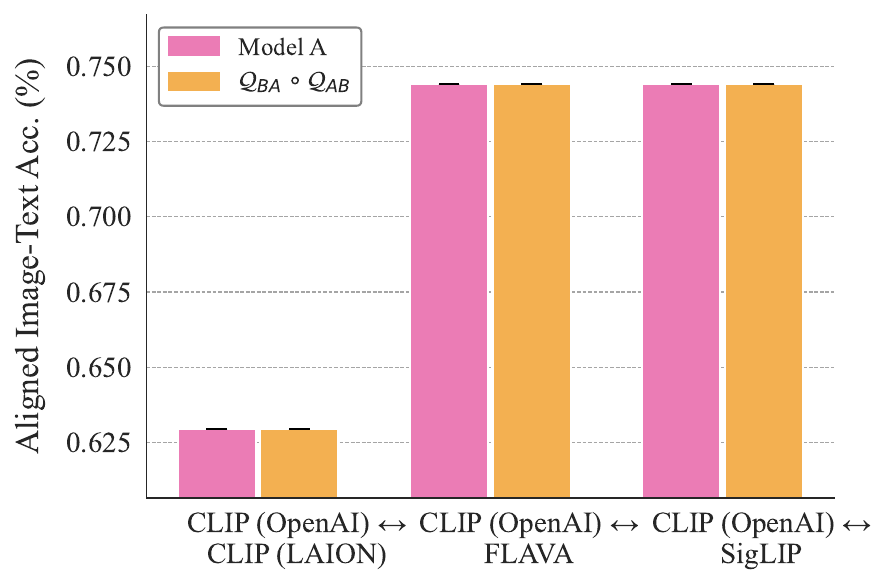}
    \caption{}
    \end{subfigure}
    \begin{subfigure}{0.32\textwidth}
    \centering
    \includegraphics[width=\textwidth]{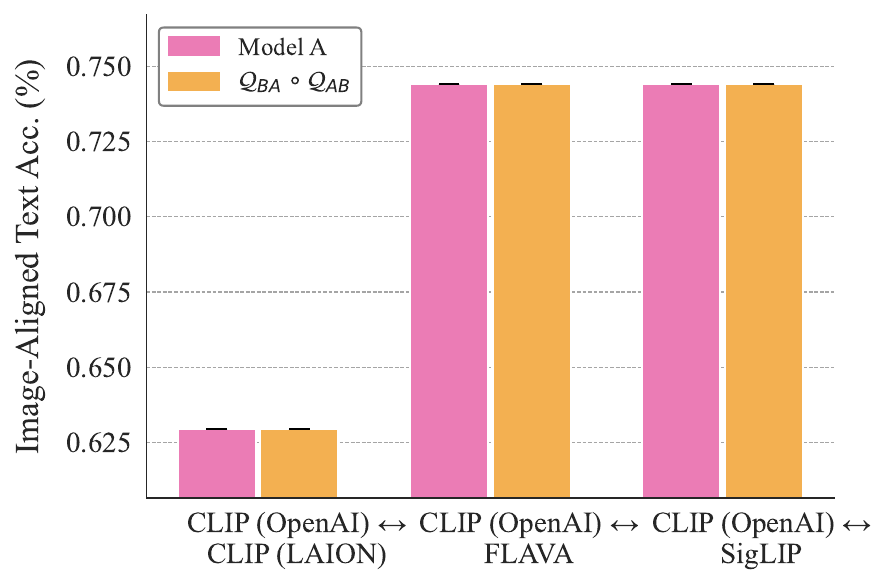}
    \caption{}
    \end{subfigure}
    \begin{subfigure}{0.32\textwidth}
    \centering
    \includegraphics[width=\textwidth]{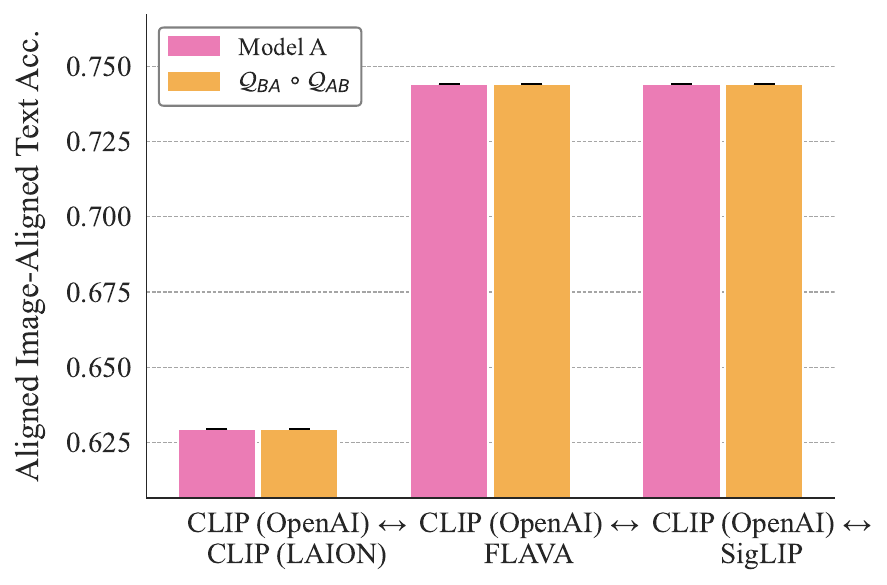}
    \caption{}
    \end{subfigure}


    \caption{\textit{Cycle consistency on CIFAR-100.}
    (a) Image–image class retrieval. (b) text–text class retrieval (c) Mean image-image cosine similarity. (c) Mean text-text cosine similarity. (e) Image–text retrieval using aligned images from model A and text from model B. (f) Image–text retrieval using images from model B and aligned text from model A. (g) Image–text retrieval using aligned images and aligned text from model A.\looseness=-1}
    \label{fig:app_cycle_cifar100_7}
\end{figure*}

\begin{figure*}[!htb]
    \centering
    \begin{subfigure}{0.32\textwidth}
    \centering
    \includegraphics[width=\textwidth]{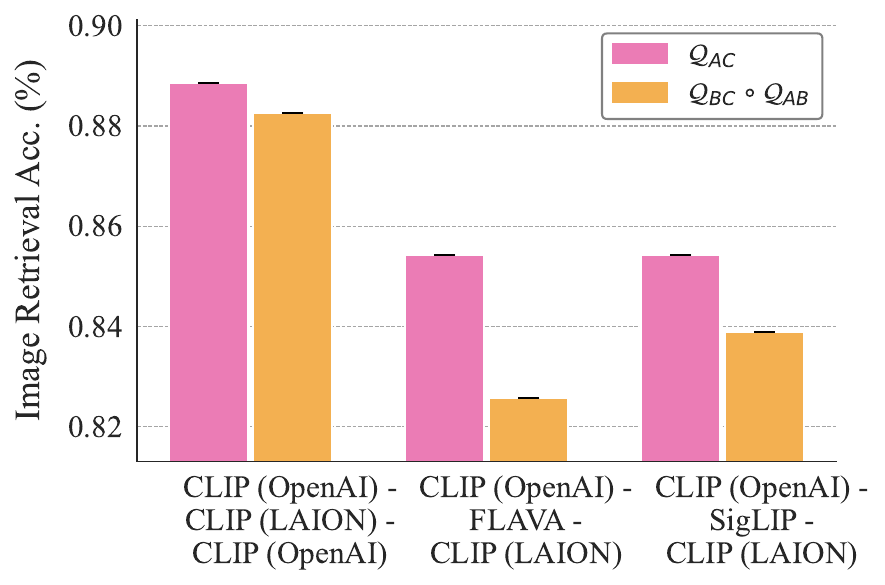}
    \caption{}
    \end{subfigure}
    \begin{subfigure}{0.32\textwidth}
    \centering
    \includegraphics[width=\textwidth]{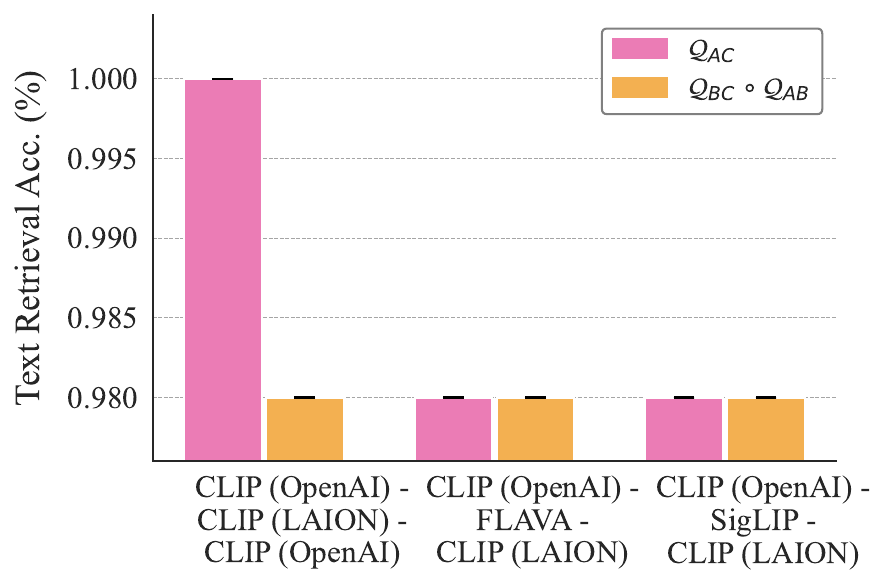}
    \caption{}
    \end{subfigure}
    \begin{subfigure}{0.32\textwidth}
    \centering
    \includegraphics[width=\textwidth]{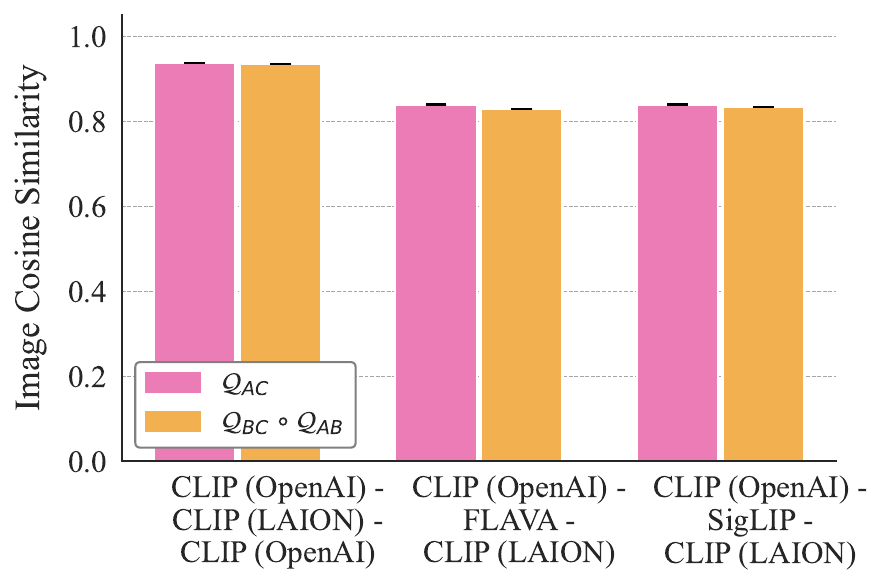}
    \caption{}
    \end{subfigure}
    \begin{subfigure}{0.32\textwidth}
    \centering
    \includegraphics[width=\textwidth]{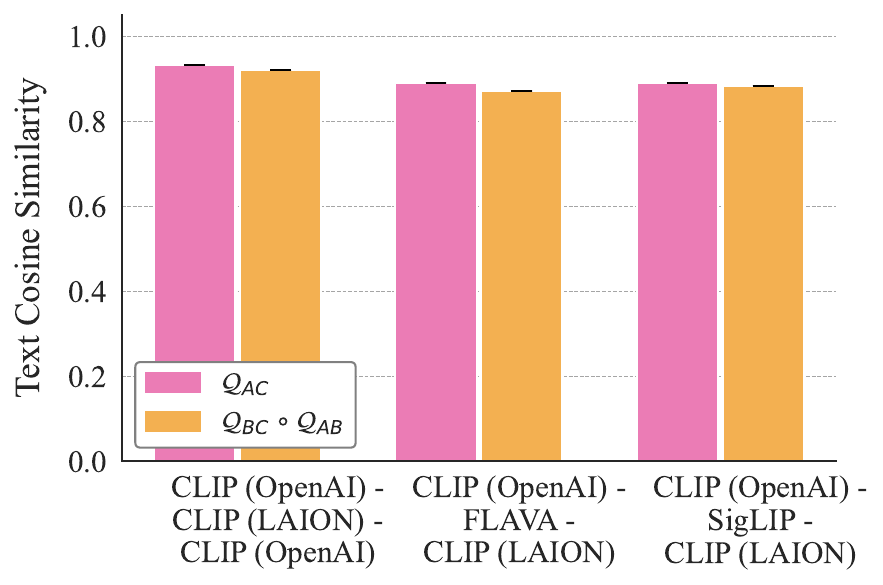}
    \caption{}
    \end{subfigure}
    \begin{subfigure}{0.32\textwidth}
    \centering
    \includegraphics[width=\textwidth]{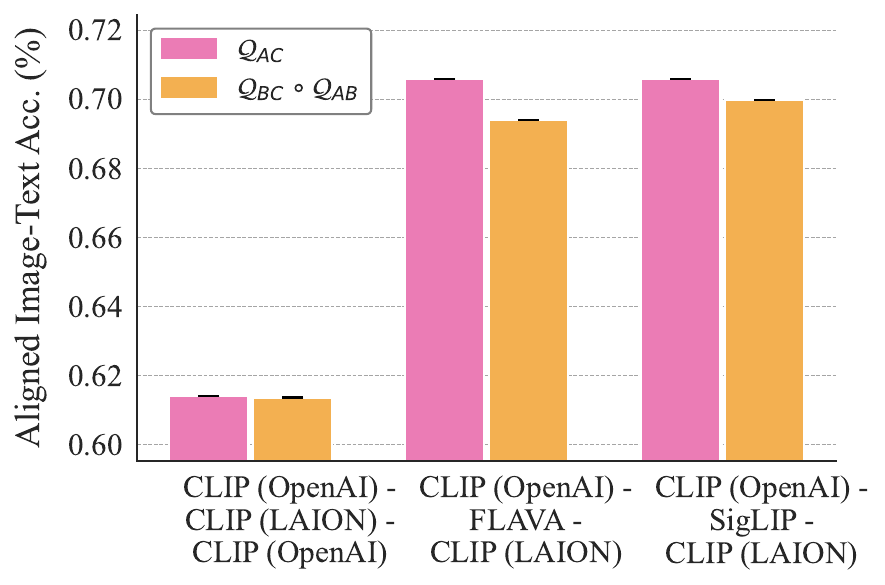}
    \caption{}
    \end{subfigure}
    \begin{subfigure}{0.32\textwidth}
    \centering
    \includegraphics[width=\textwidth]{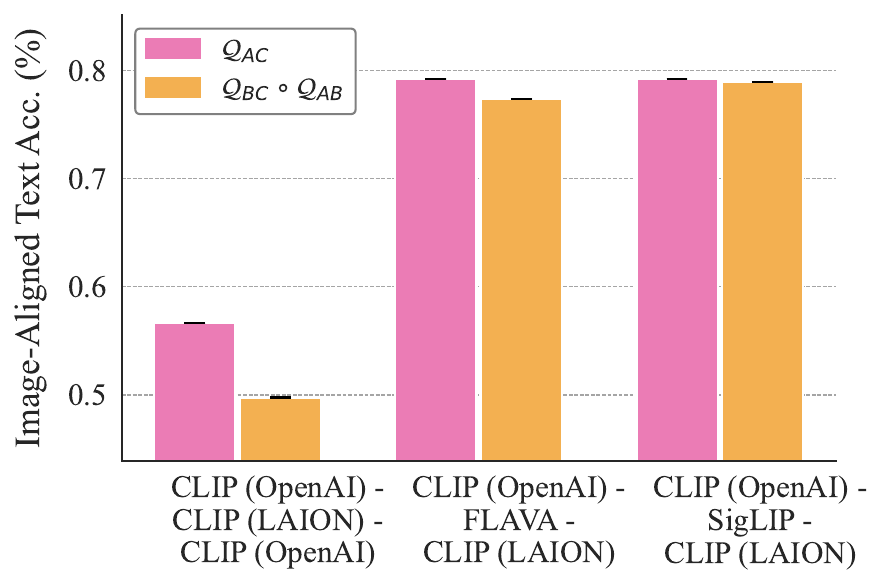}
    \caption{}
    \end{subfigure}
    \begin{subfigure}{0.32\textwidth}
    \centering
    \includegraphics[width=\textwidth]{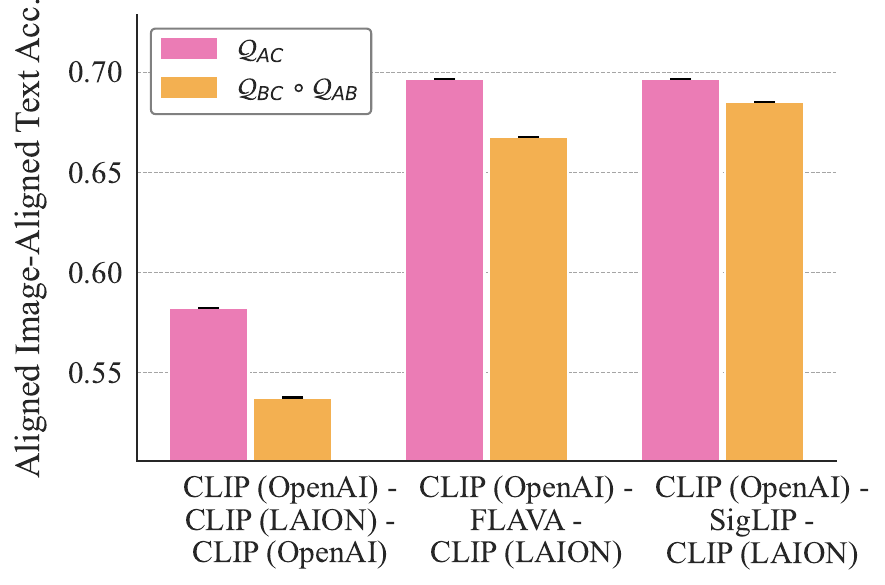}
    \caption{}
    \end{subfigure}


    \caption{\textit{Compositionality on CIFAR-100 by comparing $Q_{BC}\circ Q_{AB}$ with $Q_{AC}$.}
    (a) Image–image class retrieval. (b) text–text class retrieval (c) Mean image-image cosine similarity. (c) Mean text-text cosine similarity. (e) Image–text retrieval using aligned images from model A and text from model B. (f) Image–text retrieval using images from model B and aligned text from model A. (g) Image–text retrieval using aligned images and aligned text from model A.\looseness=-1}
    
    \label{fig:app_trans_cifar100}
\end{figure*}

\begin{figure*}[!htb]
    \centering
    \begin{subfigure}{0.32\textwidth}
    \centering
    \includegraphics[width=\textwidth]{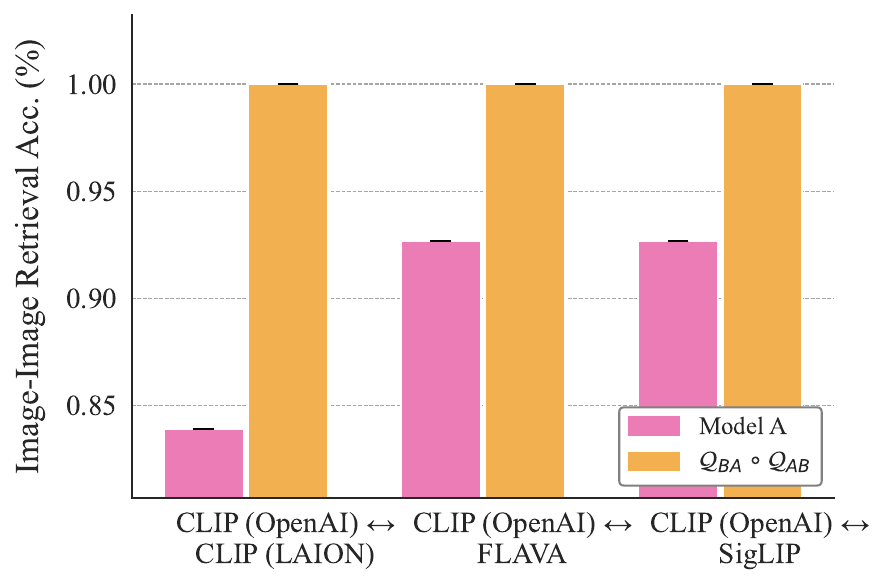}
    \caption{}
    \end{subfigure}
    \begin{subfigure}{0.32\textwidth}
    \centering
    \includegraphics[width=\textwidth]{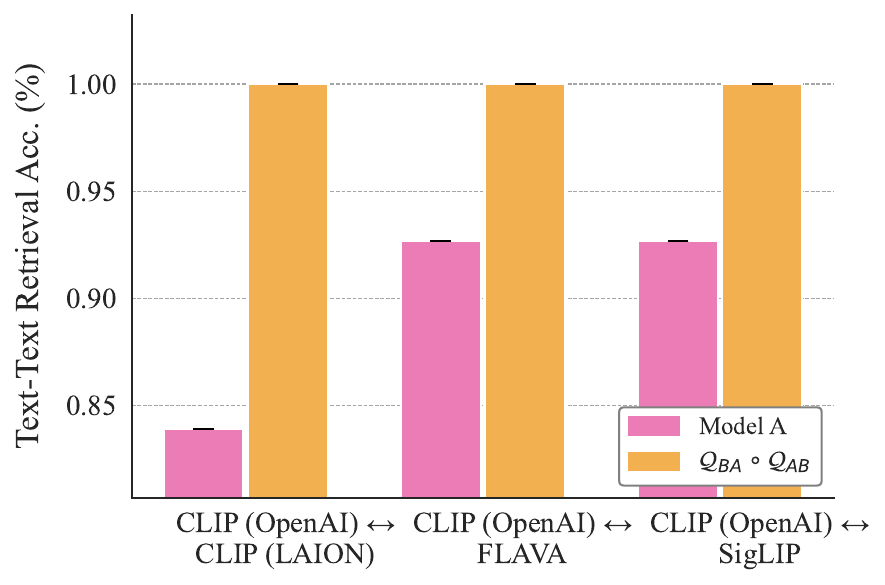}
    \caption{}
    \end{subfigure}
    \begin{subfigure}{0.32\textwidth}
    \centering
    \includegraphics[width=\textwidth]{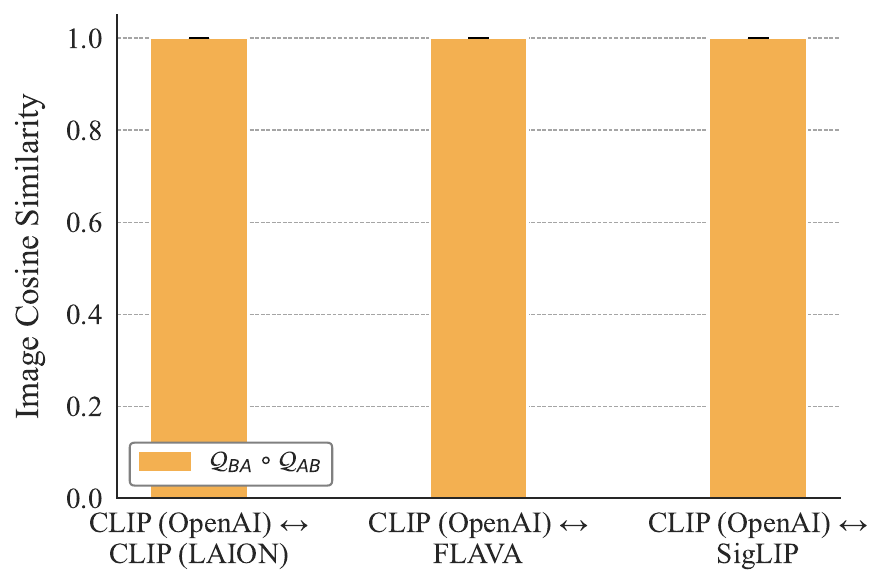}
    \caption{}
    \end{subfigure}
    \begin{subfigure}{0.32\textwidth}
    \centering
    \includegraphics[width=\textwidth]{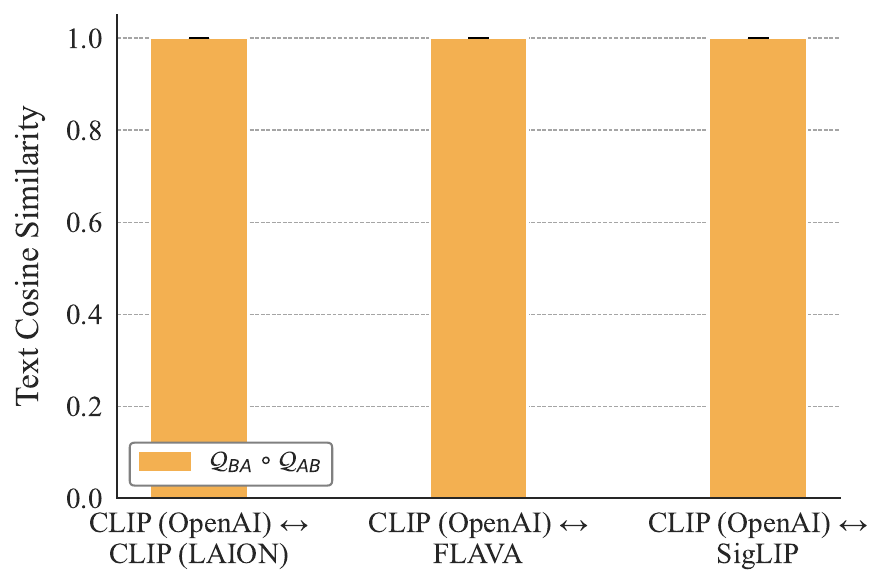}
    \caption{}
    \end{subfigure}
    \begin{subfigure}{0.32\textwidth}
    \centering
    \includegraphics[width=\textwidth]{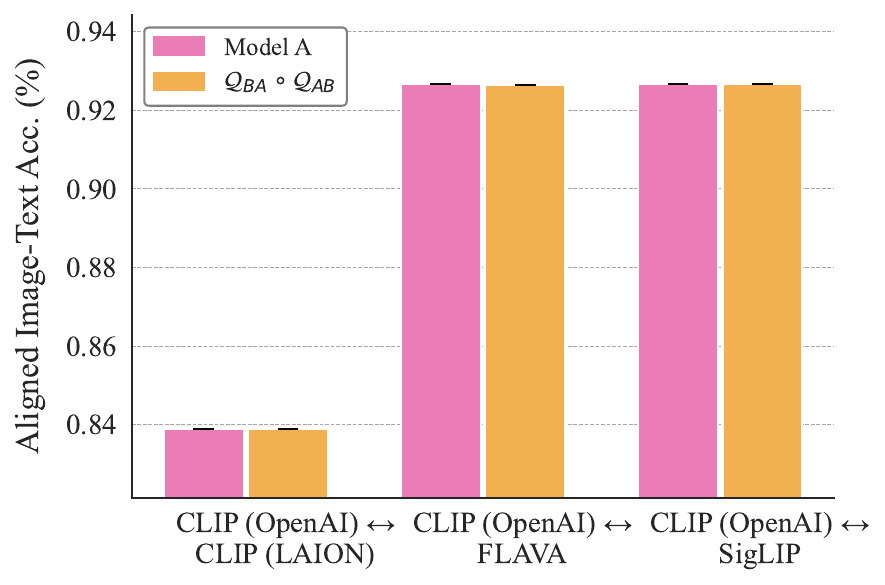}
    \caption{}
    \end{subfigure}
    \begin{subfigure}{0.32\textwidth}
    \centering
    \includegraphics[width=\textwidth]{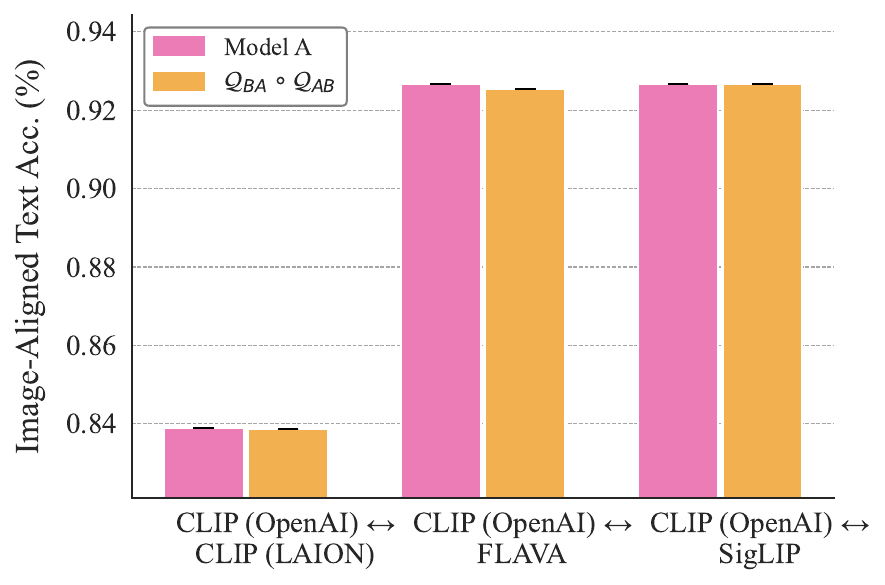}
    \caption{}
    \end{subfigure}
    \begin{subfigure}{0.32\textwidth}
    \centering
    \includegraphics[width=\textwidth]{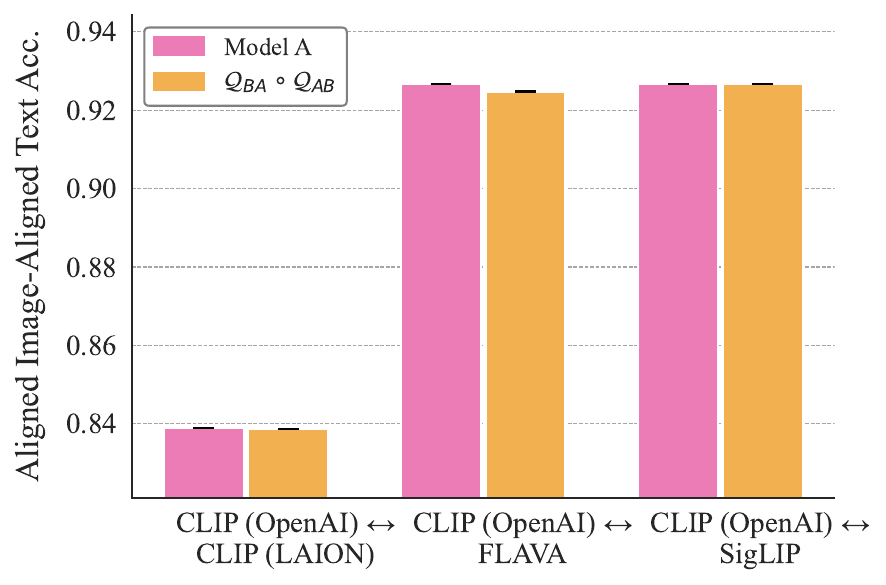}
    \caption{}
    \end{subfigure}


    \caption{\textit{Cycle consistency on Oxford Pets.}
    (a) Image–image class retrieval. (b) text–text class retrieval (c) Mean image-image cosine similarity. (c) Mean text-text cosine similarity. (e) Image–text retrieval using aligned images from model A and text from model B. (f) Image–text retrieval using images from model B and aligned text from model A. (g) Image–text retrieval using aligned images and aligned text from model A.\looseness=-1}
    \label{fig:app_cycle_oxford_7}
\end{figure*}

\begin{figure*}[!htb]
    \centering
    \begin{subfigure}{0.32\textwidth}
    \centering
    \includegraphics[width=\textwidth]{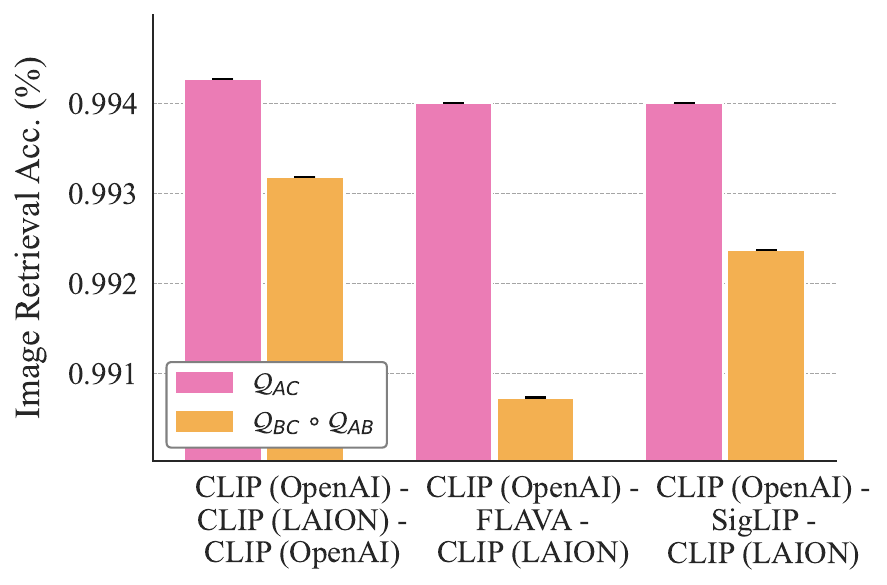}
    \caption{}
    \end{subfigure}
    \begin{subfigure}{0.32\textwidth}
    \centering
    \includegraphics[width=\textwidth]{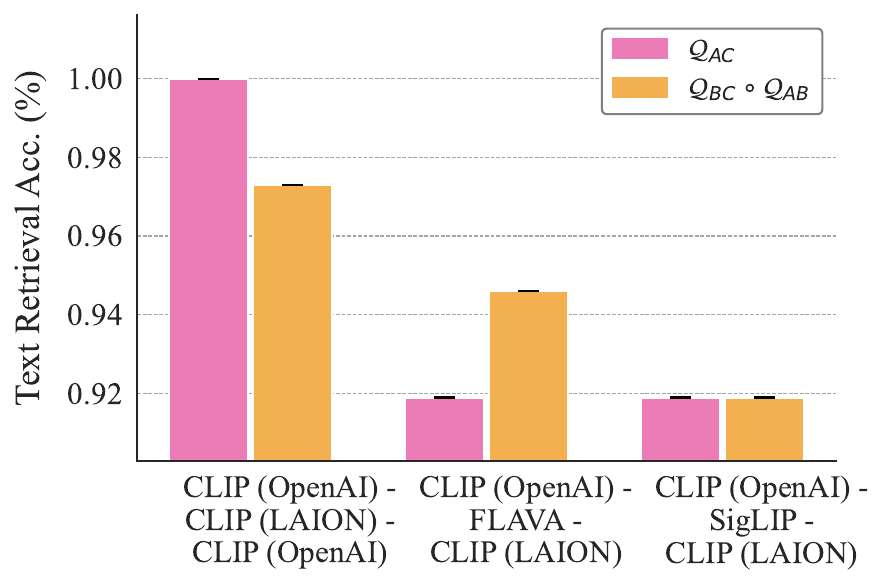}
    \caption{}
    \end{subfigure}
    \begin{subfigure}{0.32\textwidth}
    \centering
    \includegraphics[width=\textwidth]{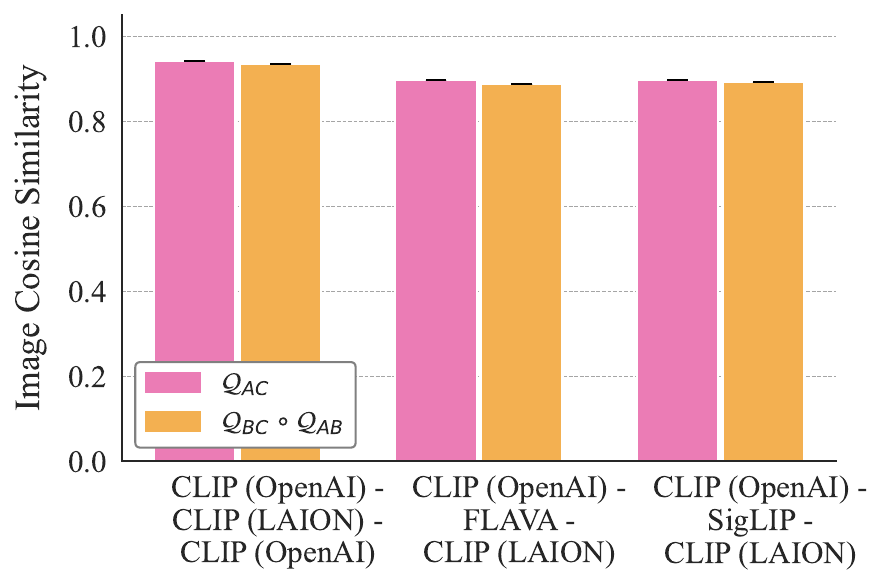}
    \caption{}
    \end{subfigure}
    \begin{subfigure}{0.32\textwidth}
    \centering
    \includegraphics[width=\textwidth]{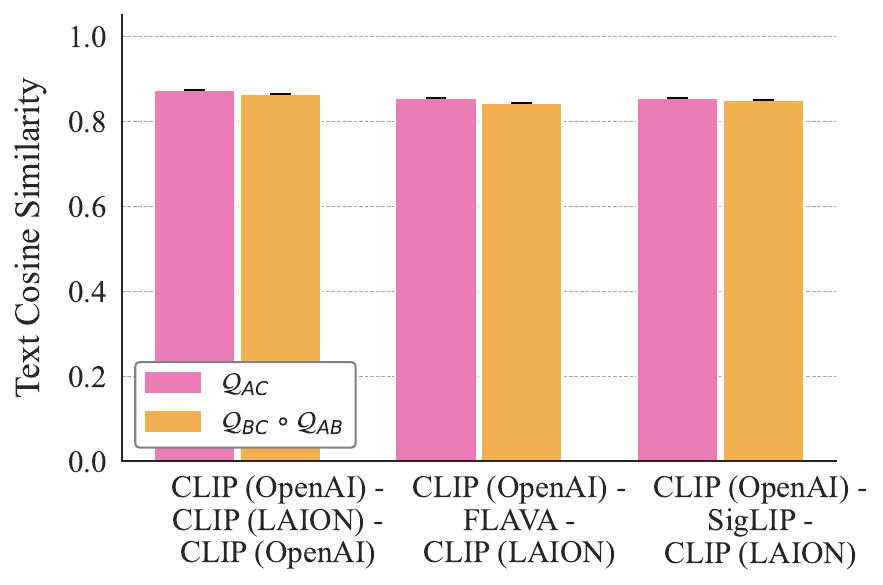}
    \caption{}
    \end{subfigure}
    \begin{subfigure}{0.32\textwidth}
    \centering
    \includegraphics[width=\textwidth]{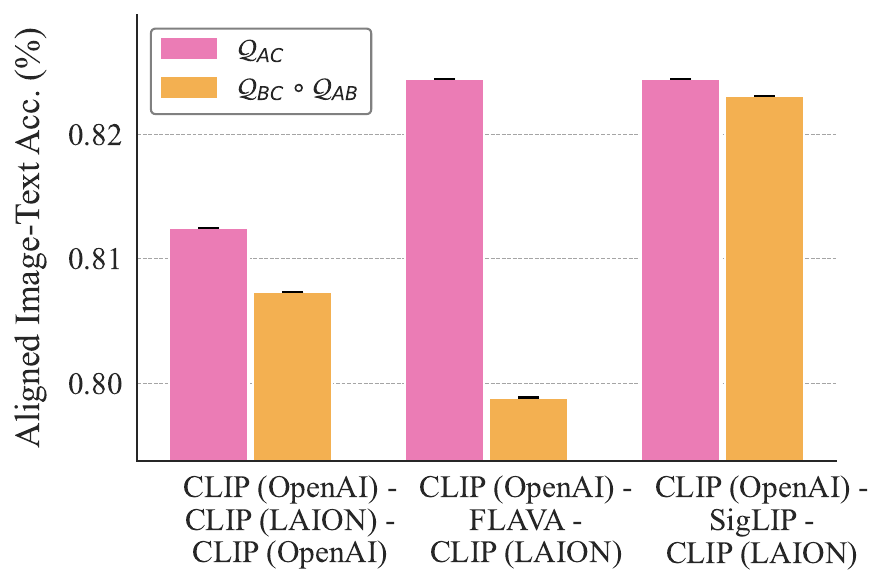}
    \caption{}
    \end{subfigure}
    \begin{subfigure}{0.32\textwidth}
    \centering
    \includegraphics[width=\textwidth]{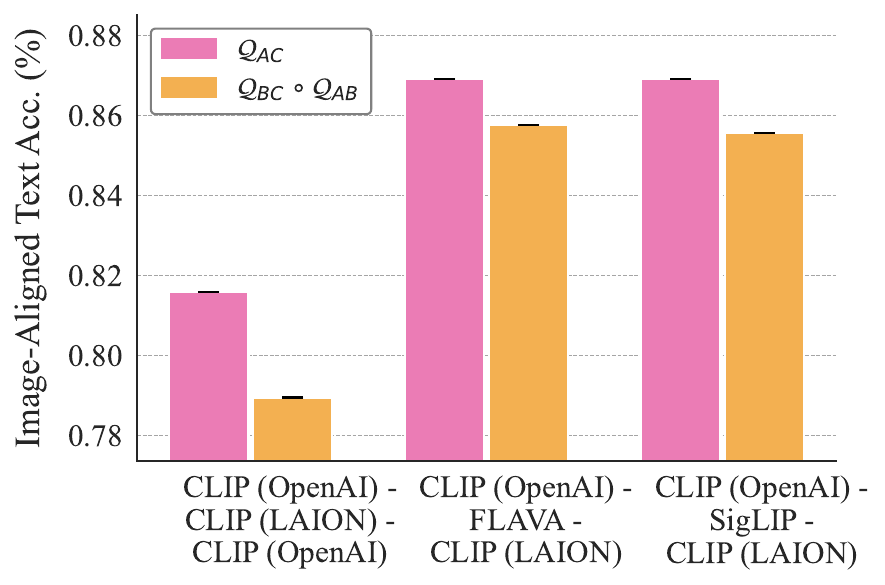}
    \caption{}
    \end{subfigure}
    \begin{subfigure}{0.32\textwidth}
    \centering
    \includegraphics[width=\textwidth]{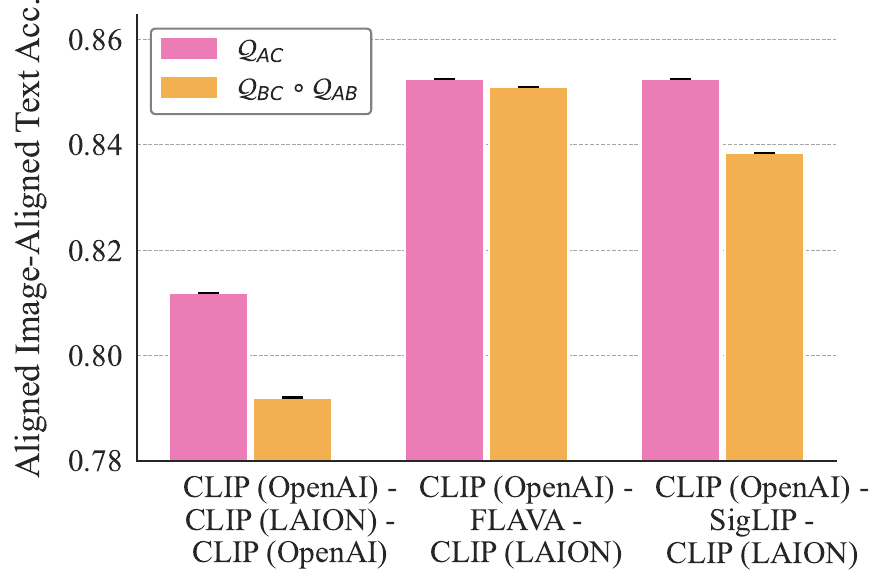}
    \caption{}
    \end{subfigure}


    \caption{\textit{Compositionality on Oxford Pets by comparing $Q_{BC}\circ Q_{AB}$ with $Q_{AC}$.}
    (a) Image–image class retrieval. (b) text–text class retrieval (c) Mean image-image cosine similarity. (c) Mean text-text cosine similarity. (e) Image–text retrieval using aligned images from model A and text from model B. (f) Image–text retrieval using images from model B and aligned text from model A. (g) Image–text retrieval using aligned images and aligned text from model A.\looseness=-1}
    
    \label{fig:app_trans_oxford}
\end{figure*}

\FloatBarrier
\subsection{Alignment Across Embedding Dimensions.}\label{sec:app_different_dims}
All results so far use matched embedding sizes, where an orthogonal map is well-defined. We next align \textsc{CLIP} ViT-B/32 ($d{=}512$) with \textsc{CLIP} ViT-L/14 ($d{=}768$) using a rectangular $\mathcal Q$ projected onto the Stiefel manifold, enforcing $\mathcal Q^\top \mathcal Q \approx I$. In~\Cref{fig:app_different_dims_caltech101,fig:app_different_dims_cifar100,fig:app_different_dims_oxford}, we report performance before and after learning $\mathcal Q$ on image embeddings, and evaluate how effectively the learned map transfers to texts. Across all figures, image-text accuracy remains high in the aligned space, whether using aligned images with native text, native images with aligned text, or aligning both. Text-text pointwise cosine and retrieval also remain strong, indicating that near-isometric maps preserve task-relevant geometry while enabling reliable transfer to the other modality (text in this case).

\begin{figure*}[!htb]
    \centering
    \begin{minipage}{0.65\linewidth}  
    \begin{subfigure}{0.32\textwidth}
    \centering
    \includegraphics[width=\textwidth]{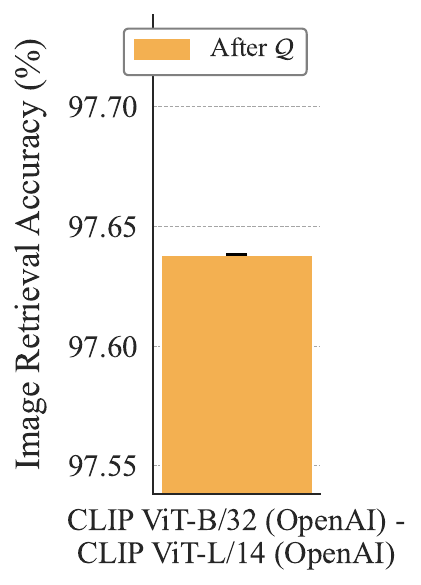}
    \caption{}
    \end{subfigure}
    \begin{subfigure}{0.32\textwidth}
    \centering
    \includegraphics[width=\textwidth]{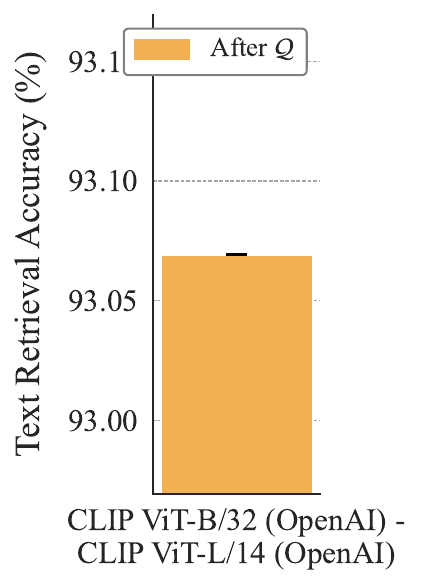}
    \caption{}
    \end{subfigure}
    \begin{subfigure}{0.32\textwidth}
    \centering
    \includegraphics[width=\textwidth]{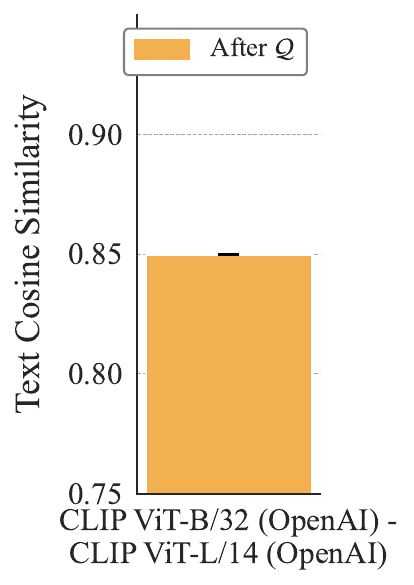}
    \caption{}
    \end{subfigure}

    \vspace{1mm}
    \begin{subfigure}{0.32\textwidth}
    \centering
    \includegraphics[width=\textwidth]{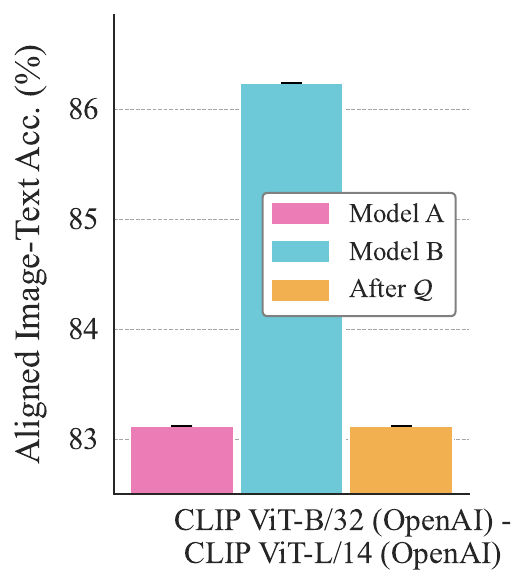}
    \caption{}
    \end{subfigure}
    \begin{subfigure}{0.32\textwidth}
    \centering
    \includegraphics[width=\textwidth]{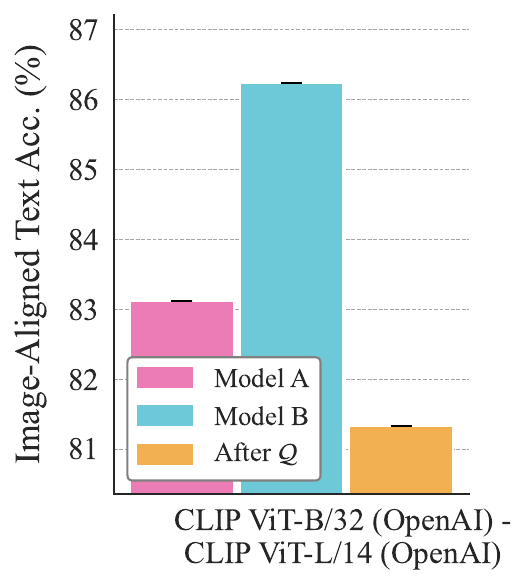}
    \caption{}
    \end{subfigure}
    \begin{subfigure}{0.32\textwidth}
    \centering
    \includegraphics[width=\textwidth]{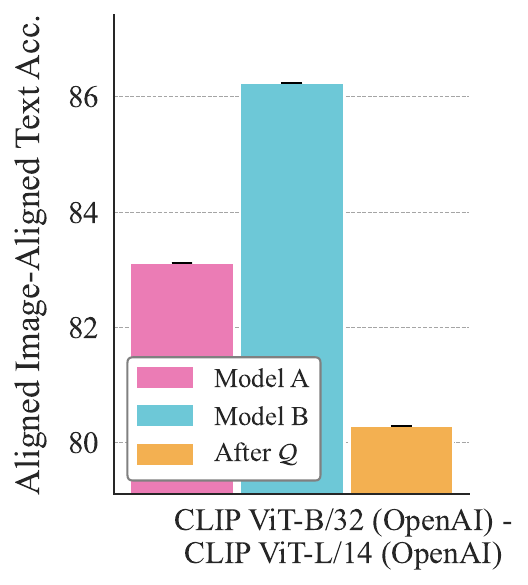}
    \caption{}
    \end{subfigure}
    \end{minipage}
    \caption{\textit{Cross-model alignment across differing embedding dimensions (Caltech-101; CLIP ViT-B/32 to CLIP ViT-L/14, OpenAI) before and after fitting an orthogonal map $\mathcal{Q}$.} (a) Image–image class retrieval and (b) text–text class retrieval. (c) Mean text–text cosine similarity. (d) Image–text retrieval using aligned images from model A and text from model B. (e) Image–text retrieval using images from model B and aligned text from model A. (f) Image–text retrieval using aligned images and aligned text from model A. Enforcing semi-orthogonality via Stiefel manifold projection allows the $512$-dimensional source manifold to be isometrically embedded into the $768$-dimensional target space, maintaining high cross-modal retrieval accuracy.}
    \label{fig:app_different_dims_caltech101}
\end{figure*}

\begin{figure*}[!htb]
    \centering
    \begin{minipage}{0.65\linewidth} 
    \begin{subfigure}{0.32\textwidth}
    \centering
    \includegraphics[width=\textwidth]{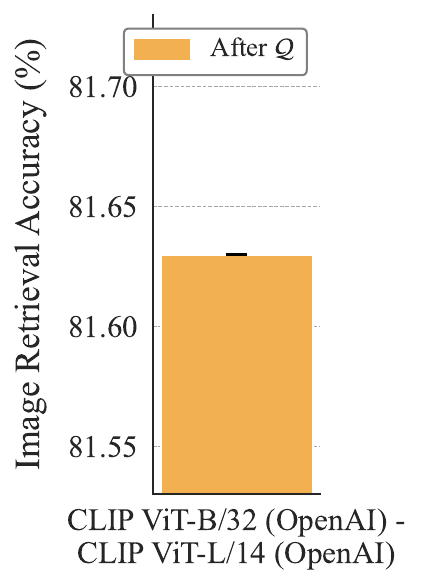}
    \caption{}
    \end{subfigure}
    \begin{subfigure}{0.32\textwidth}
    \centering
    \includegraphics[width=\textwidth]{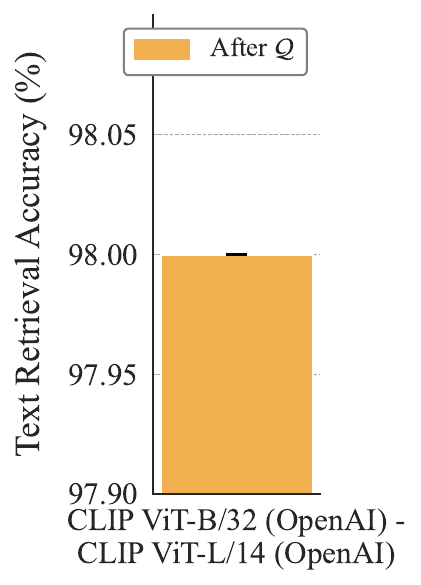}
    \caption{}
    \end{subfigure}
    \begin{subfigure}{0.32\textwidth}
    \centering
    \includegraphics[width=\textwidth]{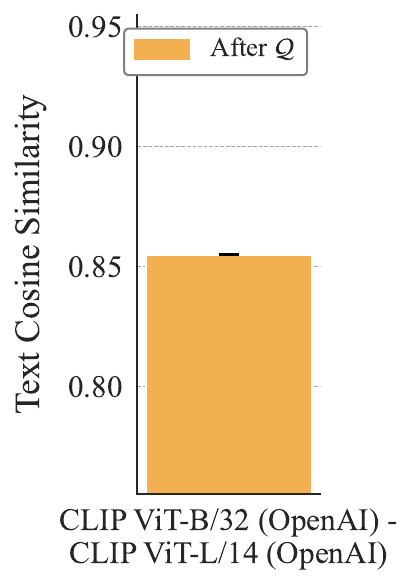}
    \caption{}
    \end{subfigure}

    \vspace{1mm}
    \begin{subfigure}{0.32\textwidth}
    \centering
    \includegraphics[width=\textwidth]{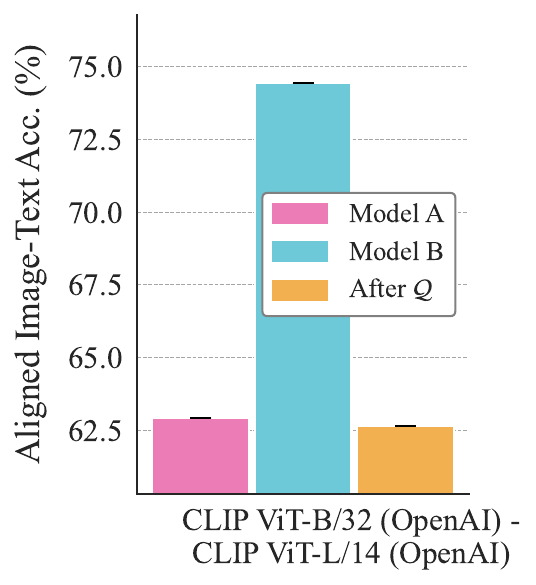}
    \caption{}
    \end{subfigure}
    \begin{subfigure}{0.32\textwidth}
    \centering
    \includegraphics[width=\textwidth]{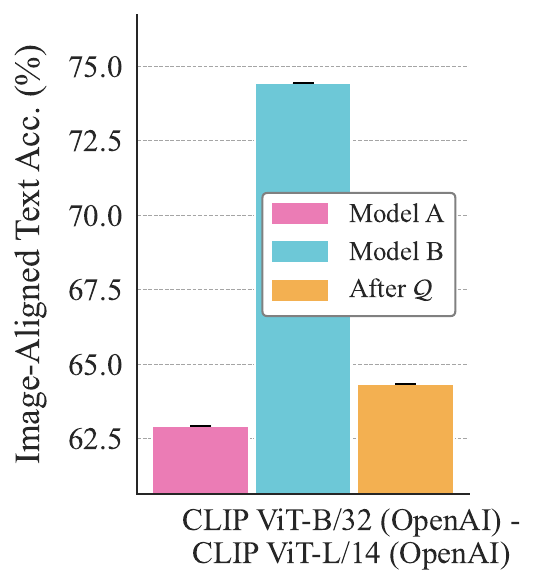}
    \caption{}
    \end{subfigure}
    \begin{subfigure}{0.32\textwidth}
    \centering
    \includegraphics[width=\textwidth]{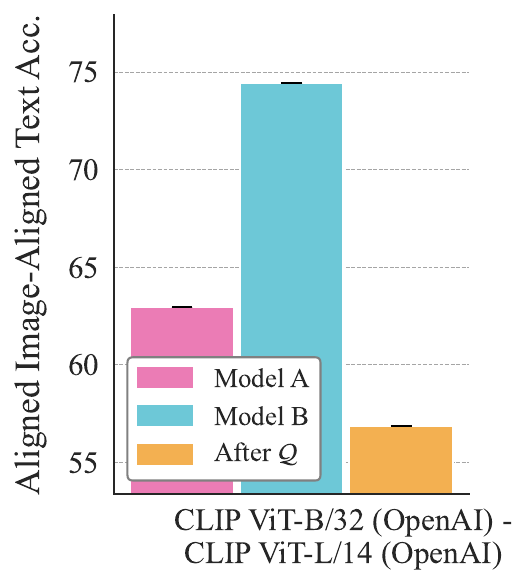}
    \caption{}
    \end{subfigure}
    \end{minipage}
    \caption{\textit{Cross-model alignment across differing embedding dimensions (CIFAR-100; CLIP ViT-B/32 to CLIP ViT-L/14, OpenAI) before and after fitting an orthogonal map $\mathcal{Q}$.} (a) Image–image class retrieval and (b) text–text class retrieval. (c) Mean text–text cosine similarity. (d) Image–text retrieval using aligned images from model A and text from model B. (e) Image–text retrieval using images from model B and aligned text from model A. (f) Image–text retrieval using aligned images and aligned text from model A. Enforcing semi-orthogonality via Stiefel manifold projection allows the $512$-dimensional source manifold to be isometrically embedded into the $768$-dimensional target space, maintaining high cross-modal retrieval accuracy.}
    \label{fig:app_different_dims_cifar100}
\end{figure*}

\begin{figure*}[!htb]
    \centering
    \begin{minipage}{0.65\linewidth} 
    \begin{subfigure}{0.32\textwidth}
    \centering
    \includegraphics[width=\textwidth]{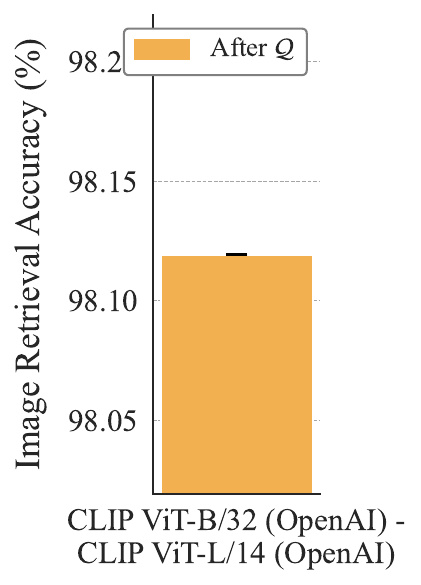}
    \caption{}
    \end{subfigure}
    \begin{subfigure}{0.32\textwidth}
    \centering
    \includegraphics[width=\textwidth]{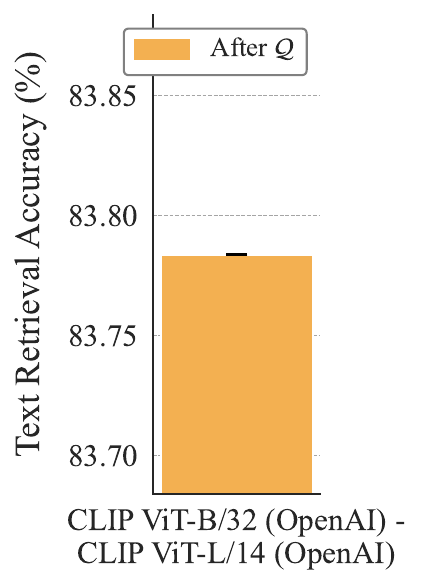}
    \caption{}
    \end{subfigure}
    \begin{subfigure}{0.32\textwidth}
    \centering
    \includegraphics[width=\textwidth]{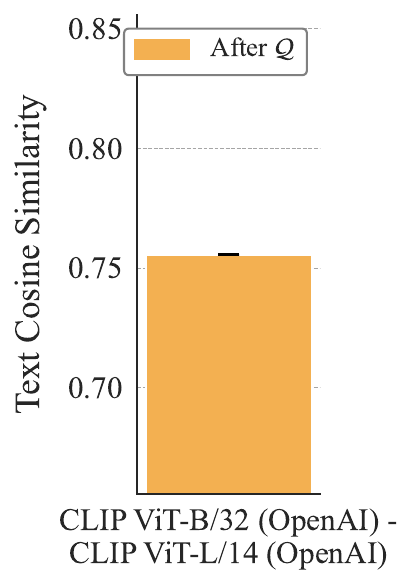}
    \caption{}
    \end{subfigure}

    \vspace{1mm}
    \begin{subfigure}{0.32\textwidth}
    \centering
    \includegraphics[width=\textwidth]{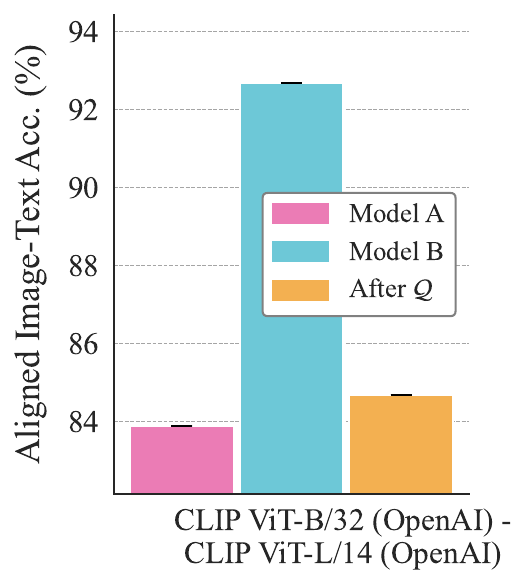}
    \caption{}
    \end{subfigure}
    \begin{subfigure}{0.32\textwidth}
    \centering
    \includegraphics[width=\textwidth]{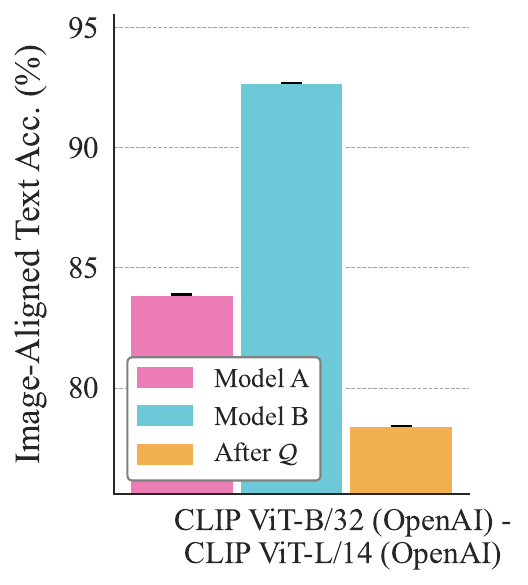}
    \caption{}
    \end{subfigure}
    \begin{subfigure}{0.32\textwidth}
    \centering
    \includegraphics[width=\textwidth]{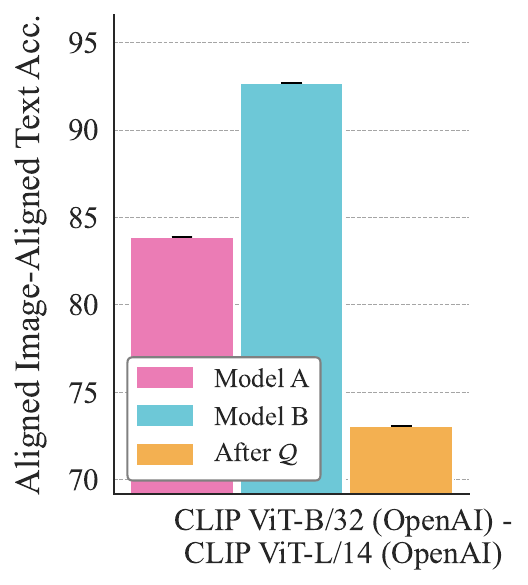}
    \caption{}
    \end{subfigure}
    \end{minipage}
    \caption{\textit{Cross-model alignment across differing embedding dimensions (Oxford Pets; CLIP ViT-B/32 to CLIP ViT-L/14, OpenAI) before and after fitting an orthogonal map $\mathcal{Q}$.} (a) Image–image class retrieval and (b) text–text class retrieval. (c) Mean text–text cosine similarity. (d) Image–text retrieval using aligned images from model A and text from model B. (e) Image–text retrieval using images from model B and aligned text from model A. (f) Image–text retrieval using aligned images and aligned text from model A. Enforcing semi-orthogonality via Stiefel manifold projection allows the $512$-dimensional source manifold to be isometrically embedded into the $768$-dimensional target space, maintaining high cross-modal retrieval accuracy.}
    \label{fig:app_different_dims_oxford}
\end{figure*}

\FloatBarrier
\subsection{Evaluating Alternative Alignment Maps Than The Orthogonal Mapping}\label{sec:app_compare_strategies}
In this section, we ablate the design choices for the alignment function and test three alignment strategies of increasing expressiveness: (i) an orthogonal Procrustes map $Q$, (ii) an unconstrained linear map, and (iii) a small MLP with a residual connection. Results across all model pairs are reported for Caltech-101 (\Cref{fig:app_compare_caltech101}), CIFAR-100 (\Cref{fig:app_compare_cifar100}), and Oxford Pets (\Cref{fig:app_compare_oxford}).
We find that more expressive maps achieve better \emph{pointwise} alignment: both the unconstrained linear map and the MLP yield higher text-text pointwise cosine similarity, reflecting their ability to approximate small distortions and scaling. However, this flexibility comes at the cost of geometric fidelity. The orthogonal map consistently outperforms the alternatives on geometry-sensitive metrics---most notably image to aligned text and aligned-image to aligned-text classification accuracy---which are the metrics that matter for downstream use. By contrast, text-text and image-image retrieval are comparable across all three approaches, with no clear winner. These results indicate that while flexible maps can improve local fit, preserving global geometry through orthogonality can be critical for reliable cross-modal transfer.

\begin{figure*}[!htb]
    \centering
    \begin{subfigure}{0.32\textwidth}
    \centering
    \includegraphics[width=\textwidth]{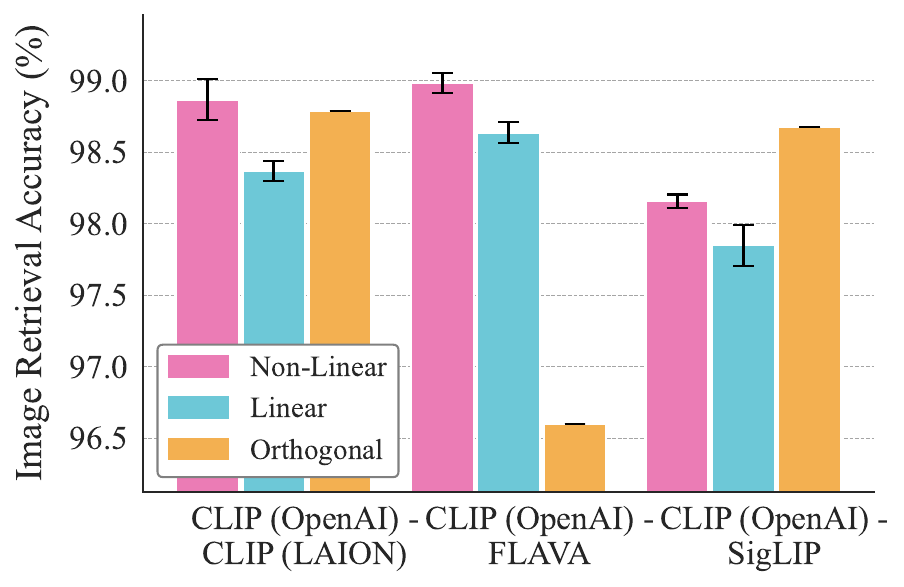}
    \caption{}
    \end{subfigure}
    \begin{subfigure}{0.32\textwidth}
    \centering
    \includegraphics[width=\textwidth]{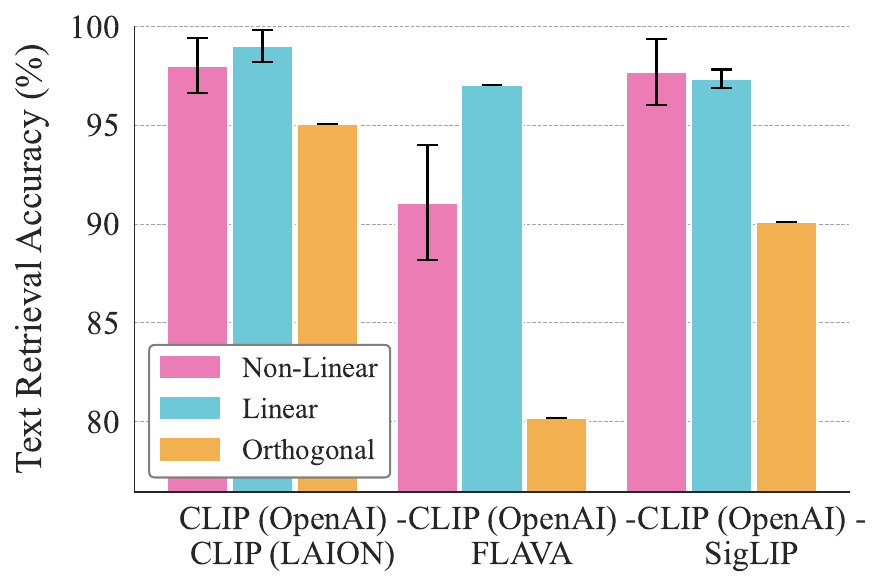}
    \caption{}
    \end{subfigure}
    \begin{subfigure}{0.32\textwidth}
    \centering
    \includegraphics[width=\textwidth]{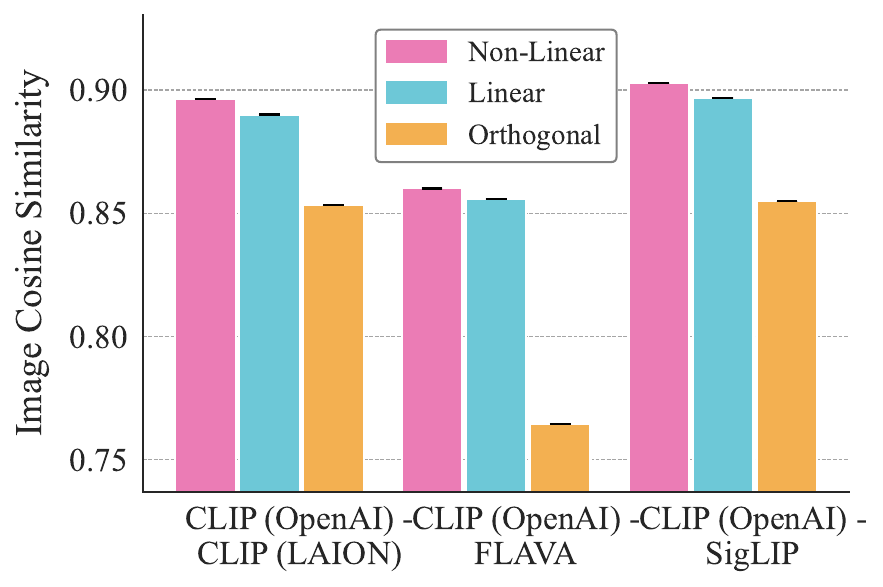}
    \caption{}
    \end{subfigure}
    \begin{subfigure}{0.32\textwidth}
    \centering
    \includegraphics[width=\textwidth]{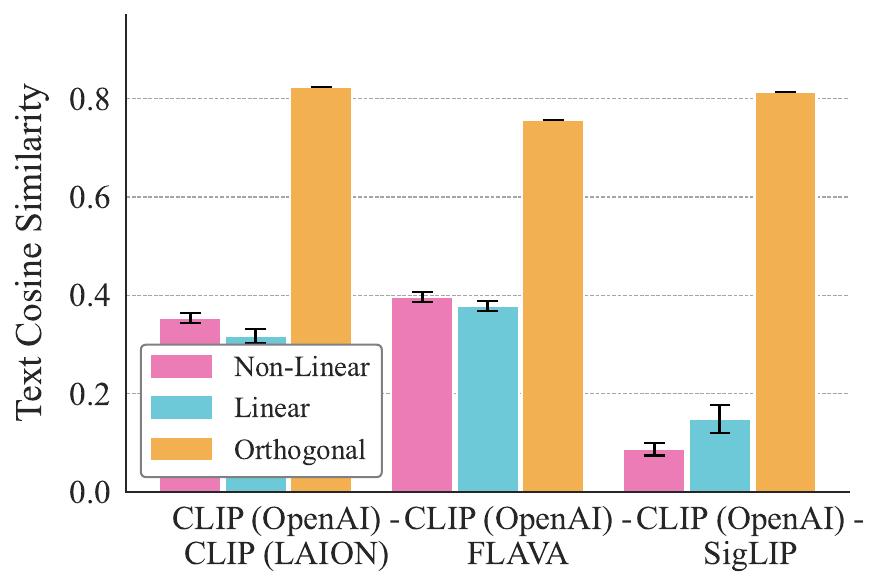}
    \caption{}
    \end{subfigure}
    \begin{subfigure}{0.32\textwidth}
    \centering
    \includegraphics[width=\textwidth]{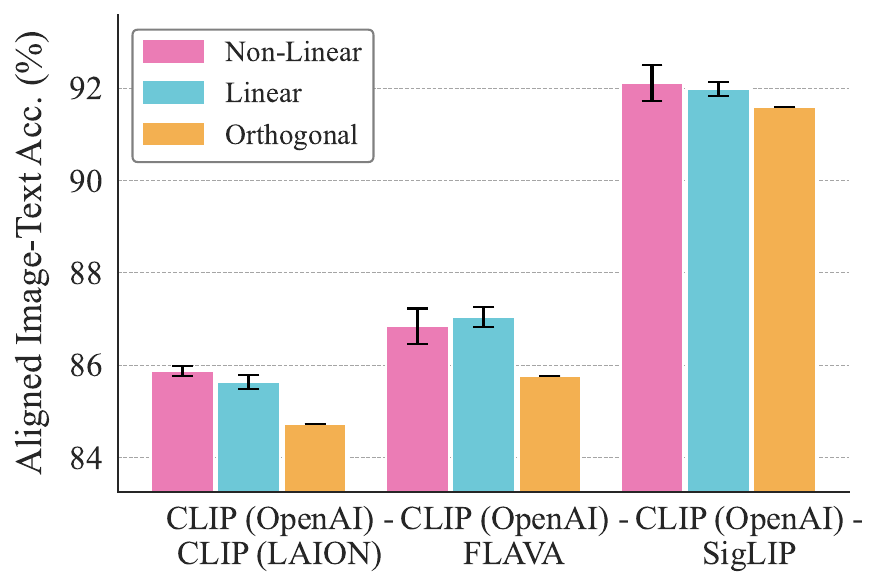}
    \caption{}
    \end{subfigure}
    \begin{subfigure}{0.32\textwidth}
    \centering
    \includegraphics[width=\textwidth]{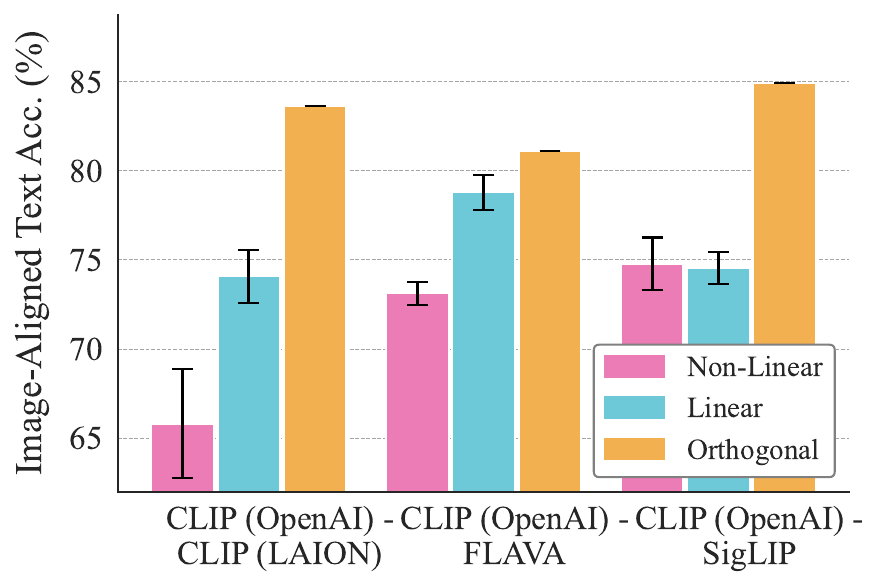}
    \caption{}
    \end{subfigure}
    \begin{subfigure}{0.32\textwidth}
    \centering
    \includegraphics[width=\textwidth]{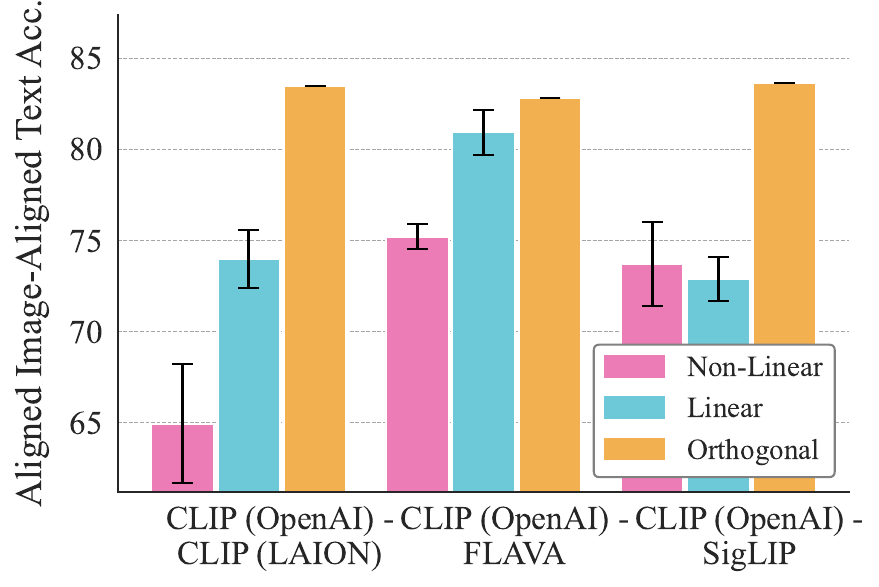}
    \caption{}
    \end{subfigure}

    \caption{\textit{Comparison of alignment strategies (Non-Linear, Linear, and Orthogonal) on Caltech-101 across model pairs.} (a) Image–image class retrieval and (b) text–text class retrieval. (c) Mean image–image cosine similarity. (d) Mean text–text cosine similarity. (e) Image–text retrieval using aligned images from model A and text from model B. (f) Image–text retrieval using images from model B and aligned text from model A. (g) Image–text retrieval using aligned images and aligned text from model A. Unlike linear and non-linear maps, orthogonal transformations preserve the underlying image-text geometry favorable for downstream tasks and transfers across modalities (text in this case).}
    \label{fig:app_compare_caltech101}
\end{figure*}

\begin{figure*}[!htb]
    \centering
    \begin{subfigure}{0.32\textwidth}
    \centering
    \includegraphics[width=\textwidth]{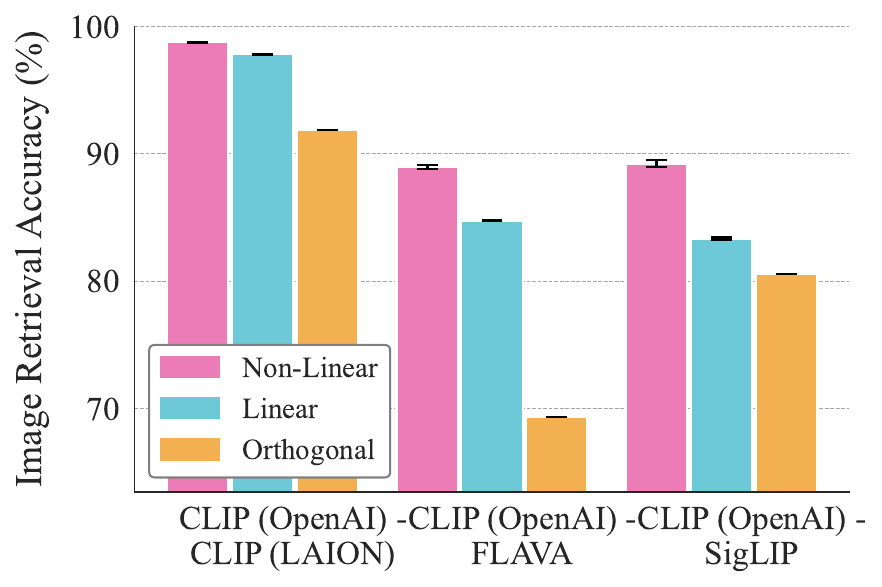}
    \caption{}
    \end{subfigure}
    \begin{subfigure}{0.32\textwidth}
    \centering
    \includegraphics[width=\textwidth]{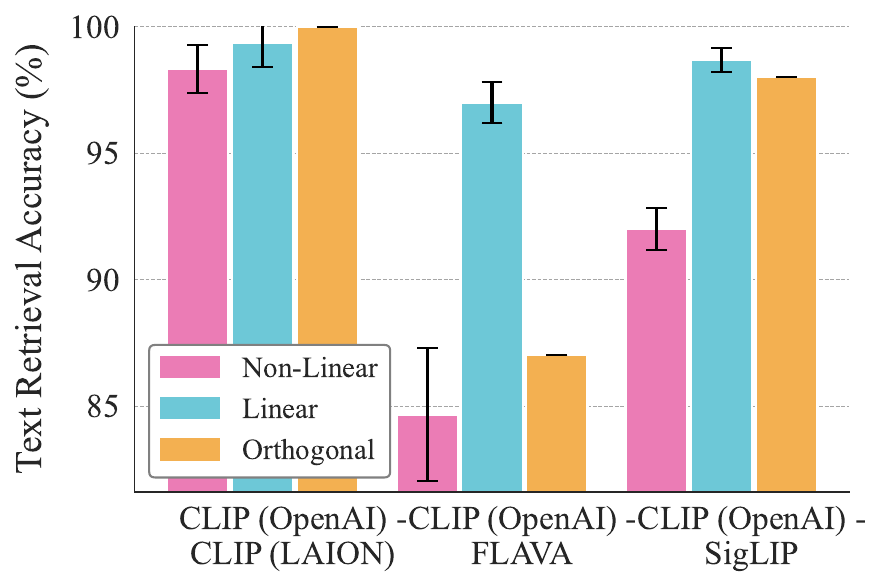}
    \caption{}
    \end{subfigure}
    \begin{subfigure}{0.32\textwidth}
    \centering
    \includegraphics[width=\textwidth]{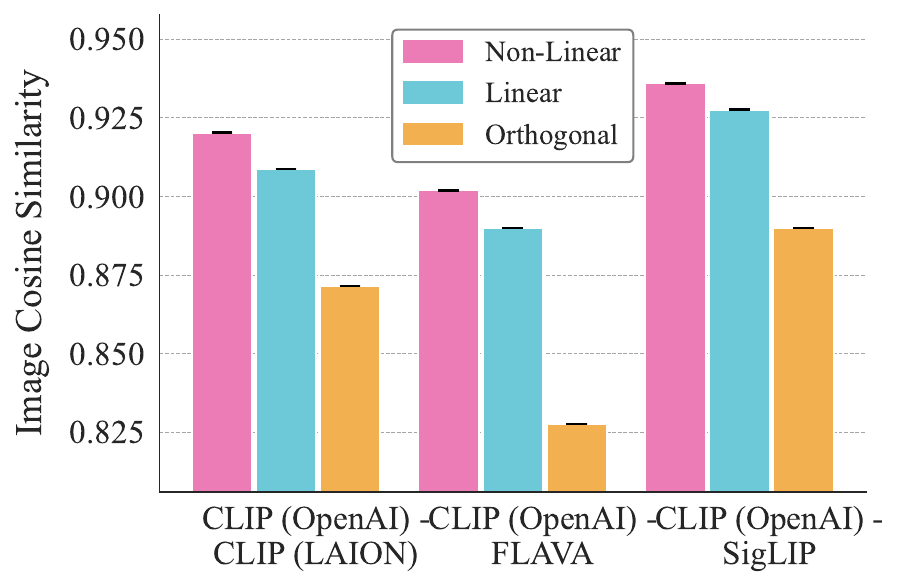}
    \caption{}
    \end{subfigure}
    \begin{subfigure}{0.32\textwidth}
    \centering
    \includegraphics[width=\textwidth]{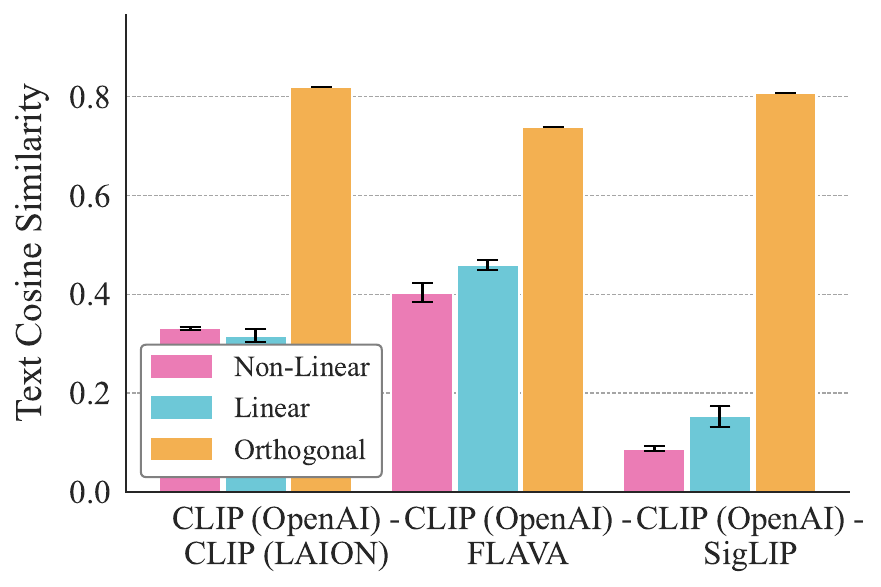}
    \caption{}
    \end{subfigure}
    \begin{subfigure}{0.32\textwidth}
    \centering
    \includegraphics[width=\textwidth]{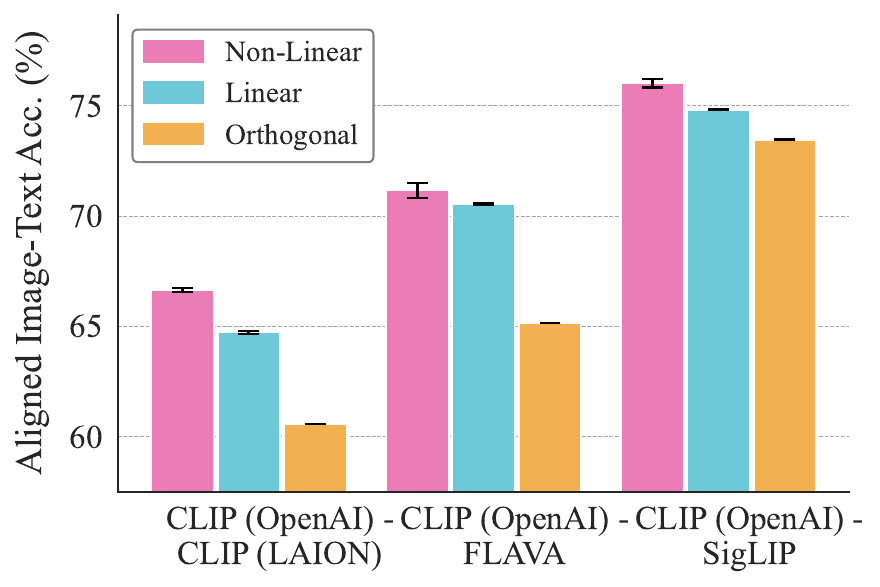}
    \caption{}
    \end{subfigure}
    \begin{subfigure}{0.32\textwidth}
    \centering
    \includegraphics[width=\textwidth]{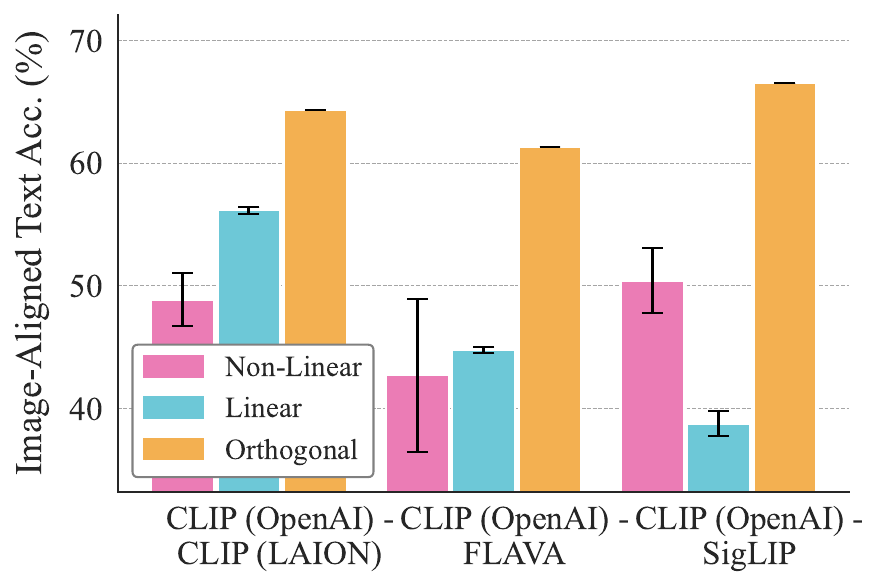}
    \caption{}
    \end{subfigure}
    \begin{subfigure}{0.32\textwidth}
    \centering
    \includegraphics[width=\textwidth]{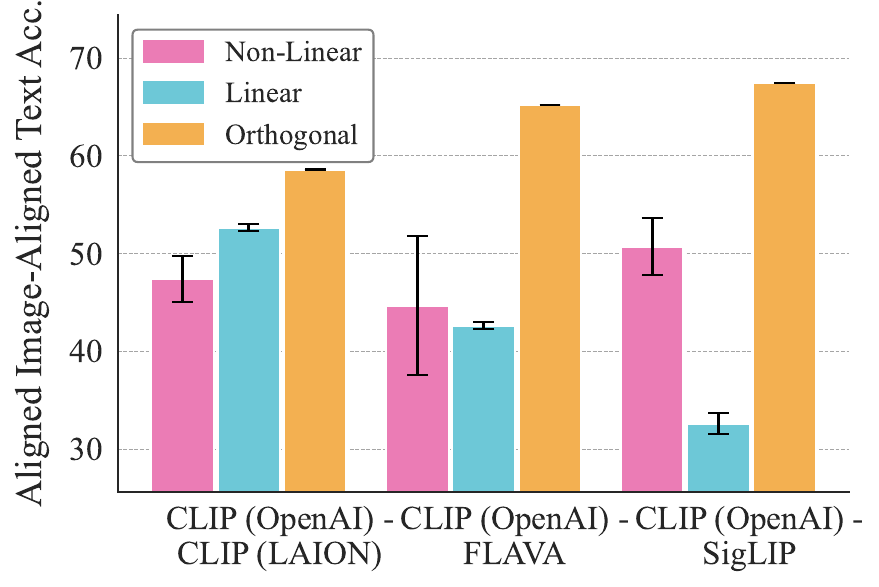}
    \caption{}
    \end{subfigure}

    \caption{\textit{Comparison of alignment strategies (Non-Linear, Linear, and Orthogonal) on CIFAR-100 across model pairs.} (a) Image–image class retrieval and (b) text–text class retrieval. (c) Mean image–image cosine similarity. (d) Mean text–text cosine similarity. (e) Image–text retrieval using aligned images from model A and text from model B. (f) Image–text retrieval using images from model B and aligned text from model A. (g) Image–text retrieval using aligned images and aligned text from model A. Unlike linear and non-linear maps, orthogonal transformations preserve the underlying image-text geometry favorable for downstream tasks and transfers across modalities (text in this case).}
    \label{fig:app_compare_cifar100}
\end{figure*}

\begin{figure*}[!htb]
    \centering
    \begin{subfigure}{0.32\textwidth}
    \centering
    \includegraphics[width=\textwidth]{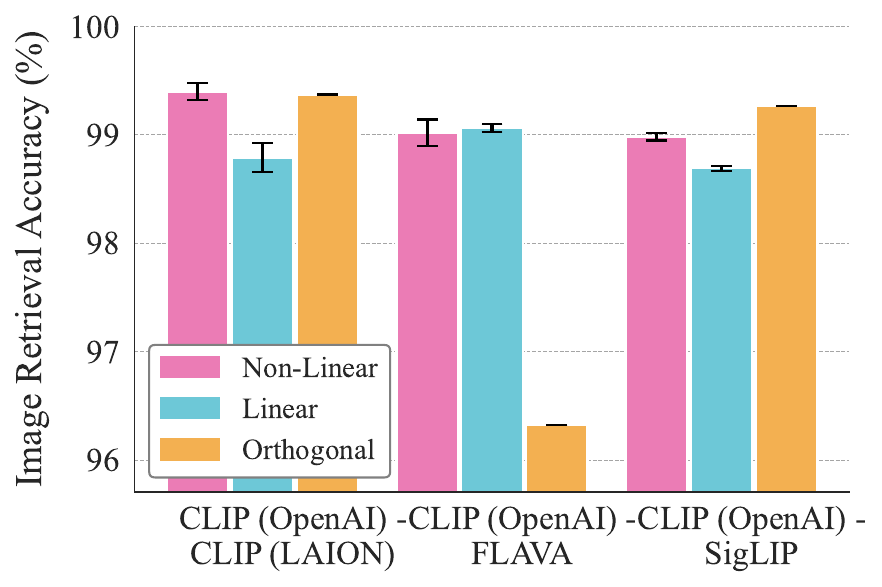}
    \caption{}
    \end{subfigure}
    \begin{subfigure}{0.32\textwidth}
    \centering
    \includegraphics[width=\textwidth]{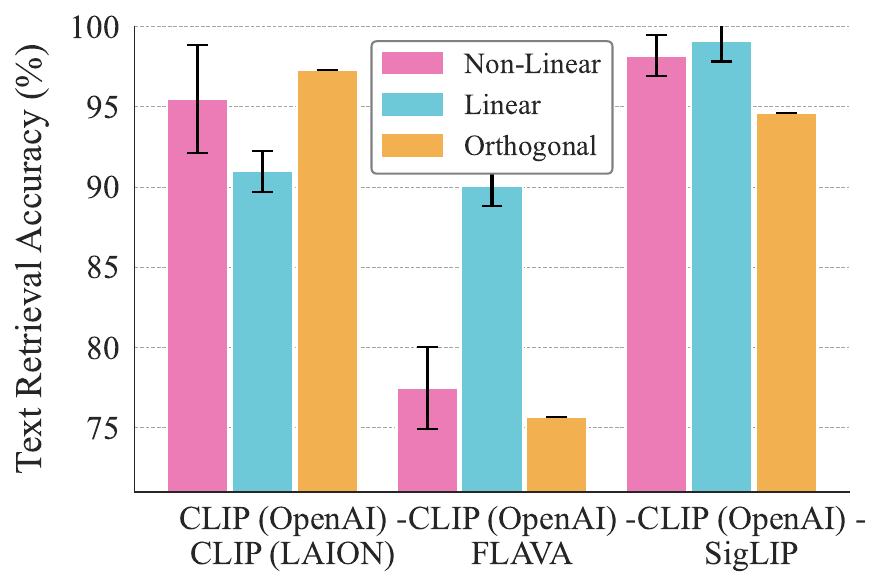}
    \caption{}
    \end{subfigure}
    \begin{subfigure}{0.32\textwidth}
    \centering
    \includegraphics[width=\textwidth]{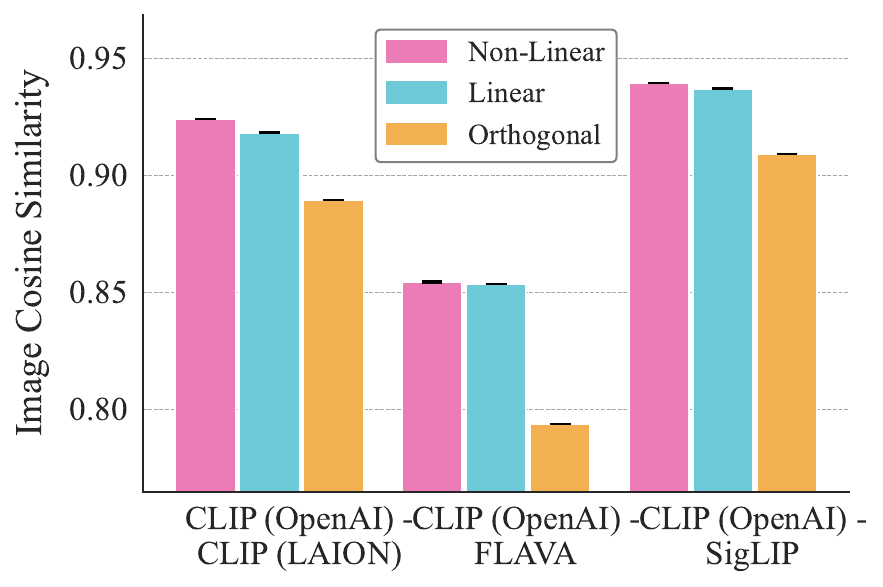}
    \caption{}
    \end{subfigure}
    \begin{subfigure}{0.32\textwidth}
    \centering
    \includegraphics[width=\textwidth]{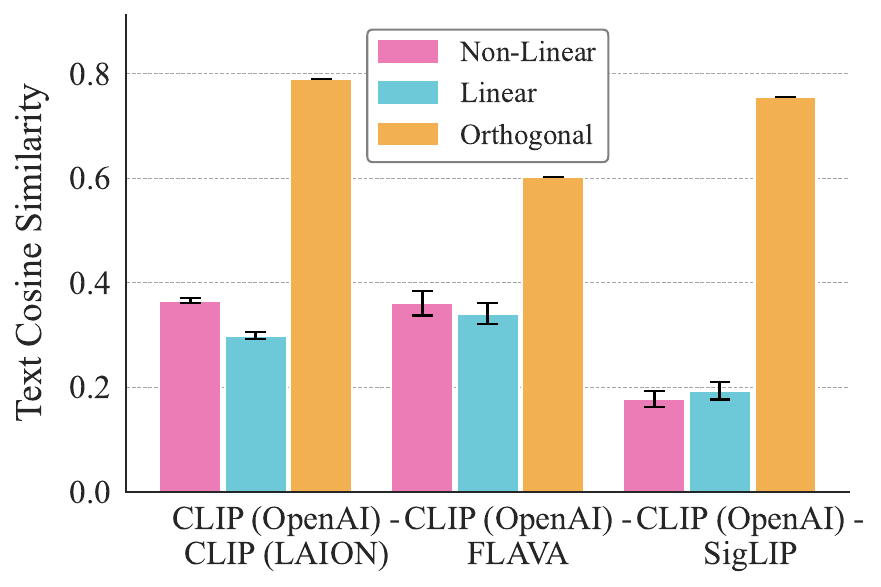}
    \caption{}
    \end{subfigure}
    \begin{subfigure}{0.32\textwidth}
    \centering
    \includegraphics[width=\textwidth]{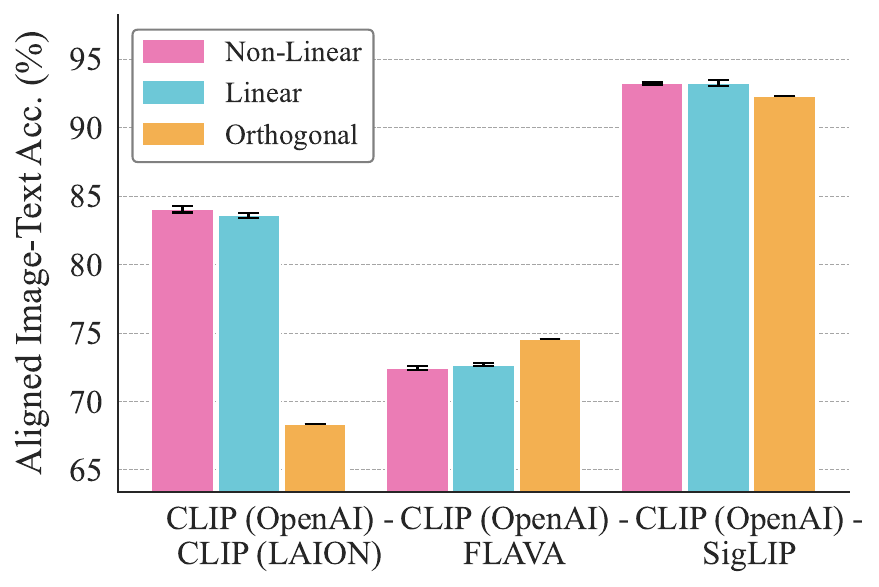}
    \caption{}
    \end{subfigure}
    \begin{subfigure}{0.32\textwidth}
    \centering
    \includegraphics[width=\textwidth]{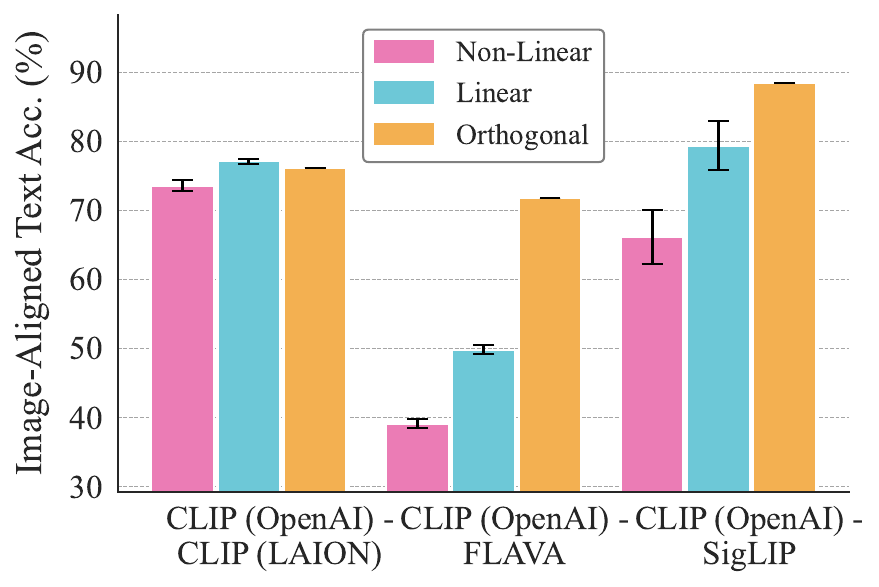}
    \caption{}
    \end{subfigure}
    \begin{subfigure}{0.32\textwidth}
    \centering
    \includegraphics[width=\textwidth]{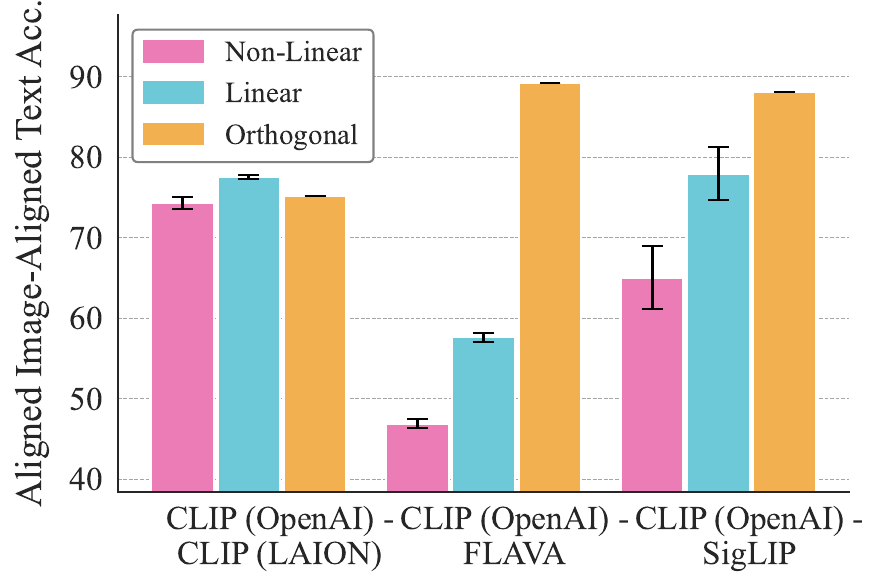}
    \caption{}
    \end{subfigure}

    \caption{\textit{Comparison of alignment strategies (Non-Linear, Linear, and Orthogonal) on Oxford Pets across model pairs (CLIP OpenAI vs. LAION, FLAVA, and SigLIP).} (a) Image–image class retrieval and (b) text–text class retrieval. (c) Mean image–image cosine similarity. (d) Mean text–text cosine similarity. (e) Image–text retrieval using aligned images from model A and text from model B. (f) Image–text retrieval using images from model B and aligned text from model A. (g) Image–text retrieval using aligned images and aligned text from model A. Unlike linear and non-linear maps, orthogonal transformations preserve the underlying image-text geometry favorable for downstream tasks and transfers across modalities (text in this case).}
    \label{fig:app_compare_oxford}
\end{figure*}

\FloatBarrier
\subsection{Orthogonal Alignment With and Without Centering}\label{sec:app_procrustes_variants}
All results thus far use our \emph{centered} orthogonal Procrustes map (i.e., we fit $Q$ on mean-centered embeddings and re-center at deployment). Here, we ablate this centering step by comparing it to a pure orthogonal map applied directly to raw embeddings. Specifically, we evaluate: (i) \emph{Orthogonal (no centering)}: $z \mapsto Qz$, a pure change of coordinates that preserves inner products exactly; and (ii) \emph{Orthogonal (with centering)}: $z \mapsto Q(z-\mu_A^{(\cdot)})+\mu_B^{(\cdot)}$, where $\mu_A^{(\cdot)}$ and $\mu_B^{(\cdot)}$ are modality-specific means of the source and target models (image or text, as appropriate). Throughout, we use ``$Q$'' to denote the centered variant, consistent with the rest of the paper. 

\par Results on Caltech-101, CIFAR-100 and Oxford Pets are presented in~\Cref{fig:app_procrustes_caltech101,fig:app_procrustes_cifar100,fig:app_procrustes_oxford} respectively. 
Across all figures, the orthogonal map with mean centering (denoted by $Q$) improves downstream metrics such as aligned-image-to-text and image-to-aligned-text retrieval, suggesting that centering helps correct first-order inter-model offsets in these comparisons. In contrast, aligned-image-to-aligned-text accuracy is typically higher without centering, since orthogonal w/o centering exactly preserves model~A’s geometry. That said, these gains are modest, and the overall retrieval performance remains largely similar with or without centering. Finally, pointwise image-image and text-text cosine similarities increase significantly under centering, as subtracting and re-adding the global centroid shift removes a global centroid shift that otherwise depresses cosine agreement.

\begin{figure*}[!htb]
    \centering
    \begin{subfigure}{0.32\textwidth}
    \centering
    \includegraphics[width=\textwidth]{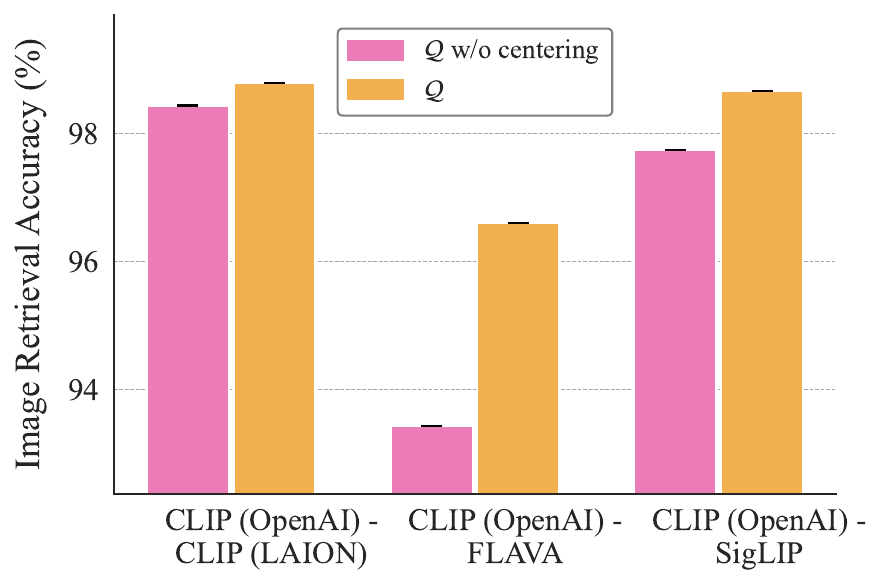}
    \caption{}
    \end{subfigure}
    \begin{subfigure}{0.32\textwidth}
    \centering
    \includegraphics[width=\textwidth]{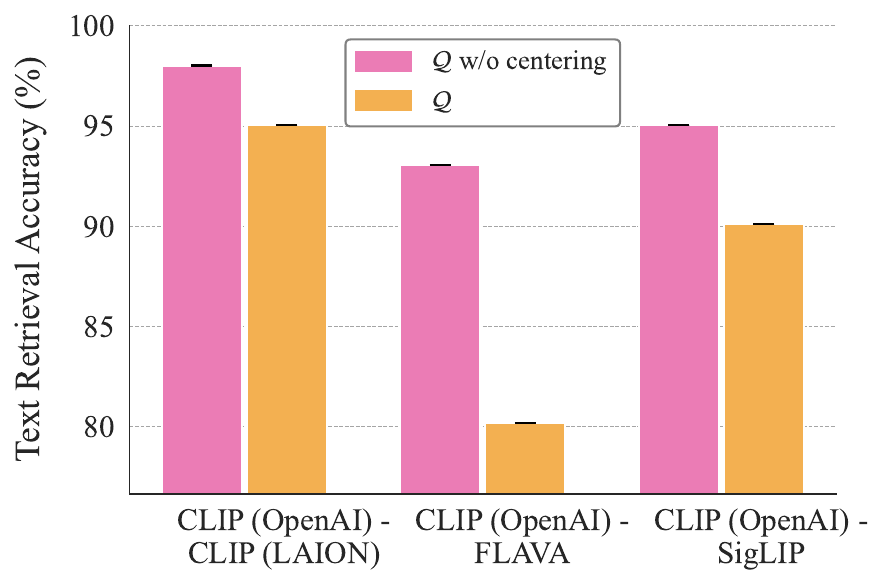}
    \caption{}
    \end{subfigure}
    \begin{subfigure}{0.32\textwidth}
    \centering
    \includegraphics[width=\textwidth]{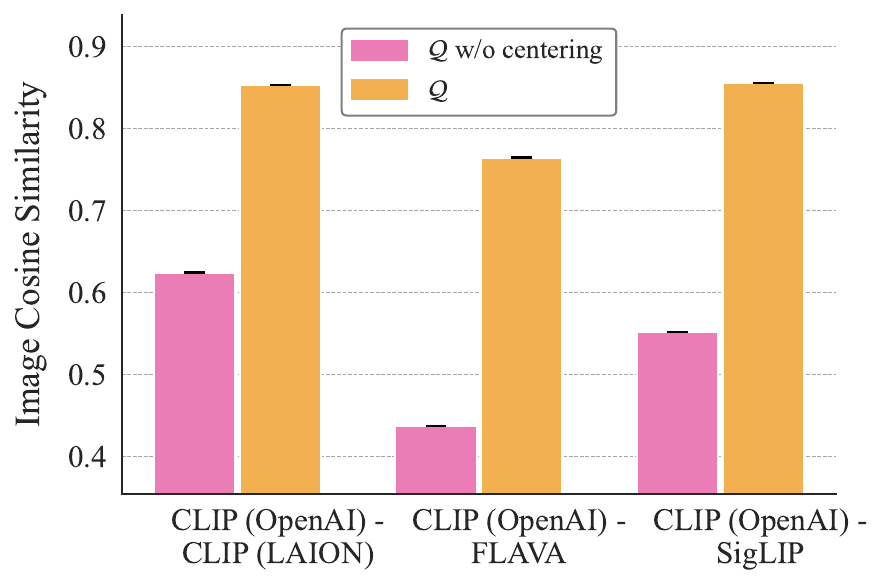}
    \caption{}
    \end{subfigure}
    \begin{subfigure}{0.32\textwidth}
    \centering
    \includegraphics[width=\textwidth]{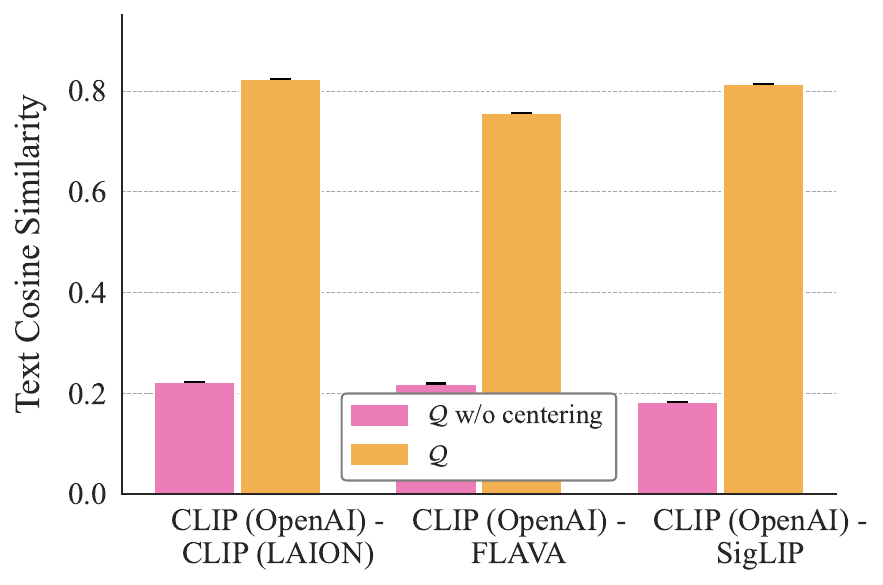}
    \caption{}
    \end{subfigure}
    \vspace{1mm}
    \begin{subfigure}{0.32\textwidth}
    \centering
    \includegraphics[width=\textwidth]{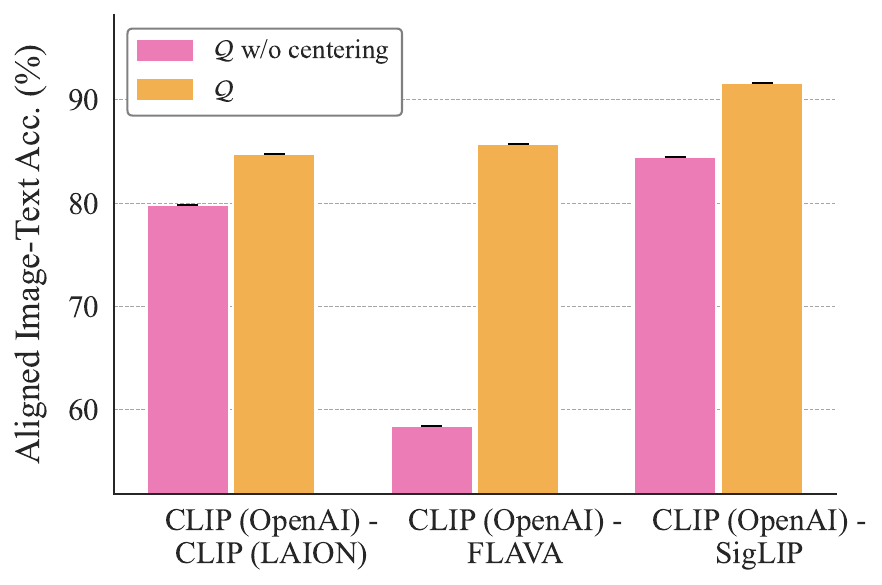}
    \caption{}
    \end{subfigure}
    \begin{subfigure}{0.32\textwidth}
    \centering
    \includegraphics[width=\textwidth]{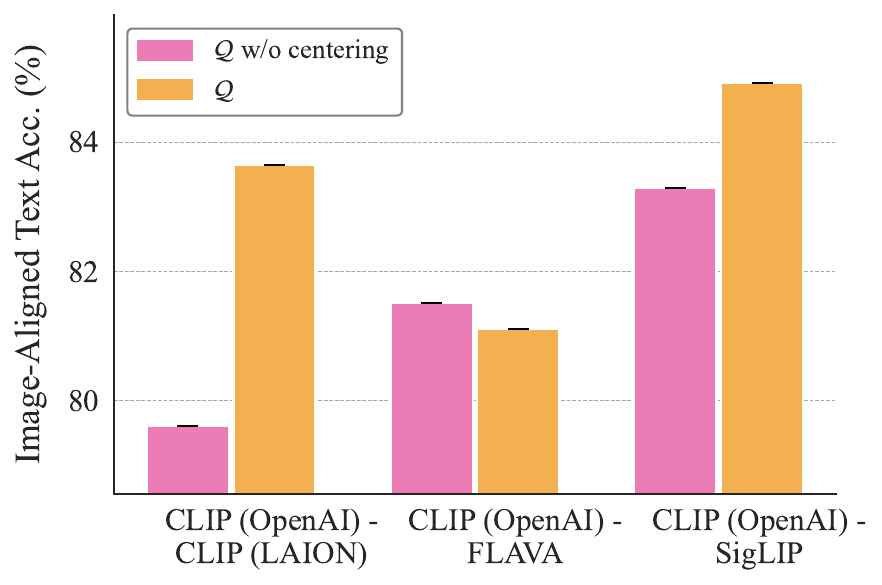}
    \caption{}
    \end{subfigure}
    \begin{subfigure}{0.32\textwidth}
    \centering
    \includegraphics[width=\textwidth]{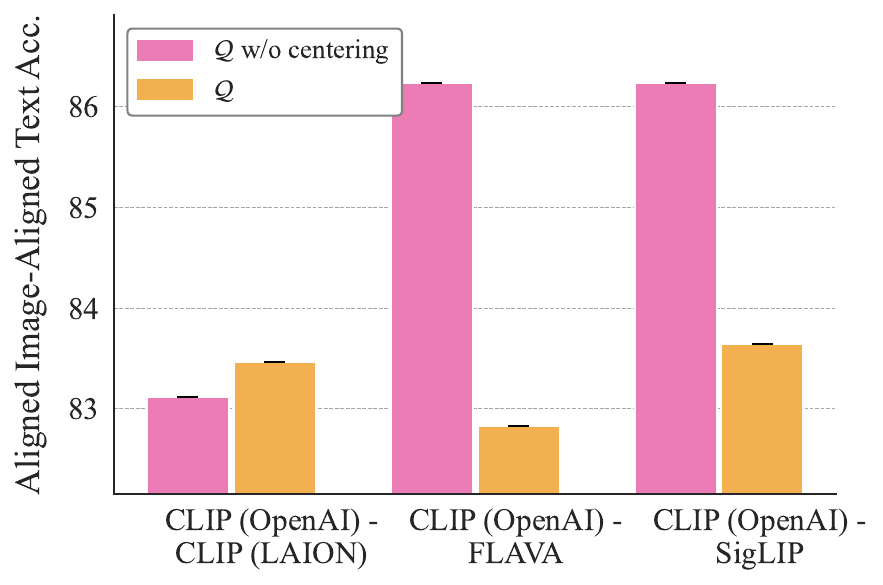}
    \caption{}
    \end{subfigure}

    \caption{\textit{Comparison between an orthogonal map with and without mean adjustment across model pairs on Caltech-101.} (a) Image–image class retrieval (b) text–text class retrieval (c) Mean image–image cosine similarity (d) Mean text–text cosine similarity. (e) Image–text retrieval using aligned images from model A and text from model B. (f) Image–text retrieval using images from model B and aligned text from model A. (g) Image–text retrieval using aligned images and aligned text from model A. Mean adjustment boosts pointwise image-image (c) and text-text (d) cosine and downstream performance (e,f), but can slightly underperform on aligned-image-to-aligned-text and text-text retrieval (b,g).}
    \label{fig:app_procrustes_caltech101}
\end{figure*}

\begin{figure*}[!htb]
    \centering
    \begin{subfigure}{0.32\textwidth}
    \centering
    \includegraphics[width=\textwidth]{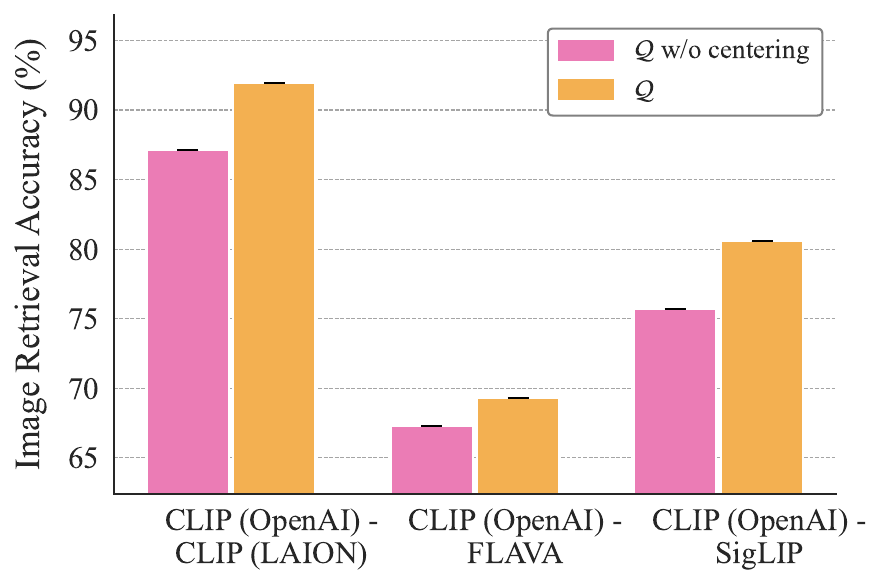}
    \caption{}
    \end{subfigure}
    \begin{subfigure}{0.32\textwidth}
    \centering
    \includegraphics[width=\textwidth]{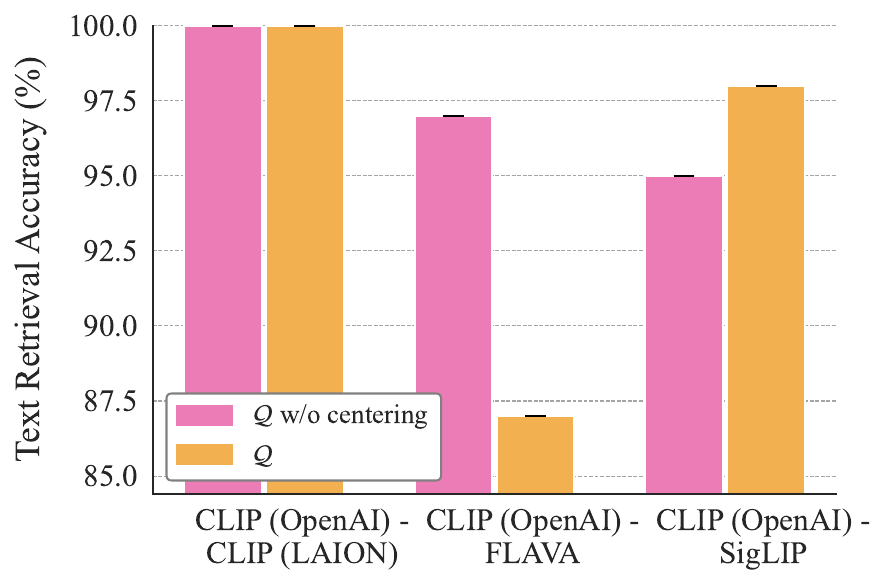}
    \caption{}
    \end{subfigure}
    \begin{subfigure}{0.32\textwidth}
    \centering
    \includegraphics[width=\textwidth]{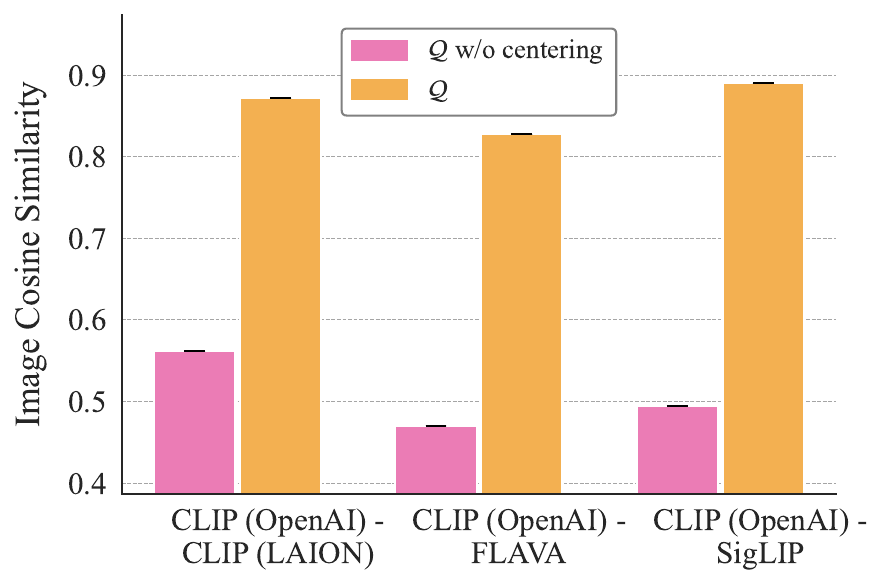}
    \caption{}
    \end{subfigure}
    \begin{subfigure}{0.32\textwidth}
    \centering
    \includegraphics[width=\textwidth]{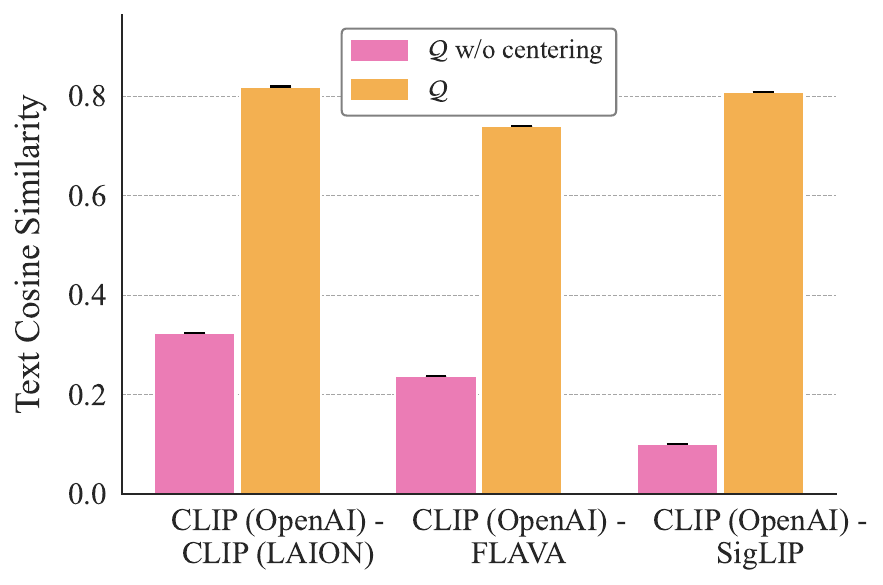}
    \caption{}
    \end{subfigure}
    \vspace{1mm}
    \begin{subfigure}{0.32\textwidth}
    \centering
    \includegraphics[width=\textwidth]{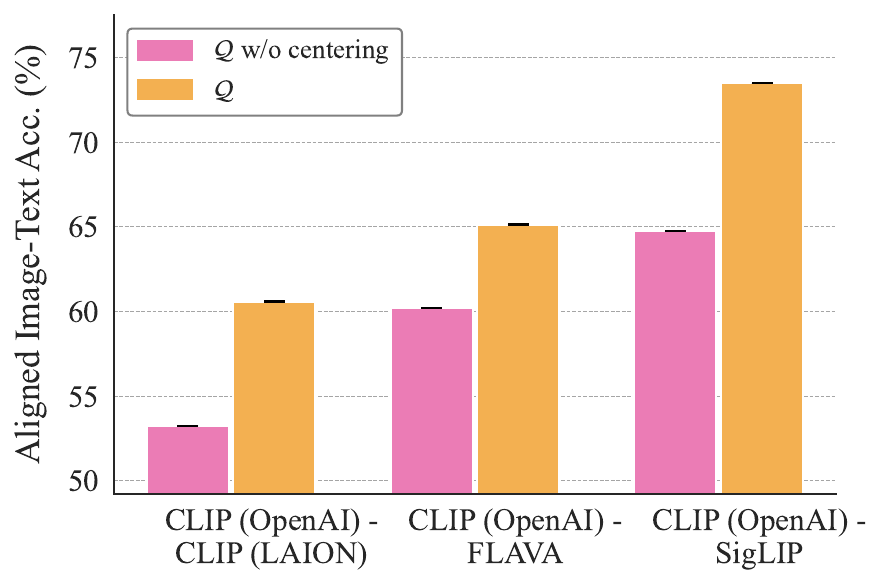}
    \caption{}
    \end{subfigure}
    \begin{subfigure}{0.32\textwidth}
    \centering
    \includegraphics[width=\textwidth]{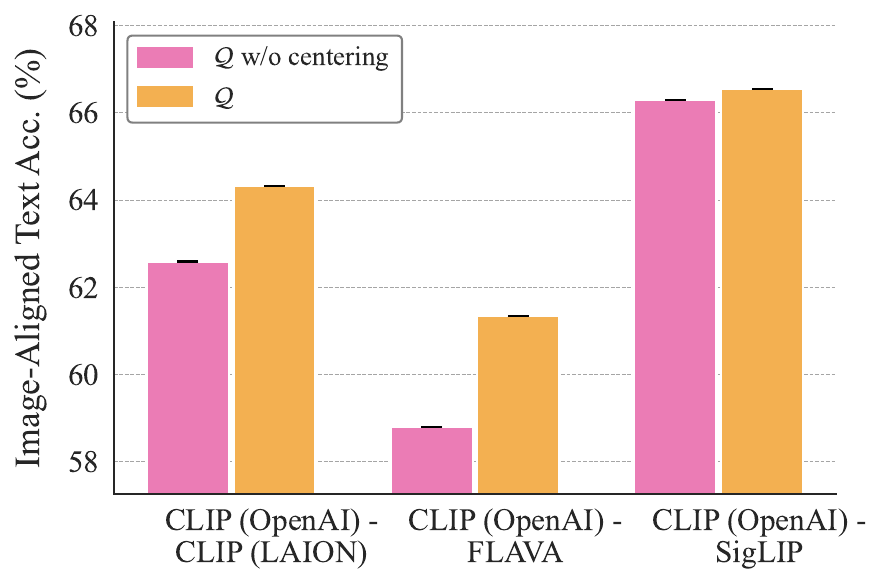}
    \caption{}
    \end{subfigure}
    \begin{subfigure}{0.32\textwidth}
    \centering
    \includegraphics[width=\textwidth]{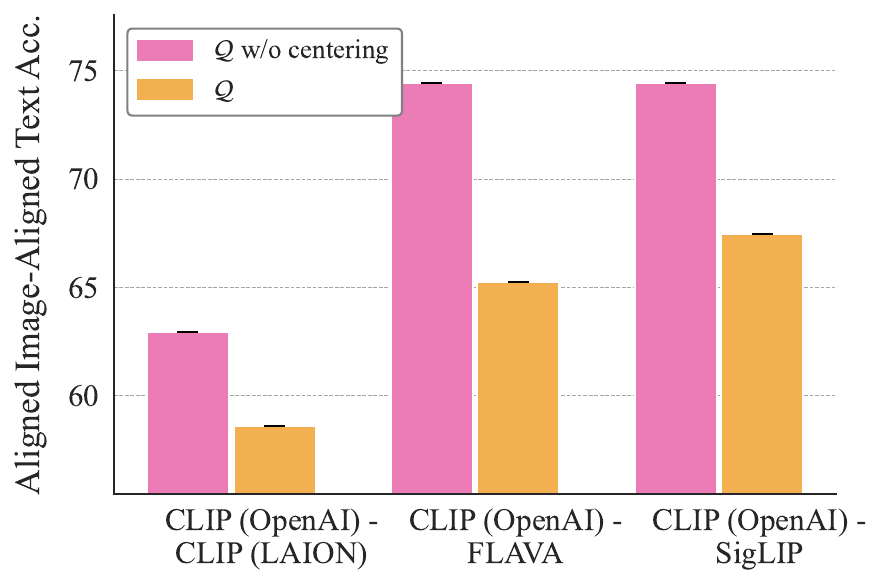}
    \caption{}
    \end{subfigure}

    \caption{\textit{Comparison between an orthogonal map with and without mean adjustment across model pairs on CIFAR-100.} (a) Image–image class retrieval (b) text–text class retrieval (c) Mean image–image cosine similarity (d) Mean text–text cosine similarity. (e) Image–text retrieval using aligned images from model A and text from model B. (f) Image–text retrieval using images from model B and aligned text from model A. (g) Image–text retrieval using aligned images and aligned text from model A. Mean adjustment boosts pointwise image-image (c) and text-text (d) cosine and downstream performance (e,f), but can slightly underperform on aligned-image-to-aligned-text and text-text retrieval (b,g).}
    \label{fig:app_procrustes_cifar100}
\end{figure*}

\begin{figure*}[!htb]
    \centering
    \begin{subfigure}{0.32\textwidth}
    \centering
    \includegraphics[width=\textwidth]{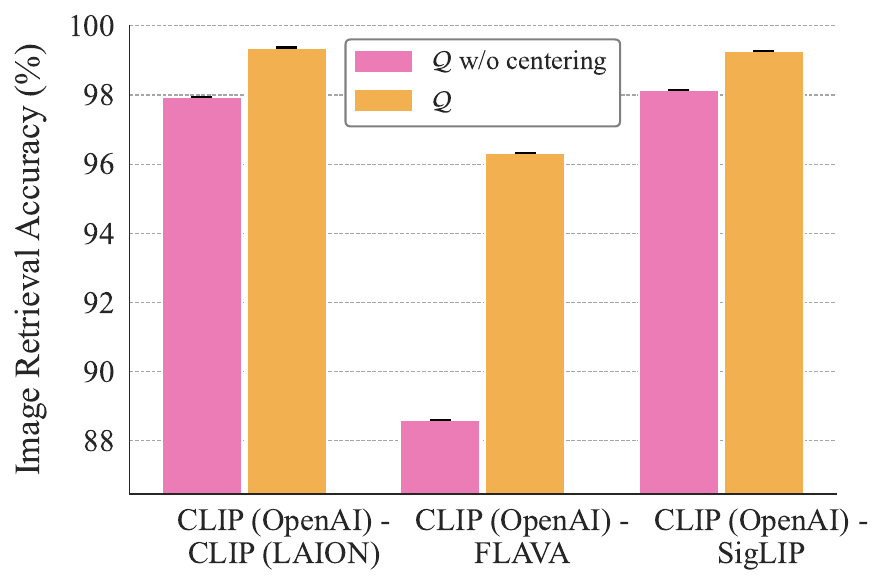}
    \caption{}
    \end{subfigure}
    \begin{subfigure}{0.32\textwidth}
    \centering
    \includegraphics[width=\textwidth]{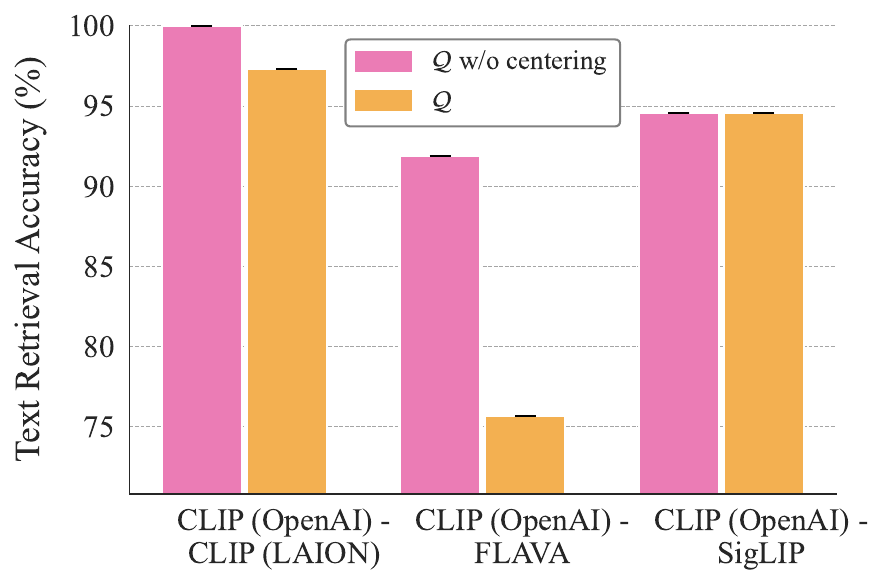}
    \caption{}
    \end{subfigure}
    \begin{subfigure}{0.32\textwidth}
    \centering
    \includegraphics[width=\textwidth]{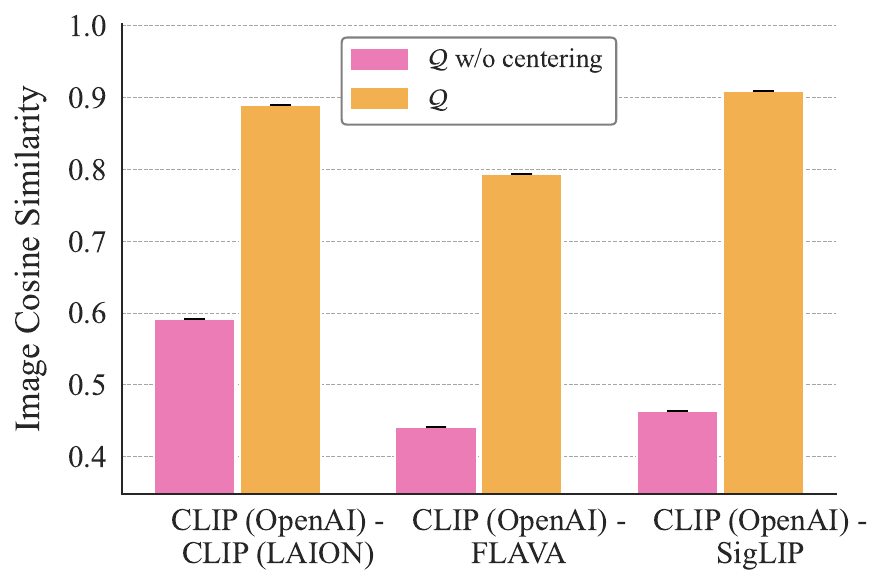}
    \caption{}
    \end{subfigure}
    \begin{subfigure}{0.32\textwidth}
    \centering
    \includegraphics[width=\textwidth]{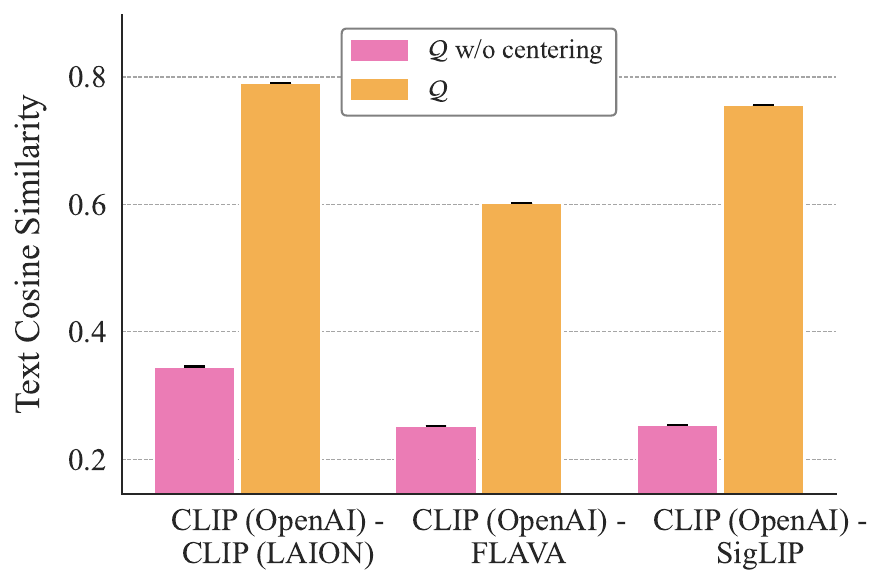}
    \caption{}
    \end{subfigure}
    \vspace{1mm}
    \begin{subfigure}{0.32\textwidth}
    \centering
    \includegraphics[width=\textwidth]{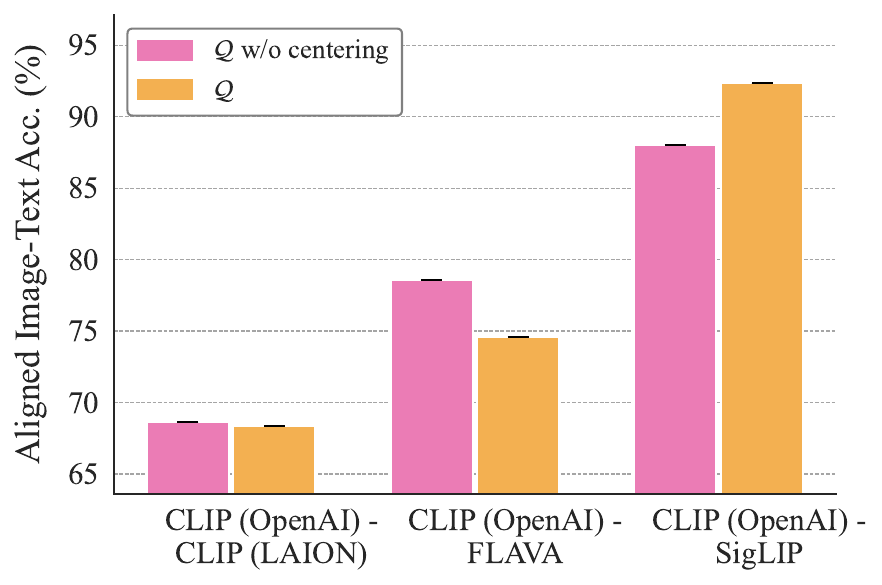}
    \caption{}
    \end{subfigure}
    \begin{subfigure}{0.32\textwidth}
    \centering
    \includegraphics[width=\textwidth]{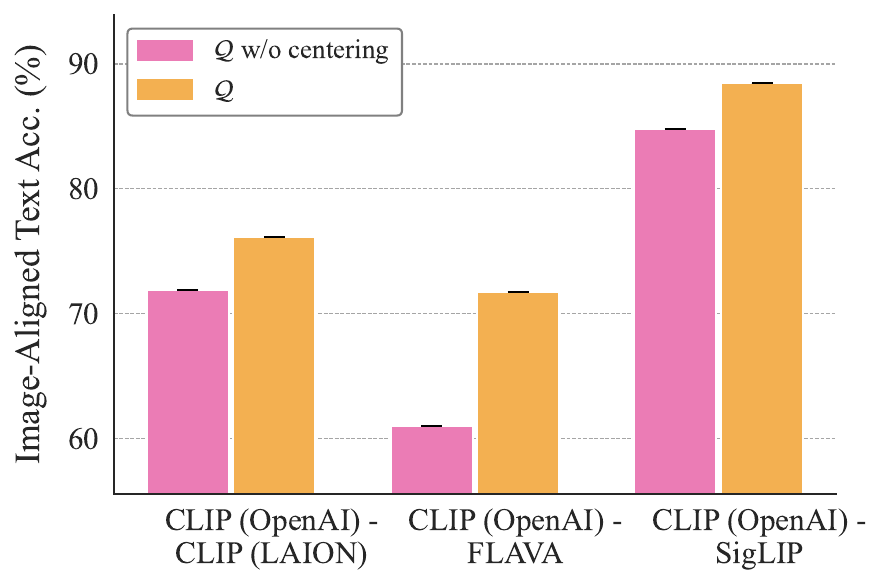}
    \caption{}
    \end{subfigure}
    \begin{subfigure}{0.32\textwidth}
    \centering
    \includegraphics[width=\textwidth]{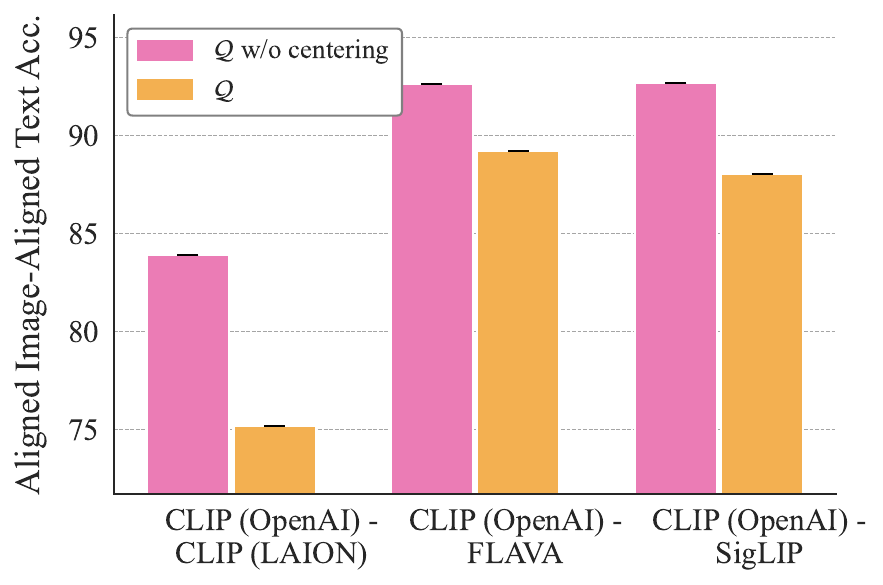}
    \caption{}
    \end{subfigure}

    \caption{\textit{Comparison between an orthogonal map with and without mean adjustment across model pairs on Oxford Pets.} (a) Image–image class retrieval (b) text–text class retrieval (c) Mean image–image cosine similarity (d) Mean text–text cosine similarity. (e) Image–text retrieval using aligned images from model A and text from model B. (f) Image–text retrieval using images from model B and aligned text from model A. (g) Image–text retrieval using aligned images and aligned text from model A. Mean adjustment boosts pointwise image-image (c) and text-text (d) cosine and downstream performance (e,f), but can slightly underperform on aligned-image-to-aligned-text and text-text retrieval (b,g).}
    \label{fig:app_procrustes_oxford}
\end{figure*}

\FloatBarrier

\subsection{Isolating Architecture Effects Under Fixed Training Data}\label{sec:app_openi_openai}
All previous experiments compare model pairs that differ in both training distribution and design choices. Here we isolate the role of the data by aligning two \textsc{CLIP} models trained on (probably) the same distribution (dataset used by OpenAI~\citep{radford2021clip}) but with different architectures (ViT-B/32 vs. ViT-B/16). Results on Caltech-101, CIFAR-100 and Oxford Pets are reported in~\Cref{fig:app_openai_caltech101,fig:app_openai_cifar100,fig:app_openai_oxford}. As observed across these figures, in this controlled setting, transfer is consistently stronger, compared to when using models trained on different distributions. This aligns with our theoretical results showing that when two contrastive models optimize the same underlying $P_{XY}$, their induced multimodal kernels agree up to an additive constant at the population optimum~\Cref{cor:unique_kernels}, tightening the conditions under which a single orthogonal map aligns both modalities.

\begin{figure*}[!htb]
    \centering
    \begin{minipage}{0.65\linewidth}
    \begin{subfigure}{0.32\textwidth}
    \centering
    \includegraphics[width=\textwidth]{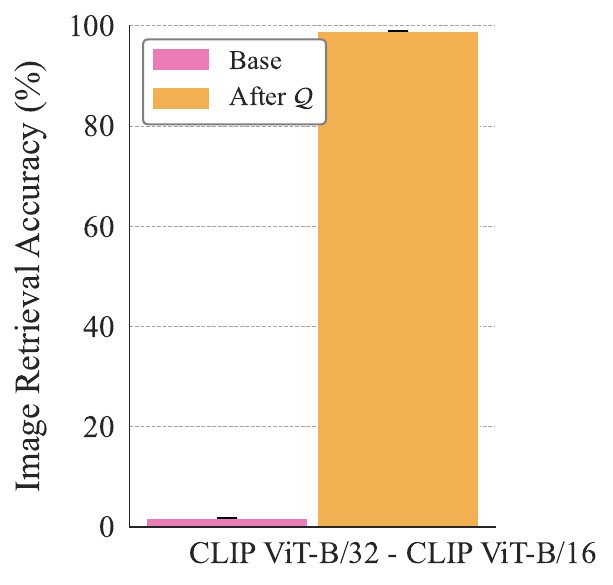}
    \caption{}
    \end{subfigure}
    \begin{subfigure}{0.32\textwidth}
    \centering
    \includegraphics[width=\textwidth]{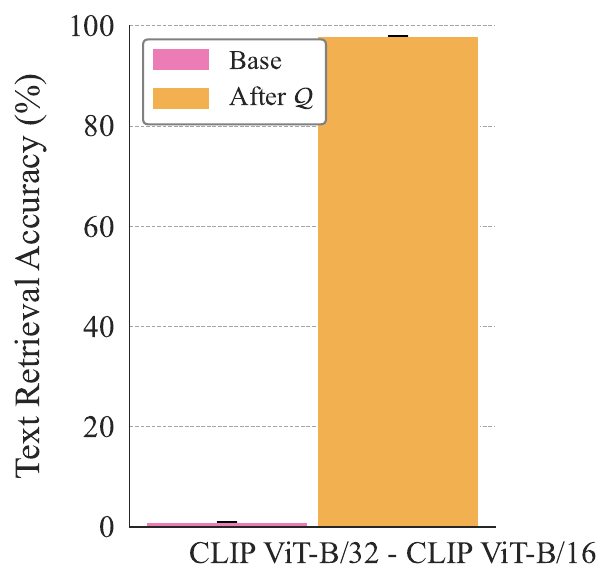}
    \caption{}
    \end{subfigure}
    \begin{subfigure}{0.32\textwidth}
    \centering
    \includegraphics[width=\textwidth]{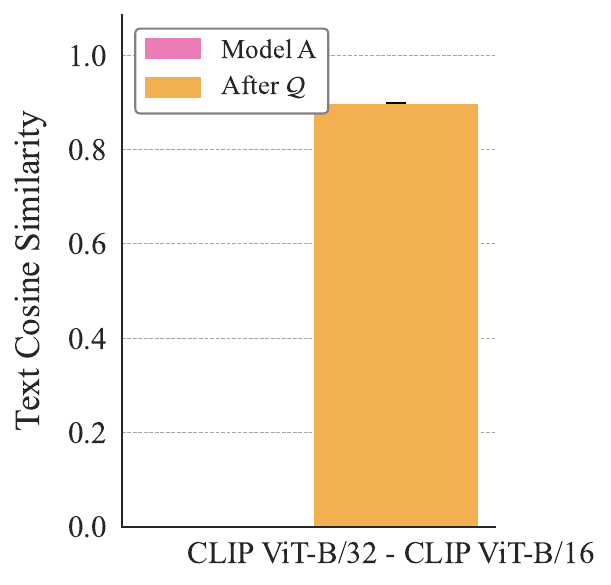}
    \caption{}
    \end{subfigure}

    \vspace{1mm}
    \begin{subfigure}{0.32\textwidth}
    \centering
    \includegraphics[width=\textwidth]{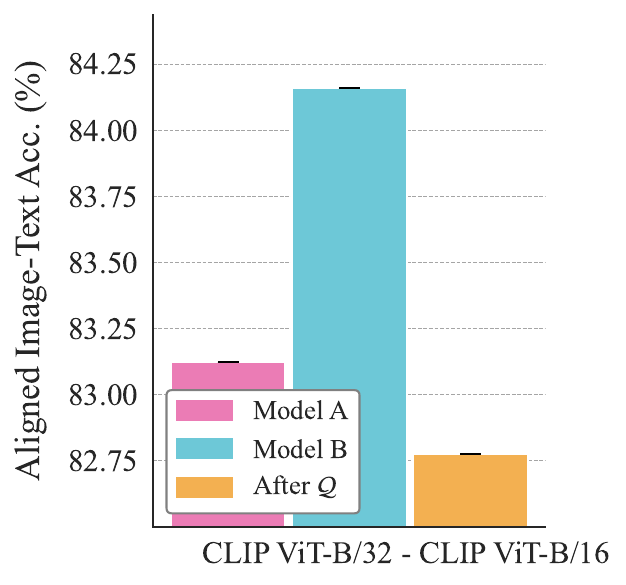}
    \caption{}
    \end{subfigure}
    \begin{subfigure}{0.32\textwidth}
    \centering
    \includegraphics[width=\textwidth]{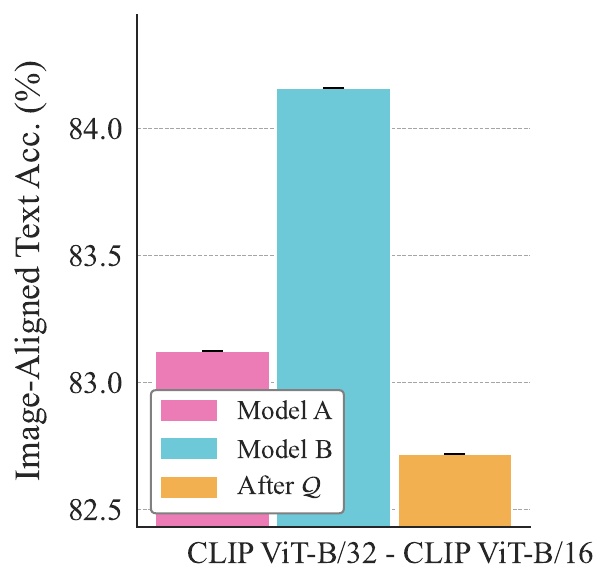}
    \caption{}
    \end{subfigure}
    \begin{subfigure}{0.32\textwidth}
    \centering
    \includegraphics[width=\textwidth]{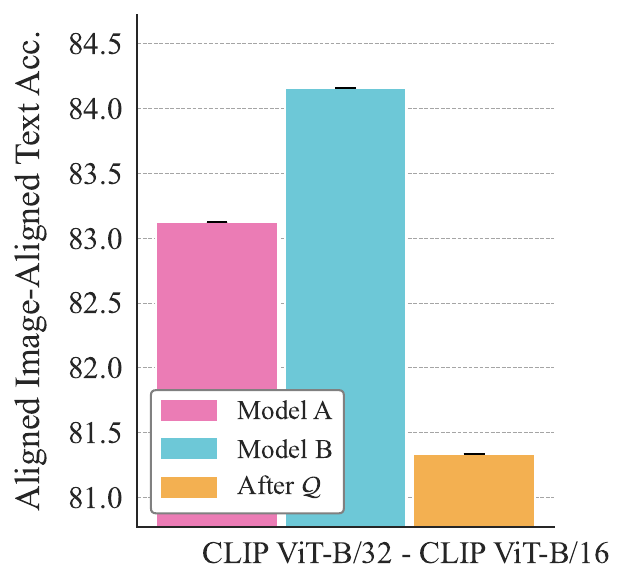}
    \caption{}
    \end{subfigure}
    \end{minipage}
    \caption{\textit{Cross-model alignment between identical training datasets with differing architectures (Caltech-101; CLIP ViT-B/32 to ViT-B/16, OpenAI) before and after fitting a single orthogonal map $\mathcal{Q}$.} (a) Image–image class retrieval and (b) text–text class retrieval. (c) Mean text–text cosine similarity. (d) Image–text retrieval using aligned images from model A and text from model B. (e) Image–text retrieval using images from model B and aligned text from model A. (f) Image–text retrieval using aligned images and aligned text from model A. When both models are trained on the same data distribution and differ only in architecture, applying $\mathcal Q$ yields substantially stronger modality transfer than in settings where the training distributions differ.}
    \label{fig:app_openai_caltech101}
\end{figure*}

\begin{figure*}[!htb]
    \centering
    \begin{minipage}{0.65\linewidth}
    \begin{subfigure}{0.32\textwidth}
    \centering
    \includegraphics[width=\textwidth]{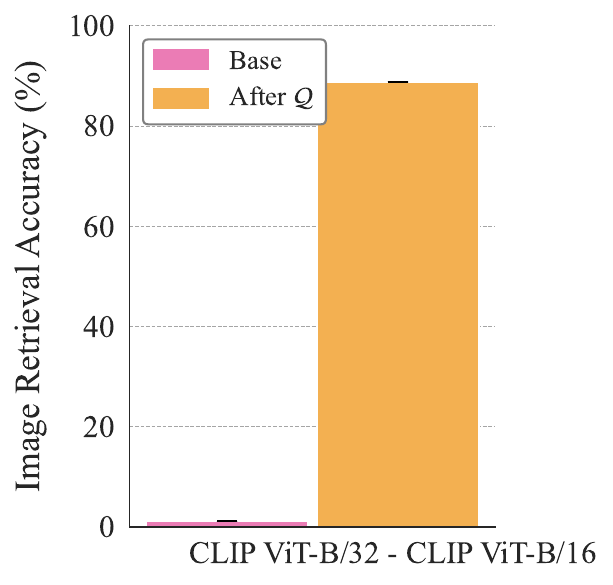}
    \caption{}
    \end{subfigure}
    \begin{subfigure}{0.32\textwidth}
    \centering
    \includegraphics[width=\textwidth]{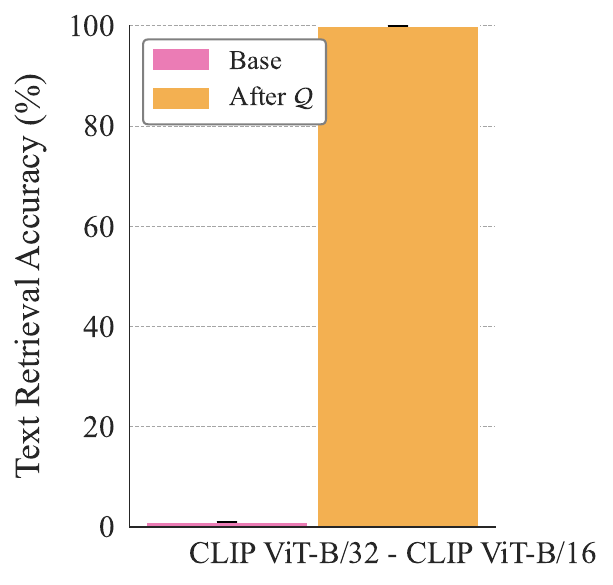}
    \caption{}
    \end{subfigure}
    \begin{subfigure}{0.32\textwidth}
    \centering
    \includegraphics[width=\textwidth]{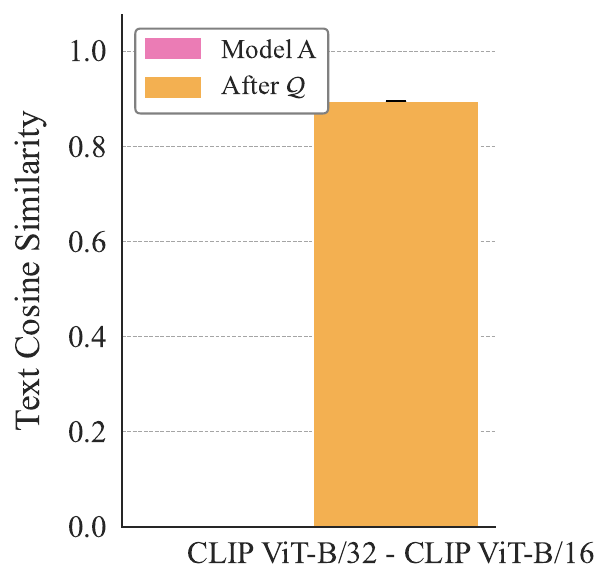}
    \caption{}
    \end{subfigure}

    \vspace{1mm}
    \begin{subfigure}{0.32\textwidth}
    \centering
    \includegraphics[width=\textwidth]{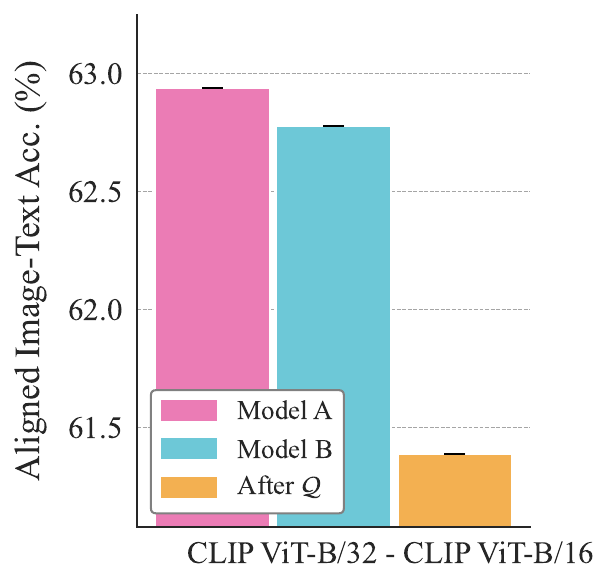}
    \caption{}
    \end{subfigure}
    \begin{subfigure}{0.32\textwidth}
    \centering
    \includegraphics[width=\textwidth]{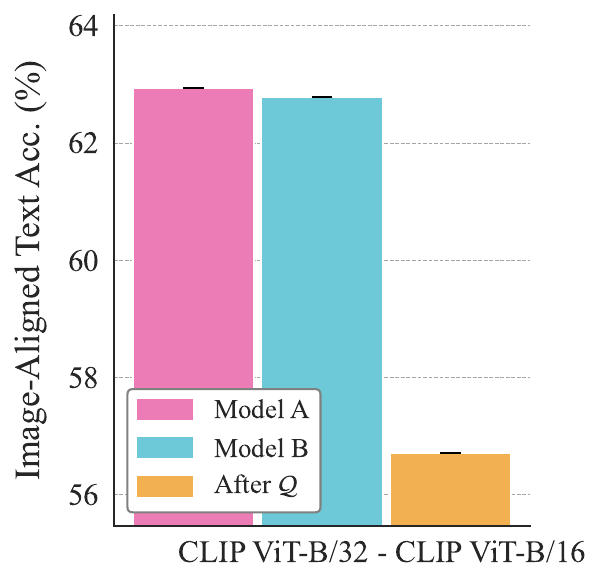}
    \caption{}
    \end{subfigure}
    \begin{subfigure}{0.32\textwidth}
    \centering
    \includegraphics[width=\textwidth]{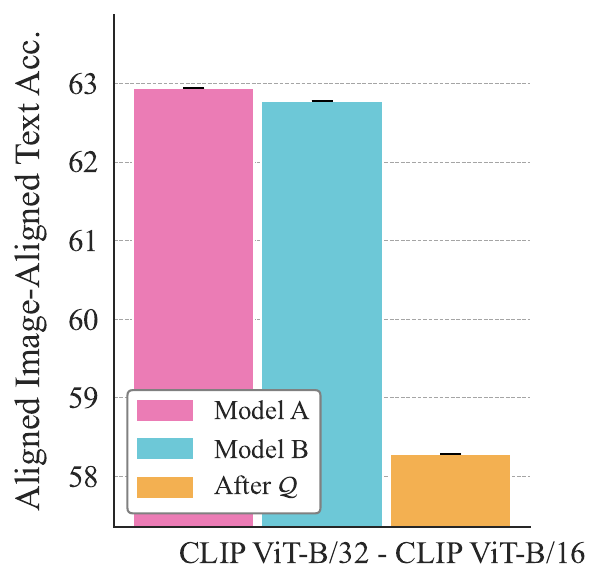}
    \caption{}
    \end{subfigure}
    \end{minipage}

   \caption{\textit{Cross-model alignment between identical training datasets with differing architectures (CIFAR-100; CLIP ViT-B/32 to ViT-B/16, OpenAI) before and after fitting a single orthogonal map $\mathcal{Q}$.} (a) Image–image class retrieval and (b) text–text class retrieval. (c) Mean text–text cosine similarity. (d) Image–text retrieval using aligned images from model A and text from model B. (e) Image–text retrieval using images from model B and aligned text from model A. (f) Image–text retrieval using aligned images and aligned text from model A. When both models are trained on the same data distribution and differ only in architecture, applying $\mathcal Q$ yields substantially stronger modality transfer than in settings where the training distributions differ.}
    \label{fig:app_openai_cifar100}
\end{figure*}

\begin{figure*}[!htb]
    \centering
    \begin{minipage}{0.65\linewidth}
    \begin{subfigure}{0.32\textwidth}
    \centering
    \includegraphics[width=\textwidth]{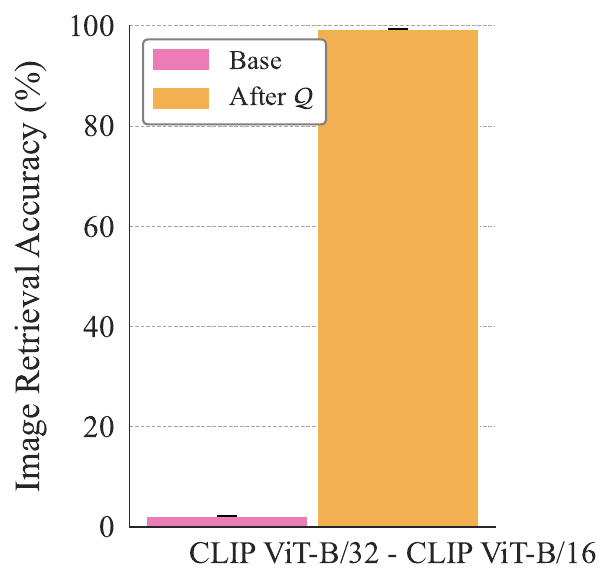}
    \caption{}
    \end{subfigure}
    \begin{subfigure}{0.32\textwidth}
    \centering
    \includegraphics[width=\textwidth]{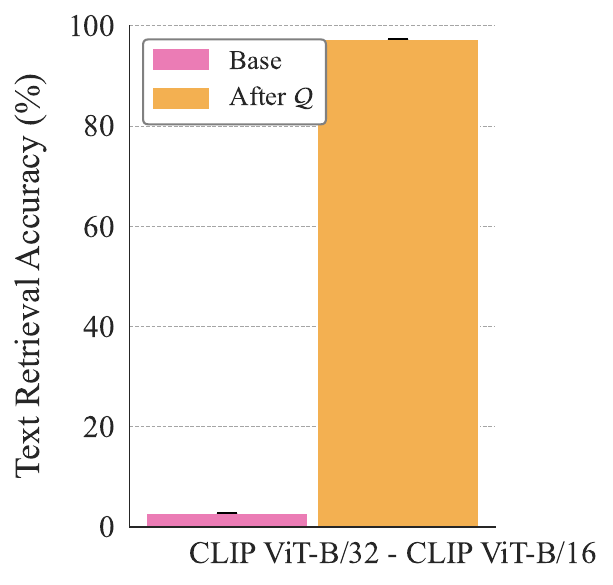}
    \caption{}
    \end{subfigure}
    \begin{subfigure}{0.32\textwidth}
    \centering
    \includegraphics[width=\textwidth]{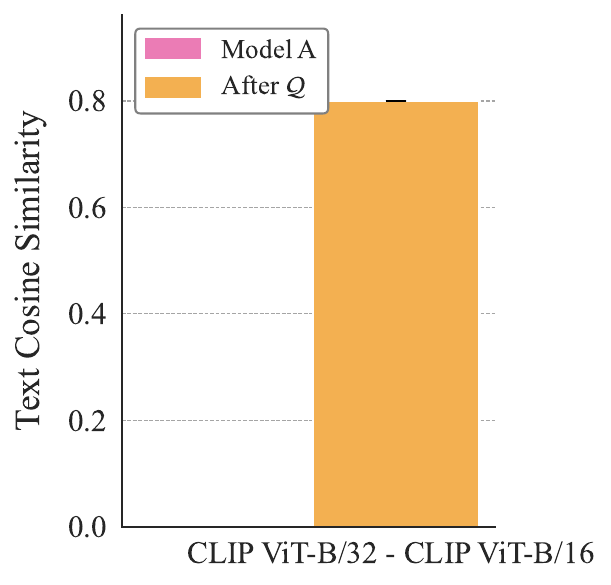}
    \caption{}
    \end{subfigure}

    \vspace{1mm}
    \begin{subfigure}{0.32\textwidth}
    \centering
    \includegraphics[width=\textwidth]{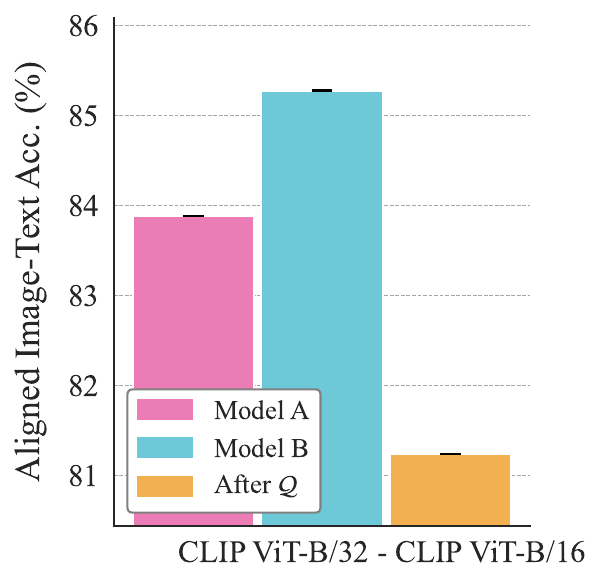}
    \caption{}
    \end{subfigure}
    \begin{subfigure}{0.32\textwidth}
    \centering
    \includegraphics[width=\textwidth]{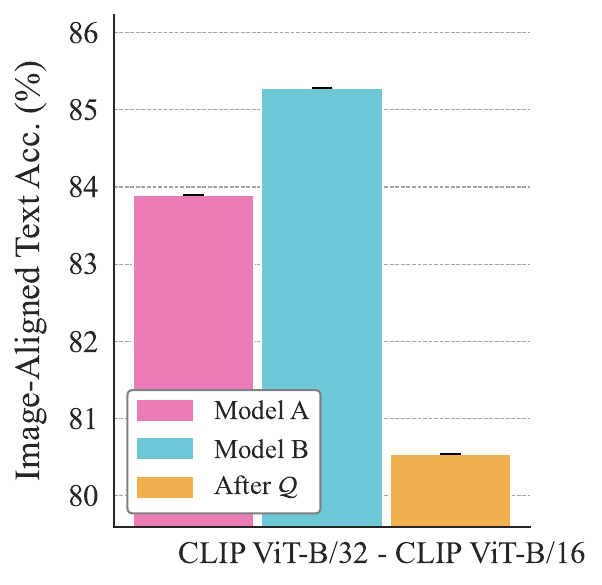}
    \caption{}
    \end{subfigure}
    \begin{subfigure}{0.32\textwidth}
    \centering
    \includegraphics[width=\textwidth]{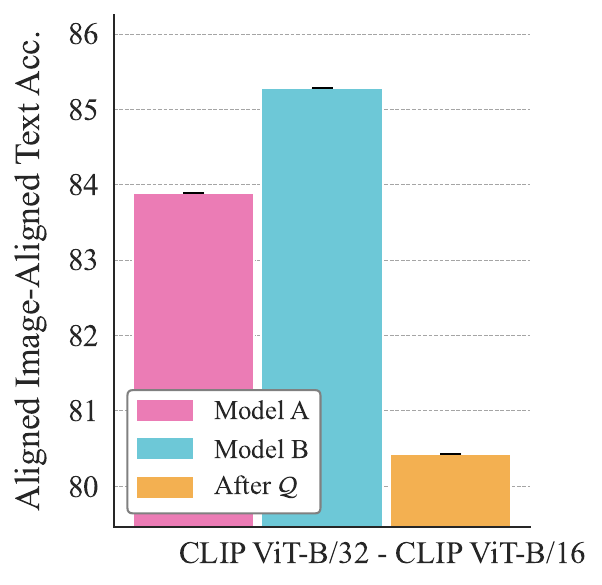}
    \caption{}
    \end{subfigure}
    \end{minipage}

    \caption{\textit{Cross-model alignment between identical training datasets with differing architectures (Oxford Pets; CLIP ViT-B/32 to ViT-B/16, OpenAI) before and after fitting a single orthogonal map $\mathcal{Q}$.} (a) Image–image class retrieval and (b) text–text class retrieval. (c) Mean text–text cosine similarity. (d) Image–text retrieval using aligned images from model A and text from model B. (e) Image–text retrieval using images from model B and aligned text from model A. (f) Image–text retrieval using aligned images and aligned text from model A. When both models are trained on the same data distribution and differ only in architecture, applying $\mathcal Q$ yields substantially stronger modality transfer than in settings where the training distributions differ.}
    \label{fig:app_openai_oxford}
\end{figure*}

\FloatBarrier

\subsection{Qualitative Evidence of Commutativity of Image-Text Alignment Paths}\label{sec:app_qualitative_knn}
In this section, we qualitatively test whether the learned map $\mathcal Q$ induces a consistent, modality-invariant geometry by comparing two alignment paths from a source image $x$. In the \emph{direct} path, we map the image embedding via $\mathcal Q$ and retrieve its top-$5$ nearest images in the target model’s space. In the \emph{text-mediated} path, we first retrieve the nearest text to $x$ in the source model, map this text embedding via $\mathcal Q$, retrieve its top-$1$ nearest text in the target model, and then retrieve the top-$5$ images associated with that text.

As shown in~\Cref{fig:knn_alignment_results_1} and~\Cref{fig:knn_alignment_results_2}, both paths yield highly consistent semantic neighborhoods: images retrieved via the text-mediated route closely match those obtained by direct image alignment. Equivalently, in terms of the commuting diagram induced by cross-modal nearest-neighbor operators, the two routes from $x$—direct image transport and transport via a retrieved caption—lead to nearly the same target-space neighborhood, suggesting that $\mathcal Q$ approximately commutes with these retrieval operators on this domain.

\begin{figure}[!htb]
    \centering
    \includegraphics[width=0.85\textwidth]{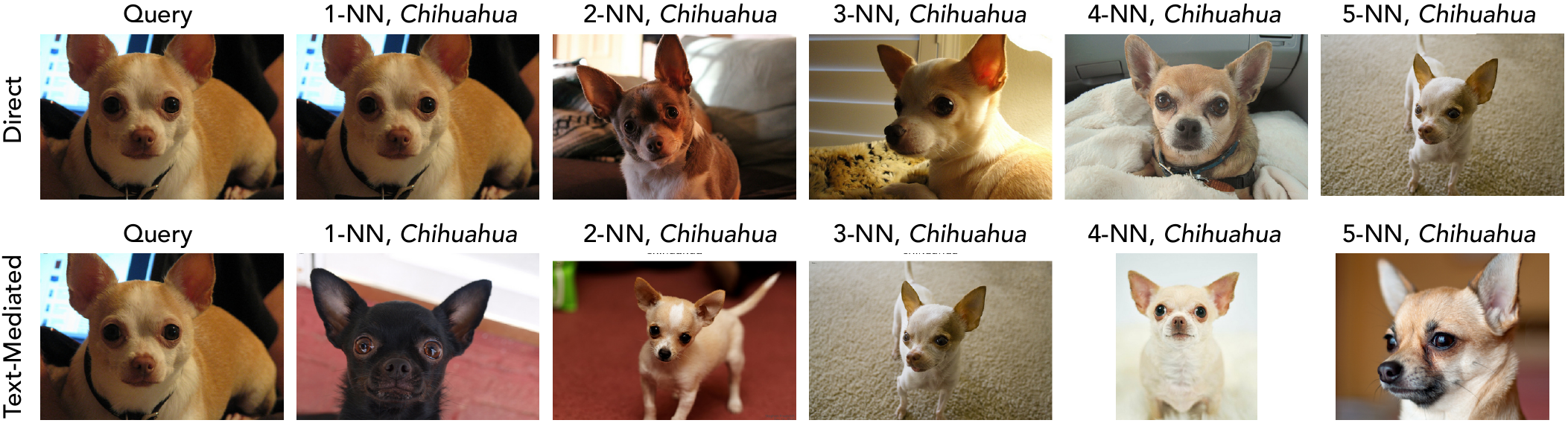} 
    \caption{\textit{Qualitative k-NN retrieval under two alignment paths for Oxford Pets.} We compare two routes from a source image $x$ into the target model’s image space after alignment by $\mathcal Q$: a direct image-alignment path (top row) and a text-mediated path (bottom row). Both routes yield highly consistent target-space neighborhoods, showing that the two alignment paths approximately close as a commuting diagram.}  
    \label{fig:knn_alignment_results_1}
\end{figure}

\begin{figure}[!htb]
    \centering
    \includegraphics[width=0.85\textwidth]{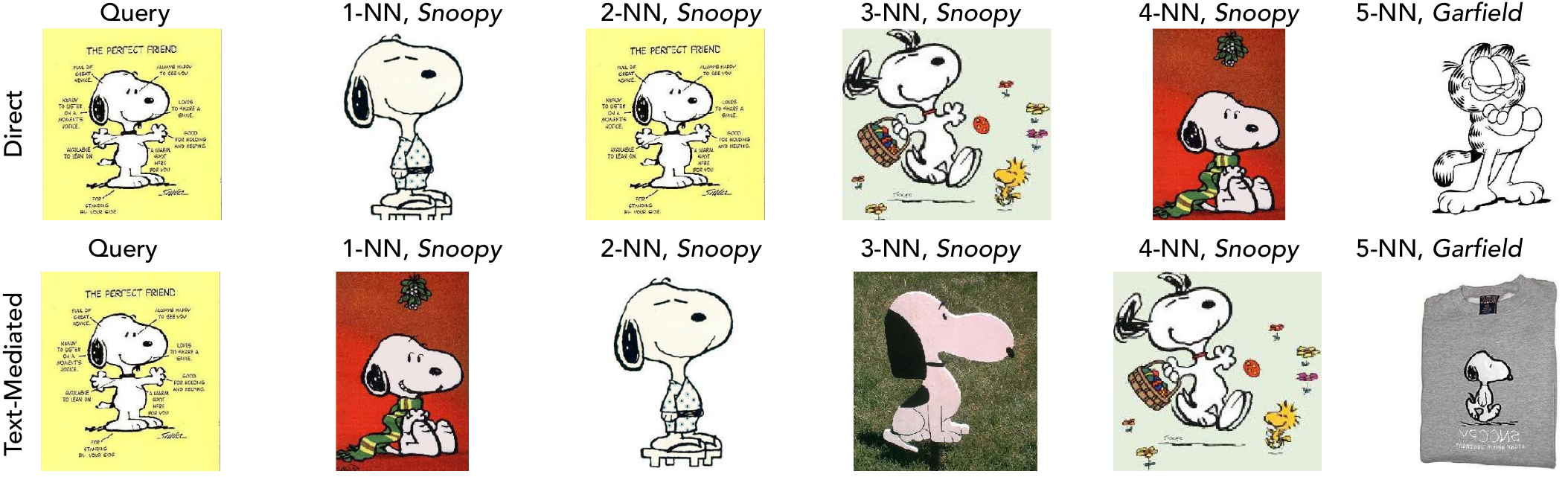} 
     \caption{\textit{Qualitative k-NN retrieval under two alignment paths for Caltech-101.} We compare two routes from a source image $x$ into the target model’s image space after alignment by $\mathcal Q$: a direct image-alignment path (top row) and a text-mediated path (bottom row). Both routes yield highly consistent target-space neighborhoods, showing that the two alignment paths approximately close as a commuting diagram.}    \label{fig:knn_alignment_results_2}
\end{figure}

\FloatBarrier

\subsection{Fine-grained Semantic Preservation Under Orthogonal Maps.}\label{sec:app_semantic}
We further probe whether the learned map $\mathcal Q$ preserves semantic structure beyond coarse class labels. Starting from a source image $x$, we retrieve its nearest caption in Model A’s text space and then construct a minimally perturbed variant by changing only the class token (e.g., \textit{Birman} to \textit{Russian Blue}) while keeping the remaining description fixed. We map this modified text via $\mathcal Q$, retrieve its nearest text in Model B’s space, and finally retrieve the nearest image in Model B’s image space.

As shown in~\Cref{fig:knn_russian_blue}, the retrieved images not only reflect the edited class (Russian Blue) but also somewhat preserve finer visual attributes implied by the original description. In particular, different captions lead to distinct Russian Blue images that emphasize different features (e.g., eye prominence vs.\ body pose), despite sharing the same class label. Similarly, in~\Cref{fig:knn_basset_hound}, the retrieved Basset Hound images emphasize body mass and posture consistent with the original Bulldog description, rather than collapsing to a generic class prototype. Together with the previous example, this indicates that alignment by $\mathcal Q$ preserves fine-grained semantic structure encoded in text and image embeddings, enabling controlled semantic edits to transfer predictably across models.

\begin{figure}[!htb]
    \centering
    \includegraphics[width=0.6\textwidth]{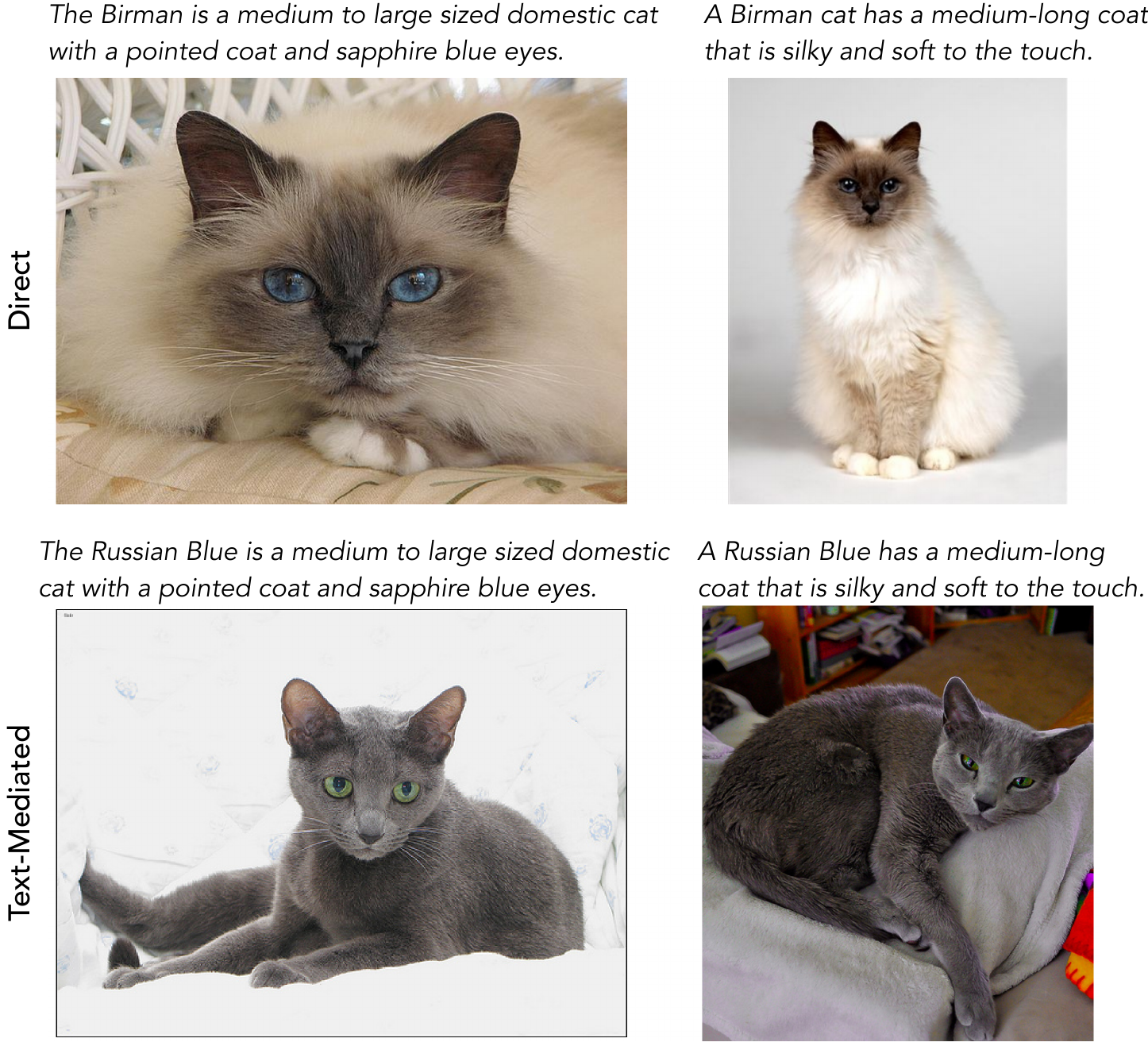} 
    \caption{\textit{Fine-grained semantic preservation under controlled text edits on Oxford Pets.} After changing only the class token in the caption (Birman to Russian Blue) while keeping descriptive attributes fixed, alignment via $\mathcal Q$ retrieves target-model images that match the edited class while preserving caption-specific visual cues (e.g., eye prominence and facial structure), indicating preservation of fine-grained semantics beyond class identity.\looseness=-1}
    \label{fig:knn_russian_blue}
\end{figure}

\begin{figure}[!htb]
    \centering
    \includegraphics[width=0.6\textwidth]{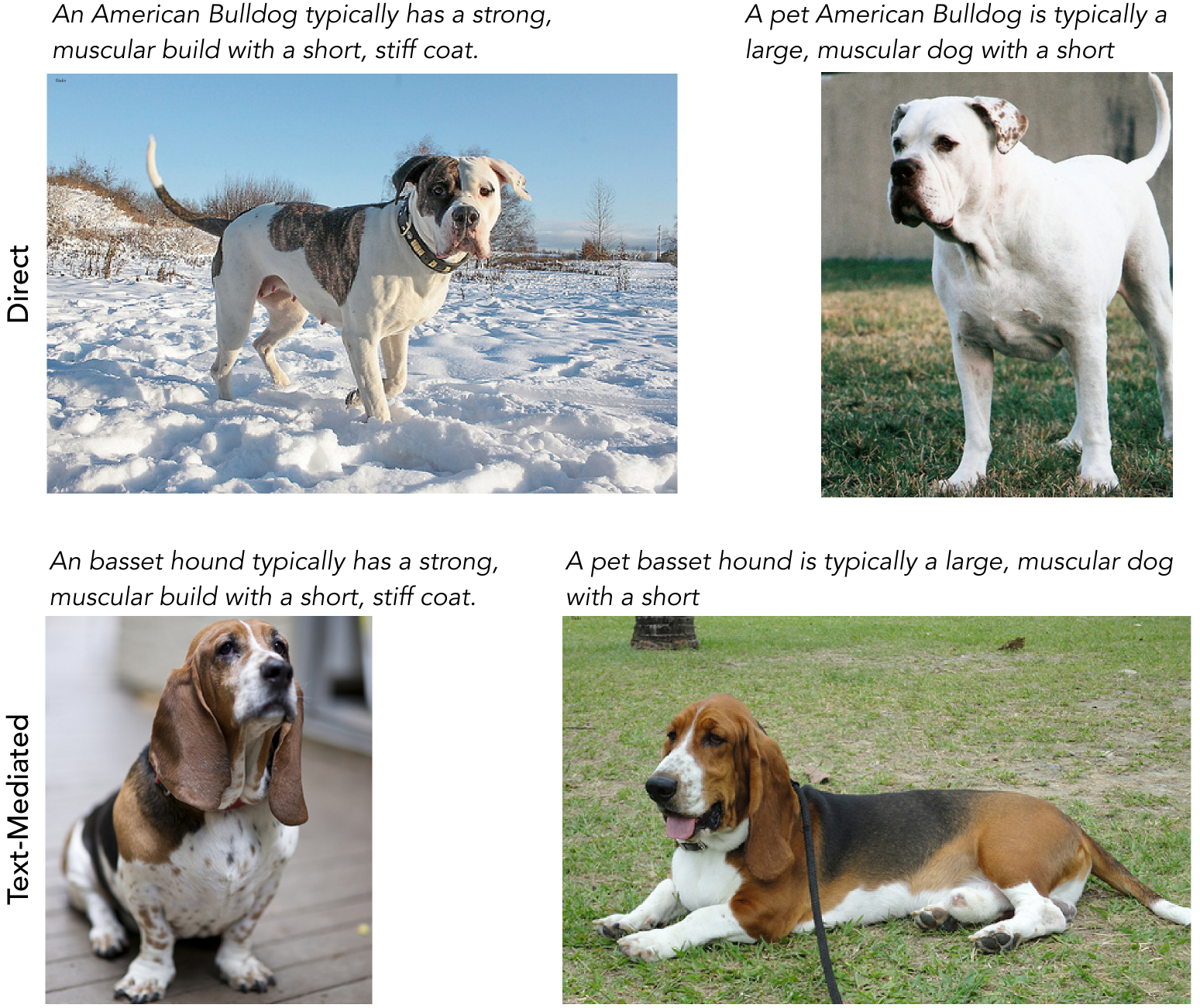} 
    \caption{\textit{Fine-grained semantic preservation under controlled text edits on Oxford Pets.} After changing only the class token in the caption (American Bulldog to Basset Hound) while keeping descriptive attributes fixed, alignment via $\mathcal Q$ retrieves target-model images that match both the edited class and the original attribute cues (e.g., muscular build and posture), indicating preservation of fine-grained semantics beyond class identity.\looseness=-1}
    \label{fig:knn_basset_hound}
\end{figure}

\end{document}